\newtheorem{theorem}{Theorem}
\newtheorem{lemma}{Lemma}
\newtheorem{definition}{Definition} 
\newtheorem{remark}{Remark}
\newtheorem{corollary}{Corollary}
\newtheorem{example}{Example}
\long\def\comment#1{}
\newcommand{\sel}{\ensuremath{\mathit{sel}}}
\newcommand{\leanparagraph}[1]{\smallskip\noindent\textbf{#1}.}
\newcommand{\NP}{\textmd{\rm NP}}
\newcommand{\coNP}{\textrm{coNP}}
\newcommand{\Pol}{\textmd{\rm P}}
\newcommand{\FPNPPar}{\textmd{\rm FP}$^\NP_\|$}
\newcommand{\PNPPar}{\Pol$^\NP_\|$}
\newcommand{\naf}{\mathbf{\mathop{not}\,}}
\newcounter{myenumctr}
\newenvironment{myenumerate}{\begin{list}{ (\arabic{myenumctr})\ }{\usecounter{myenumctr}
\setlength{\leftmargin}{0pt}
\setlength{\itemsep}{0pt}
\setlength{\itemindent}{1.3\labelwidth}}}
{\end{list}}
\definecolor{darkgreen}{rgb}{0,0.7,0}
\definecolor{darkblue}{rgb}{0,0,0.7}
\definecolor{darkred}{rgb}{0.7,0,0}
\newcommand{\rev}[2]{{\color{blue}#2}}
\newif\ifdraft\draftfalse
\newif\ifinlineref\inlinereffalse
\newif\iffinal\finalfalse
\newif\ifextended\extendedfalse
\newif\ifdotikz\dotikzfalse
\long\def\comment#1{{\bf\color{blue}#1}}
\renewcommand{\comment}[1]{}
\long\def\beginpgfgraphicnamed#1#2\endpgfgraphicnamed{\includegraphics{#1}}
\begin{document}

\title{~\\[-2\baselineskip]
Refining Gelfond’s Rationality Principle: 
Towards More  \\
Comprehensive Foundational Principles 
for Answer Set Semantics\thanks{This article is a  
significantly extended version of a paper 
presented at IJCAI-2022 \cite{ShenEiter22}.}
} 

\author{Yi-Dong Shen$^1$ and Thomas Eiter$^2$\\
$^1$ %State Key Laboratory of Computer Science, 
Institute of Software,
Chinese Academy of Sciences, Beijing, China\\
E-mail: ydshen@ios.ac.cn \\
$^2$ Institute of Logic and Computation, Vienna University of Technology (TU Wien)\\ 
Favoritenstra{\ss}e 9-11, A-1040 Vienna, Austria \\
E-mail: eiter@kr.tuwien.ac.at
}

\date{}

\maketitle            

\begin{abstract}
Non-monotonic logic programming 
is the basis for a declarative problem
solving paradigm known as answer set programming (ASP), 
where solutions to
problems are encoded by intended models (answer sets) of a logic program.
Departing from the seminal definition by \citeA{GL88} 
for simple normal logic programs, various answer set semantics have
been proposed for extensions such as disjunctive logic programs and
epistemic logic programs. In the latter case, the semantics
consists of sets of
answer sets called world views.
In general, it seems infeasible
to formally prove 
whether a proposal
defines answer sets/world views
that correspond exactly to the solutions
of any problem intuitively represented by a logic program.
It is thus necessary to develop some 
general principles and use them as a baseline
to intuitively compare and assess different answer set
  resp.\ world view semantics.
Towards such a baseline, we consider two
important questions: (1) Should the minimal model property,  
constraint monotonicity 
and foundedness as defined in the literature
be mandatory conditions for
an answer set semantics in general?
(2) If not, what other properties could be considered 
as alternative principles for answer set semantics? 
We address the two questions with several contributions.
First, we use examples to demonstrate that requiring
minimal models, constraint monotonicity
and foundedness as mandatory conditions 
may exclude expected answer sets 
for some simple disjunctive programs and
world views for some epistemic specifications. 
Second, we evolve the {\em Gelfond answer set (GAS)
principles} \cite{Gelfond2008} for answer set construction
by refining the Gelfond's rationality principle
to well-supportedness, minimality  
w.r.t.\ negation by default, and minimality  
w.r.t.\ epistemic negation.
The principle of well-supportedness guarantees
that every answer set is constructible from if-then  
rules obeying a level mapping
and is thus free of circular justification,
while the two minimality principles 
ensure that the formalism
minimizes knowledge both at the level
of answer sets and of world views.
We thus propose to consider the three
refined GAS principles as alternative 
principles for answer set semantics
in general and for answer set and world view
construction in particular.
Third, to embody the refined GAS principles,
we extend the notion of well-supportedness  
substantially to answer sets 
and world views, respectively.
Fourth, we propose to define answer set semantics
in terms of the refined GAS principles,
called rational answer set and rational
world view semantics, respectively.
Fifth, we use the refined GAS principles
as an alternative baseline 
to intuitively assess the existing answer set semantics
whether they satisfy them 
(are compliant with them) or even fully embody them.
Finally, we analyze the computational complexity of 
well-supportedness and the rational answer set and world view
semantics, revealing them as expressive hosts for
problem solving.
\bigskip
~
\end{abstract}

\section{Introduction}
\label{int}

Initially conceived as an approach to reasoning under
 incomplete information, non-monotonic logic 
 programming has evolved
 into a declarative problem solving paradigm known as
answer set programming (ASP) 
\cite{MT99,Nie99,Lif02}, 
which is
specifically oriented towards modeling  
combinatorial search problems arising in 
areas of AI such as planning, reasoning about 
actions, diagnosis,
and beyond \cite{gelfond-kahl2014,bookLifschitz19}.
Broadly, the idea of ASP is to represent problems by logic
programs whose meaning/semantics is defined by some intended logic models 
called {\em stable models} \cite{GL88} or {\em answer sets} \cite{GL91},
or by some intended epistemic models called {\em world views} \cite{Gelfond91},
which correspond to the solutions of a problem. 

In the very beginning of the ASP evolution, 
only a class of logic programs, called {\em simple normal programs},
were considered \cite{GL88}, which consist of rules of the 
form
\begin{equation}
\label{normalLP}
A \leftarrow B_1 \wedge \cdots \wedge B_n \wedge \neg C_1 \wedge \cdots \wedge \neg C_k
\end{equation}
where $\leftarrow$ is an if-then rule operator,
$A$ is an atom or $\bot$, and the $B_i$'s and $C_j$'s are atoms.\footnote{In \cite{GL88}, ``,'' instead of
  ``$\wedge$'' was used for logical conjunction.}  Intuitively, rule
(\ref{normalLP}) says that 
{\em if} the body condition $B_1 \wedge \cdots
\wedge B_n \wedge \neg C_1 \wedge \cdots \wedge \neg C_k$ 
has been inferred to be true by any other rules in a logic program,
{\em then} the head $A$ can
be inferred \cite{JanhunenN16}. A
rule is a {\em constraint} when its head is $\bot$.

\citeA{GL88} defined the seminal answer set semantics,
called {\em GL$_{nlp}$-semantics},
for simple normal programs
by means of a program transformation. 
For a simple normal program $\Pi$ and an interpretation $I$, 
the {\em GL-reduct} $\Pi^I$ of $\Pi$ w.r.t.\ $I$ is
obtained from $\Pi$ by removing first all rules of form
(\ref{normalLP})
containing some $C_i\in I$ and then all $\neg C_i$ from the
remaining rules. Then
$I$ is a {\em stable model} or an {\em answer set} 
of $\Pi$ under the GL$_{nlp}$-semantics
if $I$ is the least model of $\Pi^I$. 

To date, the GL$_{nlp}$-semantics is the commonly agreed 
answer set semantics for the class of simple normal programs and
is the backbone for various extensions, 
such as disjunctive logic programs \cite{GL91} and
epistemic logic programs \cite{Gelfond91}.

In order to model incomplete information, 
\citeA{GL91} extended simple normal programs to
a class of logic programs, called
{\em simple disjunctive programs},  
which consist of rules of the form
\begin{equation}
\label{disjunctLP}
A_1 \mid \cdots \mid A_m \leftarrow B_1 \wedge \cdots \wedge B_n \wedge \neg C_1 \wedge \cdots \wedge \neg C_k
\end{equation}
where $\mid$ is a disjunctive rule head operator, 
$m\geq 1$, and the $A_i$'s, $B_j$'s and $C_l$'s 
are atoms except that when $m=1$,
$A_m$ can also be $\bot$.
Rule (\ref{disjunctLP}) says that 
{\em if} the body condition $B_1 \wedge \cdots
\wedge B_n \wedge \neg C_1 \wedge \cdots \wedge \neg C_k$ 
has been inferred to be true by any other rules,
{\em then} some $A_i$ in the head $A_1 \mid \cdots \mid A_m$ can
be inferred. 
Note that when $m=1$ for all rules,
a simple disjunctive program 
reduces to a simple normal program.

\citeA{GL91} then upgraded the GL$_{nlp}$-semantics
to simple disjunctive programs, leading to
an answer set semantics,
called {\em GL-semantics}. 
For a simple disjunctive program $\Pi$
and an interpretation $I$, the GL-semantics defines
$I$ to be an answer set of $\Pi$ 
if $I$ is a minimal model of the GL-reduct $\Pi^I$. 
When $\Pi$ is a simple normal program,
the GL-semantics coincides with the GL$_{nlp}$-semantics.

In order to enable representing
incomplete knowledge 
in a logic program that has more than one answer set,
\citeA{Gelfond91} extended simple disjunctive programs to
{\em epistemic specifications}
with epistemic modal operators ${\bf K}$ and ${\bf M}$,
which consist of rules of the form
\begin{equation}
\label{equa-1}
L_1\, \mid \cdots \mid \, L_m \leftarrow G_1 \wedge \cdots \wedge G_n
\end{equation}
where $m\geq 1$, each $L_i$ is an atom
except that when $m=1$,
$L_m$ can also be $\bot$,
and each $G_j$ is an atom $A$ or its 
negation $\neg A$ or a modal literal  
${\bf K} A$, $\neg {\bf K} A$,  ${\bf M} A$ or  
$\neg {\bf M} A$.
Rule (\ref{equa-1}) is a 
{\em subjective constraint} if the head is $\bot$ and
each $G_j$ is a modal literal.

\citeA{ShenEiter16} further extended 
epistemic specifications to general {\em epistemic programs}
with a modal operator $\mathbf{not}$ for epistemic negation,
which consist of rules of the form 
\begin{equation}
\label{equa-ep}
H_1\mid \cdots \mid H_m \leftarrow B
\end{equation}
where $m\geq 1$, $B$ and each $H_i$ 
are first-order formulas extended with
epistemic negations of the form $\naf F$,
and $F$ is a first-order formula. 
Intuitively, $\naf F$ expresses that 
there is no evidence proving that $F$ is true.
In semantics, for a collection $\cal A$ of interpretations,
$\naf F$ is true in $\cal A$ if
$F$ is false in some $I\in {\cal A}$.
The modal operators {\bf K} and {\bf M} can be viewed as
shorthands for $\neg \mathbf{not}$ and 
$\naf \neg$, respectively.
By an {\em epistemic-free program}, we refer to  
an epistemic program free of modal operators.

The semantics of an epistemic program is defined by 
some intended epistemic models, called world views.
\citeA{Gelfond91} defined the first world view semantics,
called {\em G91-semantics}, for 
an epistemic specification $\Pi$ by means of a program transformation.
Given a collection $\cal A$ of interpretations, $\Pi$ is transformed
into a {\em modal reduct} $\Pi^{\cal A}$ 
w.r.t.\ $\cal A$ by 
first removing all rules of form 
(\ref{equa-1}) containing a modal literal $G$ 
that is not true in $\cal A$, 
then removing the remaining modal literals. 
$\cal A$ is a {\em world view} of $\Pi$ under the G91-semantics
if $\cal A$ coincides with the collection of all answer sets of $\Pi^{\cal A}$ 
under the GL-semantics. 

Both the GL-semantics 
for simple disjunctive programs and the G91-semantics
for epistemic specifications have limitations, so
some other answer set semantics and world view semantics 
have been proposed, respectively.
For example,
\citeA{ShenE19} observed that the requirement of the GL-semantics, 
i.e., an answer set $I$ of a simple disjunctive program $\Pi$
should be a minimal model of rules of the GL-reduct $\Pi^I$, 
may sometimes be too strong because
some expected answer sets
may not be minimal models of the GL-reduct.
To address this issue, they defined 
an alternative answer set semantics, called {\em determining
inference semantics} (DI-semantics for short).
The DI-semantics applies to all epistemic-free programs. In particular,
for simple disjunctive programs, it is a relaxation of the GL-semantics
such that an answer set under the GL-semantics
is also an answer set under the DI-semantics,
but the converse does not hold in general. 

For epistemic specifications,
it was observed that the G91-semantic has both 
{\em the problem of unintended world views 
with recursion through} {\bf K}
and {\em the problem due to recursion through} {\bf M} \cite{Gelfond2011,Kahl14}.
As an example of the first problem, 
the epistemic specification $\Pi = \{p\leftarrow {\bf K} p\}$
has two world views ${\cal A}_1 = \{\emptyset\}$
and ${\cal A}_2 = \{\{p\}\}$ under the G91-semantic; 
however, as commented in \cite{Kahl14},  
${\cal A}_2$ is undesired.
For the second problem,  
$\Pi = \{p\leftarrow {\bf M} p\}$
has two world views ${\cal A}_1 = \{\emptyset\}$
and ${\cal A}_2 = \{\{p\}\}$ under the G91-semantics, where
${\cal A}_1$ is undesired.
Several alternative world view semantics
were proposed to address the two problems \cite{KahlGelfond2015,Fa2015,ShenEiter16,SuCH20}.
For example, \citeA{ShenEiter16} proposed to apply 
epistemic negation to minimize the knowledge in
world views and presented a definition 
of world views, which applies to all epistemic programs
and avoids both the problem with recursion through {\bf K} and 
the problem through {\bf M}. 

Recall that the aim of answer set programming
is to encode problems in AI
as logic programs whose 
answer sets or world views
correspond to the solutions of a problem.
Different ASP semantics, i.e., answer set semantics for 
epistemic-free programs and world view semantics
for epistemic programs, may be defined to
approach this goal.
In general, for a proposed ASP semantics,
it does not seem possible to formally prove 
whether the semantics defines answer sets/world views
that correspond exactly to the solutions
of any problem represented by a logic program.
Therefore, it is necessary 
to develop some general principles 
and use them as a baseline  to intuitively compare and assess
different ASP semantics, in particular beyond
simple normal programs.

\begin{remark}
As we know,  there are a few people in the ASP community who have gotten used to 
the GL-semantics such that they equate answer 
set semantics for simple disjunctive programs with the GL-semantics.
In their opinion,
the GL-semantics should be taken as the only answer set semantics
 for simple disjunctive programs, 
and any other semantics that may yield different answer sets 
than the GL-semantics
should not be considered as an answer set semantics.
This is arguably a misunderstanding. On the one hand,
as we can determine, there is no existing literature, including \cite{GL91},
in which the GL-semantics is formally proved to be 
an answer set semantics whose answer sets
correspond exactly to the solutions
of any problem represented by a simple disjunctive program.
On the other hand, although many examples have 
been presented in the literature \cite{ErdemGL16}, 
showing successful uses of the GL-semantics
for some real life applications, some counter-examples have been
uncovered \cite{ShenE19}, showing that the requirement of the GL-semantics, 
i.e., answer sets must be minimal models of GL-reducts, 
is too strong and may exclude expected answer sets 
for some simple disjunctive programs.
\end{remark}

Early works towards a baseline focused on properties
of the GL-semantics for simple disjunctive programs, 
mainly including:
\begin{enumerate}
\itemsep=1pt 
\item
{\bf The minimal model property (MM):} Every answer set of 
$\Pi$ is a minimal model of $\Pi$ \cite{GL91}.
\item
{\bf Constraint monotonicity (CM):}
For every constraint $C$,
answer sets of $\Pi\cup \{C\}$ 
are answer sets of $\Pi$ satisfying $C$ \cite{lifs-turn-94,LTT99}. 
\item
{\bf Foundedness (FN):}  Answer sets of $\Pi$ 
contain no unfounded set \cite{LeoneRS97},
where a set $X$ of atoms in an interpretation $I$ is an {\em 
unfounded set} of $I$ if $\Pi$ has no rule of form (\ref{disjunctLP})  
with $\{A_1,\cdots, A_m\} \cap X \,{\neq}\, \emptyset$ satisfying (f1)-(f3):
\begin{compactitem}
    \item[](f1) $I$ satisfies the rule body condition; 
    \vspace{1pt} 
    
    \item[](f2) $\{B_1,\cdots, B_n\}\cap X = \emptyset$; and 
      \vspace{1pt} 
      
    \item[](f3) $(\{A_1,\cdots, A_m\}\setminus X) \cap I = \emptyset$.
\end{compactitem}
\end{enumerate}

Recent works \cite{iclp-KahlL18,CabalarFC20aij}
extended the notions of constraint monotonicity
and foundedness from simple disjunctive programs
to epistemic specifications and 
used them as criteria/intuitions to compare 
different world view semantics  
on how they comply with these properties.
\citeA{lpnmr-CabalarFC19} also defined
a stronger property than constraint monotonicity, 
called {\em epistemic splitting}, intending
to establish a ``minimal requirement" 
that a world view semantics must satisfy. 

However, it turns out that for epistemic-free programs, the DI-semantics
does not satisfy the above three properties \cite{ShenEiter22},
and for epistemic programs, most of the existing
world view semantics, such as those defined in 
\cite{KahlGelfond2015,Fa2015,ShenEiter16,SuCH20}, 
violate the properties of constraint monotonicity
and foundedness; see
\cite{abs-2108-07669} for a survey.

This leads us to the following important question:  
\begin{quote}
{\bf Q1:} Should the minimal model property,  
constraint monotonicity 
and foundedness be mandatory conditions 
for an ASP semantics in general?
\end{quote}

This question is challenging and has  
incurred a debate in recent ASP research
\cite{shen-eiterCns20,Costantini21,lpnmrCostantiniF22,SuICLP2021}.

%\leanparagraph{Our answer to Q1}
In this paper, we address the above question 
by demonstrating with examples that requiring
minimal models, constraint monotonicity
and foundedness as mandatory conditions 
may exclude expected answer sets 
for some simple disjunctive programs and
world views for some epistemic specifications. 

\medskip

Given that minimal models, constraint monotonicity 
and foundedness may not be mandatory conditions
for an ASP semantics in general, 
one would naturally ask another important question:
\begin{quote}
{\bf Q2:} What other properties could be considered 
as alternative principles for an ASP semantics in general?
\end{quote} 

As far as we can determine,
no solution to this question has been reported in the
literature except for the two principles by \citeA{Gelfond2008} 
as a guide for the construction of an answer set.
Recall that an answer set semantics 
assigns to a logic program $\Pi$ 
a collection of interpretations
called answer sets, which 
can be built on the basis of rules of $\Pi$. 
Specifically,
\citeA{Gelfond2008} suggests that the 
construction of an answer set
$I$ should be guided by the following two principles,
which we refer to as the {\em Gelfond answer set principles}
or shortly {\em GAS principles}:
\begin{description}
 \item[(P1)] $I$ must satisfy all rules of $\Pi$; and
 \item[(P2)] it should adhere to the {\em rationality principle}, 
i.e., one shall not believe anything one is not forced to believe.
\end{description}

Principle (P1) requires that every answer set 
must be a model of $\Pi$, meaning that the construction of 
an answer set is a generalized boolean satisfiability problem (SAT)
from a propositional theory to a logic program,
while (P2) says that the model selection follows 
the rationality principle.
Therefore, the GAS principles can be 
shortly formulated as follows: 
\begin{quote}
{\bf Gelfond answer set principles} 
 $=$ {\bf SAT} $+$ {\bf Rationality principle}.
\end{quote}

Note that the Gelfond's rationality principle
is just an abstract notion, which generally requires knowledge
minimization (i.e., only believe those things 
one has to believe) in nonmonotonic reasoning. 
One may interpret it in different ways,
thus leading to different answer set semantics.
For instance, in the GL-semantics the rationality principle
may be interpreted with the minimal model 
and foundedness properties, i.e., one shall not believe those 
ground atoms which are not in minimal models
and those atoms which are in some unfounded sets.

\leanparagraph{Our answer to Q2}
In this paper, we propose an answer to question Q2
by interpreting the Gelfond's rationality principle
with the following three refined principles: 
\begin{description}
\itemsep=1pt
 \item[(RP1)]
 {\bf Minimality of answer sets w.r.t.\ negation by default:} 
 Let $I$ be an answer set of $\Pi$ and $p$ be a ground atom.
 Then $\neg p$ is assumed by default 
 to be true in $I$ unless that incurs inconsistency.
 
 \item[(RP2)]
 {\bf Minimality of world views w.r.t.\ epistemic negation:} 
 Let ${\cal A}$ be a world view of $\Pi$ and $\naf F$ be an epistemic negation
 occurring in $\Pi$.
 Then $\naf F$ is assumed by default 
 to be true in ${\cal A}$ unless that incurs inconsistency.

  \item[(RP3)]
{\bf Well-supportedness of answer sets:} Every answer set $I$ of $\Pi$ has
a strict well-founded partial order on atoms in $I$ w.r.t. $\Pi$.
\end{description}

By minimality of answer sets w.r.t.\ negation by default,
we interpret the Gelfond's rationality principle as follows: 
for every ground atom $p$, we shall not
believe $p$ to be true in an answer set unless we are forced 
to believe it, i.e., unless assuming $\neg p$ to be true incurs inconsistency.
This will lead to a maximal set of ground atoms
being false in an answer set and thus no answer set
is a proper subset of another one.
This principle aims to minimize the knowledge 
of a logic program $\Pi$ at the level of answer sets
by making as many ground atoms
false in an answer set as possible.
As an example, consider 
$\Pi=\{p\leftarrow p\}$, which has two models
$I_1=\emptyset$ and $I_2=\{p\}$.
It is the minimality principle w.r.t.\ negation by default (RP1)
that rejects $I_2$ to be an answer set of $\Pi$. 
Note that this minimality principle looks like the {\em closed 
world assumption} (CWA) \cite{Reiter77}
and the minimal model semantics \cite{minker82},
but unlike them it does not require that answer sets
must be minimal models.

By minimality of world views w.r.t.\ epistemic negation
we further interpret the Gelfond's rationality principle as follows:
for every epistemic negation 
$\naf F$ occurring in $\Pi$,
we shall not believe $F$
to be true in a world view unless we are forced 
to believe it, i.e., unless assuming $\naf F$ to be true incurs inconsistency.
This will lead to a maximal set of epistemic negations 
in $\Pi$ being true in a world view.
This principle aims to minimize the epistemic knowledge 
of a logic program $\Pi$ at the level of world views
by making as many epistemic negations 
in $\Pi$ true in a world view as possible. 
By the {\em epistemic knowledge} 
of $\Pi$ we refer to all formulas $F$ with an
epistemic negation $\naf F$ occurring in $\Pi$.
As an example, consider 
$\Pi = \{p\leftarrow {\bf M} p\} = \{p\leftarrow \naf\neg p\}$, 
which has two world views ${\cal A} = \{\{p\}\}$ 
and ${\cal A}_1 = \{\emptyset\}$.
It is the minimality principle w.r.t.\ epistemic negation (RP2)
that rejects ${\cal A}_1$ to be a world view of $\Pi$. 

Finally, by well-supportedness of answer sets
we intuitively interpret, based on the 
{\em if-then} rules of a logic program $\Pi$,
the Gelfond's rationality principle as follows:
for every ground atom $p$
we shall not believe $p$
to be true in an answer set unless we are forced 
to believe it, i.e., unless there is a 
rule $p\leftarrow body(r)$ in $\Pi$,
where the condition
$body(r)$ has already been inferred to be true
from other rules in $\Pi$. 
Recall that in general an if-then rule 
$head(r) \leftarrow body(r)$ 
in a logic program essentially differs from the material implication
$body(r)\supset head(r)$
in classical logic because rules induce a level mapping on
each answer set such that answers at upper levels are derived from
answers at lower levels in the way that if
the body condition $body(r)$ of a rule $r$ is true 
in answers at lower levels,
then infer the rule head $head(r)$.  
For example, we can infer $p$ from a formula $\neg p\supset p$
in classical logic, as $p$ is a logical consequence of $\neg p\supset p$;
however, we cannot derive $p$ from the rule
$p\leftarrow \neg p$ in logic programs.
Another example is that $\neg p\supset q$
and $\neg q\supset p$ are equivalent formulas 
in classical logic, but $q\leftarrow \neg p$
and $p\leftarrow \neg q$ are different rules 
in logic programs.

The principle of well-supportedness guarantees
that every answer set is constructible from if-then  
rules obeying a level mapping
and is thus free of circular justification,
while the two minimality principles 
ensure that the formalism
minimizes knowledge both at the level
of answer sets and of world views. 
We say that an answer set is {\em free of 
circular justification} if it has a level mapping.
With our understanding, these principles are 
essential for an ASP semantics in general and
answer set and world view construction in particular. 
Therefore, we propose 
to consider (RP1)-(RP3) as three alternative principles 
for ASP semantics
and evolve the GAS principles to a refined form
as follows:
\begin{quote}
{\bf Refined Gelfond answer set principles} 
$=$ {\bf SAT}
 $+$ {\bf Well-supportedness} (of answer sets) $+$ {\bf Minimality} (of answer sets w.r.t.\ negation by default
and of world views w.r.t.\ epistemic negation).
 \end{quote}

The refined Gelfond answer set principles
(refined GAS principles for short) characterize
answer set and world view construction as a generalized SAT problem, 
where the model selection
follows the three general principles (RP1)-(RP3).
This provides an alternative baseline 
(in contrast to using the 
minimal model property, constraint monotonicity 
and foundedness as a baseline) to intuitively compare and assess
ASP semantics.

\smallskip

The notion of well-supportedness
was first introduced by \citeA{Fages:JMLCS:1994} 
as a key property to characterize
answer sets of simple normal programs
under the GL$_{nlp}$-semantics.
For a simple normal program $\Pi$,
\citeA{Fages:JMLCS:1994} defined that
an interpretation $I$ is {\em well-supported} in $\Pi$
if there exists a strict well-founded
partial order $\prec$ on atoms in $I$ such that for every $p\in I$ 
there is a rule $p\leftarrow body(r)$ in $\Pi$, where 
$I$ satisfies $body(r)$ and $q\prec p$
for every positive literal $q$ in $body(r)$.
Clearly, a well-supported interpretation $I$ induces 
a level mapping on atoms in $I$
via the partial order $\prec$ w.r.t.\ $\Pi$, i.e.,
for every $p\in I$, there is a rule $p\leftarrow body(r)$ in $\Pi$
such that $I$ satisfies $body(r)$ and for every positive
literal $q$ in $body(r)$, $q$ is at a lower level than $p$ (due to $q\prec p$).
As a result, the property of
well-supportedness guarantees
that every answer set is free of circular justification.

Fages's definition of well-supportedness is limited 
to simple normal programs.
In order to take well-supportedness as a general principle
for ASP semantics,
we have to extend the notion to 
general cases, i.e., to define well-supportedness
for answer sets of all epistemic-free programs and
world views of all epistemic programs.
Apparently, the extension is non-trivial.
For a simple normal program,
as every rule body is a conjunction of literals,
a strict well-founded partial order $\prec$ 
on atoms in an interpretation
can easily be defined based on 
positive literals in rule bodies.
However, for an epistemic program that
may contain disjunctive rule heads, 
propositional/first-order formulas and
epistemic negations both in rule bodies and heads,
it is a challenging task to define
a strict well-founded partial order $\prec$
on atoms in an interpretation on the basis of rules.

\leanparagraph{Goal and main contributions} The goal of this paper is to address the above two 
fundamental problems Q1 and Q2.
For simplicity of presentation, 
we restrict to propositional logic programs. 
The proposed approaches can easily be 
extended to logic programs with first-order formulas
by considering a first-order language, where
all free variables in a logic program are
grounded in the way as described in \cite{ShenEiter16}.
Our main contributions are summarized as follows:
\begin{myenumerate}
\item 
We use examples to demonstrate that requiring
minimal models, constraint monotonicity
and foundedness as mandatory conditions may  
exclude expected answer sets 
for some simple disjunctive programs
and world views
for some epistemic specifications.
This answers question Q1.

\item
We evolve the GAS principles 
by refining the Gelfond's rationality principle to 
the principle of well-supportedness, 
the principle of minimality of answer sets w.r.t.\ negation 
by default and the principle of minimality of world 
views w.r.t.\ epistemic negation, 
and propose to consider them as
 three alternative principles for an ASP 
semantics in general and for
answer set and world view construction in particular. 
This offers an answer to question Q2.

\item
To embody the refined GAS principles
for answer set and world view construction,
we extend the notion of well-supportedness  
substantially to answer sets of all epistemic-free programs
and world views of all epistemic programs.
Briefly, we first consider
an epistemic-free normal program $\Pi$ 
with atomic rule heads, which consists 
of rules of the form $H\leftarrow B$,
where $H$ is an atom and $B$ is a formula.
For an interpretation $I$,
let $\neg I^-$ denote the set of negative literals of $I$. 
We extend Fages's definition of well-supportedness
in a way that (i) we use the entailment relation $\models$
for logical consequences to handle  
rule bodies with first-order formulas, and (ii) we
define that a model $I$ is well-supported in $\Pi$
if there exists a strict well-founded
partial order $\prec$ on atoms in $I$, where 
for every $p\in I$
there is a rule $p\leftarrow body(r)$ in $\Pi$ and
some $S\subset I$ such that
$S\cup \neg I^- \models body(r)$
and $q\prec p$ for every $q\in S$.
As $\prec$ is a strict well-founded
partial order, $p$ is not in $S$.
This means that a well-supported model $I$ has 
a level mapping on its atoms
via the partial order $\prec$ w.r.t. $\Pi$
and thus is free of circular justification. 
Next, 
for an epistemic-free normal program $\Pi$
consisting of rules of the form $H\leftarrow B$,
where both $H$ and $B$ are formulas,
we first define a strict well-founded
partial order $\prec_h$ on rule heads of $\Pi$ 
w.r.t.\ a model $I$, and then 
define the well-supportedness of $I$ based on $\prec_h$.
Third, 
for an epistemic-free program $\Pi$ with disjunctive rule heads
we present a new disjunctive head selection function
$sel$ and use it to transform $\Pi$ to a set of epistemic-free 
normal programs, called 
disjunctive program reducts w.r.t.\ a model $I$.
The new disjunctive head selection function overcomes an issue
with the existing selection function
defined in \cite{ShenE19}. 
We then define $I$ to be well-supported in $\Pi$
if $I$ is well-supported in some disjunctive program reduct.
Finally, 
for an epistemic program $\Pi$,
we transform $\Pi$ to a set of epistemic-free programs, called 
epistemic reducts w.r.t.\ an epistemic model ${\cal A}$, 
and define ${\cal A}$ to be well-supported in $\Pi$
if every $I\in {\cal A}$ is well-supported in some epistemic reduct.

\item
We then propose to define ASP semantics  
in terms of the refined GAS principles. 
Specifically, by considering the principles of well-supportedness
and minimality w.r.t.\ negation by default,
we obtain a new definition of answer sets,
called {\em rational answer set semantics},
for epistemic-free programs. 
Informally, a model $I$ of an epistemic-free program
is an answer set under the rational answer set semantics
iff $I$ is a minimal well-supported model.
And by further considering minimality 
w.r.t.\ epistemic negation,
we obtain a new definition of world views, 
called {\em rational world view semantics},
for epistemic programs. 
Informally, an epistemic model ${\cal A}$ 
of an epistemic program is a world view 
under the rational world view semantics
iff at the answer set level, ${\cal A}$ is the collection of
all minimal well-supported models, and
at the world view level, ${\cal A}$ satisfies a 
maximal set of epistemic negations 
occurring in the program.
The rational answer set
and world view semantics embody 
the refined GAS principles,
but they satisfy neither
the minimal model property  
nor constraint monotonicity nor foundedness.

\item
We use the refined GAS principles
as an alternative baseline (in contrast to using the 
minimal model property, constraint monotonicity 
and foundedness as a baseline) 
to intuitively assess
the major existing ASP semantics.
We consider two cases in terms of the extent to
which an answer set/world 
view semantics ${\cal X}$ adheres to 
the refined GAS principles.
Informally, we say that ${\cal X}$ {\em satisfies} 
the refined GAS principles
if every answer set/world 
view under ${\cal X}$ is a model/epistemic model
satisfying the three general principles  (RP1)-(RP3);
and that ${\cal X}$ {\em embodies} the refined 
GAS principles if the converse also holds,
i.e., a model/epistemic model
is an answer set/world view under ${\cal X}$ 
iff it satisfies these three general principles.

\item 
Finally, we analyze the computational complexity of 
well-supportedness and the rational answer set and world view semantics.
To this end, we consider deciding
  well-supportedness of a model 
  and an epistemic model, respectively, 
 and whether a given program has a rational answer set
  or a rational world view, respectively; furthermore, we consider
  brave and cautious inferences from the rational answer sets
  respectively truth of a formula in some rational world view of
  a program. We characterize the complexity of these problems for different classes of
  propositional logic programs. More in detail, 
we show that the complexity of deciding well-supportedness covers a
 range of complexity classes in the Polynomial Hierarchy (PH) from
 \Pol{} (for simple normal epistemic-free programs) up to $\Pi^p_3$
 (for epistemic programs). Furthermore, we present
   algorithms for computing a well-founded partial order that
 witnesses the well-supportedness of a model, and we discuss
   the complexity of this problem. Regarding the rational answer 
   set and world view semantics, it appears that for varying program 
   classes complexity classes in PH from $\NP$ up to
   $\Sigma^p_5$ are covered. This suggests that the rational answer 
   set and world view semantics
   have rich capacity as a backend formalism for problem solving inside
   polynomial space,
   rarely seen for similar propositional knowledge representation formalisms.
\end{myenumerate}
\noindent{\bf Organisation.} The remainder of this paper is structured as follows.
In Section \ref{Preliminaries}, 
we present preliminaries related to
epistemic programs and epistemic models.
In Sections \ref{sec-min-cons-found} and 
\ref{ConstraintMonoFoundednessforEpistemicSpec},
we introduce examples to demonstrate that requiring
minimal models, constraint monotonicity
and foundedness as mandatory conditions may  
exclude expected answer sets 
for some simple disjunctive programs and world views
for some epistemic specifications, respectively.
In Section \ref{sec-Well-SupportednessASP},
we define well-supportedness  
for answer sets of epistemic-free programs
and world views of epistemic programs, respectively.
In Section \ref{sec-rational-semantics},
we define in terms of the refined GAS principles 
the rational answer set semantics
for epistemic-free programs
and the rational world view semantics
for epistemic programs. 
In Section \ref{sec:extension-choiceCons},
we formally define answer set semantics for
epistemic-free programs extended with choice constructs.
In Section \ref{sec-existing-semantics}, 
we use the refined GAS principles
as an alternative baseline to assess
the major existing ASP semantics.
In Section \ref{sec-Computation and Complexity}, 
we analyze the computational complexity of 
well-supportedness and the rational answer set and world view semantics.
In Section \ref{RelatedW}, we mention related work.
Finally, in Section~\ref{conclusion} we conclude the paper.

In order not to distract from the flow of reading, 
proofs of all theorems are collected in Appendix~\ref{app:proofs}.

\section{Preliminaries}
\label{Preliminaries}
For our concerns, it suffices to consider 
propositional (ground) logic programs and assume a fixed
propositional language ${\cal L}_\Sigma$ with a countable set 
$\Sigma$ of propositional atoms.
A (propositional) formula is constructed 
as usual from atoms using connectives
$\neg$, $\wedge$, $\vee$, $\supset$, $\equiv$, $\top$ and $\bot$, 
where $\top$ and $\bot$ are two 0-place 
logical connectives 
expressing $true$ and $false$, respectively. 
A theory is a set of formulas.
A literal is either an atom $a$ (positive literal) 
or its negation $\neg a$ (negative literal).
Any subset $I$ of $\Sigma$ is called an {\em interpretation};
$I$ {\em satisfies} an atom $a$
(or $a$ is true in $I$)
if $a\in I$, and $I$ satisfies $\neg a$ 
(or $a$ is false in $I$) if $a\not\in I$.
The notions of {\em satisfaction} and 
{\em models} of a formula in $I$
are defined as usual in classical logic. 
A theory $T$ {\em entails} a formula $F$, 
denoted $T\models F$,
if all models of $T$ are models of $F$. 
Two theories $T_1$ and $T_2$ are  
{\em logically equivalent}, 
denoted $T_1\equiv T_2$,
if $T_1\models T_2$ and $T_2\models T_1$.
For an interpretation $I$,
we let $I^- = \Sigma \setminus I$ 
and~$\neg I^- = \{\neg a\mid a\in I^-\}$. 

The above propositional language ${\cal L}_\Sigma$
is defined in classical logic, so for any formula $F$,
$F\vee \neg F$ is a tautology, i.e., 
$F\vee \neg F \equiv \top$, and
double negatives cancel each other out, i.e.,
$\neg\neg F\equiv F$.

We use propositional formulas in classical logic as building blocks to define logic programs,
and further extend  the 
language ${\cal L}_\Sigma$ to include 
the if-then rule operator $\leftarrow$, the disjunctive 
rule head operator $\mid$,
and the epistemic negation operator $\naf$.\footnote{In \cite{GL91}, {\em default negation} of the form $not\ A$,
where $A$ is an atom,
was used in rule bodies. In this paper, we use $\neg A$
to denote the negation of $A$ as in classical logic,
and achieve the effect of default negation 
by minimizing answer sets with negation by default.}
An {\em epistemic negation} is of the form
$\naf F$, where $F$ is a propositional formula.
The modal operators {\bf K} and {\bf M}
are viewed as shorthands for $\neg \naf$ and $\naf \neg$,
respectively. 
An {\em epistemic formula} is constructed 
from atoms and epistemic negations 
using the above connectives.\footnote{Note that 
like the approaches of \cite{Gelfond2011,KahlGelfond2015},
where modal operators {\bf K} and {\bf M} are not nested in logic programs,
we do not consider 
nested epistemic negations like $\naf(\naf F)$.
}

We then use epistemic formulas to construct rules and logic programs.

\begin{definition}
\label{def-episP}
%{\em
An {\em epistemic program} 
consists of rules of form (\ref{equa-ep}),
where $m\geq 1$, and $B$ and each $H_i$ are epistemic formulas.
%}
\end{definition}

An epistemic program 
is an {\em epistemic specification} 
if every rule
is of form (\ref{equa-1}).
It is an {\em epistemic-free program} 
if it contains no epistemic negation,
and is an {\em epistemic-free normal program} 
if additionally $m=1$ for every rule.
It is a {\em simple disjunctive program} 
if every rule
is of form (\ref{disjunctLP}), and
is a {\em simple normal program} 
if additionally $m=1$ for every rule.

For a rule $r$ of form (\ref{equa-ep}), 
we use $body(r)$ and $head(r)$ to 
refer to the body $B$ and the head 
$H_1\mid \cdots \mid H_m$, respectively. 
We also refer to each $H_i$ in $head(r)$ 
as a {\em head formula}.
When $B$ is void, we omit $\leftarrow$.
$r$ is a {\em constraint} if its head is $\bot$.
To uniquely identify rules in an epistemic program
$\Pi$, every rule in $\Pi$
is assigned an {\em ID number}.

For an epistemic-free program $\Pi$ consisting of rules
$r$ of form (\ref{equa-ep}),
an interpretation $I$ satisfies $r$ if
$I$ satisfies some $H_i$ in $head(r)$
once $I$ satisfies $body(r)$; $I$ is a (classical) model of $\Pi$
if $I$ satisfies all rules in $\Pi$, and 
$I$ is a minimal model if it is subset-minimal
among all models of $\Pi$.
$\Pi$ is consistent if it has a model.

The definition of satisfaction and models of 
epistemic programs
is based on a collection of interpretations.

\begin{definition}
\label{satisfaction-epis-program}
%{\em
Let $\Pi$ be an epistemic program and 
$\cal A\neq \emptyset$ be a collection of interpretations.
Let $I\in \cal A$. Then
\begin{enumerate}[(1)]
\item% [(1)] 
\label{sat-item1}
An epistemic negation $\naf F$ is true in $\cal A$ 
if $F$ is false in some $J\in \cal A$,
and false, otherwise.
$I$ satisfies $\naf F$ w.r.t.\ $\cal A$
if $\naf F$ is true in $\cal A$. 

\item% [(2)] 
$I$ satisfies an epistemic formula $E$ w.r.t.\ $\cal A$
if $I$ satisfies 
$E^{\cal A}$ as in propositional logic,
where $E^{\cal A}$ is
$E$ with every epistemic negation $\naf F$ that is true in $\cal A$
replaced by $\top$, and the other epistemic negations
replaced by $\bot$.

\item% [(3)]  
$I$ satisfies 
a rule $r$ of form (\ref{equa-ep}) w.r.t.\ $\cal A$ if
$I$ satisfies some $H_i$ in $head(r)$  w.r.t.\ $\cal A$
once $I$ satisfies $body(r)$  w.r.t.\ $\cal A$. 

\item% [(4)] 
$\cal A$ satisfies a rule $r$ 
(or $r$ is true in $\cal A$)
if every $J\in \cal A$ satisfies $r$ w.r.t.\ $\cal A$.
$\cal A$ satisfies $\Pi$, or $\cal A$ is 
an {\em epistemic model} of $\Pi$,  
if $\cal A$ satisfies all rules in $\Pi$.
$\Pi$ is {\em consistent} if it has an epistemic model.
\end{enumerate}
%}
\end{definition}

Thus, as {\bf K} stands for  $\neg \mathbf{not}$ and {\bf M}
for $\naf \neg$,  ${\bf K} F$ (resp.\ ${\bf M} F$) 
is true in $\cal A$ if $F$ is true in every 
(resp. some) $J\in \cal A$.

An answer set semantics
for an epistemic-free program $\Pi$
is defined in terms of some intended models of $\Pi$,
called answer sets, 
and a world view semantics for an epistemic program $\Pi$
is defined in terms of some intended epistemic
models of $\Pi$, called world views. 
By an ASP semantics
we refer to an answer set semantics
for epistemic-free programs or 
a world view semantics for epistemic programs.

\rev{A fundamental question is what intended
models and intended epistemic models of a logic program are.
We provide solutions to this question in
Section \ref{sec-rational-semantics}.
Briefly, based on the refined GAS principles
as formulated in the Introduction,
we define those models of an epistemic-free
program $\Pi$ as intended models/answer sets
of $\Pi$ if they satisfy the properties of well-supportedness and minimality
w.r.t.\ negation by default,
and those epistemic models of an epistemic
program $\Pi$ as intended epistemic models/world views
of $\Pi$ if they additionally satisfy the property of minimality
w.r.t.\ epistemic negation.}{}

\section{Examples of Answer Sets Violating Minimal Models, 
Constraint Monotonicity and Foundedness for Simple Disjunctive Programs}
\label{sec-min-cons-found}

In this section, we use 
examples to demonstrate that requiring
minimal models (MM), constraint monotonicity (CM)
and foundedness (FN) as mandatory conditions for an answer set semantics may  
exclude expected answer sets 
for some simple disjunctive programs.

\begin{example}[A counter-example to CM and FN]
\label{ex1-cmfn}
{\em
Given a simple disjunctive program like
\begin{tabbing} 
\hspace{.3in} \= $\Pi:\ $ \=  $a\mid b$\`$(1)$\\[2pt]
\>\> $a\mid c$ \` $(2)$ \\[2pt]
\>\> $\bot \leftarrow \neg a$  \` $(3)$\\[2pt]
\>\> $\bot \leftarrow \neg c$ \` $(4)$ 
\end{tabbing}
we may ask a question: what is the intuitive meaning of this program? 
In other words, what are the intended models/answer sets of $\Pi$
we expect to have, 
independently of any specific applications $\Pi$ may represent?
A direct way to answer this question is to intuitively
infer/construct all expected
answer sets from the rules of $\Pi$.

Intuitively, for construction of an intended model/answer set $I$,
rules (1) and (2) each present two alternatives.
Rule (1) states that $I$ either contains
$a$ or alternatively $b$, while rule (2) says that 
$I$ either contains $a$ or
alternatively $c$. Therefore, 
we may infer $I$ from rules (1) and (2)
with the following four possible choices:
\begin{tabbing}
\hspace{.3in} \= (i) $I = \{a\}$, where $I$ chooses $a$ from both rule (1) and rule (2);\\
\> (ii) $I = \{a, c\}$, where $I$ chooses $a$ from rule (1) and $c$ from rule (2);\\
\> (iii) $I = \{b, a\}$, where $I$ chooses $b$ from rule (1) and $a$ from rule (2);\\
\> (iv) $I = \{b, c\}$, where $I$ chooses $b$ from rule (1) and $c$ from rule (2).
\end{tabbing}

The last two rules in $\Pi$ are constraints, where rule (3) says that $I$ must contain $a$, and
rule (4) says that $I$ must contain $c$. 
This immediately eliminates the three choices (i), (iii)
and (iv). As a result, only $I = \{a, c\}$ satisfies 
the two constraints.
As $I = \{a, c\}$ is inferred/constructed
from all non-constraint rules (i.e., rules (1) and (2)) and it
satisfies all constraints (i.e., rules (3) and (4)) in $\Pi$, 
we call it a {\em candidate answer set} of $\Pi$.
Then we expect $I$ to be an intended model/answer set
provided that it is subset-minimal, i.e., 
no proper subset of $I$ is a candidate answer set of $\Pi$.
As no proper subset of $I$ is a model of $\Pi$,
$I$ is subset-minimal and thus is an  
expected answer set of $\Pi$.

Note the following features with this expected answer set $I$. 
First, $I$ violates CM. To show this,
let $\Pi'$ be $\Pi$ without rule (4); 
i.e., $\Pi=\Pi'\cup \{\bot \leftarrow \neg c\}$.
By CM, $I$ must be an answer set of $\Pi'$.
However, one can easily check that  $\Pi'$ has 
only one answer set $I'=\{a\}$, contradicting CM.

Second, $I$ also violates FN, i.e., some
subset $X$ of $I$
is an unfounded set (see the conditions
(f1)-(f3) of an unfounded set as defined in Section \ref{int}). 
Consider $X=\{c\}$. $\Pi$ has only rule (2) 
whose head intersects with $X$. 
For rule (2), as $(\{a,c\}\setminus X) \cap I =\{c\}$,
condition (f3) is violated.
As no rule in $\Pi$ 
whose head intersects
with $X$ satisfies conditions (f1)-(f3),
$X$ is an unfounded set of $I$ and thus
$I$ is not a founded answer set.

Finally, the GL-reduct $\Pi^I$ of  $\Pi$ w.r.t. $I$ 
consists of rules (1) and (2) in $\Pi$.
As $I' = \{a\}$ is a minimal model of $\Pi^I$, 
$I$ is not a minimal model of 
the GL-reduct and thus is not
an answer set of  $\Pi$ under the GL-semantics.

To sum up, for this simple disjunctive program $\Pi$, 
if we use the GL-semantics, or if we require CM or FN as mandatory conditions,
we would lose the expected answer set  $I = \{a, c\}$,
no matter what specific applications $\Pi$ may represent.
}
\end{example}

\begin{example}[A counter-example to MM, CM and FN borrowed from \cite{ShenE19}]
\label{ex-nonminimal}
{\em Consider another simple disjunctive program
\begin{tabbing} 
\hspace{.3in} \= $\Pi:\quad$ \= $a \mid b$ \` $(1)$\\[2pt]
\>\>          $b \leftarrow a$ \` $(2)$\\[2pt]
\> \>         $c \leftarrow a$  \` $(3)$\\[2pt]
\>\>          $\bot \leftarrow \neg c$  \` $(4)$
\end{tabbing}
Intuitively, rule $(1)$ presents two alternatives, either $a$ or $b$,
for construction of an answer set $I$,
and rules (2) and (3) infer $b$ and $c$,
respectively if $a$ is already in $I$.
We distinguish between the following two cases.

First, suppose that $I$ chooses $a$ 
from rule (1).
Then, given $a\in I$ we infer $b$ and $c$ from
rules (2) and (3), respectively.
That is, we infer $I=\{a,b,c\}$ from rules (1)-(3). 
As $I$ satisfies the constraint rule (4), 
it is a candidate answer set of $\Pi$.
Then we expect $I$ to be an answer set
provided that it is subset-minimal.

Alternatively, suppose that $I$ chooses $b$
from rule (1). 
Given $b\in I$, neither $a$ nor $c$ can be inferred
from rules (2) and (3),
so we can only infer $I=\{b\}$
from rules (1)-(3).
As $I$ does not satisfy the constraint rule (4),
it will not be an expected answer set of $\Pi$.

As there is no more alternative for $I$ to choose from rule (1),
we can obtain no candidate answer set 
that is a proper subset of $I=\{a,b,c\}$.
Consequently,  $I=\{a,b,c\}$
is a minimal candidate answer set and
thus we expect it to be an answer set of $\Pi$.

Note that this expected answer set $I$
 is not a minimal model of $\Pi$
($J = \{b,c\}$ is a smaller model), thus
violating MM. One can check that $I$ also
violates CM and FN and is not
an answer set under the GL-semantics.
}
\end{example}

There are more examples in which some expected answer sets 
violate CM, FN or MM. 
Next, we introduce two examples 
which encode in simple disjunctive programs a practical scenarios called
the Generalized 
Strategic Companies (GSC) problem
\cite{cado-etal-97,Leone2006,ShenE19}.\footnote{The formal definition
of the GSC problem is as follows \cite{ShenEiter22}.
Suppose a holding has a set $C = \{c_1,\ldots,c_m\}$ of companies 
and it produces a set $G = \{g_1,...,g_n\}$ of goods, 
where each company $c_i \in C$
produces some goods $G_i \subseteq G$. 
The holding wants to sell some of
its companies subject to the following conditions: all products should be still
in the portfolio and no company $c_i$ for which a strategy rationale,
expressed by justifications $\sigma^{i}_1,\ldots,\sigma^{i}_{k_i}$
with each
$\sigma^{i}_j= \ell^{i}_{1}\land\cdots\land\ell^{i}_{l_i}$ 
being a conjunction of literals on $C$, 
holds true is sold. 
The {\em GSC problem} is to find a set  $C' \subseteq C$ of companies,
called a {\em strategic set}, satisfying the following four
conditions:
(1) The companies in $C'$ 
produce all goods in $G$; 
(2) If a justification $\sigma^{i}_j$ is satisfied by $C'$,
then its associated company $c_i$ must be in $C'$;
(3) Every $c_i\in C'$ either produces some goods in $G$
or has some
justification $\sigma^{i}_j$
that is satisfied by $C'$;
 and (4) $C'$ is subset-minimal w.r.t.~conditions (1)-(3).
An instance of the GSC problem is obtained by instantiating
the variables $C$, $G$ and $\sigma$ with specific 
companies, goods and strategy conditions, respectively.
}

\begin{example}[Borrowed from \cite{ShenEiter22} with slight modification]
\label{ex:generalized-strat-comp}
{\em 
Consider the following simple disjunctive program, 
which encodes an instance of the GSC problem:\footnote{This instance of the GSC
problem is as follows. We have three companies $C = \{ c_1, c_2, c_3\}$,
two goods $G = \{ g_1, g_2 \}$ and the three strategy conditions: 
$\sigma^{1}_1 = c_2\land
c_3, \  \sigma^{2}_1 = c_3\  \mbox{      and     } \ \sigma^{3}_1 = c_1 \land \neg c_2.$
Suppose that $g_1$ can be produced either by 
$c_1$ or $c_2$, and $g_2$ produced 
either by $c_1$ or $c_3$. 
As explained in \cite{ShenEiter22,shenEiter2025arxiv},
the strategy condition
$\sigma^{3}_1$ can be encoded by a rule 
 $c_3 \leftarrow c_1\land \neg c_2$
or equivalently by a constraint $\bot \leftarrow c_1\land \neg c_2$,
as there will be no strategic set
$C' \subseteq C$ satisfying the condition
$c_1 \land \neg c_2$ (i.e., company $c_1$ is in $C'$ and $c_2$ is not).
Then, finding a strategic set $C'\subseteq C$ 
satisfying the four conditions for this GSC problem
corresponds to constructing an answer set 
$I$ from $\Pi$. That is, $C'\subseteq C$
is a strategic set satisfying the four conditions 
iff $I=C'\cup \{g_1,g_2\}$ is 
an answer set of $\Pi$.
This GSC instance has only one strategic set
$C'=\{c_1,c_2\}$.
}
\begin{tabbing} 
\hspace{.3in} \= $\Pi:\ $ \=  $g_1$\`$(1)$\\[2pt]
\>\> $g_2$ \` $(2)$ \\[2pt]
\>\> $c_1 \mid c_2 \leftarrow g_1$ 
$\quad$   \` $(3)$\\[2pt]
\>\> $c_1 \mid c_3 \leftarrow g_2$ \` $(4)$ \\[2pt]
 \>\> $c_1 \leftarrow c_2 \land c_3$  \` $(5)$\\[2pt]
\> \> $c_2 \leftarrow c_3$  \` $(6)$\\[2pt]
 \>\> $\bot \leftarrow c_1\land \neg c_2$  \` $(7)$
\end{tabbing}

Suppose that some model $I$ of $\Pi$ is
an expected answer set. 
As $g_1$ and $g_2$ are true in rules (1) and (2),
$I$ must contain both $g_1$ and $g_2$.
Given $\{g_1,g_2\}\subseteq I$, 
the body conditions of rules (3) and (4) are true in $I$.
Then, by rule (3) $I$ either contains
$c_1$ or alternatively $c_2$, and by rule (4)
$I$ either contains $c_1$ or
alternatively $c_3$. 
Suppose that $I$ chooses $c_2$ from rule (3) and $c_1$ from rule (4).
Then, we infer the set
$\{g_1,g_2, c_1, c_2\}\subseteq I$ from rules (1)-(4).
As no head of rules (5)-(7) contains $c_3$,
we cannot infer $c_3$ from the three rules.
As a result, we infer $I = \{g_1,g_2, c_1, c_2\}$
from rules (1)-(6).
As $I$ satisfies the constraint rule (7),
it is a candidate answer set of $\Pi$.
As no proper subset of $I$ 
is a model of $\Pi$,
$I$ is subset-minimal 
and thus we expect it to be an answer set of $\Pi$.

One can check that no other model of $\Pi$ 
can be constructed as $I$ in the above way.
Therefore,
$I = \{g_1,g_2, c_1, c_2\}$ is the only expected 
answer set of $\Pi$.

This expected answer set $I$ has the following features.
First, $I$ violates CM. 
Let $\Pi'$ be $\Pi$ without rule (7); i.e., 
$\Pi=\Pi'\cup \{\bot \leftarrow c_1\wedge  \neg c_2\}$.
By CM, $I$ must be an answer set of $\Pi'$.
However, one can easily check that $\Pi'$ has 
only one answer set $I' = \{g_1,g_2, c_1\}$,
contradicting CM.

Second, $I$ violates FN, i.e., some
subset $X$ of $I$ is an unfounded set. 
To show this,
consider $X=\{c_2\}$. $\Pi$ 
has rules (3) and (6), i.e.,
$c_1 \mid c_2 \leftarrow g_1$ and $c_2 \leftarrow c_3$,
whose heads intersect with $X$. 
For rule (3), as $(\{c_1,c_2\}\setminus X) \cap I =\{c_1\}$,
condition (f3) is violated; and for
rule (6), as $I$ does not satisfy $c_3$,
condition (f1) is violated.
Consequently, no rule in $\Pi$ 
whose head intersects
with $X$ satisfies all of the conditions (f1)-(f3);
thus $X$ is an unfounded set of $I$.

Finally,  as $I' = \{g_1,g_2, c_1\}$ is a minimal model of
the GL-reduct $\Pi^I$ of  $\Pi$ w.r.t. $I$, $I$ is not a minimal model of 
$\Pi^I$ and thus is not
an answer set of  $\Pi$ under the GL-semantics.

As a result, for this simple disjunctive program $\Pi$, 
if we use the GL-semantics, or if we require CM or FN as mandatory conditions,
we would lose the expected answer set $I = \{g_1,g_2, c_1, c_2\}$.
}
\end{example}

\begin{example} %the minimal model property]
\label{eg-minimal-models}
{\em
Consider the following simple disjunctive program,
which encodes another instance of the GSC problem:\footnote{This instance of the GSC says that we have companies $C = \{ c_1, c_2, c_3\}$,
goods $G = \{ g_1, g_2 \}$ and strategy conditions
$\sigma^{1}_1 = \neg c_3,$ $\sigma^{2}_1 = c_1$
and $\sigma^{3}_1 = c_1.$
Suppose that both $g_1$ and $g_2$ 
can be produced either by 
$c_1$ or $c_2$. This GSC instance has only one strategic set
$C'=\{c_1,c_2,c_3\}$.
}
\begin{tabbing} 
\hspace{.3in} \= $\Pi:\ $ \=  $g_1$\`$(1)$\\[2pt]
\>\> $g_2$ \` $(2)$ \\[2pt]
\>\> $c_1 \mid c_2 \leftarrow g_1$  \` $(3)$\\[2pt]
\>\> $c_1 \mid c_2 \leftarrow g_2$  \` $(4)$ \\[2pt]
\> \> $c_1 \leftarrow \neg c_3$  \` $(5)$\\[2pt]
\> \> $c_2 \leftarrow c_1$ \` $(6)$\\[2pt]
\> \> $c_3 \leftarrow c_1$ \` $(7)$
\end{tabbing} 

Suppose that some model $I$ of $\Pi$ is
an expected answer set. 
As $g_1$ and $g_2$ are true in rules (1) and (2),
the body conditions of rules (3) and (4) hold true,
so by choosing $c_1$ from
the heads of rules (3) and (4), respectively
we infer 
$\{g_1,g_2, c_1\}\subseteq I$ from rules (1)-(4).
Given $\{g_1,g_2, c_1\}\subseteq I$,
 the body conditions
of rules (6) and (7)
are true in $I$, so we infer that $I$ also contains
$c_2$ and $c_3$. As a result, we infer a candidate answer set
$I=\{g_1,g_2, c_1,c_2,c_3\}$ from rules (1)-(7).
Then we expect $I$ to be an answer set
provided that it is subset-minimal.

Note that in addition to $I$, $\Pi$ has only one more model,
$J=\{g_1,g_2, c_2,c_3\}$. which is a proper subset of $I$.
Assume that $J$ is an expected answer set, i.e.,
$J$ can be constructed in the above way.
As $J$ does not contain $c_1$, we will not choose $c_1$ from
the heads of rules (3) and (4).
Let us choose $c_2$; then we infer 
$\{g_1,g_2, c_2\}\subseteq J$ from rules (1)-(4).
As no body condition
of rules (5)-(7) is true in $J$, 
we cannot infer $c_3$ from these rules
and thus cannot obtain $J$ in the above way,
contradicting our assumption.
Therefore, $I$ is subset-minimal 
and is an expected answer set of $\Pi$.

This expected answer set $I$ is not a minimal model of $\Pi$
i.e., it violates MM.
One can also check that $I$ violates FN and is not
an answer set under the GL-semantics.
}
\end{example}

\section{Examples of World Views Violating Constraint Monotonicity and Foundedness for Epistemic Specifications}
\label{ConstraintMonoFoundednessforEpistemicSpec}

In \cite{iclp-KahlL18},
the notion of constraint monotonicity 
for simple disjunctive programs
was extended to epistemic specifications.
In particular, they introduced a notion of
{\em subjective constraint monotonicity (SCM)},
which requires that for every epistemic specification $\Pi$
and subjective constraint $C$,
world views of $\Pi\cup \{C\}$ 
should be world views of $\Pi$ satisfying $C$.
Here a {\em subjective constraint} is a constraint of the form
$\bot \leftarrow G_1 \wedge \cdots \wedge G_n$,
where each $G_i$ is a modal literal of the form ${\bf K} A$,
$\neg {\bf K} A$,  ${\bf M} A$ or  
$\neg {\bf M} A$, and $A$ is an atom.
Recall that  {\bf K} stands for  $\neg \mathbf{not}$ and {\bf M}
for $\naf \neg$.

We use an example to
show that requiring SCM as a mandatory condition may 
exclude expected world views for 
some epistemic specifications.

\begin{example}[A counter-example to SCM] %subjective constraint monotonicity]
\label{ex1-modal:generalized-strat-comp}
{\em 
Consider the simple disjunctive program $\Pi$
in Example~\ref{ex-nonminimal},
where rule (4) $\bot \leftarrow \neg c$ is a constraint,
which requires that every answer set of $\Pi$ 
should contain $c$. 
Such a constraint on ``every answer set''
can also be encoded with a modal operator ${\bf K}$
as a subjective constraint 
of the form $\bot \leftarrow \neg {\bf K} c$, thus transforming $\Pi$
into an epistemic specification
\begin{tabbing} 
\hspace{.3in} \= $\Pi_1:\quad$ \= $a \mid b$ \` $(1)$\\[2pt]
\>\>          $b \leftarrow a$ \` $(2)$\\[2pt]
\> \>         $c \leftarrow a$  \` $(3)$\\[2pt]
\>\>          $\bot \leftarrow \neg {\bf K} c$  \` $(4')$
\end{tabbing}

As  no modal literals occur in rules (1)-(3),
and rule (4) in $\Pi$
and  rule ($4'$) in $\Pi_1$
play the same role, i.e.,
requiring that every answer set
should contain $c$,
$\Pi$ and $\Pi_1$ should admit the same answer sets.
Therefore, as $\Pi$ has only one
expected answer set $I = \{a,b,c\}$ (see Example~\ref{ex-nonminimal}),
we expect that $\Pi_1$ has the only world view 
${\cal W}=\{I\}$.

It turns out that this expected world view
${\cal W}$ for $\Pi_1$
violates SCM.
Let $\Pi_2$ consist of rules (1)-(3),
i.e., $\Pi_1=\Pi_2\cup \{\text{rule } (4')\}$.
One can easily check that $\Pi_2$
has only one answer set $J=\{b\}$
and thus has only one world view ${\cal V}=\{J\}$.
As $W\neq V$, SCM is violated, which requires that
world views of $\Pi_1$ 
should be world views of $\Pi_2$ satisfying 
the constraint rule ($4'$).
This shows that requiring SCM as a mandatory condition may 
exclude expected world views for 
some epistemic specifications.
}
\end{example}

Another stronger requirement on world view semantics,
called {\em Epistemic Splitting (ES)}, was introduced in 
\cite{lpnmr-CabalarFC19}.
It was lifted from the notion of 
{\em splitting sets} \cite{lifs-turn-94} for answer sets 
of simple disjunctive programs
to world views of epistemic specifications.
Informally, an epistemic specification 
can be split if its {\em top} part only refers to the atoms of 
the {\em bottom} part through modal literals. 
Then, a world view semantics is said to satisfy epistemic splitting 
if it is possible to get its world views by first obtaining the world views 
of the bottom and then using the modal literals in the top as {\em queries}
on the bottom part previously obtained. 
It was shown by \citeA{lpnmr-CabalarFC19} that
the ES property is even more restrictive than
SCM %subjective constraint monotonicity 
in  
that every world view semantics 
satisfying ES also satisfies SCM.
%subjective constraint monotonicity.
Due to this, the requirement of ES % epistemic splitting
may exclude expected world views
for some epistemic specifications like
that in Example~\ref{ex1-modal:generalized-strat-comp}.

Furthermore, the notion of foundedness 
for simple disjunctive programs 
\cite{LeoneRS97} was extended in \cite{CabalarFC20aij}
to {\em epistemic specifications with double negation}
consisting of rules of the form
\[A_1\, \mid \cdots \mid \, A_m \leftarrow G_1 \wedge \cdots \wedge G_n\]
where each $A_i$ is an atom,
and each  $G_j$ is either an {\em objective literal} 
that is an atom $A$ or its negation $\neg A$ or its double negation
$\neg\neg A$, or a {\em subjective literal} of the form ${\bf K} L$,
$\neg {\bf K} L$ or $\neg\neg {\bf K} L$, where $L$ is an objective literal.
A {\em simple disjunctive program with double negation}
is an epistemic specification with double negation
without subjective literals.
For a rule $r$,  
let $Head(r)$ denote the set of atoms in $head(r)$,
and $Body_{ob}(r)$ (resp. $Body_{sub}(r)$)
denote those atoms occurring in objective (resp. subjective) literals in $body(r)$.
Moreover, $Body_{ob}^+(r)$ (resp. $Body_{sub}^+(r)$) denotes those atoms 
in $Body_{ob}(r)$ (resp. $Body_{sub}(r)$)  
occurring in positive literals (i.e., literals free of negation). 

\begin{definition}[Unfounded set \cite{CabalarFC20aij}]
\label{unfdedSet}
%{\em
Let $\Pi$ be an epistemic specification with double negation 
and $\cal W$ a world view.
An {\em unfounded set} w.r.t.\ $\cal W$
is a set $S\neq \emptyset$ of pairs
$\langle X, I\rangle$, where $X$ and $I$ are sets of atoms,
such that 
for each $\langle X, I\rangle\in S$, 
no rule $r$ in $\Pi$ with 
$Head(r) \cap X \neq \emptyset$ satisfies 
(ef1)-(ef4):
\smallskip

\begin{compactitem}
\itemsep=1pt
    \item[](ef1) $I$ satisfies $body(r)$ w.r.t. $\cal W$; 
\item[](ef2)
$Body^+_{ob} (r)\cap X = \emptyset$; 
\item[](ef3) $(Head(r)\setminus X)\cap I = \emptyset$; and 
\item[](ef4) for all $\langle X', I'\rangle\in S$, $Body^+_{sub} (r)\cap X' = \emptyset$.
\end{compactitem}
%}
\end{definition}

Compared to the conditions (f1)-(f3)
of an unfounded set for simple disjunctive programs
as defined in the introduction,
condition (ef2) generalizes (f2),
while (ef4) is new. Founded world views are then as follows.

\begin{definition}[World view foundedness \cite{CabalarFC20aij}, WFN]
\label{def-founded-aij20}
%{\em
A world view $\cal W$ of an epistemic specification $\Pi$
with double negation
is {\em unfounded} if some
unfounded set $S$ exists such that every $\langle X, I\rangle\in S$
satisfies $X\cap I \neq \emptyset$ and $I\in \cal W$;
 otherwise, $\cal W$ is called {\em founded}.
%}
\end{definition}

The following example 
shows that requiring WFN as a mandatory condition may 
exclude expected world views for 
some epistemic specifications.

\begin{example}[A counter-example to WFN] % world view foun dedness]
\label{ex3-epi:generalized-strat-comp}
{\em 
Consider the epistemic specification $\Pi_1$
in Example~\ref{ex1-modal:generalized-strat-comp},
where
${\cal W}=\{I\}$ with $\{a,b,c\}$
is the only expected world view.
We show that ${\cal W}$
violates WFN,
i.e., there is an 
unfounded set $S$ w.r.t.\ ${\cal W}$
such that every $\langle X, I\rangle\in S$
satisfies $I\in \cal W$ and $X\cap I \neq \emptyset$.
Let
$S=\{\langle X, I\rangle\}$, where 
$X=\{a,c\}$ and $I=\{a,b, c\}$. 
We show that for
$\langle X,I\rangle$
no rule $r\in \Pi_1$ with 
$Head(r) \cap X \neq \emptyset$ 
satisfies (ef1)-(ef4).

$\Pi_1$ has rules (1) and (3) whose heads 
intersect with $X$. For rule (1),
as $(\{a,b\}\setminus X) \cap I =\{b\}$,
condition (ef3) is violated; for rule (3),
condition (ef2) is violated.
Consequently, 
no rule in $\Pi_1$ 
whose head 
intersects with $X$ 
satisfies (ef1)-(ef4);
thus $S=\{\langle X, I\rangle\}$
is an unfounded set w.r.t.\ ${\cal W}$.
As $X\cap I \neq \emptyset$ and $I\in \cal W$,
$\cal W$ is hence unfounded.

As a result, requiring WFN as a mandatory condition may 
lose the expected world view ${\cal W}$ of $\Pi_1$.
}
\end{example}

\section{Defining Well-Supportedness for Answer Sets of Epistemic-Free Programs and World Views of Epistemic Programs}
\label{sec-Well-SupportednessASP}
\citeA{Fages:JMLCS:1994} introduced
the notion of well-supportedness
as a key property of
answer sets for simple normal programs.
In this section we extend the notion to answer sets of
epistemic-free programs and
world views of epistemic programs.
Briefly, we first build the well-supportedness for answer 
sets of epistemic-free normal programs. Then for an epistemic-free 
program $\Pi$ with disjunctive rule heads, the well-supportedness 
of an answer set $I$ is built by transforming $\Pi$ into an 
epistemic-free normal program $\Pi'$ w.r.t.\ $I$ and defining 
the well-supportedness of $I$ in $\Pi$ based on the well-supportedness 
of $I$ in $\Pi'$. Finally, for an epistemic 
program $\Pi$, the well-supportedness 
of a world view $\cal A$ is built by transforming $\Pi$ into an 
epistemic-free program $\Pi'$ w.r.t.\ $\cal A$ and defining 
the well-supportedness  
of $\cal A$ in $\Pi$ based on the well-supportedness 
of every $I\in {\cal A}$ in $\Pi'$.

We begin by recalling Fages's definition.
Recall that a binary relation $\leq$ on a set $S$ is {\em well-founded} if there is no
infinite decreasing chain $a_0\geq a_1\geq \cdots$ in $S$.

\begin{definition}[Well-supportedness for simple normal programs \cite{Fages:JMLCS:1994}]
\label{def-well-supported-simp-normal}
%{\em
For a simple normal program $\Pi$,
an interpretation $I$ is {\em well-supported} in $\Pi$
if there exists a strict well-founded
partial order $\prec$ w.r.t.\ $\Pi$ on atoms in $I$ 
such that for every $p\in I$ 
there is a rule $p\leftarrow body(r)$ in $\Pi$, where
$I$ satisfies $body(r)$ and $q\prec p$
for every positive literal $q$ in $body(r)$.
%}
\end{definition}

Note that $I=\emptyset$ is a well-supported interpretation of 
all simple normal programs, 
where as $I$ is an empty set,
the condition of Definition~\ref{def-well-supported-simp-normal} trivially holds.
Moreover, some interpretations of a simple normal program $\Pi$
may be well-supported even though they do not satisfy $\Pi$.
For example, consider $\Pi=\{q,\ p\leftarrow \neg p\}$ and $I=\{q\}$, where
$I$ does not satisfy $\Pi$, but it is
well-supported under Definition~\ref{def-well-supported-simp-normal}.
This does not seem to be a desirable behavior. 
To avoid this weakness, in this paper 
we choose to define the notion of well-supportedness
on models of logic programs.

\subsection{Well-supportedness of answer sets for epistemic-free normal programs with atomic rule heads}
  
We first extend the notion of well-supportedness to
epistemic-free normal programs with atomic rule heads
consisting of rules
of the form $H\leftarrow B$,
where $H$ is an atom or $\bot$, and $B$ is a propositional formula.
As rule bodies are arbitrary propositional formulas,
 Fages's definition of well-supportedness 
(Definition~\ref{def-well-supported-simp-normal})
is not applicable.

Observe that by the well-supportedness of
an interpretation $I$ we refer to the well-supportedness of
the {\em positive part} of $I$, 
where the {\em negative part} $\neg I^- $ of $I$
is assumed to be true by default (following the principle of
minimality of answer sets w.r.t.\ negation by default
defined in Section \ref{int}).
Therefore, our intuition behind the notion of 
well-supportedness is that 
when the negative part, i.e., every $\neg p\in \neg I^-$,
is assumed to be true by default, the positive part, i.e.,
every $p\in I$, can be inferred from the if-then rules in $\Pi$.
Formally we have the following definition.

\begin{definition}
\label{def-well-supported-normal-atomhead}
%{\em
Let $\Pi$ be an epistemic-free normal program 
with atomic rule heads.
A model $I$ of $\Pi$ is {\em well-supported} in $\Pi$
if there exists a strict well-founded
partial order $\prec$  w.r.t.\ $\Pi$ on atoms in $I$ 
such that for every $p\in I$
there is a rule $p\leftarrow body(r)$ in $\Pi$ and
some $S\subset I$, where 
$S\cup \neg I^- \models body(r)$
and $q\prec p$
for every $q\in S$.
%}
\end{definition}

Like Fages's definition,
a well-supported model $I$ defined in 
Definition~\ref{def-well-supported-normal-atomhead}
induces a level mapping on atoms in $I$ 
via the partial order $\prec$ w.r.t.\ $\Pi$, i.e.,
for every $p\in I$
there is a rule $p\leftarrow body(r)$ in $\Pi$ and some $S\subset I$,
where $body(r)$ is true in $S\cup \neg I^-$ (i.e., 
$S\cup \neg I^- \models body(r)$)  
and every $q\in S$ is at a lower level than $p$
(due to $q\prec p$).
Due to having a level mapping, 
well-supported models of 
epistemic-free normal programs 
with atomic rule heads are free of circular justification. 

\begin{example}
\label{eg-wellsupp-normal-pqs}
{\em
Consider an epistemic-free program
$\Pi = \{q\leftarrow q\vee \neg q, \ p\leftarrow q\wedge \neg s\}$ and a model $I = \{p,q\}$. 
Note that the propositional formula $q\vee \neg q$ 
in the body of the first rule 
represents a tautology in classical logic. 
Let $\neg I^-=\{\neg s\}$.
By Definition~\ref{def-well-supported-normal-atomhead},
$I$ is well-supported in $\Pi$ with a strict well-founded
partial order $q\prec p$.
For $q\in I$,
we have rule $q\leftarrow q\vee \neg q$ in $\Pi$ and 
$S=\emptyset$, where
$S\cup \neg I^- \models q\vee \neg q$.
For $p\in I$,
we have rule $p\leftarrow q\wedge \neg s$ in $\Pi$ and 
$S=\{q\}$ , where 
$S\cup \neg I^- \models q\wedge \neg s$
and $q\prec p$.
}
\end{example}

Our definition of well-supportedness
agrees with Fages's definition for simple normal programs.

\begin{restatable}{theorem}{RthSimpNormal}
\label{th-simp-normal}
For a simple normal program $\Pi$,
a model $I$ of $\Pi$ is well-supported in $\Pi$ by
Definition~\ref{def-well-supported-simp-normal} iff 
$I$ is well-supported by 
Definition~\ref{def-well-supported-normal-atomhead}.
\end{restatable}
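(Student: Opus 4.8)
The plan is to show that one and the same strict well-founded partial order $\prec$ on the atoms of $I$ witnesses well-supportedness under Definition~\ref{def-well-supported-simp-normal} precisely when it witnesses well-supportedness under Definition~\ref{def-well-supported-normal-atomhead}; the claimed equivalence then follows at once. Everything reduces to one elementary observation about simple normal rules. For a rule $r$ of form~(\ref{disjunctLP}) with $m=1$, write $body(r) = B_1 \wedge \cdots \wedge B_n \wedge \neg C_1 \wedge \cdots \wedge \neg C_k$. For any $S \subseteq I$ the set $S \cup \neg I^-$ is a consistent set of literals (since $S \cap I^- = \emptyset$), and a consistent set of literals entails a conjunction of literals iff it contains each conjunct. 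Hence $S \cup \neg I^- \models body(r)$ iff $\{B_1,\dots,B_n\} \subseteq S$ and $\{C_1,\dots,C_k\} \cap I = \emptyset$; and since $S \subseteq I$, this is in turn equivalent to: $\{B_1,\dots,B_n\} \subseteq S$ and $I$ satisfies $body(r)$.

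For the direction from Definition~\ref{def-well-supported-simp-normal} to Definition~\ref{def-well-supported-normal-atomhead}, I would keep the Fages order $\prec$ unchanged and, for each $p \in I$ with its supporting rule $p \leftarrow body(r)$, set $S := \{B_1,\dots,B_n\}$, the set of positive body atoms. From $I \models body(r)$ we get $S \subseteq I$, and since $\prec$ is strict with $q \prec p$ for all $q \in S$ we have $p \notin S$, so $S$ is the required strict subset of $I$ (note $I \neq \emptyset$ as $p \in I$). By the observation, $S \cup \neg I^- \models body(r)$, and $q \prec p$ holds for every $q \in S$ by the Fages condition; thus $\prec$ witnesses Definition~\ref{def-well-supported-normal-atomhead}. (That $I$ is a model of $\Pi$, which Definition~\ref{def-well-supported-normal-atomhead} presupposes, is given by the statement.)

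Conversely, given a witness $\prec$ for Definition~\ref{def-well-supported-normal-atomhead} and, for each $p \in I$, a rule $p \leftarrow body(r)$ together with a set $S \subset I$ as in that definition, I would again keep $\prec$ and the same rule. From $S \cup \neg I^- \models body(r)$ the observation gives $\{B_1,\dots,B_n\} \subseteq S \subseteq I$ and $\{C_1,\dots,C_k\} \cap I = \emptyset$, i.e.\ $I$ satisfies $body(r)$; moreover every positive literal of $body(r)$ lies in $\{B_1,\dots,B_n\} \subseteq S$ and is therefore $\prec p$. Hence $\prec$ witnesses Definition~\ref{def-well-supported-simp-normal}.

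I do not anticipate a genuine obstacle here: the proof is short. The only point requiring some care is the opening step—translating the semantic entailment $S \cup \neg I^- \models body(r)$ into purely combinatorial containment conditions—together with the minor bookkeeping that the set $S$ produced in the forward direction is a proper subset of $I$; the rest is routine unwinding of the two definitions with the same order $\prec$ and the same supporting rules.
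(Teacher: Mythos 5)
Your proof is correct and follows essentially the same route as the paper's: keep the same strict well-founded order $\prec$ and the same supporting rules, take $S$ to be the set of positive body atoms in the forward direction, and in the converse unwind $S \cup \neg I^- \models body(r)$ into "$I$ satisfies $body(r)$ and all positive body atoms lie in $S$". The only difference is presentational—you make explicit the literal-containment characterization of entailment and the properness of $S$, which the paper asserts without comment.
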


In order to have an algorithm 
for automatically determining the well-supportedness of a model,
we present a characterization of well-supported models
based on the following one-step provability operator
introduced in \cite{ShenWEFRKD14}.

\begin{definition}
\label{def-one-step-operator}
Let $\Pi$ be an epistemic-free 
normal program and 
let $O$ and $N$
be two theories. Define
\[T_\Pi(O,N)= \{head(r) \mid r\in \Pi \textrm{ and } O\cup N \models body(r)\}\]  
\end{definition}

Intuitively, $T_\Pi(O,N)$ collects all heads of rules in $\Pi$
whose bodies are true in $O\cup N$.
When the parameter $N$ is fixed, 
the entailment relation $\models$ is monotone in $O$,
so $T_\Pi(O,N)$ is monotone
w.r.t.\ $O$, i.e., for all theories $O_1$ and $O_2$
with $O_1\subseteq O_2$, we have $T_\Pi(O_1,N) \subseteq
T_\Pi(O_2,N)$.  
As moreover $T_\Pi(O,N)$ is finitary,% 
\footnote{I.e., whenever
$\alpha \in T_\Pi(O,N)$, then there is some finite $O'\subseteq
O$ such that $\alpha \in T_\Pi(O',N)$.}
it is immediate that
the inference sequence 
$\langle T_{\Pi}^i(\emptyset,N)
\rangle_{i=0}^\infty$, where $T_{\Pi}^0(\emptyset,N) = \emptyset$ and
for $i\geq 0$ $T_{\Pi}^{i+1}(\emptyset,N) =
T_{\Pi}(T_{\Pi}^i(\emptyset,N),N)$, will converge to a least fixpoint, denoted
$\mathit{lfp}(T_{\Pi}(\emptyset,N))$.

For an interpretation $I$, let 
$\mathit{lfp}(T_{\Pi}(\emptyset,\neg I^-))$
be the least fixpoint of the inference sequence 
$\langle T_{\Pi}^i(\emptyset,\neg I^-)
\rangle_{i=0}^\infty$ w.r.t.\ $I$.
It is easy to show that when $I$ is a model of $\Pi$,
for $i\geq 0$ $I$ satisfies $T_{\Pi}^i(\emptyset,\neg I^-)$
and thus satisfies $\mathit{lfp}(T_{\Pi}(\emptyset,\neg I^-))$.
A well-supported model $I$ 
can then be characterized in terms of
$\mathit{lfp}(T_{\Pi}(\emptyset,\neg I^-))$.

\begin{restatable}{theorem}{RthWellsupportLfp}
\label{th-wellsupport-lfp}
Let $\Pi$ be an epistemic-free normal program 
with atomic rule heads.
A model $I$ of $\Pi$ is well-supported in $\Pi$
by Definition~\ref{def-well-supported-normal-atomhead} iff
$\mathit{lfp}(T_{\Pi}(\emptyset,\neg I^-))=I$.
\end{restatable}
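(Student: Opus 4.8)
The plan is to prove the biconditional in Theorem~\ref{th-wellsupport-lfp} by establishing the two implications separately, using the fact (already noted before the statement) that when $I$ is a model of $\Pi$, every $T_\Pi^i(\emptyset,\neg I^-)$ is satisfied by $I$, hence $\mathit{lfp}(T_\Pi(\emptyset,\neg I^-)) \subseteq I$ always holds. Thus the equality $\mathit{lfp}(T_\Pi(\emptyset,\neg I^-)) = I$ reduces to the inclusion $I \subseteq \mathit{lfp}(T_\Pi(\emptyset,\neg I^-))$, and the real content is to connect this inclusion with the existence of a strict well-founded partial order $\prec$ witnessing well-supportedness.

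For the direction ``$\Leftarrow$'' (assume $\mathit{lfp}(T_\Pi(\emptyset,\neg I^-)) = I$), I would define the level of an atom $p\in I$ as the least index $i$ such that $p \in T_\Pi^i(\emptyset,\neg I^-)$, which is well-defined and finite since the sequence is monotone, finitary, and converges to $I$. Define $q \prec p$ iff $\mathit{level}(q) < \mathit{level}(p)$; this is a strict well-founded partial order on the finite (or at least level-stratified) set $I$. For each $p\in I$ with $\mathit{level}(p) = i+1$, the fact that $p \in T_\Pi^{i+1}(\emptyset,\neg I^-) = T_\Pi(T_\Pi^i(\emptyset,\neg I^-),\neg I^-)$ means there is a rule $p \leftarrow body(r)$ in $\Pi$ with $T_\Pi^i(\emptyset,\neg I^-) \cup \neg I^- \models body(r)$. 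Taking $S = T_\Pi^i(\emptyset,\neg I^-) \cap I = T_\Pi^i(\emptyset,\neg I^-)$ (the latter by $\mathit{lfp} \subseteq I$), we get $S \subseteq I$ with $S\cup\neg I^- \models body(r)$ and every $q\in S$ has level $\le i < i+1 = \mathit{level}(p)$, so $q\prec p$. One subtlety: Definition~\ref{def-well-supported-normal-atomhead} requires $S \subset I$ (strict); since $\prec$ is strict well-founded, $p\notin S$ as $S$ consists only of atoms of strictly smaller level, so indeed $S \subsetneq I$. This yields well-supportedness.

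For the direction ``$\Rightarrow$'' (assume $I$ is well-supported via $\prec$), I would show $I \subseteq \mathit{lfp}(T_\Pi(\emptyset,\neg I^-))$ by well-founded induction on $\prec$. For $p\in I$, the defining rule $p\leftarrow body(r)$ and set $S\subset I$ satisfy $S\cup\neg I^- \models body(r)$ with $q\prec p$ for all $q\in S$; by the induction hypothesis each such $q$ lies in $\mathit{lfp}(T_\Pi(\emptyset,\neg I^-))$, so $S \subseteq \mathit{lfp}(T_\Pi(\emptyset,\neg I^-))$, whence by monotonicity of $T_\Pi(\cdot,\neg I^-)$ and the fixpoint property, $\mathit{lfp}(T_\Pi(\emptyset,\neg I^-)) \cup \neg I^- \supseteq S \cup \neg I^- \models body(r)$, giving $p = head(r) \in T_\Pi(\mathit{lfp}(T_\Pi(\emptyset,\neg I^-)),\neg I^-) = \mathit{lfp}(T_\Pi(\emptyset,\neg I^-))$. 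Combined with the reverse inclusion noted above, $\mathit{lfp}(T_\Pi(\emptyset,\neg I^-)) = I$.

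The main obstacle I anticipate is making the well-founded induction fully rigorous when $I$ may be infinite (the language $\Sigma$ is countable): a strict well-founded partial order does admit well-founded induction, but one must be careful that the ``level'' function used in the converse direction is well-defined even when $\prec$-descending chains are finite yet levels are not a priori bounded — here the finitary and monotone character of $T_\Pi$, together with the convergence of the inference sequence, is exactly what rescues the argument, since every atom entering the least fixpoint does so at some finite stage. A secondary point requiring care is the treatment of the entailment $\models$ with arbitrary propositional formulas in rule bodies: I would invoke only monotonicity of $\models$ in its left argument and the finitariness footnoted in Definition~\ref{def-one-step-operator}, avoiding any appeal to the syntactic shape of $body(r)$, so the argument goes through uniformly for epistemic-free normal programs with atomic rule heads.
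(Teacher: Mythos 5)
Your proof is correct and follows essentially the same route as the paper's: both directions rest on the correspondence between the stages $T_\Pi^i(\emptyset,\neg I^-)$ of the fixpoint iteration and the well-founded order $\prec$, with $I\subseteq\mathit{lfp}(T_\Pi(\emptyset,\neg I^-))$ obtained by induction along $\prec$ (using monotonicity, finitariness and the fixpoint property) and the converse order built from the stage at which each atom is first derived, while $\mathit{lfp}(T_\Pi(\emptyset,\neg I^-))\subseteq I$ follows from $I$ being a model. The differences are only presentational: you argue the forward inclusion by well-founded induction, whereas the paper partitions $I$ by longest $\prec$-chain length (which presupposes a finite bound, so your formulation is in fact slightly more robust when $I$ is infinite), and in the converse direction you take $S=T_\Pi^i(\emptyset,\neg I^-)$ with a level-comparison order where the paper records only the precedences $q\prec p$ for atoms actually used in deriving $p$ — both choices satisfy Definition~\ref{def-well-supported-normal-atomhead}.
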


\subsection{Well-supportedness of answer sets for epistemic-free normal programs}

We now consider
epistemic-free normal programs
consisting of rules
of the form $H\leftarrow B$,
where both $H$ and $B$ are propositional formulas.
As rule heads are formulas,
Definition~\ref{def-well-supported-normal-atomhead}
is not applicable.
We define the well-supportedness of a model $I$
for such a program $\Pi$ by
first defining the well-supportedness of rule heads in $\Pi$ w.r.t.\ $I$.

\begin{definition}
\label{def-well-supportedHeads-normal}
%{\em
Let $\Pi$ be an epistemic-free normal program
and $I$ be a model of $\Pi$.
Let ${\cal V}$ be a set of rule heads in $\Pi$, where
for every $H\in {\cal V}$
there is a rule $H\leftarrow body(r)$ in $\Pi$
such that $I$ satisfies $body(r)$.
Then
${\cal V}$ is {\em well-supported} in $\Pi$ w.r.t.\ $I$
if (1) for every rule $H\leftarrow body(r)$ in $\Pi$, 
if ${\cal V}\cup \neg I^- \models body(r)$
then $H$ is in ${\cal V}$,
and (2) there exists a strict well-founded
partial order $\prec_h$ on rule heads in ${\cal V}$ 
such that for every $H\in {\cal V}$
there is a rule $H\leftarrow body(r)$ in $\Pi$ and
some $S\subset \cal V$, where 
$S\cup \neg I^- \models body(r)$
and $F\prec_h H$
for every $F\in S$.
%}
\end{definition}

Every well-supported set $\cal V$ of rule heads defined 
in Definition~\ref{def-well-supportedHeads-normal}
induces a level mapping on $\cal V$
via the partial order $\prec_h$ w.r.t.\ $\Pi$ and $I$, i.e.,
for every $H\in {\cal V}$
there is a rule $H\leftarrow body(r)$ in $\Pi$ and
some $S\subset \cal V$, where $body(r)$ is true
in $S\cup \neg I^-$ and 
every $F\in S$ is at a lower level than $H$
(due to $F\prec_h H$).

Well-supported rule heads
of an epistemic-free normal program $\Pi$ w.r.t.\ $I$
can also be characterized in terms of the least fixpoint
$\mathit{lfp}(T_{\Pi}(\emptyset,\neg I^-))$
of the inference sequence 
$\langle T_{\Pi}^i(\emptyset,\neg I^-)
\rangle_{i=0}^\infty$ w.r.t.\ $I$.

\begin{restatable}{theorem}{RthWellsupportRuleheadsLfp}
\label{th-wellsupport-ruleheads-lfp}
Let $\Pi$ be an epistemic-free normal program
and $I$ be a model of $\Pi$.
A set ${\cal V}$ of rule heads in $\Pi$ 
is well-supported in $\Pi$ w.r.t.\ $I$ iff
$\mathit{lfp}(T_{\Pi}(\emptyset,\neg I^-))={\cal V}$.
\end{restatable}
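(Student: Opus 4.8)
The plan is to prove both directions by exploiting the structure of the iteration sequence $\langle T_\Pi^i(\emptyset,\neg I^-)\rangle_{i=0}^\infty$ and matching it to the level mapping induced by $\prec_h$. Write $L = \mathit{lfp}(T_\Pi(\emptyset,\neg I^-))$ for brevity, and recall (as noted just before the statement) that $L\subseteq I$ in the sense that $I$ satisfies every formula in $L$, since $I$ is a model of $\Pi$ and satisfies each $T_\Pi^i(\emptyset,\neg I^-)$.

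For the ``only if'' direction, suppose $\cal V$ is well-supported in $\Pi$ w.r.t.\ $I$ via a strict well-founded partial order $\prec_h$ on $\cal V$. First I would show $\cal V\subseteq L$ by well-founded induction along $\prec_h$: for $H\in{\cal V}$, condition~(2) gives a rule $H\leftarrow body(r)$ and some $S\subset{\cal V}$ with $S\cup\neg I^- \models body(r)$ and $F\prec_h H$ for all $F\in S$; by the induction hypothesis $S\subseteq L$, and since $L$ is the least fixpoint (so $T_\Pi(L,\neg I^-)=L$) and $T_\Pi$ is monotone in its first argument, $S\cup\neg I^-\models body(r)$ forces $H\in T_\Pi(L,\neg I^-)=L$. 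Conversely, $L\subseteq{\cal V}$ follows by induction on the iteration stage $i$: $T_\Pi^0=\emptyset\subseteq{\cal V}$ trivially, and if $T_\Pi^i(\emptyset,\neg I^-)\subseteq{\cal V}$ then any $H\in T_\Pi^{i+1}(\emptyset,\neg I^-)$ arises from a rule $H\leftarrow body(r)$ with $T_\Pi^i(\emptyset,\neg I^-)\cup\neg I^-\models body(r)$; monotonicity gives ${\cal V}\cup\neg I^-\models body(r)$, and condition~(1) of Definition~\ref{def-well-supportedHeads-normal} then yields $H\in{\cal V}$. Hence ${\cal V}=L$.

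For the ``if'' direction, assume $L={\cal V}$. I need to verify the two clauses of Definition~\ref{def-well-supportedHeads-normal} and to exhibit the well-founded order. Clause~(1): if $H\leftarrow body(r)$ is a rule with ${\cal V}\cup\neg I^-\models body(r)$, then since ${\cal V}=L$ is a fixpoint, $H\in T_\Pi(L,\neg I^-)=L={\cal V}$. For clause~(2) and the implicit precondition (every $H\in{\cal V}$ has a rule $H\leftarrow body(r)$ with $I\models body(r)$), I would define for each $H\in{\cal V}=L$ its \emph{rank} $\rho(H)$ as the least $i\geq 1$ with $H\in T_\Pi^i(\emptyset,\neg I^-)$, and set $F\prec_h H$ iff $\rho(F)<\rho(H)$; this is a strict well-founded partial order because ranks are natural numbers. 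Given $H$ with $\rho(H)=i$, there is a rule $H\leftarrow body(r)$ with $T_\Pi^{i-1}(\emptyset,\neg I^-)\cup\neg I^-\models body(r)$; taking $S=T_\Pi^{i-1}(\emptyset,\neg I^-)$ gives $S\subset{\cal V}$ (proper because $H\notin S$ by minimality of $\rho(H)$), $S\cup\neg I^-\models body(r)$, and $\rho(F)\leq i-1<i$ hence $F\prec_h H$ for all $F\in S$; moreover $S\cup\neg I^-\models body(r)$ together with $L\supseteq S$ and $\neg I^-$ true in $I$ gives $I\models body(r)$, establishing the precondition as well.

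The main obstacle I anticipate is purely the bookkeeping around the entailment relation $\models$ rather than any conceptual subtlety: one must be careful that $S\cup\neg I^-\models body(r)$ really does propagate correctly through the monotone operator, that the ``finitary'' property guarantees each derived head already appears at a finite stage (so $\rho$ is well-defined), and that $I\models body(r)$ is recovered from $S\cup\neg I^-\models body(r)$ by noting $I$ is a superset of $S$ and makes every literal of $\neg I^-$ true, hence $I$ is a model of $S\cup\neg I^-$ and therefore of $body(r)$. A minor point to check is that the requirement $S\subset{\cal V}$ is \emph{strict}: this is forced by $H\notin T_\Pi^{\rho(H)-1}(\emptyset,\neg I^-)$, which also shows $\prec_h$ genuinely excludes $H\prec_h H$. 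No heavy machinery beyond Definition~\ref{def-one-step-operator}, the monotonicity and finitarity remarks preceding the theorem, and ordinary induction on $\mathbb{N}$ is needed.
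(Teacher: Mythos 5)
Your proof is correct and follows essentially the same route as the paper's: both directions reduce to matching the order induced by $\prec_h$ to the stages of the iteration $\langle T_\Pi^i(\emptyset,\neg I^-)\rangle$, using condition (1) plus monotonicity for $\mathit{lfp}\subseteq{\cal V}$ and condition (2) for ${\cal V}\subseteq\mathit{lfp}$, then reading $\prec_h$ off the fixpoint stages for the converse. Your well-founded induction along $\prec_h$ and the rank-based definition $F\prec_h H$ iff $\rho(F)<\rho(H)$ are only cosmetic variants of the paper's chain-length partition and derivation-based order.
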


We then define the well-supportedness of a model $I$ for 
an epistemic-free normal program $\Pi$ based on
some well-supported rule heads $\cal V$
in $\Pi$ w.r.t.\ $I$.
Intuitively, $I$ is well-supported
if every $p\in I$ is well-supported
in the way that there exists 
one or more rules $head(r_i)\leftarrow body(r_i)$
$(1\leq i\leq k)$ in $\Pi$ and  
some $S\subset {\cal V}$, where 
for every $i$, $S\cup \neg I^- \models body(r_i)$ and 
$F\prec_h head(r_i)$ for every $F\in S$,
and $p$ is derivable from
$S\cup\{head(r_1), \cdots, head(r_k)\}\cup \neg I^-$
but it is not derivable from $S\cup \neg I^-$. 
Formally we have the following definition.

\begin{definition}[Well-supportedness for epistemic-free normal programs]
\label{def-well-supported-normal}
%{\em
Let $\Pi$ be an epistemic-free normal program.
A model $I$ of $\Pi$ is {\em well-supported} in $\Pi$
if there exists (1) a well-supported set $\cal V$ of rule heads
in $\Pi$ w.r.t.\ $I$ with a strict well-founded
partial order $\prec_h$ on ${\cal V}$
and (2) a strict well-founded
partial order $\prec$ on $I$, such that for every $p\in I$
there are $k\geq 1$ rules 
$head(r_1)\leftarrow body(r_1), \cdots,
head(r_k)\leftarrow body(r_k)$ in $\Pi$
and  
some $S\subset {\cal V}$ with
$S\cup\{head(r_1), \cdots, head(r_k)\}\cup \neg I^- \models p$, 
where for $1\leq i\leq k$, 
$S\cup \neg I^- \models body(r_i)$, 
$F\prec_h head(r_i)$
for every $F\in S$, 
and $q\prec p$ for every $q\in I$ 
with $S\cup \neg I^- \models q$.
%}
\end{definition}

Every well-supported model $I$ defined in 
Definition~\ref{def-well-supported-normal}
induces a level mapping on atoms in $I$ 
via the partial orders $\prec_h$ and $\prec$ w.r.t.\ $\Pi$, i.e., for every $p\in I$ 
there are $k$ rules $head(r_i)\leftarrow body(r_i)$
$(1\leq i\leq k)$ in $\Pi$
and some rule heads $S\subset {\cal V}$
such that $p$ is true in $S\cup \{head(r_1), \cdots, head(r_k)\}\cup \neg I^-$,
where every $body(r_i)$ is true in $S\cup \neg I^-$,  
every $F\in S$ is at a lower level 
than $head(r_i)$
(due to $F\prec_h head(r_i)$),
and every $q\in I$ that is true in $S\cup \neg I^-$
is at a lower level than $p$
(due to $q\prec p$).
Therefore, well-supported models of 
epistemic-free normal programs 
are free of circular justification. 

Most importantly, a well-supported model $I$ 
defined in Definition~\ref{def-well-supported-normal}
can concisely be characterized in terms of
the least fixpoint $\mathit{lfp}(T_{\Pi}(\emptyset,\neg I^-))$
of the inference sequence 
$\langle T_{\Pi}^i(\emptyset,\neg I^-)
\rangle_{i=0}^\infty$ w.r.t.\ $I$.

\begin{restatable}{theorem}{RthWellsupportLfpNormal}
\label{th-wellsupport-lfp-normal}
Let $\Pi$ be an epistemic-free normal program.
A model $I$ of $\Pi$ is well-supported in $\Pi$
by Definition~\ref{def-well-supported-normal} iff
$\mathit{lfp}(T_{\Pi}(\emptyset,\neg I^-))\cup \neg I^-\models p$
for every $p\in I$.
\end{restatable}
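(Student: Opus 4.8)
The plan is to pin down, via Theorem~\ref{th-wellsupport-ruleheads-lfp}, the unique candidate for the set of rule heads and then analyse the iteration stages of $T_\Pi$. Throughout, write $L=\mathit{lfp}(T_{\Pi}(\emptyset,\neg I^-))$ and $T^i=T_{\Pi}^i(\emptyset,\neg I^-)$. Since $\Pi$ is finite and $O\mapsto T_\Pi(O,\neg I^-)$ is monotone and finitary, the sequence increases strictly until it stabilises: $\emptyset=T^0\subsetneq T^1\subsetneq\cdots\subsetneq T^n=T^{n+1}=\cdots=L$ for some $n\ge 0$. Note also that $\neg I^-$ by itself entails no atom $p$, because $\emptyset\subseteq I$ is a model of $\neg I^-$ in which $p$ is false. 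By Theorem~\ref{th-wellsupport-ruleheads-lfp}, the only set of rule heads that is well-supported in $\Pi$ w.r.t.\ $I$ is $L$ itself, so any witness for Definition~\ref{def-well-supported-normal} must take $\mathcal{V}=L$.

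For the ($\Rightarrow$) direction, suppose $I$ is well-supported by Definition~\ref{def-well-supported-normal}; then $\mathcal{V}=L$, and for each $p\in I$ there are rules $r_1,\dots,r_k$ and $S\subset L$ with $S\cup\neg I^-\models body(r_i)$ for all $i$ and $S\cup\{head(r_1),\dots,head(r_k)\}\cup\neg I^-\models p$. Monotonicity of $\models$ in its premises gives $L\cup\neg I^-\models body(r_i)$, whence $head(r_i)\in T_\Pi(L,\neg I^-)=L$ because $L$ is a fixpoint; thus $S\cup\{head(r_1),\dots,head(r_k)\}\subseteq L$, and therefore $L\cup\neg I^-\models p$.

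For the ($\Leftarrow$) direction, assume $L\cup\neg I^-\models p$ for every $p\in I$ and construct the witnesses. Take $\mathcal{V}=L$. Define ranks $\rho(H)=\min\{i\ge 1: H\in T^i\}$ for $H\in L$ and put $H'\prec_h H$ iff $\rho(H')<\rho(H)$; this is a strict well-founded partial order on $\mathcal{V}$, and it witnesses that $\mathcal{V}$ is a well-supported set of rule heads (Definition~\ref{def-well-supportedHeads-normal}): condition (1) is the fixpoint property $T_\Pi(L,\neg I^-)=L$, and for (2), given $H\in L$ with $\rho(H)=i$, choose a rule $H\leftarrow body(r)$ with $T^{i-1}\cup\neg I^-\models body(r)$ and take $S=T^{i-1}$, a proper subset of $L$ (since $H\in T^i$ forces $i\le n$, so $i-1<n$) with $\rho(F)<i$ for all $F\in S$. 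For atoms set $\ell(p)=\min\{j\ge 1: T^j\cup\neg I^-\models p\}$; this is well defined with $1\le\ell(p)\le n$ (lower bound since $\neg I^-\not\models p$, upper bound since $L=T^n$), and $q\prec p$ iff $\ell(q)<\ell(p)$ is a strict well-founded partial order on $I$. Now fix $p\in I$, let $j=\ell(p)$, set $S=T^{j-1}$, and let $r_1,\dots,r_k$ be all rules $r\in\Pi$ with $T^{j-1}\cup\neg I^-\models body(r)$ and $head(r)\notin T^{j-1}$, so that $\{head(r_1),\dots,head(r_k)\}=T^j\setminus T^{j-1}$. This works: $S\cup\{head(r_1),\dots,head(r_k)\}\cup\neg I^-=T^j\cup\neg I^-\models p$; each $S\cup\neg I^-\models body(r_i)$ by construction; $\rho(head(r_i))=j>\rho(F)$ for every $F\in S=T^{j-1}$, so $F\prec_h head(r_i)$; every $q\in I$ with $S\cup\neg I^-\models q$ has $\ell(q)\le j-1<j$, so $q\prec p$ (and in particular $p\notin S$); and $S=T^{j-1}\subsetneq L$ together with $k\ge 1$ follow from $j\le n$ and the strict increase of the $T^i$ up to stage $n$ (if $T^j\setminus T^{j-1}=\emptyset$ the sequence would have stabilised before $n$, contradicting $\ell(p)=j\le n$). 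Hence $I$ is well-supported by Definition~\ref{def-well-supported-normal}.

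The step I expect to be delicate is the last one: one cannot take \emph{all} rules fired at stage $j$, since a head produced at stage $j$ may already lie in $T^{j-1}$ and then fail to $\prec_h$-dominate all of $S=T^{j-1}$; restricting to rules whose heads are genuinely new at stage $j$ is exactly what makes $\prec_h$-compatibility hold while still letting $T^{j-1}$ together with these new heads entail $p$. The auxiliary facts that carry the edge cases ($S\subsetneq\mathcal{V}$, $k\ge 1$, $\ell(p)\ge 1$, $p\notin S$) all reduce to the two observations recorded at the outset: $\langle T^i\rangle$ increases strictly until it stabilises, and $\neg I^-$ alone cannot entail an atom.
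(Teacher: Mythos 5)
Your proof is correct and follows essentially the same route as the paper's: both directions hinge on Theorem~\ref{th-wellsupport-ruleheads-lfp} forcing $\mathcal{V}=\mathit{lfp}(T_{\Pi}(\emptyset,\neg I^-))$, with the forward direction observing that the selected rule heads must lie in $\mathcal{V}$ (so $\mathcal{V}\cup\neg I^-\models p$), and the backward direction building $\prec_h$ and $\prec$ from the fixpoint iteration. Your explicit stage-rank construction (taking $S=T^{j-1}$ and only the rules whose heads are new at stage $j$) merely spells out, for finite programs, the details the paper's proof states more tersely, and is a valid instantiation of the same argument.
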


Observe that when $\mathit{lfp}(T_{\Pi}(\emptyset,\neg I^-))$
is a set of atoms and $\mathit{lfp}(T_{\Pi}(\emptyset,\neg I^-))\cup \neg I^-$
is consistent, we have that
$\mathit{lfp}(T_{\Pi}(\emptyset,\neg I^-))\cup \neg I^-\models p$
for every $p\in I$
iff $\mathit{lfp}(T_{\Pi}(\emptyset,\neg I^-))=I$.
The following corollary is now immediate from 
Theorems \ref{th-wellsupport-lfp-normal} and
\ref{th-wellsupport-lfp}.

\begin{corollary}
\label{cor-normal-atomhead-formula}
Let $\Pi$ be an epistemic-free normal program with atomic rule heads.
A model $I$ of $\Pi$ is well-supported in $\Pi$ by
Definition~\ref{def-well-supported-normal} iff 
$I$ is well-supported by 
Definition~\ref{def-well-supported-normal-atomhead}.
\end{corollary}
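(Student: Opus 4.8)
The plan is to chain the two fixpoint characterizations already established (Theorems~\ref{th-wellsupport-lfp-normal} and~\ref{th-wellsupport-lfp}) and reduce the claim to the elementary set-theoretic observation stated immediately before the corollary. First I would record a small preliminary fact about the least fixpoint. Since $\Pi$ has atomic rule heads, every head of a rule in $\Pi$ is an atom or $\bot$, so each $T_{\Pi}^i(\emptyset,\neg I^-)$, and hence $\mathit{lfp}(T_{\Pi}(\emptyset,\neg I^-))$, is a set of atoms possibly together with $\bot$. But $I$ is a model of $\Pi$, so (as remarked in the text) $I$ satisfies every $T_{\Pi}^i(\emptyset,\neg I^-)$; as $I\not\models\bot$, the symbol $\bot$ never enters the inference sequence, and therefore $\mathit{lfp}(T_{\Pi}(\emptyset,\neg I^-))$ is a set of atoms with $\mathit{lfp}(T_{\Pi}(\emptyset,\neg I^-))\subseteq I$. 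Consequently $\mathit{lfp}(T_{\Pi}(\emptyset,\neg I^-))\cup\neg I^-$ is consistent, since $I$ itself is a model of it, and the observation preceding the corollary applies: $\mathit{lfp}(T_{\Pi}(\emptyset,\neg I^-))\cup\neg I^-\models p$ for every $p\in I$ iff $\mathit{lfp}(T_{\Pi}(\emptyset,\neg I^-))=I$.

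Then I would simply compose the pieces. By Theorem~\ref{th-wellsupport-lfp-normal}, $I$ is well-supported in $\Pi$ by Definition~\ref{def-well-supported-normal} iff $\mathit{lfp}(T_{\Pi}(\emptyset,\neg I^-))\cup\neg I^-\models p$ for every $p\in I$; by the preliminary paragraph this is equivalent to $\mathit{lfp}(T_{\Pi}(\emptyset,\neg I^-))=I$; and by Theorem~\ref{th-wellsupport-lfp} the latter holds iff $I$ is well-supported in $\Pi$ by Definition~\ref{def-well-supported-normal-atomhead}. Concatenating these equivalences closes the proof.

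For self-containedness I would also spell out the one nontrivial direction of the invoked observation, in case it is not established independently: assuming $\mathit{lfp}(T_{\Pi}(\emptyset,\neg I^-))\subseteq I$ with the left side a set of atoms, if some $p\in I\setminus\mathit{lfp}(T_{\Pi}(\emptyset,\neg I^-))$ existed, then the interpretation $J=\mathit{lfp}(T_{\Pi}(\emptyset,\neg I^-))$ would satisfy every atom of $\mathit{lfp}(T_{\Pi}(\emptyset,\neg I^-))$ and, because $J\subseteq I$, every literal of $\neg I^-$, yet $J\not\models p$, contradicting $\mathit{lfp}(T_{\Pi}(\emptyset,\neg I^-))\cup\neg I^-\models p$; hence $I\subseteq\mathit{lfp}(T_{\Pi}(\emptyset,\neg I^-))$ and equality follows, while the converse direction is trivial since then $p$ occurs syntactically in the theory $\mathit{lfp}(T_{\Pi}(\emptyset,\neg I^-))\cup\neg I^-$. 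No genuine obstacle is expected here: the only point requiring care is the bookkeeping that $\bot$ cannot appear in the least fixpoint (using that $I$ is a model of $\Pi$), since this is exactly what legitimizes the phrases ``is a set of atoms'' and ``is consistent'' in the observation; once that is in place, the corollary is a direct composition of the two theorems.
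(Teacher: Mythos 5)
Your proposal is correct and follows essentially the same route as the paper: the corollary is obtained by chaining Theorem~\ref{th-wellsupport-lfp-normal} and Theorem~\ref{th-wellsupport-lfp} through the observation that, for atomic-head programs, $\mathit{lfp}(T_{\Pi}(\emptyset,\neg I^-))\cup\neg I^-\models p$ for every $p\in I$ iff $\mathit{lfp}(T_{\Pi}(\emptyset,\neg I^-))=I$. Your additional bookkeeping (that $\bot$ cannot enter the fixpoint, that the fixpoint is a set of atoms contained in $I$, and the explicit argument for the nontrivial direction of the observation) simply fills in details the paper leaves as immediate.
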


\subsection{Well-supportedness of answer sets for epistemic-free programs with disjunctive rule heads}
\label{ws-disrh}
Next we consider epistemic-free programs with disjunctive rule heads.
Note that the disjunctive rule head operator $\mid$
in a logic program essentially
differs from the disjunction connective $\vee$
in classical logic. For simple disjunctive programs,
\citeA{GL91} deliberately used $\mid$ to indicate
that the connection of alternatives in a disjunctive rule head 
$A_1 \mid \cdots \mid A_m$  is not simply
disjunction $\vee$ as in classical logic.\footnote{In \cite{Gelfond2008,gelfond-kahl2014},
$or$ is used in place of $\mid$ and
called {\em epistemic disjunction}.} 
Rather, one ``commits'' to some alternative $A_i$ in the rule head 
for the belief state that represents the model one 
is willing to accept. 

\citeA{ShenE19} presented a constructive view 
of $\mid$ as a nondeterministic inference operator.
Intuitively, a rule head $A_1 \mid \cdots \mid A_k$
models incomplete information concerning the
truth value of each alternative $A_i$, namely that the information
available is insufficient to establish whether $A_1$ is true 
or $A_2$ is true $\cdots$ or $A_k$ is true,
but nonetheless sufficient to establish that
at least one $A_i$ is true.
This means that
the disjunctive rule head offers different choices, 
i.e., it either infers $A_1$ 
or alternatively $A_2$ $\cdots$ or alternatively $A_k$.
Therefore, a rule $A_1 \mid \cdots \mid A_k\leftarrow Body$ 
infers some $A_i$ in the head if 
the body condition $Body$ holds.  
As the choice of such an alternative 
$A_i$ among $A_1, \cdots,  A_k$ 
is arbitrary, the inference with such rules 
for the construction of an answer 
set is nondeterministic.  
It is such nondeterministic inferences 
via disjunctive rule heads  
that allow us to generate different candidate answer sets. 

The above constructive view of 
$\mid$ as a nondeterministic inference operator
allows us to clearly differentiate 
a rule head $A_1 \mid \cdots \mid A_k$ from
a logical formula $A_1 \vee \cdots \vee A_k$ ($k>1$).
The former infers nondeterministically
one $A_i$ among the alternatives, while the latter
cannot establish the truth of any $A_i$ and thus can
infer neither $A_1$ nor $A_2$ $\cdots$ nor $A_k$.
For example, $\Pi_1=\{p\vee q,\ q\leftarrow p,\ p\leftarrow q\}$
and $\Pi_2=\{p\mid q,\ q\leftarrow p,\ p\leftarrow q\}$
are two different logic programs,
where $p\vee q$ can be inferred from $\Pi_1$, but 
$p$ and $q$ can be inferred from $\Pi_2$.
In $\Pi_2$, the rule $p\mid q$ presents two alternatives,
i.e., $p$ or $q$. If we choose $p$,
we would infer $q$ from the rule $q\leftarrow p$;
alternatively, if we choose $q$,
we would infer $p$ from the rule $p\leftarrow q$.

Note that a disjunctive rule head like $p\mid q$ 
offers only two alternatives for answer set construction, i.e., 
an answer set $I$ can contain either $p$ or alternatively $q$.
If we want $I$ to have a third alternative covering both $p$ and $q$,
we may need to use a rule head like $p\mid q\mid p\land q$.

\citeA{ShenE19} introduced a 
disjunctive rule head selection function $sel(head(r), I)$
to characterize the intuitive meaning of the operator $\mid$.
Let $\Pi$ be an epistemic-free program consisting of rules
of the form $H_1\mid \cdots\mid H_m\leftarrow B$,
where each $H_i$ and $B$ are propositional formulas.
Given an interpretation $I$,
$sel(head(r), I)$ selects one
head formula $H_i$ that is satisfied by $I$ from 
the head $head(r)=H_1\mid \cdots\mid H_m$ of every rule $r$ in $\Pi$
once the rule body condition $body(r)$ is satisfied by $I$.
This transforms $\Pi$ 
into an epistemic-free normal program $\Pi^I_{sel}$, 
called a {\em disjunctive
program reduct} of $\Pi$ w.r.t.\ $I$ and $sel$.

One critical requirement of the head selection function $sel$
defined in \cite{ShenE19} is that for two 
disjunctive rules $r_1$ and $r_2$
whose heads are identical or variant,
it must select the same head formula, 
i.e., $sel(head(r_1), I)\equiv sel(head(r_2), I)$.
This means that the selection of a head formula
from one rule may depend on the selection 
of head formulas from some other rules.
We note that this requirement 
deviates from the intuitive interpretation of the disjunctive 
rule head operator $\mid$ 
as a nondeterministic inference operator which
returns one alternative $H_i$
from every rule $H_1\mid\cdots\mid H_m\leftarrow  B$, 
independently of any other rules, 
once $B$ is satisfied by $I$.
As a result, when using this head selection function, 
some expected answer sets may be excluded and some
unexpected ones may be obtained;
see Section~\ref{subsec-DI-GL-semantics} for an example.

To overcome the above issue with the existing 
selection function
$sel(head(r), I)$, 
in this section we introduce
an upgraded version $sel((n, head(r)), I)$ by attaching
a unique ID number $n$ to every rule $r$.

\begin{definition}[Head selection function $sel((n, head(r)), I)$]
\label{head-sel-gdlp}
Let $\Pi$ be an epistemic-free program and $I$ 
be an interpretation. Then for every rule $r\in \Pi$ 
with an ID number $n$, define 
\[sel((n, head(r)), I) = \left \{\begin{array}{ll}
                                 H_i & \mbox{$head(r)$ has some head formula $H_i$ that is satisfied by $I$,} \\
                                  \bot & \mbox{otherwise.}
                                 \end{array} \right. \]
\end{definition}

Due to having a unique ID number $n$
for every rule $r$, $sel((n, head(r)), I)$ 
can select any head formula $H_i$ satisfied by $I$
from the head $head(r)$ of $r$,
independently of 
any other rule $r'$, even if $r$ and $r'$ have
the same rule head. 
In a special case, 
it allows the user to  
use $k$ ($1\leq k\leq m$)
duplicate disjunctive rule heads 
\begin{tabbing} 
\hspace{.3in} \=  $H_1\mid H_2\mid\cdots\mid H_m$\`$(1)$\\[2pt]
\> $H_1\mid H_2\mid$ \= $\cdots\mid H_m$ \` $(2)$ \\[2pt]
\> \>  $\vdots$  \\
\> $H_1\mid H_2\mid\cdots\mid H_m$  \` $(k)$ 
\end{tabbing}
to produce candidate answer 
sets covering one to $k$ $H_i$'s, respectively. 
This significantly differs from the selection function $sel(head(r), I)$,
which selects for every candidate answer set only one $H_i$ 
satisfied by $I$ from the $k$ identical heads. 

\begin{definition}[Disjunctive program reducts]
\label{def:sel-reduct}
%{\em
Let $\Pi$ be an epistemic-free program, $I$ be an interpretation, 
and $sel$ be a head selection function in Definition~\ref{head-sel-gdlp}.
The {\em disjunctive program reduct}  
of $\Pi$ w.r.t.\ $I$ and $sel$ is
\[\Pi^I_{sel} = \{sel((n,head(r)), I) \leftarrow body(r) \mid r \in \Pi \textrm{ with an ID number $n$ and $I$ satisfies $body(r)$}\}\]
%}
\end{definition}

A disjunctive program reduct $\Pi^I_{sel}$  
can be viewed as
a {\em projection} of $\Pi$  w.r.t.\ a specific selection $sel$
of rule head formulas that are satisfied by $I$.
Therefore, the well-supportedness of $I$ in $\Pi$
can be defined
in terms of the well-supportedness of $I$
in projections of $\Pi$. As a disjunctive program
$\Pi$ may have more than one projection w.r.t.\ 
different head selections, 
we take a credulous view.
That is, we define a model $I$ to be well-supported
in $\Pi$ if it is well-supported in some projection of $\Pi$.

\begin{definition}[Well-supportedness for epistemic-free programs]
\label{def-well-supported-disj}
%{\em
A model $I$ of an epistemic-free program $\Pi$ is {\em well-supported} 
in $\Pi$ if for some head selection function $sel$,
$I$ is well-supported in $\Pi^I_{sel}$ by Definition~\ref{def-well-supported-normal}. 
%}
\end{definition}

\begin{example}
\label{ex:generalized-strat-comp-wellsupp}
{\em 
For the simple disjunctive program $\Pi$  
in Example~\ref{ex:generalized-strat-comp},
we show that $I=\{g_1, g_2, c_1, c_2\}$
 is well-supported in $\Pi$.
Consider a head selection function $sel$ which
selects $c_2$ from the head of rule (3) 
and $c_1$ from the head of rule (4). 
That is, let $sel((3, c_1\mid c_2), I) = c_2$ 
and $sel((4, c_1\mid c_3), I) = c_1$.
The disjunctive program reduct of
$\Pi$ w.r.t.\ $I$ and $sel$ is 
\begin{tabbing} 
\hspace{.3in} $\Pi^{I}_{sel}:\ $ \=  $g_1$\`$(1)$\\[2pt]
\> $g_2$ \` $(2)$ \\[2pt]
\> $c_2 \leftarrow g_1$  \` $(3)$\\[2pt]
\> $c_1 \leftarrow g_2$ \` $(4)$ 
\end{tabbing}
Note that $\Pi^{I}_{sel}$ is an epistemic-free normal program
with atomic rule heads.
We have $\mathit{lfp}(T_{\Pi^{I}_{sel}}(\emptyset,\neg I^-))=\{g_1, g_2, c_1, c_2\}=I$,
so by Theorem~\ref{th-wellsupport-lfp},
$I$ is well-supported in $\Pi^{I}_{sel}$
with a strict well-founded partial order
$g_1\prec c_2$ and $g_2\prec c_1$.
By Definition~\ref{def-well-supported-disj},
$I$ is well-supported in $\Pi$.
}
\end{example}

\begin{example}
\label{ex-min:generalized-strat-comp-wellsupp}
{\em 
For the simple disjunctive program $\Pi$
in Example~\ref{eg-minimal-models},
we check that $I=\{g_1, g_2, c_1, c_2, c_3\}$
is well-supported in $\Pi$.
Consider a head selection function $sel$ which
selects $c_1$ both from the head of rule (3) and 
rule (4). 
This yields the following disjunctive program reduct of
$\Pi$ w.r.t.\ $I$ and $sel$:
\begin{tabbing} 
\hspace{.3in} \= $\Pi^{I}_{sel}:\ $ \=  $g_1$\`$(1)$\\[2pt]
\>\> $g_2$ \` $(2)$ \\[2pt]
\>\> $c_1 \leftarrow g_1$   \` $(3)$\\[2pt]
\>\> $c_1 \leftarrow g_2$ \` $(4)$ \\[2pt]
\> \> $c_2 \leftarrow c_1$ \` $(5)$\\[2pt]
\> \> $c_3 \leftarrow c_1$  \` $(6)$
\end{tabbing} 
We have $\mathit{lfp}(T_{\Pi^{I}_{sel}}(\emptyset,\neg I^-))=\{g_1, g_2, c_1, c_2,c_3\}=I$,
so by Theorem~\ref{th-wellsupport-lfp},
$I$ is well-supported in $\Pi^{I}_{sel}$
with a strict well-founded partial order
$g_1\prec c_1$, $g_2\prec c_1$, $c_1\prec c_2$ and $c_1\prec c_3$.
By Definition~\ref{def-well-supported-disj},
$I$ is well-supported in $\Pi$.
}
\end{example}

When $\Pi$ is an epistemic-free normal program,
Definition~\ref{def-well-supported-disj} reduces to 
Definition~\ref{def-well-supported-normal}.
For every model $I$, there is only one 
head selection function $sel$ and
the disjunctive program reduct $\Pi^{I}_{sel}$ of
$\Pi$ w.r.t.\ $I$ and $sel$ consists of
all rules of $\Pi$ whose rule bodies are
satisfied by $I$. As heads of rules in $\Pi$
whose bodies are not satisfied
by $I$ will not be included in 
the least fixpoint $\mathit{lfp}(T_{\Pi}(\emptyset,\neg I^-))$,
we have $\mathit{lfp}(T_{\Pi}(\emptyset,\neg I^-))=\mathit{lfp}(T_{\Pi^{I}_{sel}}(\emptyset,\neg I^-))$.
Then the following corollary is immediate
from Theorem~\ref{th-wellsupport-lfp-normal}
and Definition~\ref{def-well-supported-disj}.

\begin{corollary}
\label{cor-disj-normal-wellsupp}
A model $I$ of an epistemic-free normal program
$\Pi$ is well-supported in $\Pi$ by
Definition~\ref{def-well-supported-disj} iff 
$I$ is well-supported by 
Definition~\ref{def-well-supported-normal}.
\end{corollary}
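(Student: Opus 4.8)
The plan is to reduce both sides to the least-fixpoint characterization of Theorem~\ref{th-wellsupport-lfp-normal} and to observe that, for a normal program, passing to the (unique) disjunctive program reduct does not change the relevant least fixpoint.

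First I would record the structural facts. Since $\Pi$ is an epistemic-free normal program, every rule $r$ has a single head formula $head(r)=H$, so by Definition~\ref{head-sel-gdlp} there is exactly one head selection function $sel$ determined by $I$: it returns $H$ whenever $I$ satisfies $body(r)$ (and then $I\models H$ too, because $I$ is a model of $\Pi$) and $\bot$ otherwise. Hence $\Pi^I_{sel}=\{head(r)\leftarrow body(r)\mid r\in\Pi,\ I\models body(r)\}$, it is itself an epistemic-free normal program, $I$ is a model of it, and Definition~\ref{def-well-supported-disj} applied to $\Pi$ amounts precisely to: $I$ is well-supported in this single reduct $\Pi^I_{sel}$ in the sense of Definition~\ref{def-well-supported-normal}.

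Next I would prove the identity $\mathit{lfp}(T_{\Pi}(\emptyset,\neg I^-))=\mathit{lfp}(T_{\Pi^I_{sel}}(\emptyset,\neg I^-))$. The inclusion $\supseteq$ is immediate from $\Pi^I_{sel}\subseteq\Pi$, which gives $T_{\Pi^I_{sel}}(O,N)\subseteq T_{\Pi}(O,N)$ for all $O,N$, so the inference sequences compare stagewise. For the reverse inclusion I would show by induction on $i$ that $T_{\Pi}^i(\emptyset,\neg I^-)=T_{\Pi^I_{sel}}^i(\emptyset,\neg I^-)$: since $I$ is a model of $\Pi$, each stage $T_{\Pi}^i(\emptyset,\neg I^-)$ is satisfied by $I$ (the fact noted just before Theorem~\ref{th-wellsupport-lfp}), hence so is $T_{\Pi}^i(\emptyset,\neg I^-)\cup\neg I^-$; therefore any rule $r\in\Pi$ with $T_{\Pi}^i(\emptyset,\neg I^-)\cup\neg I^-\models body(r)$ must satisfy $I\models body(r)$, i.e.\ $r\in\Pi^I_{sel}$. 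So rules outside $\Pi^I_{sel}$ never fire, the one-step operators agree on these stages, and passing to the limit yields the claimed equality.

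Finally I would assemble the conclusion. By Theorem~\ref{th-wellsupport-lfp-normal}, $I$ is well-supported in $\Pi$ by Definition~\ref{def-well-supported-normal} iff $\mathit{lfp}(T_{\Pi}(\emptyset,\neg I^-))\cup\neg I^-\models p$ for every $p\in I$, and $I$ is well-supported in $\Pi^I_{sel}$ by Definition~\ref{def-well-supported-normal} iff $\mathit{lfp}(T_{\Pi^I_{sel}}(\emptyset,\neg I^-))\cup\neg I^-\models p$ for every $p\in I$; the identity of the two least fixpoints makes these statements equivalent, and by the first step the latter is exactly well-supportedness of $I$ in $\Pi$ by Definition~\ref{def-well-supported-disj}. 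I do not anticipate a real obstacle; the only point that needs care is the stagewise induction showing that rules with $I$-unsatisfied bodies never contribute to $\mathit{lfp}(T_{\Pi}(\emptyset,\neg I^-))$, which rests entirely on $I$ being a model of $\Pi$ so that all intermediate stages remain satisfied by $I$.
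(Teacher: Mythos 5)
Your proof is correct and follows essentially the same route as the paper, which justifies the corollary in the text right before it: uniqueness of the head selection function for normal programs, the identity $\mathit{lfp}(T_{\Pi}(\emptyset,\neg I^-))=\mathit{lfp}(T_{\Pi^{I}_{sel}}(\emptyset,\neg I^-))$ (because rules with $I$-unsatisfied bodies never fire), and then Theorem~\ref{th-wellsupport-lfp-normal} applied to both programs. Your stagewise induction merely spells out in more detail the fixpoint equality that the paper states as immediate.
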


\subsection{Well-supportedness of world views for epistemic programs}

We now extend the notion of well-supportedness to world views of
an epistemic program $\Pi$ consisting of rules
of the form $H_1\mid \cdots\mid H_m\leftarrow B$,
where each $H_i$ and $B$ are epistemic formulas.
As $\Pi$ may contain epistemic negations,
Definition~\ref{def-well-supported-disj} is not applicable.

In our definition of a well-supported
model $I$ of an epistemic-free program $\Pi$,
as $I$ is at the level of answer sets,
by the minimality principle 
of answer sets w.r.t.\ negation by default
we want to minimize the knowledge in $I$
by maximizing its negative part $\neg I^-$.
That is,
when its negative part, i.e., every $\neg p\in \neg I^-$, is 
assumed to be true by default,
its positive part, i.e., every $p\in I$, 
is expected to be derived from rules in $\Pi$.
This is achieved via an if-then inference sequence 
$\langle T_{\Pi}^i(\emptyset,\neg I^-)
\rangle_{i=0}^\infty$ with the least fixpoint
$\mathit{lfp}(T_{\Pi}(\emptyset,\neg I^-))$
such that $\mathit{lfp}(T_{\Pi}(\emptyset,\neg I^-))\cup \neg I^-$
entails every $p\in I$ (see Theorem~\ref{th-wellsupport-lfp-normal}).
This inference sequence $\langle T_{\Pi}^i(\emptyset,\neg I^-)
\rangle_{i=0}^\infty$ naturally induces a strict well-founded partial order
$\prec$ (as well as a level mapping) on atoms in $I$, so $I$ is well-supported.

For an epistemic model $\cal A$ of an epistemic program $\Pi$, 
as $\cal A$ is at the level of world views, 
by the minimality principle 
of world views w.r.t.\ epistemic negation
we want to additionally minimize the epistemic 
knowledge of $\Pi$ in $\cal A$.
Let $Ep(\Pi)$ consist of all epistemic negations in $\Pi$.
Then the epistemic knowledge of $\Pi$
consists of all formulas $F$ with $\naf F\in Ep(\Pi)$.
Let $\Phi \subseteq Ep(\Pi)$ be those epistemic negations
that are true in $\cal A$.
We view $\Phi$ as the {\em negative part}
of $\cal A$ w.r.t.\ the epistemic knowledge,
and all formulas $F$ with $\naf F\in Ep(\Pi)\setminus \Phi$
as the {\em positive part}.
For $\cal A$ to be a world view,
we want to minimize the epistemic knowledge in $\cal A$
by maximizing the negative part $\Phi$.
This amounts to requiring that 
when the negative part, i.e., every $\naf F\in \Phi$,
is assumed to be true by default,
the positive part, i.e., every $F$ with 
$\naf F\in Ep(\Pi)\setminus \Phi$, should be derived
from the rules in $\Pi$.
To achieve this, we present a transformation of $\Pi$
into an epistemic-free program $\Pi^{\cal A}$ w.r.t.\ $\cal A$,
where: (1) as all $\naf F\in \Phi$ are assumed
to be true by default, we replace them in $\Pi$
with $\top$, this transforming $\Pi$ to $\Pi'$;
and (2) we expect that every $F$ with 
$\naf F\in Ep(\Pi)\setminus \Phi$ is true in $\Pi'$,
which by Theorem 1 
in \cite{ShenEiter16},\footnote{The theorem is as follows:
Let $\Pi$ be an epistemic program, where 
for every $\naf F$
in $\Pi$, $F$ is true in every epistemic model of $\Pi$.
Then $\Pi$ has the same epistemic models as $\Pi^\neg$, 
which is $\Pi$ with every epistemic negation
$\naf F$ replaced by $\neg F$.} 
amounts to 
that every $F$ with $\naf F\in Ep(\Pi)\setminus \Phi$ 
is true in $\Pi^{\cal A}$ that is $\Pi'$ with
every epistemic negation $\naf F$
replaced by $\neg F$. Formally, we have the 
following definition of the transformation.

\begin{definition}
\label{reduct}
%{\em
Let $\Pi$ be an epistemic program and  
$\cal A$ be an epistemic model of $\Pi$.
Let $\Phi \subseteq Ep(\Pi)$ be the set of
all epistemic negations occurring in $\Pi$
that are true in $\cal A$.
The {\em epistemic reduct} $\Pi^{\cal A}$ 
of $\Pi$ w.r.t.\ $\cal A$ is $\Pi$ 
with every $\naf F\in \Phi$ replaced by $\top$
and every $\naf F\in Ep(\Pi)\setminus \Phi$
replaced by $\neg F$. 
%}
\end{definition}

As $\cal A$ is an epistemic model of $\Pi$,
by Definition~\ref{satisfaction-epis-program}
every $I\in \cal A$ satisfies all rules in $\Pi$ w.r.t.\ $\cal A$,
and by Theorem 1 in \cite{ShenEiter16}
every $I\in \cal A$ satisfies all rules in $\Pi^{\cal A}$.
That is, every $I\in \cal A$ is a model of $\Pi^{\cal A}$.

$\Pi^{\cal A}$ can be viewed as a {\em mirror}
of $\Pi$ w.r.t.\ $\cal A$ from the level of world views 
down to the level of answer sets under the minimality principle 
of world views w.r.t.\ epistemic negation.
Therefore, we may well define the well-supportedness of 
$\cal A$ in $\Pi$ at the level of world views
in terms of the well-supportedness 
of $A$ in the mirror $\Pi^{\cal A}$ of $\Pi$
at the level of answer sets; i.e.,
$\cal A$ is well-supported in $\Pi$ if every $I\in \cal A$
is well-supported in $\Pi^{\cal A}$.

\begin{definition}[Well-supportedness for epistemic programs]
\label{def-well-supported-epis}
%{\em
Let $\Pi$ be an epistemic program.  
An epistemic model $\cal A$ of $\Pi$
is {\em well-supported} in $\Pi$ if every $I\in \cal A$
is well-supported by Definition~\ref{def-well-supported-disj}
in the epistemic reduct $\Pi^{\cal A}$
of $\Pi$ w.r.t.\ $\cal A$.
%}
\end{definition}

\begin{example}
\label{ex3-epigeneralized-strat-comp-wellsupp}
{\em 
Consider the epistemic program $\Pi_1$ 
in Example~\ref{ex1-modal:generalized-strat-comp}, where
${\cal W}=\{I\}$ with $I=\{a,b,c\}$
is an expected world view of $\Pi_1$. We next show 
that ${\cal W}$ is well-supported in $\Pi_1$
by Definition~\ref{def-well-supported-epis}.

As ${\bf K}$ stands for $\neg \mathbf{not}$,
we first replace rule ($4'$) by the rule
\begin{tabbing} 
\hspace{.3in} $\bot \leftarrow \neg \neg\naf c$  \` $(4'')$
\end{tabbing}
As $\naf c$ is false in ${\cal W}$,
the interpretation $I$ in ${\cal W}$
satisfies all rules in $\Pi_1$ w.r.t.\ ${\cal W}$. So
${\cal W}$ is an epistemic model of $\Pi_1$.

Let $\Phi=\emptyset$, which is the set of
all epistemic negations occurring in $\Pi_1$
that are true in $\cal W$.
Then $Ep(\Pi_1)\setminus \Phi=\{\naf c\}$. 
The epistemic reduct $\Pi_1^{\cal W}$ 
w.r.t.\ $\cal W$ is obtained from $\Pi_1$
by replacing $\naf c$ in rule ($4''$) with $\neg c$, i.e., we obtain
\begin{tabbing} 
\hspace{.3in} $\Pi_1^{\cal W}:\ $ \= $a \mid b$ \` $(1)$\\[2pt]
\>          $b \leftarrow a$ \` $(2)$\\[2pt]
\>         $c \leftarrow a$  \` $(3)$\\[2pt]
\>        $\bot \leftarrow\neg \neg\neg c$  \` $(4)$
\end{tabbing}

Note that as our definition of an epistemic program (Definition~\ref{def-episP})
is based on a propositional language 
${\cal L}_\Sigma$ (see Section \ref{Preliminaries}),
 $\neg\neg\neg c$ in rule (4) is a formula 
 in classical logic, where
double negatives will cancel each other out, i.e.,
$\neg \neg\neg c\equiv \neg c$.

Consider $I=\{a,b,c\}$ in ${\cal W}$
and a head selection function $sel$ which
selects $a$ from the head of rule (1). 
The disjunctive program reduct of
$\Pi_1^{\cal W}$ w.r.t.\ $I$ and $sel$ is 
\begin{tabbing} 
\hspace{.3in} $(\Pi_1^{\cal W})^{I}_{sel}:\ $ \=  $a$ \` $(1)$\\[2pt]
\>          $b \leftarrow a$ \` $(2)$\\[2pt]
\>         $c \leftarrow a$  \` $(3)$
\end{tabbing}

We have $\mathit{lfp}(T_{(\Pi_1^{\cal W})^{I}_{sel}}(\emptyset,\neg I^-))=\{a,b,c\}=I$,
so by Theorem~\ref{th-wellsupport-lfp},
$I$ is well-supported in $(\Pi_1^{\cal W})^{I}_{sel}$.
Then by Definition~\ref{def-well-supported-disj},
$I$ is well-supported in $\Pi_1^{\cal W}$.
Finally by Definition~\ref{def-well-supported-epis},
${\cal W}$ is well-supported in $\Pi_1$.
}
\end{example}

\begin{example}
\label{eg-well-suppMp-epi}
{\em Consider the epistemic program
$\Pi=\{p\leftarrow {\bf M} p\}.$
As ${\bf M}$ is shorthand for $\naf \neg$,
this program can be rewritten as $\Pi=\{p\leftarrow \naf \neg p\}$.
The rule in $\Pi$ states that if there is no evidence proving $\neg p$, then 
we jump to conclude $p$.
This program has two epistemic models: ${\cal A}=\{\{p\}\}$ 
and ${\cal A}_1=\{\emptyset\}$.

Consider ${\cal A}=\{\{p\}\}$. 
Let $\Phi=\{\naf \neg p\}$ be the set of
all epistemic negations occurring in $\Pi$
that are true in $\cal A$.
The epistemic reduct  
of $\Pi$ w.r.t.\ $\cal A$ is $\Pi^{\cal A}= \{p\leftarrow \top\}$.
Consider $I=\{p\}$ in ${\cal A}$.
As $\Pi^{\cal A}$ contains no disjunctive rule head,
we directly apply Definition~\ref{def-well-supported-normal-atomhead}.
We have $\mathit{lfp}(T_{\Pi^{\cal A}}(\emptyset,\neg I^-))=\{p\}=I$,
so by Theorem~\ref{th-wellsupport-lfp},
$I$ is well-supported in $\Pi^{\cal A}$.
Then by Definition~\ref{def-well-supported-epis},
${\cal A}$ is well-supported in $\Pi$.

Consider ${\cal A}_1=\{\emptyset\}$. 
Let $\Phi=\emptyset$ be the set of
all epistemic negations occurring in $\Pi$
that are true in ${\cal A}_1$.
The epistemic reduct  
of $\Pi$ w.r.t.\ ${\cal A}_1$ is $\Pi^{{\cal A}_1}= \{p\leftarrow \neg\neg p\}$,
where $\naf \neg p$ in $\Pi$ is replaced by $\neg\neg p$.
Consider $I=\emptyset$ in ${\cal A}_1$.
We have $\mathit{lfp}(T_{\Pi^{{\cal A}_1}}(\emptyset,\neg I^-))=\emptyset=I$,
so by Theorem~\ref{th-wellsupport-lfp},
$I$ is well-supported in $\Pi^{{\cal A}_1}$.
Then by Definition~\ref{def-well-supported-epis},
${\cal A}_1$ is well-supported in $\Pi$.

As a result, both ${\cal A}=\{\{p\}\}$ 
and ${\cal A}_1=\{\emptyset\}$
are well-supported epistemic models of $\Pi$.
However,  ${\cal A}_1$ would not be a
world view because it violates the minimality
property w.r.t.\ epistemic negation. We will discuss this
later in Example~\ref{ex-Mp-Kp}.
}
\end{example}

\begin{example}
\label{eg-well-suppKp-epi}
{\em 
Consider the epistemic program $\Pi=\{p\leftarrow {\bf K} p\}.$
As ${\bf K}$ stands for $\neg\naf$, 
this program can be rewritten as $\Pi=\{p\leftarrow \neg\naf p\}$.
$\Pi$ has two epistemic models: ${\cal A}=\{\emptyset\}$ 
and ${\cal A}_1=\{\{p\}\}$.

Consider ${\cal A}=\{\emptyset\}$. 
Let $\Phi=\{\naf p\}$ be the set of
all epistemic negations occurring in $\Pi$
that are true in $\cal A$.
The epistemic reduct  
of $\Pi$ w.r.t.\ $\cal A$ is $\Pi^{\cal A}= \{p\leftarrow \neg\top\}$.
Consider $I=\emptyset$ in ${\cal A}$.
We have $\mathit{lfp}(T_{\Pi^{\cal A}}(\emptyset,\neg I^-))=\emptyset=I$,
so by Theorem~\ref{th-wellsupport-lfp},
$I$ is well-supported in $\Pi^{\cal A}$.
Then by Definition~\ref{def-well-supported-epis},
${\cal A}$ is well-supported in $\Pi$.

Consider ${\cal A}_1=\{\{p\}\}$. 
Let $\Phi=\emptyset$ be the set of
all epistemic negations occurring in $\Pi$
that are true in ${\cal A}_1$.
The epistemic reduct  
of $\Pi$ w.r.t.\ ${\cal A}_1$ is $\Pi^{{\cal A}_1}= \{p\leftarrow \neg\neg p\}$,
where $\naf p$ in $\Pi$ is replaced by $\neg p$.
Consider $I=\{p\}$ in ${\cal A}_1$.
We have $\mathit{lfp}(T_{\Pi^{{\cal A}_1}}(\emptyset,\neg I^-))=\emptyset \neq I$,
so by Theorem~\ref{th-wellsupport-lfp},
$I$ is not well-supported in $\Pi^{{\cal A}_1}$.
Thus by Definition~\ref{def-well-supported-epis},
${\cal A}_1$ is not well-supported in $\Pi$.

As a result, only ${\cal A}=\{\emptyset\}$
is a well-supported epistemic model of $\Pi$.
As will be discussed in Example~\ref{ex-Mp-Kp},
 ${\cal A}$ also satisfies the minimality properties
w.r.t.\ negation by default and epistemic negation.
}
\end{example}

Finally, when $\Pi$ is an epistemic-free program,
we have $Ep(\Pi)=\emptyset$ 
and thus $\Pi^{\cal A}=\Pi$ for every epistemic model ${\cal A}$. 
Then, ${\cal A}$ is well-supported in $\Pi$ iff every $I\in \cal A$
is well-supported in $\Pi$ by Definition~\ref{def-well-supported-disj}.

\section{Defining ASP Semantics in Terms of the Refined GAS Principles}
\label{sec-rational-semantics}

The refined GAS principles presented 
in the Introduction characterize answer set 
and world view construction as a 
SAT problem for a
propositional theory customized for a logic program,  
where the model selection
follows the three general principles (RP1)-(RP3).
In this section, we define ASP semantics  
guided by these principles. 
In particular, we define an answer set
semantics that fulfills the principles of
well-supportedness and
minimality of answer sets w.r.t.\ negation by default,
and a world view semantics that additionally
fulfills the principle of minimality of world views w.r.t.\ 
epistemic negation.

\subsection{Rational answer set semantics}
\label{rational-answerset}
At the level of answer sets,
by considering the principles of 
well-supportedness and minimality
w.r.t.\ negation by default
we obtain a new definition of answer sets,
called {\em rational answer set semantics},
for epistemic-free programs.
Here, we borrow the word ``rational" from
the Gelfond's {\em rationality} principle
because we refine the rationality principle
to the well-supportedness and
minimality principles.

\begin{definition}[Rational answer set semantics]
\label{def-ideal-ans-semantics}
A model $I$ of an epistemic-free program $\Pi$
is an answer set of $\Pi$
if $I$ is well-supported in $\Pi$ 
by Definition~\ref{def-well-supported-disj}
and no other model $J$ of $\Pi$ that is
a proper subset of $I$ is well-supported in $\Pi$.
\end{definition}

Definition~\ref{def-ideal-ans-semantics} says that
a model $I$ is an answer set under the rational 
answer set semantics
iff $I$ is a minimal well-supported model of $\Pi$.
This fulfills the two general principles for answer sets
(well-supportedness and minimality
w.r.t.\ negation by default)
and thus the rational answer set semantics
embodies the refined GAS principles.
Note that a minimal well-supported model of $\Pi$
is not necessarily a minimal model of $\Pi$.

The rational answer set semantics does not satisfy
the minimal model property  
nor constraint monotonicity nor foundedness.

\begin{example}
\label{ex:rationalanswer-minwellsupp}
{\em 
Consider the simple disjunctive program $\Pi$  
in Example~\ref{ex:generalized-strat-comp}.
We showed in Example~\ref{ex:generalized-strat-comp-wellsupp} 
that the expected answer set
$I=\{g_1, g_2, c_1, c_2\}$ is a well-supported model of $\Pi$.
It is easy to check that $I$ is also a minimal model of $\Pi$.
Therefore, $I$ is a minimal well-supported model of $\Pi$,
and by Definition~\ref{def-ideal-ans-semantics} it
is an answer set of $\Pi$ under the rational answer set semantics.
As shown in Example~\ref{ex:generalized-strat-comp},
this answer set violates constraint monotonicity and foundedness.

Similarly, the expected answer set
$I=\{g_1, g_2, c_1, c_2, c_3\}$
in Example~\ref{eg-minimal-models} 
is a minimal well-supported model
and thus is an answer set of $\Pi$ 
under the rational answer set semantics.
This answer set violates the minimal model property.

One can also check that the expected answer sets
in Examples \ref{ex1-cmfn} and \ref{ex-nonminimal} are
answer sets under the rational answer set semantics.
}
\end{example}

Note that for an epistemic-free normal program $\Pi$,
when a model $I$ is well-supported, i.e.,
$\mathit{lfp}(T_{\Pi}(\emptyset,\neg I^-))\cup \neg I^-\models p$
for every $p\in I$, $I$ must be a minimal model of $\Pi$,
as the provability operator $T_\Pi(O,N)$ is monotone
w.r.t.\ the parameter $O$ (see Definition~\ref{def-one-step-operator}).
This means that every answer set 
of an epistemic-free normal program
under the rational answer set semantics
is a minimal model. Therefore,
the rational answer set semantics satisfies
the minimal model property  
for epistemic-free normal programs.

\leanparagraph{A relaxed constraint monotonicity}
Constraint monotonicity does not hold
at the level of answer sets (i.e., minimal well-supported models)
under the rational answer set semantics.
However, at the level of well-supported models,
one can easily show that
for every epistemic-free program $\Pi$
and constraint $C$,
well-supported models of $\Pi\cup \{C\}$ 
are well-supported models of $\Pi$ satisfying $C$.
This provides a relaxed constraint monotonicity.

For an epistemic-free normal program $\Pi$,
as every well-supported model is a minimal model,
an interpretation $I$ is an answer set 
under the rational answer set semantics
iff $I$ is a well-supported model.
Then, for every constraint $C$,
answer sets of $\Pi\cup \{C\}$ 
are answer sets of $\Pi$ satisfying $C$.
That is, the rational answer set semantics satisfies
constraint monotonicity
for epistemic-free normal programs.

\medskip

We summarize the following {\em guess and check} steps
as a guide for answer set construction under the rational 
answer set semantics.

\leanparagraph{Guess and check for answer sets}
Whether a model $I$ of an epistemic-free program $\Pi$
is an answer set under the rational 
answer set semantics can be determined
via a guess and check process
in the following three steps.
\begin{enumerate}
\item
{\bf Guess a head selection function:}
Guess a function $sel$ that selects one $H_i$ 
satisfied by $I$ from every
rule head $H_1\mid \cdots\mid H_m$ in $\Pi$
and compute the disjunctive program reduct $\Pi^I_{sel}$
of $\Pi$ w.r.t.\ $I$ and $sel$ (see Definition~\ref{def:sel-reduct}).

\item
{\bf Check well-supportedness:}
Compute the least fixpoint 
$\mathit{lfp}(T_{\Pi^I_{sel}}(\emptyset,\neg I^-))$
via the inference sequence 
$\langle T_{\Pi^I_{sel}}^i(\emptyset,\neg I^-)
\rangle_{i=0}^\infty$
and determine that $I$ is well-supported by checking
$\mathit{lfp}(T_{\Pi^I_{sel}}(\emptyset,\neg I^-))\cup \neg I^-\models p$
for every $p\in I$ (see Theorem~\ref{th-wellsupport-lfp-normal}).

\item
{\bf Check minimality:}
Check that there is no other well-supported model $J$ that is
a proper subset of $I$ (see Definition~\ref{def-ideal-ans-semantics}).
\end{enumerate}

Note that when $\Pi$ is an epistemic-free normal program,
Steps 1 and 3 can be ignored. In this case,
when $I$ is well-supported, i.e.,
$\mathit{lfp}(T_{\Pi}(\emptyset,\neg I^-))\cup \neg I^-\models p$
for every $p\in I$, $I$ must be a minimal model of $\Pi$,
as the provability operator $T_\Pi(O,N)$ is monotone
w.r.t.\ the parameter $O$ (see Definition~\ref{def-one-step-operator}).

As another typical example, we next demonstrate that the well-known
{\em Hamiltonian cycle problem} can be encoded
as an epistemic-free program whose answer sets
under the rational answer set semantics
correspond to solutions of the problem.
The Hamiltonian cycle problem says that
given a directed graph $G=(V, E)$ with nodes
$V$ and edges $E\subseteq V\times V$,
find paths that visit every node in $V$ exactly once. 
Such a path forms a cycle, called
a {\em Hamiltonian cycle}.

\begin{example}[Hamiltonian cycles]
\label{ex-Hamiltonian-cycles}
{\em   
The following epistemic-free program formalizes
an instance of the Hamiltonian cycle problem, where 
for illustration, we simply consider a graph
$G$ with three nodes $V=\{a, b, c\}$ 
and four directed edges $E=\{(a,c), (b,a), (b,c), (c,b) \}$.
It is easy to check that $G$ has only one Hamiltonian cycle
$HC=\{(a,c), (c,b),(b,a)\}$.
\begin{tabbing} 
\hspace{.2in} $\Pi:\quad$ \= $node(a)$ \`$(1)$\\
 \>         $node(b)$ \`$(2)$\\
  \>         $node(c)$ \`$(3)$\\
\>          $edge(a,c)$ \`$(4)$\\
\>          $edge(b,a)$ \`$(5)$\\
\>          $edge(b,c)$ \`$(6)$\\
\>          $edge(c,b)$ \`$(7)$\\
\>   $in(X,Y)\mid \neg in(X,Y)\leftarrow edge(X,Y)$ \`$(8)$\\
\>   $path(X,Y) \leftarrow in(X,Y)$  \`$(9)$\\
\>   $path(X,Y) \leftarrow in(X,Z)\wedge path(Z,Y)$  \`$(10)$\\
\>   $\bot \leftarrow in(X,Y)\wedge in(X,Z)\wedge Y\neq Z$ \`$(11)$\\
\>   $\bot \leftarrow in(X,Y)\wedge in(Z,Y)\wedge X\neq Z$ \`$(12)$\\
\>   $\bot \leftarrow node(X)\wedge node(Y) \wedge \neg path(X,Y)$ \`$(13)$
\end{tabbing}
In the program, $X$, $Y$ and $Z$ are free variables.\footnote{In 
ASP, a logic program $\Pi$ with free variables is viewed as 
shorthand for a propositional program $ground(\Pi)$, 
which is obtained from $\Pi$ 
by replacing every free variable with all constants in $\Pi$.
The ASP semantics of $\Pi$ is then 
defined by the ASP semantics of $ground(\Pi)$.} 
We use a predicate $node(X)$ 
to represent a node $X$, and
$edge(X,Y)$ to represent a directed edge 
from node $X$ to node $Y$.
For a Hamiltonian cycle $HC$,
we use $in(X,Y)$ to represent 
that an edge $(X,Y)$ is on $HC$,
and use $path(X,Y)$ to represent 
that there is a directed path on $HC$
from node $X$ to node $Y$.

Rule (8) presents two alternatives
for every edge in $E$, i.e., it could be on or not on $HC$.\footnote{
In \cite{gelfond-kahl2014}, a rule like
$in(X,Y)\mid$ $\sim$$in(X,Y)\leftarrow edge(X,Y)$ 
was used to represent the choices,
where $\sim$$in(X,Y)$ is a {\em strong negation}
used in the GL-semantics \cite{GL91}.
In this paper, we directly use $\neg$ to represent negation as in classical logic.
As described in \cite{GL91},
for any atom $p(X_1,\cdots, X_n)$, the strong negation $\sim$$p(X_1,\cdots, X_n)$
can be compiled away by replacing it with a fresh predicate 
$p'(X_1,\cdots, X_n)$ along with a constraint rule 
$\bot\leftarrow p(X_1,\cdots, X_n)\wedge p'(X_1,\cdots, X_n)$.}
Rules (9) and (10) inductively define a directed path on $HC$.
Rules (11)-(13) are constraints saying that 
$HC$ must visit every node exactly once; i.e.,
for every node $X$ in $V$, $HC$ has no more than one edge
of the form $(X, \_)$ (rule (11)) and one edge
of the form $(\_, X)$ (rule (12)), and 
for any two nodes $X$ and $Y$ in $V$, $HC$ has
a path from $X$ to $Y$ (rule (13)). 

Then, $G$ has a Hamiltonian cycle $HC$
iff $\Pi$ has an answer set $I$ under the 
rational answer set semantics,
where for any two nodes $X$ and $Y$ in $V$,
an edge $(X,Y)$ in $E$ is on $HC$ iff $in(X,Y)$ is in $I$.

As an exercise, the interesting reader may follow the 
above guess and check steps,
specifically by guessing a function $sel$ that selects 
either $in(X,Y)$ or $\neg in(X,Y)$
for every $edge(X,Y)$ (rule (8)) without violating
the constraints (rules (11)-(13)),
to derive from $\Pi$ the following answer set 
(minimal well-supported model) under the 
rational answer set semantics:
\begin{tabbing} 
\hspace{.2in} 
$I=\{$\=$node(a),node(b),node(c),edge(a,c),edge(b,a),edge(b,c),edge(c,b),$\\
\>       $in(a,c),in(c,b),in(b,a),path(a,c),path(c,b),path(b,a),$\\
\>        $path(a,b),path(c,a),path(b,c),path(a,a),path(b,b),path(c,c)\}.$
\end{tabbing}
Then, the collection of all instances of $in(X,Y)$ in $I$, i.e.,
$\{in(a,c),in(c,b),in(b,a)\}$,
constitutes a Hamiltonian cycle 
$HC=\{(a,c), (c,b),(b,a)\}$ in $G$.
}
\end{example}

\subsection{Rational world view semantics}
\label{rational-worldview}
At the level of world views,
by further considering the principle of minimality 
of world views
w.r.t.\ epistemic negation
we obtain a new definition of world views, 
called {\em rational world view semantics},
for epistemic programs. 

\begin{definition}[Rational world view semantics]
\label{def-ideal-wv-semantics}
An epistemic model $\cal A$ of an epistemic program $\Pi$
is a world view of $\Pi$
if (1) $\cal A$ coincides with the collection of 
all answer sets of the epistemic reduct $\Pi^{\cal A}$ 
of $\Pi$ w.r.t.\ ${\cal A}$ under the 
rational answer set semantics, 
and (2) there exists no other epistemic model ${\cal W}$ of $\Pi$ 
satisfying condition (1) with $\Phi_{\cal A}\subset \Phi_{\cal W}$,
where $\Phi_{\cal A}$ and $\Phi_{\cal W}$ 
are the sets of
all epistemic negations occurring in $\Pi$
that are true in ${\cal A}$ and $\cal W$, respectively.
\end{definition}

Note that for an epistemic-free program $\Pi$,
the rational world view semantics
reduces to the rational answer set semantics,
i.e., an epistemic model $\cal A$ is a world view of $\Pi$
under the rational world view semantics
iff $\cal A$ consists of all answer sets of $\Pi$  
under the rational answer set semantics.

Condition (1) of a world view $\cal A$
in Definition~\ref{def-ideal-wv-semantics}
says that after an epistemic program $\Pi$ is transformed
to an epistemic-free program $\Pi^{\cal A}$
from the world view level down to
the answer set level, $\cal A$
must coincide with the collection of
all answer sets of $\Pi^{\cal A}$ under the
rational answer set semantics because
at the answer set level,
the rational world view semantics
reduces to the rational answer set semantics.
This means that at the answer set level,
the rational world view semantics
fulfills the two general principles for answer sets,
viz. well-supportedness and minimality
w.r.t.\ negation by default.
 
At the world view level, 
condition (2) of a world view in 
Definition~\ref{def-ideal-wv-semantics}
says that the rational world view semantics
fulfills minimality w.r.t.\ epistemic negation, i.e., 
every world view $\cal A$ of $\Pi$
satisfies a maximal set of epistemic negations in $\Pi$.
Therefore, the rational world view semantics
embodies the refined GAS principles.

\begin{example}
\label{ex-rationalWVexamples}
{\em 
We showed in Example~\ref{ex3-epigeneralized-strat-comp-wellsupp}
that the expected world view ${\cal W}=\{I\}$ with $I=\{a,b,c\}$
of the epistemic program $\Pi_1$
in Example~\ref{ex1-modal:generalized-strat-comp}
is well-supported in $\Pi_1$.
Next we show that ${\cal W}$
is a world view of $\Pi_1$ under the rational 
world view semantics. 

In Example~\ref{ex3-epigeneralized-strat-comp-wellsupp},
we showed that $I=\{a,b,c\}$ 
is well-supported in the epistemic reduct $\Pi_1^{\cal W}$ 
of $\Pi_1$ w.r.t.\ $\cal W$. 
In addition to $I$, $\Pi_1^{\cal W}$ has only one more model 
 $J=\{b,c\}$. It is easy to check that 
$J$ is not well-supported in $\Pi_1^{\cal W}$. 
Therefore, $I$ is a minimal well-supported model of $\Pi_1^{\cal W}$
and by Definition~\ref{def-ideal-ans-semantics} it
is an answer set of $\Pi_1^{\cal W}$ under 
the rational answer set semantics.
As $\Pi_1^{\cal W}$ has only two models $I$ and $J$,
and $J$ is not well-supported in $\Pi_1^{\cal W}$, 
$I$ is the only minimal well-supported model
and thus the only answer set of $\Pi_1^{\cal W}$. 
As $\cal W$ coincides with the collection of 
all answer sets of $\Pi_1^{\cal W}$ under 
the rational answer set semantics,
condition (1) in Definition~\ref{def-ideal-wv-semantics}
for $\cal W$ to be a world view of $\Pi_1$
is satisfied.

Note that $\Pi_1$ contains only one epistemic negation, i.e.,
$\naf c$ in rule ($4'$). As rule ($4'$) is a constraint saying
that $\naf c$ will not be true in any epistemic model
of $\Pi_1$, there exists no
epistemic model satisfying more epistemic negations 
occurring in $\Pi_1$ than $\cal W$.
So condition (2) of Definition~\ref{def-ideal-wv-semantics}
for $\cal W$ to be a world view of $\Pi_1$
is satisfied. 
Therefore, ${\cal W}$
is a world view of $\Pi_1$ under the rational 
world view semantics. 

As shown in Examples~\ref{ex1-modal:generalized-strat-comp}
and \ref{ex3-epi:generalized-strat-comp},
this world view ${\cal W}$ violates both constraint monotonicity 
and foundedness.
Therefore, the rational world view semantics
does not satisfy constraint monotonicity
nor foundedness.
}
\end{example}

\begin{example}
\label{ex-Mp-Kp}
{\em
Consider the epistemic program
$\Pi=\{p\leftarrow {\bf M} p\}=\{p\leftarrow \naf \neg p\}$ 
in Example~\ref{eg-well-suppMp-epi}.
It is easy to check that ${\cal A}=\{\{p\}\}$
and ${\cal A}_1=\{\emptyset\}$
are the collections of 
all answer sets of the epistemic reducts 
$\Pi^{\cal A}$  and $\Pi^{{\cal A}_1}$
of $\Pi$ w.r.t.\ ${\cal A}$ and ${\cal A}_1$,
respectively under the 
rational answer set semantics.
${\cal A}$ also satisfies 
the minimality property w.r.t.\ epistemic negation,
so it is a world view of $\Pi$ 
under the rational world view semantics.
However, ${\cal A}_1$ is not a world view.
Although at the answer set level,
${\cal A}_1$ satisfies the properties of well-supportedness 
and minimality w.r.t.\ negation by default,
at the world view level it violates 
the minimality property w.r.t.\ epistemic negation
(it does not satisfy a maximal set of epistemic negations 
occurring in $\Pi$).

In contrast, for the epistemic program
$\Pi=\{p\leftarrow {\bf K} p\}=\{p\leftarrow \neg\naf p\}$ 
in Example~\ref{eg-well-suppKp-epi},
it is easy to check
that ${\cal A}=\{\emptyset\}$ is a world view
under the rational world view semantics,
but ${\cal A}_1=\{\{p\}\}$ is not, because 
at the answer set level,
$\{p\}\in {\cal A}_1$ is not a well-supported model of $\Pi^{{\cal A}_1}$
and thus ${\cal A}_1$ does not satisfy the property 
of well-supportedness.
}
\end{example}

The intuitive behavior of the epistemic program
$\Pi = \{p\leftarrow {\bf M} p\}$ 
has been an open debate.
$\Pi$ has different world views
under the existing world view 
semantics in the literature.
For instance, 
in the world view semantics of
\cite{Gelfond2011,CabalarFC20aij},
$\Pi$ has two world views ${\cal A} = \{\{p\}\}$ 
and ${\cal A}_1 = \{\emptyset\}$; 
thus we can either conclude that $p$ is true in $\Pi$ 
or that $p$ is false.
However, 
$\Pi$ has only one world view ${\cal A}$ in the semantics of
\cite{KahlGelfond2015,Fa2015,ShenEiter16,SuCH20},
and only one world view ${\cal A}_1$
in the semantics of \cite{ZhangZ17a}. 
The question then is what are intuitive world views
of this epistemic program.

In this paper, we justify
in view of the refined GAS principles
that ${\cal A} = \{\{p\}\}$
is the only intuitive world view  
of $\Pi = \{p\leftarrow {\bf M} p\}$,
which satisfies the well-supportedness and 
minimality principles both at the level of
answer sets and of world views.

A rule of the form $p\leftarrow \naf \neg p$
(or $p\leftarrow {\bf M} p$) expresses that if there is no evidence to 
prove $\neg p$ to be true, we conclude $p$; put another way, 
we admit $p$ unless we can prove its contrary.
Such epistemic rules are commonly used in our daily life. 
Consider the following close example
introduced in \cite{ShenEiter16}.

\begin{example}
\label{group-meeting}
{\em
Our research group has a regular weekly meeting and the secretary 
reminds us every week by a {\em no-reply email} with the assumption that 
every member in the group will be present 
unless some notice of absence is available.
This epistemic assumption can be suitably formalized by a rule of the form
\[\mathit{present(X)\leftarrow member(X)\wedge \naf \neg present(X),}\] 
i.e., for every  group member $X$, 
if there is no evidence to prove $X$ will not be present,
then $X$ will be present.
Then for an individual member $\mathit{John}$ 
with no notice to the secretary
saying that he will not be present, we would derive from the 
above rule an epistemic program
\[\mathit{\Pi = \{member(John),\ present(John)\leftarrow member(John)\wedge \naf \neg present(John)\},}\] 
which has a single world view ${\cal A}=\mathit{\{\{member(John),\ present(John)\}\}}$
under the rational world view semantics, 
meaning that $\mathit{John}$ will
be present in the meeting.
}
\end{example}

\section{Extension to Handling Choice Constructs}
\label{sec:extension-choiceCons}
To embrace different scenarios and 
satisfy more application requirements,
some built-in functions as extensions to ASP
have been introduced to logic programs, including
aggregates or abstract constraint atoms \cite{FLP04,FaberPL11},
choice constructs \cite{SNS02,FerrarisL05,CalimeriFGIKKLM20},
set introduction rules \cite{GelfondZ19}, etc.
As choice constructs are used as an essential extension
in ASP programming languages 
such as ASP-Core-2.0 \cite{CalimeriFGIKKLM20},
in this section we formally define answer set semantics for
epistemic-free programs extended with choice constructs.

%The semantics of an epistemic-free program $\Pi$ extended with 
%choice constructs is defined by first transforming $\Pi$
%into a set of epistemic-free programs.
For a set $S$ of atoms, let $|S|$ denote
the number of atoms in $S$.
In general, a {\em choice construct} is
of the form $u_1 \{a_1,\cdots,a_m\} u_2$, 
where $m>0$, $0\leq u_1\leq u_2\leq m$,
and the $a_i$'s are ground atoms.
For answer set construction, a choice construct
 $u_1 \{a_1,\cdots,a_m\} u_2$ presents, for every subset 
$S$ of $ \{a_1,\cdots,a_m\}$
with $u_1\leq |S|\leq u_2$,
a {\em choice} that all atoms in $S$ 
are true and the other atoms in 
$\{a_1,\cdots,a_m\}$ are false. 
Such a choice is represented by a {\em choice formula} of the form
\[\bigwedge_{a\in S} a\ \wedge\ \bigwedge_{a\in \{a_1,\cdots,a_m\}\setminus S} \neg a\]

An epistemic-free program whose rules are extended with 
choice constructs can be viewed as a compact representation of
a collection of epistemic-free programs in which
every choice construct is replaced by a choice formula.

\begin{definition}
Let $\Pi$ be an epistemic-free program extended with 
choice constructs. A {\em choice program} of $\Pi$ 
w.r.t.\ the choice constructs
is $\Pi$ with every choice construct 
replaced by one of its choice formulas.
\end{definition}

Then the answer set semantics of an epistemic-free program extended 
with choice constructs
is defined in terms of its choice programs.

\begin{definition}
\label{sem-choiceprog}
Let $\Pi$ be an epistemic-free program extended with 
choice constructs and ${\cal X}$ be an answer set
semantics for epistemic-free programs.
Then the set of answer sets of $\Pi$ under ${\cal X}$ is 
the set of answer sets of all choice programs of $\Pi$ under ${\cal X}$.
\end{definition}

By Definition~\ref{sem-choiceprog},
any answer set semantics ${\cal X}$ for epistemic-free programs
can be extended to handling programs with choice constructs.
In the following examples, we let ${\cal X}$ be the rational answer set semantics.

To see the necessity and usefulness of choice constructs,
we consider the following pin-touching-colored board problem.

\begin{example}
\label{pin-dropping}
{\em
Suppose that we have a pin and a board painted black and white. 
The {\em pin-touching-colored board problem} is to 
formulate in ASP the scenario that the pin will touch either a black area 
or a white area, or both, if it is dropped on the board. 
Specifically,
let $drop$, $black$ and $white$ be three predicates, where
$drop$ represents that the pin is dropped on the board,
and $black$ and $white$ represent that the pin touches a black and
a white area, respectively.
Then, we aim to use these three predicates
to write a logic program $\Pi$
whose intended models/answer sets are
$I_1=\{drop, black\}$, $I_2=\{drop, white\}$ and
$I_3=\{drop, black, white\}$, which correspond to
the three solutions to the pin-touching-colored board problem.

The difficulty with this pin-touching-colored board problem
is that its solutions are not subset-minimal w.r.t.\ the two colors
$\{black, white\}$ that the pin may touch.
Therefore, we cannot formulate it using an epistemic-free program
whose answer sets must be subset-minimal (due to the minimality principle w.r.t.
negation by default).\footnote{For example, the pin-touching-colored board problem
cannot be formulated using either of the following epistemic-free programs:
 \begin{tabbing}
\hspace{.3in} \= $\Pi_1:\ $ \= $drop$ \` $(1)$ \\
\>\> $black\mid white \leftarrow drop$ \` $(2)$ \\[.05in]
\>      $\Pi_2:\ $ \> $drop$ \` $(1)$ \\
\>\> $black\wedge \neg white \mid\neg black\wedge white \mid black\wedge white \leftarrow drop$ \` $(2)$ 
\end{tabbing}}

However, we can formulate
the pin-touching-colored board problem using 
the following epistemic-free program extended 
with a choice construct:
 \begin{tabbing}
\hspace{.3in} \= $\Pi:\ $ \= $drop$ \` $(1)$ \\
\>\> $1 \{black,white\} 2 \leftarrow drop$ \` $(2)$ 
\end{tabbing}
Rule (1) says that a pin is dropped on the board, and rule (2) says 
that if a pin is dropped on the board, it will touch either a black 
area or a white area, or both.

For answer set construction, the choice construct 
$1 \{black,white\} 2$ in $\Pi$ presents three choices represented
by the three choice formulas: $black\wedge \neg white$, $\neg black\wedge white$,
and $black\wedge white$.
By replacing $1 \{black,white\} 2$ with the three choice formulas, respectively,
we obtain the following three choice programs:
 \begin{tabbing}
\hspace{.3in} \= $\Pi_1:\ $ \= $drop$ \` $(1)$ \\
\>\> $black\wedge \neg white \leftarrow drop$ \` $(2)$ \\[.05in]
\>      $\Pi_2:\ $ \> $drop$ \` $(1)$ \\
\>\> $\neg black\wedge white \leftarrow drop$ \` $(2)$ \\[.05in]
\>      $\Pi_3:\ $ \> $drop$ \` $(1)$ \\
\>\> $black\wedge white \leftarrow drop$ \` $(2)$ 
\end{tabbing}

Under the rational answer set semantics,
$\Pi_1$ has one answer set $I_1=\{drop, black\}$,
$\Pi_2$ has one answer set $I_2=\{drop, white\}$,
and $\Pi_3$ has one answer set $I_3=\{drop, black, white\}$.
Therefore, $\Pi$ has in total three answer sets:
$I_1$, $I_2$ and $I_3$, which correspond to
the three solutions to the pin-touching-colored board problem.
}
\end{example}

As said, an epistemic-free program $\Pi$ extended with 
choice constructs represents
a collection of choice programs each of which is an epistemic-free program. 
Due to this, 
the collection of answer sets of $\Pi$,
which is the union of answer sets of all choice programs of $\Pi$,
may not be subset-minimal,
although the answer sets
of every choice program of $\Pi$ are subset-minimal.
This explains why rule (2) of $\Pi$
in Example~\ref{pin-dropping} cannot be replaced by
the following disjunctive rule:
\[black\wedge \neg white \mid\neg black\wedge white \mid black\wedge white \leftarrow drop.\]
Although both rule heads provide the same three choices,
the epistemic-free program $\Pi'$ obtained from $\Pi$
with rule (2) replaced by the above disjunctive rule
has only two answer sets, viz.  $I_1=\{drop, black\}$
and $I_2=\{drop, white\}$.
$I_3=\{drop, black, white\}$,
which is a candidate answer set 
of $\Pi'$ generated from the third choice
$black\wedge white$,
is not an answer set of $\Pi'$
because it is not subset-minimal.

Although disjunctive rule heads and
choice constructs are both used to provide a set of choices
for answer set construction,
they are essentially different. In particular, 
for construction of an answer set $I$,
a choice construct $1 \{a_1,\cdots,a_m\} 1$ 
provides $m$ choices, requiring $I$ to satisfy
$a_1\wedge \neg a_2\wedge \cdots\wedge\neg a_m$,
or alternatively $\neg a_1\wedge a_2\wedge\neg a_3\wedge \cdots\wedge\neg a_m$,
$\cdots$, or alternatively
$\neg a_1\wedge \cdots\wedge\neg a_{m-1}\wedge a_m$.
In contrast, a disjunctive rule head 
$a_1\mid \cdots \mid a_m$
provides $m$ choices,  requiring $I$ to satisfy
$a_1$, or alternatively $a_2$, $\cdots$, or alternatively $a_m$.
As a result, the program $\Pi=\{1\{a, b\}1,\ a\leftarrow b,\ b\leftarrow a\}$
has no answer set, where the choice construct
$1\{a, b\}1$ presents for every answer set $I$ two choices, requiring $I$
to satisfy either $a\wedge \neg b$ or alternatively $\neg a\wedge b$. In contrast,
the program $\Pi=\{a\mid b,\ a\leftarrow b,\ b\leftarrow a\}$
has two answer sets, viz. $I_1=\{a\}$ and $I_2=\{b\}$,
where the disjunctive rule head $a\mid b$
presents for every answer set $I$ two choices, requiring $I$ to satisfy
either $a$ or alternatively $b$.

\section{Using Refined GAS Principles As an Alternative Baseline to Assess Existing ASP Semantics}
\label{sec-existing-semantics}

In this section, we use the refined GAS principles
as an alternative baseline to assess the existing ASP semantics
by showing whether they satisfy or embody these principles.
Formally we have the following definition:
\begin{enumerate}
\item
For an answer set semantics ${\cal X}$,
we say that ${\cal X}$ {\em satisfies} 
the refined GAS principles
if every answer set under ${\cal X}$ is a model
satisfying the principles of well-supportedness
and minimality w.r.t.\ negation by default;
and that ${\cal X}$ {\em embodies} the refined 
GAS principles if the converse also holds,
i.e., a model is an answer set under ${\cal X}$ 
iff it satisfies these two principles.
\item
For a world view semantics ${\cal X}$,
we say that ${\cal X}$ {\em satisfies} 
the refined GAS principles
if every world view under ${\cal X}$ is an epistemic model
satisfying well-supportedness
and minimality w.r.t.\ negation by default at the
answer set level and minimality w.r.t.\ epistemic negation 
at the world view level;
and that ${\cal X}$ {\em embodies} the refined 
GAS principles if the converse also holds,
i.e., an epistemic model is a world view under ${\cal X}$ 
iff it satisfies well-supportedness
and minimality w.r.t.\ negation by default at the
answer set level and minimality w.r.t.\ epistemic negation 
at the world view level.
\end{enumerate}

As both the rational answer set semantics
and the rational world view semantics 
embody the refined GAS principles, 
to show that an existing answer set/world 
view semantics ${\cal X}$ satisfies the refined 
GAS principles
it suffices to show that every answer set/world view
under ${\cal X}$ is also an answer set/world view
under the rational answer set/world view semantics;
and to show that ${\cal X}$ embodies the refined 
GAS principles, it suffices to show that 
${\cal X}$ agrees with 
the rational answer set/world view semantics, i.e.,
it defines the same answer sets/world views
as the rational answer set/world view semantics.

\subsection{Answer set semantics that embody the refined GAS principles}

Major existing ASP semantics that embody 
the refined GAS principles
include the GL$_{nlp}$-semantics for simple normal programs
and the WJ-semantics for epistemic-free normal programs.

\subsubsection{GL$_{nlp}$-semantics for simple normal programs}

For a simple normal program $\Pi$,
the GL$_{nlp}$-semantics \cite{GL88} 
defines an interpretation $I$ to be an answer set of 
$\Pi$ if $I$ is the least model of the GL-reduct $\Pi^I$.

It was shown by \citeA{Fages:JMLCS:1994} 
that a model of a simple normal program $\Pi$
is an answer set under the GL$_{nlp}$-semantics 
iff it is well-supported in $\Pi$ by Definition~\ref{def-well-supported-simp-normal}.
Then it follows immediately from Theorem~\ref{th-simp-normal} and
Corollaries \ref{cor-normal-atomhead-formula} 
and \ref{cor-disj-normal-wellsupp} that a model of a 
simple normal program $\Pi$
is an answer set under the GL$_{nlp}$-semantics 
iff it is well-supported in $\Pi$ by Definition~\ref{def-well-supported-disj}.

Moreover, as shown by \citeA{GL88}, every answer set $I$
of a simple normal program $\Pi$
under the GL$_{nlp}$-semantics is a minimal model of $\Pi$.
This means that no other model of $\Pi$ that is
a proper subset of $I$ is well-supported in $\Pi$.

As a result, for a simple normal program $\Pi$,
a model $I$ of $\Pi$ is an answer set under the GL$_{nlp}$-semantics
iff $I$ is well-supported in $\Pi$ by Definition~\ref{def-well-supported-disj}
and no other model of $\Pi$ that is
a proper subset of $I$ is well-supported in $\Pi$
iff $I$ is a minimal well-supported model of $\Pi$.
The following corollary is now immediate from 
Definition~\ref{def-ideal-ans-semantics}.

\begin{corollary}
\label{cor-rational-simple-normal-pro}
For simple normal programs, 
the GL$_{nlp}$-semantics agrees with 
the rational answer set semantics.
\end{corollary}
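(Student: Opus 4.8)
The plan is to assemble Corollary~\ref{cor-rational-simple-normal-pro} directly from the facts established immediately before its statement; no new machinery is required. First I would note that a simple normal program $\Pi$ is in particular an epistemic-free normal program with atomic rule heads, so the chain of equivalences Definition~\ref{def-well-supported-simp-normal} $\Leftrightarrow$ Definition~\ref{def-well-supported-normal-atomhead} (Theorem~\ref{th-simp-normal}) $\Leftrightarrow$ Definition~\ref{def-well-supported-normal} (Corollary~\ref{cor-normal-atomhead-formula}) $\Leftrightarrow$ Definition~\ref{def-well-supported-disj} (Corollary~\ref{cor-disj-normal-wellsupp}) applies to it: for any model $I$ of $\Pi$, well-supportedness in the sense of \citeA{Fages:JMLCS:1994} coincides with well-supportedness in the sense of Definition~\ref{def-well-supported-disj}, which is the notion used by the rational answer set semantics (Definition~\ref{def-ideal-ans-semantics}).

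For the inclusion ``GL$_{nlp}$ answer set $\Rightarrow$ rational answer set'', I would take an answer set $I$ of $\Pi$ under the GL$_{nlp}$-semantics. Then $I$ is a model of $\Pi$, and by \citeA{Fages:JMLCS:1994} it is well-supported in $\Pi$ by Definition~\ref{def-well-supported-simp-normal}, hence, via the equivalence chain above, by Definition~\ref{def-well-supported-disj}. Moreover, by \citeA{GL88} every such $I$ is a minimal model of $\Pi$, so $\Pi$ has no model that is a proper subset of $I$; \emph{a fortiori} no proper subset of $I$ is a well-supported model of $\Pi$. Thus $I$ meets all the conditions of Definition~\ref{def-ideal-ans-semantics}, i.e.\ $I$ is an answer set under the rational answer set semantics.

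For the converse, let $I$ be an answer set of $\Pi$ under the rational answer set semantics. By Definition~\ref{def-ideal-ans-semantics}, $I$ is a model of $\Pi$ that is well-supported in $\Pi$ by Definition~\ref{def-well-supported-disj}; running the equivalence chain backwards, $I$ is well-supported by Definition~\ref{def-well-supported-simp-normal}, and hence by \citeA{Fages:JMLCS:1994} it is the least model of the GL-reduct $\Pi^I$, i.e.\ an answer set of $\Pi$ under the GL$_{nlp}$-semantics. (The minimality clause of Definition~\ref{def-ideal-ans-semantics} is not even needed in this direction, since Fages's characterization already delivers leastness of the reduct's model.) Combining the two inclusions yields the claimed agreement.

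The argument is essentially bookkeeping, so I do not expect a real obstacle; the only points requiring care are (i) checking that a simple normal program genuinely lies in every program class invoked along the equivalence chain, so that Theorem~\ref{th-simp-normal} and Corollaries~\ref{cor-normal-atomhead-formula} and~\ref{cor-disj-normal-wellsupp} all apply to it, and (ii) observing that ``no proper subset of $I$ that is a model is well-supported'' follows at once from minimal-modelhood of GL$_{nlp}$ answer sets, rather than needing a separate well-supportedness computation.
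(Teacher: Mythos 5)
Your proposal is correct and follows essentially the same route as the paper: it combines Fages's characterization of GL$_{nlp}$ answer sets with the equivalence chain through Theorem~\ref{th-simp-normal} and Corollaries~\ref{cor-normal-atomhead-formula} and~\ref{cor-disj-normal-wellsupp}, and then uses the minimal-model property of GL$_{nlp}$ answer sets from \citeA{GL88} to dispose of the minimality clause of Definition~\ref{def-ideal-ans-semantics}. Your added observation that the converse direction does not need that minimality clause (Fages's ``iff'' already suffices) is consistent with, and if anything slightly sharpens, the paper's argument.
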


Therefore, 
for simple normal programs the GL$_{nlp}$-semantics
embodies the refined GAS principles.

\subsubsection{WJ-semantics for epistemic-free normal programs}

For an epistemic-free normal program $\Pi$,
 \citeA{ShenWEFRKD14} defined
the {\em well-justified FLP answer set semantics}
(simply WJ-semantics) in terms of
the least fixpoint $\mathit{lfp}(T_{\Pi}(\emptyset,\neg I^-))$
of the inference sequence 
$\langle T_{\Pi}^i(\emptyset,\neg I^-)\rangle_{i=0}^\infty$ 
w.r.t.\ an interpretation $I$.

\begin{definition}[WJ-semantics \cite{ShenWEFRKD14}]
\label{def-WJsemantics}
%{\em
A model $I$ of an epistemic-free normal 
program $\Pi$ is an answer set of $\Pi$ if
$\mathit{lfp}(T_{\Pi}(\emptyset,\neg I^-))\cup \neg I^- \models p$
for every $p\in I$.
%}
\end{definition}

Intuitively, the WJ-semantics says that 
a model $I$ of an epistemic-free normal program $\Pi$
is an answer set 
iff when $\neg I^-$ is assumed to be true, 
all $p\in I$ can be inferred 
by iteratively applying the if-then rules in $\Pi$ 
via a bottom-up fixpoint inference sequence 
$\langle T_{\Pi}^i(\emptyset,\neg I^-)\rangle_{i=0}^\infty$.
The WJ-semantics agrees with the GL$_{nlp}$-semantics
for simple normal programs. 

It follows immediately from Theorem~\ref{th-wellsupport-lfp-normal} 
and Corollary~\ref{cor-disj-normal-wellsupp} 
that a model $I$ of  
an epistemic-free normal program $\Pi$
is an answer set under the WJ-semantics 
iff it is well-supported in $\Pi$ by Definition~\ref{def-well-supported-disj}.

Moreover, it was shown in \cite{ShenWEFRKD14}
that every answer set $I$
of an epistemic-free normal program $\Pi$
under the WJ-semantics is a minimal model of $\Pi$.
This means that no other model of $\Pi$ that is
a proper subset of $I$ is well-supported in $\Pi$.

As a result, for every epistemic-free normal program $\Pi$,
a model $I$ of $\Pi$ is an answer set under the WJ-semantics
iff $I$ is well-supported in $\Pi$ by Definition~\ref{def-well-supported-disj}
and no other model of $\Pi$ that is
a proper subset of $I$ is well-supported in $\Pi$.
The following conclusion then follows immediately from 
Definition~\ref{def-ideal-ans-semantics}.

\begin{corollary}
\label{cor-rational-epistemic-free-normal-pro}
For epistemic-free normal programs,
the WJ-semantics agrees with 
the rational answer set semantics.
\end{corollary}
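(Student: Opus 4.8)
The plan is to reduce the claim to results already established, so that the three notions ``WJ answer set'', ``well-supported model of $\Pi$ in the sense of Definition~\ref{def-well-supported-disj}'', and ``rational answer set'' all collapse onto the same class of interpretations; the only point requiring genuine care is that, for epistemic-free normal programs, the minimality clause of Definition~\ref{def-ideal-ans-semantics} turns out to be vacuous.

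First I would match WJ answer sets with well-supported models. Let $\Pi$ be an epistemic-free normal program and $I$ a model of $\Pi$. By Definition~\ref{def-WJsemantics}, $I$ is an answer set under the WJ-semantics iff $\mathit{lfp}(T_{\Pi}(\emptyset,\neg I^-))\cup\neg I^-\models p$ for every $p\in I$. By Theorem~\ref{th-wellsupport-lfp-normal} this is exactly the condition that $I$ be well-supported in $\Pi$ by Definition~\ref{def-well-supported-normal}, and by Corollary~\ref{cor-disj-normal-wellsupp} the latter is in turn equivalent to $I$ being well-supported in $\Pi$ by Definition~\ref{def-well-supported-disj}. Hence $I$ is a WJ answer set of $\Pi$ iff $I$ is a well-supported model of $\Pi$ in the sense of Definition~\ref{def-well-supported-disj}.

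Second I would dispose of the extra minimality requirement. Every WJ answer set of an epistemic-free normal program is a minimal model of $\Pi$ --- this follows from \cite{ShenWEFRKD14}, or directly from the monotonicity of $T_{\Pi}(O,N)$ in its first argument $O$ (Definition~\ref{def-one-step-operator}), since then a well-supported $I$ cannot strictly contain another model. Therefore, if $I$ is a WJ answer set (equivalently, a well-supported model), then no proper subset $J\subsetneq I$ is even a model of $\Pi$, so no such $J$ is well-supported in $\Pi$, and $I$ satisfies the second clause of Definition~\ref{def-ideal-ans-semantics} trivially; the converse inclusion is immediate, since a rational answer set is in particular a well-supported model. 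Chaining the two steps gives: $I$ is a WJ answer set of $\Pi$ iff $I$ is a well-supported model of $\Pi$ iff $I$ is a minimal well-supported model of $\Pi$ iff $I$ is a rational answer set of $\Pi$, which is the assertion of the corollary. The main (indeed only) obstacle is the vacuity of the minimality clause, which is precisely where the monotonicity of the one-step provability operator (and the corresponding minimality result of \cite{ShenWEFRKD14}) is needed; everything else is a formal rewriting through Theorem~\ref{th-wellsupport-lfp-normal} and Corollary~\ref{cor-disj-normal-wellsupp}.
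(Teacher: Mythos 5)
Your proposal is correct and follows essentially the same route as the paper: both establish the equivalence "WJ answer set iff well-supported model" via Theorem~\ref{th-wellsupport-lfp-normal} and Corollary~\ref{cor-disj-normal-wellsupp}, and then render the minimality clause of Definition~\ref{def-ideal-ans-semantics} vacuous by invoking the minimal-model property of WJ answer sets from \cite{ShenWEFRKD14} (equivalently, the monotonicity of $T_\Pi(O,N)$ in $O$, which the paper itself uses for the same purpose). Nothing is missing.
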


Therefore, the WJ-semantics
for epistemic-free normal programs 
embodies the refined GAS principles.

\subsection{Answer set semantics that do not embody but satisfy the refined GAS principles}

Major existing ASP semantics that satisfy but do 
not embody the refined GAS principles
include the GL-semantics for simple disjunctive programs
and the three-valued fixpoint semantics for epistemic-free normal 
programs with atomic rule heads.

\subsubsection{GL-semantics for simple disjunctive programs}
\label{subsec-DI-GL-simple-semantics}

For a simple disjunctive program $\Pi$,
the GL-semantics \cite{GL91} 
defines an interpretation $I$ to be an answer set of 
$\Pi$ if $I$ is a minimal model of the GL-reduct $\Pi^I$.
It was observed in \cite{ShenE19} that in this definition, 
the disjunctive rule head operator $\mid$
amounts to the classical connective $\vee$
in that a model $I$ of $\Pi$
is an answer set of $\Pi$ under the GL-semantics
iff $I$ is a minimal model of $\Pi^I$
iff $I$ is a minimal model of $(\Pi^\vee)^I$,
where $\Pi^\vee$ is $\Pi$ with the operator $\mid$
replaced by the connective $\vee$.
It is due to identifying the operator $\mid$
with the connective $\vee$ in disjunctive rule heads
that the GL-semantics differs from 
the rational answer set semantics.

In Example~\ref{ex:rationalanswer-minwellsupp},
we showed that the expected answer set
$I=\{g_1, g_2, c_1, c_2\}$ in 
Example~\ref{ex:generalized-strat-comp}
is a minimal well-supported model  and thus
is an answer set under the rational answer set semantics.
However, as shown in Example~\ref{ex:generalized-strat-comp},
this answer set is not an answer set under the GL-semantics.
This means that
the GL-semantics does not agree with 
the rational answer set semantics, and that
it does not embody the refined GAS principles
because not all models
satisfying both well-supportedness
and minimality w.r.t.\ negation by default
(i.e., minimal well-supported models)
are answer sets under the GL-semantics.

However, the GL-semantics satisfies
the refined GAS principles.

\begin{restatable}{theorem}{RthDIGLnlprationalSemantics}
\label{th-DI-GLnlprational-semantics}
Answer sets of a simple disjunctive program $\Pi$ 
under the GL-semantics
are also answer sets under 
the rational answer set semantics.
\end{restatable}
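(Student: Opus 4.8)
The plan is to show that if $I$ is an answer set of $\Pi$ under the GL-semantics, i.e.\ $I$ is a minimal model of the GL-reduct $\Pi^I$, then (a) $I$ is a model of $\Pi$, (b) $I$ is well-supported in $\Pi$ by Definition~\ref{def-well-supported-disj}, and (c) no proper subset of $I$ is a well-supported model of $\Pi$; by Definition~\ref{def-ideal-ans-semantics} this makes $I$ a rational answer set. Part (a) is the standard observation that a minimal model of a GL-reduct is a model of the original program: every rule $r$ of $\Pi$ that contains some $C_i\in I$ has a body that is false in $I$, while for the remaining rules the associated reduct rule (obtained by deleting the $\neg C_i$'s) is satisfied by $I$ since $I\models\Pi^I$; hence $I$ satisfies all rules of $\Pi$. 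For later use I would also record the minimal-model property of the GL-semantics (\cite{GL91}): $I$ is in fact a minimal model of $\Pi$, because any model $J\subsetneq I$ of $\Pi$ would also be a model of $\Pi^I$ (each surviving literal $\neg C_i$ holds in $J$ as $C_i\notin I\supseteq J$), contradicting minimality of $I$ over $\Pi^I$.

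The crux is part (b). I would build a head selection function $sel$ together with a strict well-founded partial order on $I$ by a greedy (transfinite) construction driven by the minimality of $I$. Set $J_0=\emptyset$; at a successor stage, if $J_\alpha\subsetneq I$ then $J_\alpha$ is not a model of $\Pi^I$ (otherwise it would be a model strictly below $I$), so some rule $r\in\Pi^I$ is violated by $J_\alpha$: its positive body $\{B_1,\dots,B_n\}$ lies in $J_\alpha$ but $head(r)\cap J_\alpha=\emptyset$; since $I\models\Pi^I$ and the positive body is contained in $I$, there is an atom $a\in head(r)\cap I$, necessarily with $a\notin J_\alpha$. Put $J_{\alpha+1}=J_\alpha\cup\{a\}$, set the level $\ell(a)=\alpha{+}1$, and let $sel((n,head(r')),I)=a$ for the rule $r'\in\Pi$ that gave rise to $r$ (using its ID number $n$, which is exactly what Definition~\ref{head-sel-gdlp} permits). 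Take unions at limits. Each rule of $\Pi$ is used at most once (once a head atom of $r$ has entered the chain, $r$ is no longer violated), the chain $(J_\alpha)$ strictly increases inside $I$ and hence reaches $I$ (at a finite stage when $\Pi$ is finite), and for the remaining rules whose body is satisfied by $I$ I let $sel$ pick any head atom in $I$ (harmless). The map $\ell$ induces a strict well-founded order $\prec$ on $I$ ($q\prec p$ iff $\ell(q)<\ell(p)$). Now $\Pi^I_{sel}$ (Definition~\ref{def:sel-reduct}) is an epistemic-free normal program with atomic heads, and for $p\in I$ added at stage $\alpha{+}1$ via rule $r'$, the rule $p\leftarrow body(r')$ belongs to $\Pi^I_{sel}$; taking $S=\{B_1,\dots,B_n\}\subseteq J_\alpha\subsetneq I$ we get $p\notin S$, $S\cup\neg I^-\models body(r')$ (the surviving literals $\neg C_l$ hold since $C_l\notin I$), and $q\prec p$ for every $q\in S$ since each $q\in J_\alpha$ has level $\le\alpha<\ell(p)$. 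Thus $I$ satisfies Definition~\ref{def-well-supported-normal-atomhead} in $\Pi^I_{sel}$, hence (Corollary~\ref{cor-normal-atomhead-formula}) is well-supported in $\Pi^I_{sel}$ by Definition~\ref{def-well-supported-normal}, and therefore well-supported in $\Pi$ by Definition~\ref{def-well-supported-disj}. (Equivalently one may invoke Theorem~\ref{th-wellsupport-lfp}: the level induction shows $\mathit{lfp}(T_{\Pi^I_{sel}}(\emptyset,\neg I^-))=I$.)

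Part (c) is then immediate from the minimal-model property recorded in part (a): since $I$ is a minimal model of $\Pi$, no proper subset $J\subsetneq I$ is even a model of $\Pi$, let alone a well-supported model; hence $I$ is a minimal well-supported model of $\Pi$, i.e.\ an answer set under the rational answer set semantics. The one step requiring care is the greedy/transfinite construction in part (b): one must check that the stages genuinely exhaust $I$ (which uses minimality of $I$ to rule out getting "stuck" at a proper subset that is already a model) and that $\prec$ is well-founded (it is, since $\ell$ takes values in an initial segment of the ordinals, or simply in $\{1,\dots,|I|\}$ when $\Pi$ is finite).
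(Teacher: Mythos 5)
Your proof is correct, but it takes a different route than the paper's. The paper argues by contradiction and stays at the level of reducts of reducts: assuming $I$ is not a rational answer set, it fixes an arbitrary head selection $\sel$, takes a model $J\subsetneq I$ of $(\Pi^I_{\sel})^I$, and shows that $J$ would already be a model of $\Pi^I$, contradicting the minimality of $I$ over $\Pi^I$; the appeal to well-supportedness is entirely routed through the recalled equivalence that, for simple normal programs, rational answer sets coincide with GL$_{nlp}$-answer sets (minimal models of the GL-reduct), and the minimality condition of Definition~\ref{def-ideal-ans-semantics} is absorbed by the minimal model property of the GL-semantics. Your argument instead constructs an explicit witness: a greedy (transfinite) saturation that, using exactly the same key fact---no proper subset of $I$ can be a model of $\Pi^I$---extracts at each stage a violated reduct rule, fixes $\sel$ on the originating rule (consistently, since each rule is used at most once), and reads off a level mapping, so that Definition~\ref{def-well-supported-normal-atomhead} is verified directly for $\Pi^I_{\sel}$ and minimality again follows from $I$ being a minimal model of $\Pi$. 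What the paper's route buys is brevity and a stronger by-product (the contradiction shows that \emph{every} head selection function works, echoing the characterization of the GL-semantics discussed in Section~\ref{subsec-DI-GL-semantics}); what your route buys is constructiveness---it exhibits the selection function and the strict well-founded order witnessing well-supportedness, rather than inferring their existence through the chain of equivalences with the GL$_{nlp}$-semantics---and it makes the treatment of the minimality condition of Definition~\ref{def-ideal-ans-semantics} explicit where the paper leaves it implicit. Both proofs hinge on the same pivot, the minimality of $I$ as a model of $\Pi^I$, so the difference is one of proof architecture rather than of underlying idea.
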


\subsubsection{Three-valued fixpoint semantics}
\label{three-valued-sem}

For an epistemic-free normal program $\Pi$ with atomic rule heads,
in \cite{DPB01,PDB07} a three-valued fixpoint semantics
was introduced based on a three-valued fixpoint operator $\Phi_\Pi$.
Answer sets under this semantics are called {\em two-valued stable models}.

A three-valued 
interpretation of $\Pi$ is $\hat{I} = (I_1,I_2)$, 
where $I_1\subseteq I_2\subseteq \Sigma$. 
Intuitively, atoms in $I_1$ are assigned
the truth value $\mathbf{t}$, atoms in $I_2\setminus I_1$  
assigned $\mathbf{u}$, and atoms in $\Sigma\setminus I_2$ 
assigned $\mathbf{f}$. These truth values are 
ordered by the {\em truth order} $\le_t$ with
$\mathbf{f}\le_t \mathbf{u}\le_t \mathbf{t}$. 
Negation on these truth values is defined
as $\neg \mathbf{f}=\mathbf{t}, \neg \mathbf{u}=\mathbf{u}$ 
and $\neg \mathbf{t} =\mathbf{f}$.
The truth value of a propositional
formula $F$ under $\hat{I}$, denoted $\hat{I}(F)$,
is defined recursively as follows:
\[ \hat{I}(F) = \left\{ 
   \begin{array}{l l}
      \mathbf{t} \text{ (resp. $\mathbf{u}$ and $\mathbf{f}$)} & \quad \text{if $F$ is in $I_1$ (resp. $I_2\setminus I_1$ and $\Sigma\setminus I_2$)}\\
     \text{min$_{\le_t} \{\hat{I}(F_1), \hat{I}(F_2)\}$} & \quad \text{if $F = F_1\wedge F_2$}\\
     \text{max$_{\le_t} \{\hat{I}(F_1), \hat{I}(F_2)\}$} & \quad \text{if $F = F_1\vee F_2$}\\
     \neg \hat{I}(F_1) & \quad \text{if $F = \neg F_1$}
   \end{array} \right.\]
Note that $F_1 \supset F_2$ is an abbreviation for $\neg F_1 \vee F_2$.
Then $\hat{I}$ {\em satisfies} $F$ if  $\hat{I}(F) =  \mathbf{t}$.

Given a three-valued interpretation $\hat{I} = (I_1,I_2)$,
the three-valued operator $\Phi_\Pi(I_1,I_2) = (I_1^\prime,I_2^\prime)$
is defined such that
\begin{tabbing} 
$\qquad\quad I_1^\prime = \{head(r)\mid$ \text{$r\in \Pi$ and $\hat{I}(body(r)) = \mathbf{t}\}$}, and\\
$\qquad\quad I_2^\prime = \{head(r)\mid$ \text{$r\in \Pi$ and 
$\hat{I}(body(r))=\mathbf{t}$ or $\hat{I}(body(r))=\mathbf{u}\}$.}
\end{tabbing}

Let $\Phi_\Pi^{1}(I_1,I_2)$ denote the first element of $\Phi_\Pi(I_1,I_2)$, i.e. $I_1^\prime$, and
$\Phi_\Pi^{2}(I_1,I_2)$ denote the second element $I_2^\prime$.
When $I_2$ is fixed, we compute a sequence $x_0 = \emptyset,
x_1 = \Phi_\Pi^{1}(x_0,I_2),\cdots,x_{i+1} = \Phi_\Pi^{1} (x_i,I_2),\cdots,$ 
until a fixpoint, denoted 
$St_\Phi^\downarrow(I_2)$, is reached. Similarly, when $I_1$ is fixed, 
we compute a sequence $x_0 = I_1,
x_1 = \Phi_\Pi^{2}(I_1,x_0),\cdots,x_{i+1} = \Phi_\Pi^{2} (I_1,x_i),\cdots,$ 
until a fixpoint 
$St_\Phi^\uparrow(I_1)$ is reached. 
The {\em stable revision operator} $St_\Phi$ on 
$\hat{I} = (I_1,I_2)$ is defined as 
\begin{tabbing}
$\qquad\quad St_\Phi(I_1,I_2) = (St_\Phi^\downarrow(I_2),St_\Phi^\uparrow(I_1)).$
\end{tabbing}

By iteratively applying $St_\Phi$ such that 
$St_\Phi^0(I_1,I_2) = (I_1,I_2)$ and for $i>0$,
$St_\Phi^i(I_1,I_2) = St_\Phi St_\Phi^{i-1}(I_1,I_2),$
we obtain a fixpoint, denoted $\mathit{lfp}(St_\Phi(I_1,I_2))$.
The three-valued fixpoint semantics 
of \cite{DPB01,PDB07} 
is then defined in terms of the fixpoint. That is,
let $\Pi$ be an epistemic-free normal program 
whose rule heads are atoms and $I$ be
a two-valued model of $\Pi$,
as defined in Section~\ref{Preliminaries};
then $I$ is a {\em two-valued stable model} of $\Pi$
if $\mathit{lfp}(St_\Phi(I,I)) = (I,I)$. 

The following example shows that some
minimal well-supported models of 
an epistemic-free program are not answer sets
under the three-valued fixpoint semantics.
That is, the three-valued fixpoint semantics 
does not agree with the 
rational answer set semantics and thus
does not embody the refined GAS principles.

\begin{example}
\label{eg-three-valued fixpoint}
{\em
Consider the following logic program with aggregates:
\begin{tabbing} 
\hspace{.05in} \= $\Pi_1:\ p(2)\leftarrow$ SUM$\langle (\{0,1,2\},X):p(X)\rangle <2\ \vee$ SUM$\langle (\{0,1,2\},X):p(X)\rangle \geq 2$ 
\end{tabbing}
Note that for whatever interpretation $I$,
the body condition of the rule in $\Pi_1$ is always satisfied by $I$, i.e., the rule body amounts to a tautology in classical logic.
Therefore, $\Pi_1$ should have a single expected 
answer set  $I=\{p(2)\}$.
As the aggregate SUM$\langle (\{0,1,2\},X):p(X)\rangle <2$ 
holds iff $\neg p(2)$ is true and 
SUM$\langle (\{0,1,2\},X):p(X)\rangle \geq 2$ 
holds iff $p(2)$ is true,
this program can be converted to the following
epistemic-free program by replacing the two aggregates
with $\neg p(2)$ and $p(2)$, respectively:
\begin{tabbing} 
\hspace{.05in} \= $\Pi_2:\ p(2)\leftarrow \neg p(2) \vee p(2)$ 
\end{tabbing}

$\Pi_1$ and $\Pi_2$ should have
the same answer sets because their rule bodies
express the same condition.
One can check that $I=\{p(2)\}$ 
is a minimal well-supported model
of $\Pi_2$ and thus is an answer 
set of $\Pi_2$ under 
the rational answer set semantics.
$I$ is also an answer 
set of $\Pi_1$ and $\Pi_2$
under the FLP-semantics.
However, $I$ is not a two-valued
stable model under the three-valued fixpoint semantics
because $\mathit{lfp}(St_\Phi(\{p(2)\},\{p(2)\})) = (\emptyset,\{p(2)\})$.
}
\end{example}

It was shown in \cite{ShenWEFRKD14}
that for an epistemic-free normal program 
$\Pi$ with atomic rule heads,
an answer set $I$ of $\Pi$
under the three-valued fixpoint semantics
is also an answer set 
under the WJ-semantics.
The following conclusion follows immediately
from Corollary \ref{cor-rational-epistemic-free-normal-pro}.

\begin{corollary}
\label{corr-three-valued fixpoint-rational}
Answer sets 
of an epistemic-free normal program 
$\Pi$ with atomic rule heads 
under the three-valued fixpoint semantics
are also answer sets under 
the rational answer set semantics.
\end{corollary}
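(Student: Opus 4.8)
The plan is to obtain the statement by composing two inclusions that have already been established in the excerpt, using the WJ-semantics as an intermediate link. An epistemic-free normal program $\Pi$ with atomic rule heads is in particular an epistemic-free normal program, so Corollary~\ref{cor-rational-epistemic-free-normal-pro} applies to it verbatim and tells us that the answer sets of $\Pi$ under the WJ-semantics coincide exactly with the answer sets of $\Pi$ under the rational answer set semantics. Hence it suffices to show that every two-valued stable model of $\Pi$ under the three-valued fixpoint semantics is an answer set of $\Pi$ under the WJ-semantics, and then quote Corollary~\ref{cor-rational-epistemic-free-normal-pro} to transfer it to the rational answer set semantics.

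First I would invoke the known relationship between the two fixpoint constructions. As recalled in the text preceding the corollary, it was shown in \cite{ShenWEFRKD14} that for an epistemic-free normal program $\Pi$ with atomic rule heads, if $I$ is a two-valued stable model of $\Pi$, i.e.\ $\mathit{lfp}(St_\Phi(I,I)) = (I,I)$, then $I$ is a model of $\Pi$ satisfying $\mathit{lfp}(T_{\Pi}(\emptyset,\neg I^-))\cup \neg I^- \models p$ for every $p\in I$, which is precisely the condition of Definition~\ref{def-WJsemantics} for $I$ to be an answer set under the WJ-semantics. (Intuitively, the downward iteration $St_\Phi^\downarrow(I)$ computing the ``true'' component once the upper bound is pinned at $I$ mirrors the bottom-up inference sequence $\langle T_{\Pi}^i(\emptyset,\neg I^-)\rangle_{i=0}^\infty$, so the two least fixpoints agree on a stable model.) Second, feeding this into Corollary~\ref{cor-rational-epistemic-free-normal-pro}: since $I$ is then a WJ-answer set of $\Pi$ and the WJ-semantics agrees with the rational answer set semantics on all epistemic-free normal programs, $I$ is an answer set of $\Pi$ under the rational answer set semantics. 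Composing the two steps yields the claim.

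I expect essentially no real obstacle here, because both constituent facts are already available: the first is cited, and the second is Corollary~\ref{cor-rational-epistemic-free-normal-pro}, which itself rests on Theorem~\ref{th-wellsupport-lfp-normal} and Corollary~\ref{cor-disj-normal-wellsupp}. The only points needing a line of care are bookkeeping ones: confirming that the narrower program class on which the three-valued fixpoint semantics is defined is subsumed by the class to which the WJ-semantics and Corollary~\ref{cor-rational-epistemic-free-normal-pro} apply, so that no hypothesis is lost when the quoted results are chained; and noting that only the stated one-sided inclusion holds, the converse failing in general, since Example~\ref{eg-three-valued fixpoint} exhibits an epistemic-free normal program with atomic rule heads having a rational answer set that is not a two-valued stable model. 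Thus the inclusion is strict and the corollary cannot be strengthened to an equivalence.
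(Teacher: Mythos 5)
Your proposal is correct and follows essentially the same route as the paper: cite the result of Shen et al.\ (2014) that every answer set under the three-valued fixpoint semantics of such a program is a WJ-answer set, and then transfer this via Corollary~\ref{cor-rational-epistemic-free-normal-pro} (WJ-semantics agrees with the rational answer set semantics on epistemic-free normal programs). Your added observation that the inclusion is strict, witnessed by Example~\ref{eg-three-valued fixpoint}, matches the paper's discussion as well.
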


Therefore, the three-valued fixpoint semantics
satisfies the refined GAS principles.

\subsection{Answer set semantics that neither embody nor satisfy the refined GAS principles}
\label{subsec-other-semantics}
Other representative answer set semantics include
the DI--semantics \cite{ShenE19},
the FLP-semantics \cite{FaberPL11}, and 
the equilibrium logic-based semantics 
\cite{Ferr05,Pearce06,FerrarisLL11}.
We show
that these semantics do not satisfy the refined GAS principles.

\subsubsection{DI-semantics}
\label{subsec-DI-GL-semantics}
 
To address the issue
of the GL-semantics with disjunctive rule heads,
\citeA{ShenE19} presented 
an alternative answer set semantics, called {\em determining
inference semantics} (DI-semantics for short). 
It is based on a disjunctive
rule head selection function $sel(head(r), I)$
w.r.t.\ an interpretation $I$, 
which interprets the disjunctive rule head operator $\mid$ 
as a nondeterministic inference operator
and returns one alternative $H_i$ 
from every rule $H_1\mid\cdots\mid H_m\leftarrow  B$ 
in $\Pi$ once $B$ is satisfied by $I$.
This would produce different 
candidate answer sets for $\Pi$ 
(corresponding to different
choices of the alternatives in rule heads). 
The DI-semantics admits as answer sets 
only those candidate answer sets
that are subset-minimal among all candidate answer sets.

As mentioned in Section~\ref{ws-disrh},
the head selection function $sel(head(r), I)$
defined in \cite{ShenE19} differs from the function $sel((n, head(r)), I)$ 
in Definition~\ref{head-sel-gdlp} in that $sel(head(r), I)$ 
additionally requires selecting the same rule head formula
from identical or variant rule heads.
In order to distinguish between the two versions of $sel$,
we may denote the one in \cite{ShenE19} by $sel_v$.

\begin{definition}[Variant rule heads \cite{ShenE19}]
\label{equiv-heads}
%{\em 
Two disjunctive rule heads ${\cal H}_1 = E_1 \mid \cdots \mid E_m$
and ${\cal H}_2 = F_1 \mid \cdots \mid F_n$
are {\em variant rule heads} if
for all $i$ and $j$ $(1\leq i\leq m, 1\leq j\leq n)$,
${\cal H}_1$ has a  head formula $E_i$
if and only if ${\cal H}_2$ has a head formula $F_j$
with $E_i\equiv F_j$.
%}
\end{definition}

\begin{definition}[Head selection function $sel_v(head(r), I)$ \cite{ShenE19}]
\label{head-select-SE19}
%{\em 
Let $\Pi$ be an epistemic-free program and $I$ 
be an interpretation.
Then for every rule $r\in \Pi$, 
\[sel_v(head(r), I) = \left \{\begin{array}{ll}
                                 H_i & \mbox{$head(r)$ has some head formula $H_i$ that is satisfied by $I$,} \\
                                  \bot & \mbox{otherwise}
                                 \end{array} \right. \]
where for any variant rule heads 
${\cal H}_1$ and ${\cal H}_2$ in $\Pi$, 
$sel_v({\cal H}_1, I) \equiv sel_v({\cal H}_2, I)$.
%}
\end{definition}

The DI-semantics defines candidate answer sets in terms of
disjunctive program reducts $\Pi^I_{sel_v}$ 
as defined in Definition~\ref{def:sel-reduct}
except that the head selection function $sel$
is replaced by $sel_v$,
i.e., it transforms $\Pi$ to an epistemic-free
normal program $\Pi^I_{sel_v}$ 
and defines $I$ to be a candidate answer set of $\Pi$
if $I$ is an answer set of $\Pi^I_{sel_v}$.

\begin{definition}[DI-semantics \cite{ShenE19}]
\label{generalSemanticsforDLP}
%{\em
Let $\Pi$ be an epistemic-free program 
and $\cal X$ be an answer set semantics
for epistemic-free normal programs.
Then a model $I$ is an answer set of $\Pi$ 
w.r.t.\ the base semantics $\cal X$ if
(1) for some head selection function $\sel_v$,
$I$ is an answer set of $\Pi^I_{sel_v}$ under $\cal X$, 
and (2) no other model $J$ that is a proper subset of $I$ 
satisfies condition (1).
%}
\end{definition}

The DI-semantics is generic and applicable to extend
any answer set semantics $\cal X$ for 
epistemic-free normal programs to epistemic-free
programs with disjunctive rule heads.  
In particular, by instantiating $\cal X$ 
to the GL$_{nlp}$-semantics 
we induce a DI-semantics for simple disjunctive
programs, denoted {\em DI-GL$_{nlp}$-semantics},
and by instantiating $\cal X$ 
to the WJ-semantics 
we induce a DI-semantics for
epistemic-free programs,
denoted {\em DI-WJ-semantics}. 
Note that the DI-WJ-semantics reduces to
the DI-GL$_{nlp}$-semantics
for simple disjunctive programs.

The DI-GL$_{nlp}$-semantics is a 
relaxation of the GL-semantics.
It was shown in \cite{ShenE19} that
for a simple disjunctive program $\Pi$,
the GL-semantics can be adapted from 
Definition~\ref{generalSemanticsforDLP}
simply by choosing the GL$_{nlp}$-semantics
as the base semantics $\cal X$ and 
requiring that for {\em every} (instead of {\em some}) 
head selection function $\sel_v$,
$I$ is an answer set of $\Pi^I_{sel_v}$ 
under $\cal X$.
This characterizes the essential difference of the 
DI-GL$_{nlp}$-semantics from the GL-semantics and shows that
the requirement of the GL-semantics
is much stronger than the DI-GL$_{nlp}$-semantics.
As a result, an answer set under the GL-semantics
is also an answer set under the DI-GL$_{nlp}$-semantics,
but not conversely.

Our recent study shows that 
due to that ${sel_v}$ requires selecting the same rule head formula
from identical or variant rule heads,
some expected answer sets may be excluded and some
unexpected ones may be obtained under the DI-semantics.
As a result,
some answer sets under the rational answer set semantics
may not be answer sets under 
the DI-GL$_{nlp}$-semantics/DI-WJ-semantics, and vice versa. 
Therefore, the DI-semantics in general, and
the DI-GL$_{nlp}$-semantics/DI-WJ-semantics 
in particular, neither embody nor
satisfy the refined GAS principles.
Here is an example.

\begin{example}
\label{eg-sel-selv}
{\em
Consider the following simple disjunctive 
program:\footnote{This program encodes an instance of the GSC problem,
where we have three companies $C = \{c_1, c_2, c_3\}$,
two goods $G = \{ g_1, g_2 \}$ and four strategy conditions:
$\sigma^{2}_1 = c_1$, $\sigma^{3}_1 = c_1$, $\sigma^{2}_2 = \neg c_2$
and $\sigma^{3}_2 = \neg c_3$. Suppose that 
both $g_1$ and $g_2$ are produced by 
$c_1$, $c_2$ or $c_3$. This GSC instance has only one strategic set
$C'=\{c_2,c_3\}$.
}
\begin{tabbing} 
\hspace{.3in} \= $\Pi:\ $ \=  $g_1$\`$(1)$\\[2pt]
\>\> $g_2$ \` $(2)$ \\[2pt]
\>\> $c_1 \mid c_2 \mid c_3\leftarrow g_1$   \` $(3)$\\[2pt]
\>\> $c_1 \mid c_2\mid c_3 \leftarrow g_2$ \` $(4)$ \\[2pt]
\> \> $c_2 \leftarrow c_1$  \` $(5)$\\[2pt]
\> \> $c_3 \leftarrow c_1$   \` $(6)$\\[2pt]
\> \> $c_2 \leftarrow \neg c_2$ $\qquad$ \= \text{//~This rule amounts to a constraint
$\bot \leftarrow \neg c_2$}   \` $(7)$\\[2pt]
\> \> $c_3 \leftarrow \neg c_3$ \> \text{//~This rule amounts to a constraint
$\bot \leftarrow \neg c_3$}  \` $(8)$
\end{tabbing} 

Assume that $I = \{g_1, g_2, c_2, c_3\}$, which is a minimal model of $\Pi$,
is an expected answer set.
Intuitively, rule $(3)$ presents two choices, either $c_2$ or alternatively $c_3$,
for construction of $I$ (as $c_1\not\in I$, $c_1$ is not a choice),
and rule $(4)$ presents the same two choices independently.
Let us choose $c_2$ and $c_3$ from
the heads of rules (3) and (4), respectively.
Then, we infer $I=\{g_1, g_2, c_2, c_3\}$
from the rules (1)-(6).
As $I$ satisfies the constraint rules (7) and (8),
it is a candidate answer set and
is expected to be an answer set
provided that it is subset-minimal.
As $I$ is a minimal model of $\Pi$,
it is a minimal candidate answer set
and thus is an expected answer set of $\Pi$.

For the rational answer set semantics,
consider a selection function $sel$ w.r.t.\ $I$,
where $sel((3, c_1 \mid c_2 \mid c_3), I) = c_2$
and $sel((4, c_1 \mid c_2 \mid c_3), I) = c_3$.
Then the disjunctive program reduct of
$\Pi$ w.r.t.\ $I$ and $sel$ is 
\begin{tabbing} 
\hspace{.3in} $\Pi^{I}_{sel}:\ $ \=  $g_1$\`$(1)$\\[2pt]
\> $g_2$ \` $(2)$ \\[2pt]
\> $c_2 \leftarrow g_1$ $\quad$  \` $(3)$\\[2pt]
\> $c_3 \leftarrow g_2$  \` $(4)$ 
\end{tabbing}
We have $\mathit{lfp}(T_{\Pi^{I}_{sel}}(\emptyset,\neg I^-))=\{g_1, g_2, c_2, c_3\}=I$,
so by Theorem~\ref{th-wellsupport-lfp}
$I$ is well-supported in $\Pi^{I}_{sel}$; and
by Definition~\ref{def-well-supported-disj},
$I$ is well-supported in $\Pi$.
As $I$ is a minimal model of $\Pi$,
it is a minimal well-supported model
and thus is an answer set under the rational 
answer set semantics.

For the DI-GL$_{nlp}$-semantics/DI-WJ-semantics, 
however, there is no head selection function 
${sel_v}$ w.r.t.\ $I$ which 
can choose $c_2$ and $c_3$ from
the heads of rules (3) and (4), respectively,
as they are variant rule heads.
Due to this, there is no head selection function
$\sel_v$  w.r.t.\ $I$ such that
$I$ is an answer set of $\Pi^I_{sel_v}$ 
under the GL$_{nlp}$-semantics/WJ-semantics.
Therefore, $I$ is not an answer set under 
the DI-GL$_{nlp}$-semantics/DI-WJ-semantics.

This expected answer set $I$ satisfies the principles 
of well-supportedness and minimality
w.r.t.\ negation by default, but it is 
not an answer set under 
the DI-GL$_{nlp}$-semantics/DI-WJ-semantics.
This means that the DI-GL$_{nlp}$-semantics/DI-WJ-semantics 
does not embody the refined GAS principles.

Next, consider another model $I_1=\{g_1, g_2, c_1, c_2, c_3\}$.
Consider a head selection function $sel$ w.r.t.\ $I_1$,
where $sel((3, c_1 \mid c_2 \mid c_3), I_1) = c_1$
and $sel((4, c_1 \mid c_2 \mid c_3), I_1) = c_1$.
One can easily check that $I_1$ is well-supported in $\Pi$.
As $I=\{g_1, g_2, c_2, c_3\}$ is also
well-supported in $\Pi$,
$I_1$ is not minimal among all well-supported models of $\Pi$.
Therefore,  $I_1$ is not an answer set
under the rational answer set semantics. 

For the DI-GL$_{nlp}$-semantics/DI-WJ-semantics,
consider a head selection function 
${sel_v}$ w.r.t.\ $I_1$ which 
chooses $c_1$ both from
the head of rule (3) and rule (4).
That is, let $sel_v(c_1 \mid c_2 \mid c_3, I_1) = c_1$.
Then, the disjunctive program reduct of
$\Pi$ w.r.t.\ $I_1$ and $sel_v$ is 
\begin{tabbing} 
\hspace{.3in} $\Pi^{I_1}_{sel_v}:\ $ \=  $g_1$\`$(1)$\\[2pt]
\> $g_2$ \` $(2)$ \\[2pt]
\> $c_1 \leftarrow g_1$  \` $(3)$\\[2pt]
\> $c_1 \leftarrow g_2$  \` $(4)$ \\[2pt]
\> $c_2 \leftarrow c_1$    \` $(5)$\\[2pt]
\> $c_3 \leftarrow c_1$   \` $(6)$ 
\end{tabbing}
$I_1$ is an answer set of $\Pi^{I_1}_{sel_v}$ 
under the GL$_{nlp}$-semantics/WJ-semantics.
Moreover, note that among all proper subsets of $I_1$,
only $I=\{g_1, g_2, c_2, c_3\}$ is a model of $\Pi$.
As said above, there is no head selection function
$\sel_v$  w.r.t.\ $I$ such that
$I$ is an answer set of $\Pi^I_{sel_v}$ 
under the GL$_{nlp}$-semantics/WJ-semantics.
Therefore, by Definition~\ref{generalSemanticsforDLP}
$I_1$ is an answer set under 
the DI-GL$_{nlp}$-semantics/DI-WJ-semantics.

$I_1$ satisfies 
well-supportedness, but it is not minimal among 
all well-supported models of $\Pi$.
This shows that the DI-GL$_{nlp}$-semantics/DI-WJ-semantics 
violates minimality w.r.t.\ negation by default; 
thus it does not satisfy the refined GAS principles.
}
\end{example}
  
\subsubsection{FLP-semantics}
The FLP-semantics 
\cite{FLP04,FaberPL11} was originally oriented towards 
giving an answer set semantics to simple disjunctive 
programs extended with aggregates, and was later
adapted to other extensions of logic programs, including
description logic programs 
\cite{EiterIST05,Eiter08,Lukasiewicz10},
modular logic programs \cite{Dao-TranEFK09}, and logic programs with
first-order formulas \cite{BartholomewLM11}.  
Let $\Pi$ be an epistemic-free program 
and $I$ an interpretation;
then $I$ is an answer set
of $\Pi$ under the FLP-semantics 
if $I$ is a minimal model of 
the {\em FLP-reduct} of $\Pi$ w.r.t.\ $I$ given by
\[f\Pi^I = \{r\in \Pi \mid I \textrm{
satisfies } body(r)\}\]  

For simple disjunctive programs, 
the FLP-semantics
coincides with the GL-semantics \cite{FLP04}.
As the GL-semantics does not agree with
the rational answer set semantics and 
does not embody the refined GAS principles,
the FLP-semantics does not embody the refined GAS principles either. 

Next, we use an example 
with aggregates to show that some
answer sets under the FLP-semantics
are not well-supported. 
Therefore, the FLP-semantics does not satisfy 
the refined GAS principles.

\begin{example}
\label{eg-FLP}
{\em
The following logic program with aggregates is
borrowed from \cite{BartholomewLM11}: 
\begin{tabbing} 
\hspace{.05in} \= $\Pi_1:\ $ \= $p(2)\leftarrow$ $\neg$SUM$\langle (\{-1,1,2\},X):p(X)\rangle <2$ \` $(1)$\\
\> \>         $p(-1)\leftarrow$ SUM$\langle (\{-1,1,2\},X):p(X)\rangle \geq 0$  \` $(2)$\\
\>\>          $p(1)\leftarrow p(-1)$  \` $(3)$
\end{tabbing}
SUM$\langle (\{-1,1,2\},X):p(X)\rangle$
is an aggregate function that
for every interpretation $I$ yields the sum $S$ of all $X\in \{-1, 1, 2\}$
such that $p(X)$ is true in $I$.
The aggregate SUM$\langle (\{-1,1,2\},X):p(X)\rangle <2$ 
is satisfied by $I$ if $S< 2$, and 
SUM$\langle (\{-1,1,2\},X):p(X)\rangle \geq 0$
is satisfied by $I$ if $S\geq 0$.

Note that the aggregates 
SUM$\langle (\{-1,1,2\},X):p(X)\rangle <2$ and
SUM$\langle (\{-1,1,2\},X):p(X)\rangle \geq 0$
express the same conditions as the first-order logic formulas
$\neg p(2)\vee (p(-1)\wedge \neg p(1))$ 
and $\neg p(-1)\vee p(1)\vee p(2)$, respectively, 
i.e., an interpretation $I$ satisfies the aggregates iff
$I$ satisfies the respective formulas.
So by replacing the two aggregates 
with their respective formulas,
we can convert $\Pi_1$ to the following
epistemic-free program:
\begin{tabbing} 
\hspace{.05in} \= $\Pi_2:\ $ \= $p(2)\leftarrow \neg (\neg p(2)\vee (p(-1)\wedge \neg p(1)))$ \` $(1)$\\
\> \>         $p(-1)\leftarrow \neg p(-1)\vee p(1)\vee p(2)$  \` $(2)$\\
\>\>          $p(1)\leftarrow p(-1)$  \` $(3)$
\end{tabbing}

$\Pi_1$ and $\Pi_2$ should have
the same answer sets because their rule bodies
express the same conditions.
One can check that $\Pi_1$ and $\Pi_2$ have an answer set $I=\{p(-1),p(1)\}$ 
under the FLP-semantics, but
$I$ is not an answer set
under the rational answer set semantics
because $I$ is not well-supported in $\Pi_2$
by Definition~\ref{def-well-supported-normal-atomhead}. 
Observe that $I$ has 
a circular justification in $\Pi_2$ caused by the  
following self-supporting loop:
\[p(1) \Leftarrow p(-1) \Leftarrow \neg p(-1)\vee p(1)\vee p(2) \Leftarrow p(1).\]
That is, $p(1)$ being in $I$ is due to 
$p(-1)$ being in $I$ (via rule (3)),
while $p(-1)$ being in $I$
is due to $I$ satisfying the condition
$\neg p(-1)\vee p(1)\vee p(2)$ (via rule (2)),
which in turn is due to $p(1)$ being in $I$.

The answer set $I=\{p(-1),p(1)\}$ 
under the FLP-semantics does not satisfy the 
principle of well-supportedness.
Therefore, the FLP-semantics
does not satisfy the refined GAS principles.
}
\end{example}

\subsubsection{Equilibrium logic-based semantics}
\label{subsubsec-Equilibrium logic-based semantics}

Another extensively studied answer set semantics
is by \citeA{Ferr05}, which defines answer sets
for logic programs with propositional formulas
and aggregates based on
a new definition of equilibrium logic \cite{Pearce96}.
This semantics was further extended to 
first-order formulas in terms of a modified circumscription \cite{FerrarisLL11}.
\citeA{Pearce06} proposed to identify 
answer sets with equilibrium models in equilibrium logic.
It turns out that the answer set semantics of \citeA{Pearce06}
coincides with that of \citeA{Ferr05} 
in the propositional case and 
with that of \citeA{FerrarisLL11} 
in the first-order case.
Therefore, we refer to all of 
them as {\em equilibrium logic-based semantics}.

For a propositional formula $F$, 
the {\em Ferraris-reduct} of $F$ w.r.t.\ 
an interpretation $I$, denoted $F^{\underline{I}}$,
is defined recursively as follows \cite{Ferr05}:\footnote{\citeA{Truszczynski-aij10} also 
presented a reduct that slightly differs
from the Ferraris-reduct in the way to handle 
$(G\supset H)^{\underline{I}}$, i.e.,
\begin{tabbing}
\hspace{.3in} \= $\bullet\ (G\supset H)^{\underline{I}} = \left\{ \begin{array}{ll}
                        G\supset H^{\underline{I}} & \mbox{if $I$ satisfies both $G$ and $H$}\\
                        \top & \mbox{if $I$ does not satisfy $G$}\\
                        \bot & \mbox{otherwise}
                   \end{array}  
    \right.  $
\end{tabbing}
}
\begin{tabbing}
\hspace{.3in} \= $\bullet\ \bot^{\underline{I}} = \bot$ \\[2pt]
\> $\bullet\ A^{\underline{I}} = \left\{ \begin{array}{ll}
                        A & \mbox{if $A$ is an atom and $I$ satisfies $A$}\\
                        \bot & \mbox{otherwise}
                   \end{array}  
    \right.  $\\
\> $\bullet\ (\neg F)^{\underline{I}} = \left\{ \begin{array}{ll}
                        \bot & \mbox{if $I$ satisfies $F$}\\
                        \top & \mbox{otherwise}
                   \end{array}  
    \right.  $\\[2pt]
    \> $\bullet\ (G\odot H)^{\underline{I}} = \left\{ \begin{array}{ll}
                        G^{\underline{I}}\odot H^{\underline{I}} & \mbox{if $I$ satisfies $G\odot H\ (\odot\in\{\wedge,\vee,\supset\})$}\\[2pt]
                        \bot & \mbox{otherwise}
                   \end{array}  
    \right.  $
\end{tabbing}
Let $\Pi$ be an epistemic-free program and $\Pi'$
be $\Pi$ with the disjunctive rule head 
operator $\mid$ and the if-then rule operator $\leftarrow$ 
replaced by the connectives $\vee$
and $\subset$, respectively.
Then $I$ is an answer set of $\Pi$ 
under the equilibrium logic-based semantics
if $I$ is a minimal model of the Ferraris-reduct 
$\Pi'^{\underline{I}}$ of $\Pi'$ w.r.t.\ $I$.

For simple disjunctive programs, 
the equilibrium logic-based semantics
coincides with the GL-semantics \cite{Ferr05,Pearce06}.
As the GL-semantics does not agree with
the rational answer set semantics and 
does not embody the refined GAS principles,
the equilibrium logic-based semantics 
does not embody the refined GAS principles either.

For epistemic-free programs in general,
the equilibrium logic-based semantics
differs significantly from 
the rational answer set semantics. 
The first main difference
is that it uses equilibrium logic \cite{Pearce96}
to define satisfaction of formulas in rule bodies and heads,
while the rational answer set semantics uses classical logic.
Specifically, unlike classical logic, equilibrium logic
does not satisfy
the law of the excluded middle nor allows for
double negation elimination. 

Consider the epistemic-free program
$\Pi_2 = \{p(2)\leftarrow \neg p(2) \vee p(2)\}$ 
in Example~\ref{eg-three-valued fixpoint}.
$I=\{p(2)\}$ is an answer 
set of $\Pi_2$ under 
the rational answer set semantics,
where the rule body $\neg p(2) \vee p(2)$
is viewed  as a tautology
and evaluated to $\top$ 
by the law of the excluded middle in classical logic.
In contrast, 
$I=\{p(2)\}$ is not an answer set of $\Pi_2$ 
under the equilibrium logic-based semantics, 
where $\neg p(2)$ in the rule body
is evaluated to $\bot$ w.r.t. $I$,
leading to a Ferraris-reduct
${\Pi_2'}^{\underline{I}}=\{p(2) \subset \bot\vee p(2)\}$
of $\Pi_2$ w.r.t.\ $I$, which has 
a smaller model $J=\emptyset$ than $I$.
This example shows that an answer set
under the rational answer set semantics
may not be an answer set under 
the equilibrium logic-based semantics.

Consider another epistemic-free program
$\Pi = \{p\leftarrow \neg\neg p,\ p\leftarrow \neg p\}$.
It has no answer set under 
the rational answer set semantics,
where the rule body $\neg\neg p$
is evaluated to $p$ by the law
of double negation elimination in classical logic.
In contrast, 
$I=\{p\}$ is an answer set of $\Pi$ 
under the equilibrium logic-based semantics, 
where $\neg \neg p$ in the rule body
is evaluated to $\top$ w.r.t. $I$, 
leading to a Ferraris-reduct
${\Pi'}^{\underline{I}}=\{p\subset \top,\ p\subset \bot\}$
of $\Pi$ w.r.t.\ $I$, which has a minimal model $I$.
This shows that an answer set
under the equilibrium logic-based semantics
may not be an answer set under 
the rational answer set semantics.
Moreover, it is easy to check that the answer set $I=\{p\}$  
under the equilibrium logic-based semantics
is not well-supported in $\Pi$.
Therefore, the equilibrium logic-based semantics
does not satisfy the refined GAS principles.

The second major difference is that they
interpret the disjunctive rule head operator $\mid$ differently.
For an interpretation $I$ and 
a disjunctive rule head
$H_1\mid\cdots\mid H_m$ satisfied by $I$,
the rational answer set semantics 
interprets $\mid$ as a nondeterministic 
inference operator 
that returns one alternative $H_i$ 
satisfied by $I$ from the rule head.
This is formally defined by a rule head selection
function $sel((n, H_1\mid\cdots\mid H_m), I)$ 
in Definition~\ref{head-sel-gdlp}.
In contrast,
the equilibrium logic-based semantics
replaces the operator $\mid$ with the connective $\vee$
and interprets the rule head
as $H_1^{\underline{I}} \vee \cdots \vee H_m^{\underline{I}}$,
where each $H_i^{\underline{I}}$ is
the Ferraris-reduct of $H_i$ w.r.t.\ $I$.
We use the following example
to further illustrate this difference.

\begin{example}
\label{eg-subsumption}
{\em
Consider the following epistemic-free program
(borrowed from \cite{ShenE19}). 
\begin{tabbing} 
\hspace{.3in} \= $\Pi:\ $ \= $a\supset b\mid b\supset a$ \` $(1)$\\
\> \>         $a$  \` $(2)$\\
\> \>         $b \leftarrow \neg b$  \` $(3)$
\end{tabbing}
Intuitively, rule $(1)$ expresses a class subsumption relation with
uncertainty, i.e., either  
$a$ is subsumed by $b$ 
or the other way around.
Rule (2) states that we already have $a$, and rule (3) is a
constraint stating that there is 
no answer set that
does not contain $b$.
Note that one cannot replace the 
disjunctive rule head operator 
$\mid$ with the logical connective $\vee$
in rule (1), as that would make the rule a tautology
$(a\supset b)\vee (b\supset a)$. 

Rule $(1)$ presents two alternatives 
for answer set construction, i.e.,
$a\supset b$ or $b\supset a$.
Suppose that we choose $a\supset b$; then
from rules (1) and (2) we infer $(a\supset b)\wedge a$,
which logically entails both $a$ and $b$ and thus yields
a potential answer set $I=\{a,b\}$. 
$I$ satisfies the constraint rule (3), 
so it is a candidate answer set for $\Pi$.
As $I$ is a minimal model of $\Pi$,
it is a minimal candidate answer set.
Note that this construction of an answer set
is based on the GAS principles,
where the rationality principle 
for knowledge minimization means 
that every answer set of $\Pi$
must be minimal among all
candidate answer sets of $\Pi$. 
Hence we intuitively expect $I$ to be 
an answer set of $\Pi$.

One can easily check that $I=\{a,b\}$
is a minimal well-supported model of $\Pi$;
thus it is an answer set under 
the rational answer set semantics.

However, $\Pi$ has no answer set 
under the equilibrium logic-based semantics.
For the interpretation $I=\{a,b\}$,
this semantics transforms $\Pi$
to a Ferraris-reduct 
\[\Pi'^{\underline{I}}=\{(a\supset b)\vee (b\supset a),\ a,\ \bot\supset b\}\] w.r.t.\ $I$,
where rule (1) is transformed to a tautology
$(a\supset b)\vee (b\supset a)$.
As $I$ is not a minimal model of $\Pi'^{\underline{I}}$,
it is not an answer set of $\Pi$ 
under the equilibrium logic-based semantics.
}
\end{example}

\subsection{The refined GAS principles in existing world view semantics}
\label{sec:ASP-principles-wvs}

Major existing world view semantics include 
the G91-semantics \cite{Gelfond91} and its updated version
the G11-semantics \cite{Gelfond2011},
the K14-semantics \cite{Kahl14,KahlGelfond2015},
the SE16-semantics \cite{ShenEiter16},
the founded autoepistemic equilibrium logic 
(FAEL) \cite{CabalarFC20aij},
the autoepistemic equilibrium logic (AEL) \cite{SuCH20},
and the objective rewriting semantics
\cite{DBLP:conf/iclp/Costantini022}.
We next show whether these world view semantics 
satisfy or embody the refined GAS principles. 

\subsubsection{GL91-semantics, G11-semantics and K14-semantics}

To address the problems of unintended world views
due to recursion through {\bf K} and {\bf M},
\citeA{Gelfond2011} updated the G91-semantics 
to the G11-semantics, and
Kahl et al.~(\citeyear{Kahl14,KahlGelfond2015}) 
further refined the G11-semantics to 
the K14-semantics.

Note that the above three world view 
semantics reduce to the GL-semantics
for a simple disjunctive program $\Pi$, 
i.e., an epistemic model $\cal A$ of $\Pi$ is a world view 
under the three world view semantics
iff $\cal A$ consists of all answer sets of $\Pi$
under the GL-semantics.
As the rational world view semantics
reduces to the rational answer set semantics
for epistemic-free programs
and the rational answer set semantics 
does not agree with the GL-semantics
for simple disjunctive programs,
the three world view semantics do not agree 
with the rational world view semantics
and do not embody the refined GAS principles. 
For instance, in Example~\ref{ex-rationalWVexamples}
we showed that ${\cal W}=\{I\}$ with $I=\{a,b,c\}$ is a world view
of the epistemic program $\Pi_1$
in Example~\ref{ex1-modal:generalized-strat-comp} 
under the rational world view semantics, but
this world view cannot be obtained under the above three semantics.

The following example shows that the GL91-semantics,
G11-semantics and K14-semantics 
do not satisfy the refined GAS principles;
in particular they violate the minimality principle of 
world views w.r.t.\ epistemic negation.

\begin{example}
\label{Kahl-Eg29}
{\em
The following epistemic program is borrowed from 
Example~29 in Appendix~D of \citeA{Kahl14}:
\begin{tabbing} 
\hspace{.2in} $\Pi:\quad$ \= $p\leftarrow {\bf M} q \wedge \neg q$ \`$(1)$\\
\>   $q\leftarrow {\bf M} p \wedge \neg p$ \`$(2)$
\end{tabbing}
This program has two world views
under the G91-semantics, G11-semantics
and K14-semantics, viz.\  
${\cal A}_1 = \{\{p\}, \{q\}\}$ and ${\cal A}_2 = \{\emptyset\}$,
where ${\cal A}_1$ is derived from treating ${\bf M} p$ and ${\bf M} q$
to be true and ${\cal A}_2$ derived from treating them false.
As {\bf M} stands for $\naf \neg$,
we have epistemic negations
$\Phi_1=\{\naf \neg p, \naf\neg q\}$ and $\Phi_2=\emptyset$
occurring in $\Pi$
that are true in ${\cal A}_1$ and ${\cal A}_2$, respectively.
As $\Phi_1\supset \Phi_2$,
these semantics violate the principle of
minimality w.r.t.\ epistemic negation.
Note that ${\cal A}_1$ is the only world view 
under the rational world view semantics.
}
\end{example}

\subsubsection{SE16-semantics}
For epistemic programs,
\citeA{ShenEiter16} presented a generic world view semantics 
based on a program transformation
called SE16-epistemic reducts.

\begin{definition}[{\bf SE16-epistemic reducts} \cite{ShenEiter16}]
\label{se16-reduct}
%{\em
Let $\Pi$ be an epistemic program and  
$\Phi \subseteq Ep(\Pi)$ be a set of
epistemic negations occurring in $\Pi$.
The {\em SE16-epistemic reduct} $\Pi^\Phi$ 
of $\Pi$ w.r.t.\ $\Phi$ is $\Pi$ 
with every $\naf F\in \Phi$ replaced by $\top$
and every $\naf F\in Ep(\Pi)\setminus \Phi$
replaced by $\neg F$.  $\Pi$ is {\em consistent w.r.t.}\ $\Phi$ if
$\Pi^\Phi$ is consistent.
%}
\end{definition}

Note the difference between an SE16-epistemic reduct $\Pi^\Phi$ 
of $\Pi$ w.r.t.\ $\Phi$ in the above definition
and an epistemic reduct $\Pi^{\cal A}$ 
of $\Pi$ w.r.t.\ an epistemic model $\cal A$ in 
Definition~\ref{reduct}. In the above definition,
$\Phi$ is an arbitrary set of epistemic negations occurring in $\Pi$,
while in Definition~\ref{reduct}, $\Phi$ is the set of
all epistemic negations occurring in $\Pi$
that are true in $\cal A$.

\begin{definition}[SE16-generic semantics \cite{ShenEiter16}]
\label{SE16-generic-semantics}
%{\em
Let $\Pi$ be an epistemic program and
$\Pi^\Phi$ be a consistent SE16-epistemic reduct
of $\Pi$ w.r.t.\ some $\Phi \subseteq Ep(\Pi)$. 
Let $\cal X$ be an answer set semantics
for epistemic-free programs and 
${\cal A}$ be the collection of all 
answer sets of $\Pi^\Phi$ under $\cal X$.
Then ${\cal A}$ is a {\em candidate world view}
of $\Pi$ w.r.t.\ $\Phi$ and $\cal X$
if (1) ${\cal A} \neq \emptyset$,
(2) every $\mathbf{not} F\in \Phi$ is true in $\cal A$,
and (3) every $\mathbf{not} F\in Ep(\Pi)\setminus \Phi$ is false in $\cal A$.
A candidate world view ${\cal A}$ w.r.t.\ $\Phi$ and $\cal X$ 
is a {\em world view} 
if there exists no other candidate world view $\cal W$
w.r.t.\ $\cal X$ and some $\Phi' \subseteq Ep(\Pi)$ such that 
$\Phi$ is a proper subset of $\Phi'$.
%}
\end{definition} 

The SE16-generic semantics is 
a generic framework for world view semantics,
which relies on a base answer set semantics $\cal X$.
$\cal X$ can be instantiated 
to any answer set semantics
for epistemic-free programs, thus yielding different
world view semantics.
For example, by replacing 
$\cal X$ with GL$_{nlp}$-semantics,
GL-semantics and WJ-semantics, 
we induce three world view semantics,
denoted {\em SE16-GL$_{nlp}$-semantics},
{\em SE16-GL-semantics} and {\em SE16-WJ-semantics}, 
respectively. 

An important question is: Do these
SE16-generic semantics based world view semantics adhere to 
the refined GAS principles?
We answer this question
and establish the connection
of the rational world view semantics
with the SE16-generic semantics.

Note that the above definition of a world view 
${\cal A}$ w.r.t.\ $\Phi$ and $\cal X$ 
requires $\Phi$ to be {\em maximal}, i.e.,
there exists no other candidate world view $\cal W$
w.r.t.\ $\cal X$ and some $\Phi' \subseteq Ep(\Pi)$ such that 
$\Phi$ is a proper subset of $\Phi'$.
This fulfills the principle of
minimality of world views w.r.t.\ epistemic negation. 
Therefore, the SE16-generic semantics 
with a base answer set semantics $\cal X$ 
embodies/satisfies the refined GAS principles
provided that $\cal X$ embodies/satisfies the refined GAS principles.
This claim is supported by the following theorem.

\begin{restatable}{theorem}{RthSESixteenDIWJrationalSemantics}
\label{th-SE16-DI-WJrational-semantics}
Let SE16-X-semantics be the SE16-generic semantics 
with a base answer set semantics $\cal X$.
Then SE16-X-semantics agrees with 
the rational world view semantics iff $\cal X$ agrees with 
the rational answer set semantics.
\end{restatable}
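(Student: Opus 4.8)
The plan is to reduce the equivalence to a tight dictionary between the two reduct constructions involved: SE16-epistemic reducts $\Pi^{\Phi}$ indexed by sets $\Phi\subseteq Ep(\Pi)$ (Definition~\ref{se16-reduct}) and epistemic reducts $\Pi^{\cal A}$ indexed by epistemic models ${\cal A}$ (Definition~\ref{reduct}). The key lemma I would establish first is: for every nonempty collection ${\cal A}$ of interpretations, writing $\Phi_{\cal A}\subseteq Ep(\Pi)$ for the set of epistemic negations in $\Pi$ that are true in ${\cal A}$, (i) $\Pi^{\cal A}=\Pi^{\Phi_{\cal A}}$, and (ii) ${\cal A}$ is an epistemic model of $\Pi$ iff every $I\in{\cal A}$ is a classical model of $\Pi^{\cal A}$. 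Part (i) is immediate, since both constructions replace $\naf F$ by $\top$ when $\naf F$ is true in ${\cal A}$ and by $\neg F$ otherwise. For part (ii), the only discrepancy between ``$I$ satisfies $r$ w.r.t.\ ${\cal A}$'' (which evaluates $r^{\cal A}$, where epistemic negations false in ${\cal A}$ become $\bot$) and ``$I$ is a model of the corresponding rule of $\Pi^{\cal A}$'' (where they become $\neg F$) arises for $\naf F\in Ep(\Pi)\setminus\Phi_{\cal A}$; but then $F$ is true in every member of ${\cal A}$, so for $I\in{\cal A}$ the formula $\neg F$ evaluates to false exactly like $\bot$. Hence the two rule sets agree pointwise on ${\cal A}$; this is, in both directions, essentially the instance of Theorem~1 of \cite{ShenEiter16} already invoked in the text preceding Definition~\ref{def-well-supported-epis}.

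\noindent For the ``if'' direction, assume ${\cal X}$ agrees with the rational answer set semantics; I would prove equality of the two world-view collections by matching Definitions~\ref{def-ideal-wv-semantics} and~\ref{SE16-generic-semantics} through the correspondence $\Phi\leftrightarrow{\cal A}$. If ${\cal A}$ is a world view of $\Pi$ under SE16-X-semantics, obtained from some $\Phi$, then conditions~(2)--(3) of ``candidate world view'' say precisely that $\Phi_{\cal A}=\Phi$, so $\Pi^{\Phi}=\Pi^{\cal A}$ by the lemma; since ${\cal X}$ coincides with the rational answer set semantics, ${\cal A}$ is exactly the collection of all rational answer sets of $\Pi^{\cal A}$, and since ${\cal A}\neq\emptyset$ (each member being in particular a model of $\Pi^{\cal A}$) the lemma makes ${\cal A}$ an epistemic model of $\Pi$, so condition~(1) of Definition~\ref{def-ideal-wv-semantics} holds. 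The maximality clause~(2) of Definition~\ref{def-ideal-wv-semantics} matches the maximality of $\Phi$ in Definition~\ref{SE16-generic-semantics}: a hypothetical epistemic model ${\cal W}$ with $\Phi_{\cal A}\subset\Phi_{\cal W}$ satisfying condition~(1) is, by the lemma, a candidate world view w.r.t.\ $\Phi_{\cal W}$; conversely a candidate world view ${\cal W}$ w.r.t.\ some $\Phi'$ with $\Phi\subset\Phi'$ is an epistemic model of $\Pi$ satisfying condition~(1). The reverse inclusion---every rational world view is a SE16-X world view---is symmetric: a rational world view ${\cal A}$ yields $\Phi:=\Phi_{\cal A}$ with $\Pi^{\Phi}=\Pi^{\cal A}$ consistent (each $I\in{\cal A}$ is a model of it), ${\cal A}$ is the collection of all ${\cal X}$-answer sets of $\Pi^{\Phi}$, conditions~(1)--(3) of ``candidate world view'' hold by construction, and maximality transfers by the same dictionary.

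\noindent For the ``only if'' direction I would argue the contrapositive and restrict attention to epistemic-free programs. If $\Pi$ is epistemic-free then $Ep(\Pi)=\emptyset$, the only admissible choice is $\Phi=\emptyset$, and $\Pi^{\Phi}=\Pi$; hence a collection is the (necessarily unique) world view of $\Pi$ under SE16-X-semantics iff it is the nonempty collection of all ${\cal X}$-answer sets of $\Pi$, and, as noted right after Definition~\ref{def-ideal-wv-semantics}, the rational world view semantics likewise reduces on $\Pi$ to the rational answer set semantics. So if ${\cal X}$ disagreed with the rational answer set semantics on some epistemic-free $\Pi$---the two answer-set collections differ, or one is empty while the other is not---then SE16-X-semantics and the rational world view semantics would already disagree on that $\Pi$ viewed as an epistemic program, contradicting the assumed agreement. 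This gives: SE16-X-semantics agrees with the rational world view semantics $\Rightarrow$ ${\cal X}$ agrees with the rational answer set semantics.

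\noindent The step I expect to be the main obstacle is not the lemma itself but the bookkeeping around the side conditions of the two definitions: one must verify that ``${\cal A}$ is an epistemic model of $\Pi$'' in Definition~\ref{def-ideal-wv-semantics} and ``${\cal A}\neq\emptyset$ together with $\Pi^{\Phi}$ consistent'' in Definition~\ref{SE16-generic-semantics} genuinely correspond under $\Phi\leftrightarrow{\cal A}$, and that the two maximality clauses are interchangeable---in particular that any witness to non-maximality on one side produces, via the lemma, a witness on the other. Once this is in place, the remainder is a routine unfolding of the definitions.
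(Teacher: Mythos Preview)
Your proposal is correct and, for the ``if'' direction, essentially matches the paper's proof: your key lemma is exactly the combination of the paper's Lemma~\ref{lem-se16-semantics-epModel} (every candidate world view is an epistemic model) and Lemma~\ref{lem-chara-se16-semantics} (recasting the SE16-generic semantics in terms of the epistemic reduct $\Pi^{\cal A}$), and your translation of the maximality clauses mirrors the paper's use of Lemma~\ref{lem-chara-se16-semantics}.

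For the ``only if'' direction you take a genuinely different and cleaner route. You argue by contrapositive, observing that on epistemic-free programs both frameworks collapse to their base answer-set semantics, so any disagreement between $\cal X$ and the rational answer set semantics on a single epistemic-free $\Pi$ already witnesses disagreement at the world-view level. The paper instead stays at the level of arbitrary epistemic programs and epistemic models, arguing (again via Lemma~\ref{lem-chara-se16-semantics}) that if conditions~(1)--(2) of Definition~\ref{def-ideal-wv-semantics} held with $\cal X$ in place of the rational answer set semantics for some $\cal A$ that is not a rational world view (or failed for some $\cal A$ that is), this would contradict the assumed agreement of the two world-view semantics. Your approach buys simplicity and transparency: it avoids the need to quantify over all epistemic models and to chase the maximality clause a second time, and it makes explicit that the ``only if'' direction needs nothing more than the epistemic-free fragment.
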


The proof of this theorem uses the following two lemmas,
where the second one says that the SE16-generic semantics 
can be characterized in terms of 
an epistemic reduct $\Pi^{\cal A}$ 
as in Definition~\ref{reduct}.

\begin{restatable}{lemma}{RlemSESixteenSemanticsEpModel}
\label{lem-se16-semantics-epModel}
Every candidate world view ${\cal A}$ of 
an epistemic program $\Pi$ w.r.t.\ 
an answer set semantics $\cal X$ 
and some $\Phi \subseteq Ep(\Pi)$
is an epistemic model of $\Pi$.
\end{restatable}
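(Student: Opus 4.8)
The plan is to verify directly that $\cal A$ meets the definition of an epistemic model of $\Pi$ (Definition~\ref{satisfaction-epis-program}), by comparing the truth of each rule of $\Pi$ evaluated ``through $\cal A$'' with its truth in the SE16-epistemic reduct $\Pi^\Phi$. First I would fix a candidate world view $\cal A$ together with its witness $\Phi \subseteq Ep(\Pi)$ and base semantics $\cal X$, so that by Definition~\ref{SE16-generic-semantics} $\cal A$ is exactly the collection of all answer sets of $\Pi^\Phi$ under $\cal X$, and conditions (1)--(3) of a candidate world view hold. Condition (1) immediately yields $\cal A \neq \emptyset$, as an epistemic model requires. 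Moreover, since an answer set semantics by definition selects (classical) models of the epistemic-free program it is applied to, every $J \in \cal A$ is a model of $\Pi^\Phi$ and hence satisfies every rule of $\Pi^\Phi$ in the ordinary propositional sense.

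The core step is the claim that for every rule $r$ of $\Pi$ and every $J \in \cal A$, $J$ satisfies $r$ with respect to $\cal A$ if and only if $J$ classically satisfies the corresponding rule $r^\Phi$ of $\Pi^\Phi$. To establish it I would compare, by induction on formula structure, the epistemic-free formula $r^{\cal A}$ from Definition~\ref{satisfaction-epis-program} --- obtained from $r$ by replacing every $\naf F$ true in $\cal A$ by $\top$ and every remaining one by $\bot$ --- with $r^\Phi$ from Definition~\ref{se16-reduct}, obtained by replacing every $\naf F \in \Phi$ by $\top$ and every $\naf F \in Ep(\Pi)\setminus\Phi$ by $\neg F$. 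By condition (2), each $\naf F \in \Phi$ is true in $\cal A$, so both transformations insert $\top$; here $r^{\cal A}$ and $r^\Phi$ agree. For $\naf F \in Ep(\Pi)\setminus\Phi$, condition (3) says $\naf F$ is false in $\cal A$, so $r^{\cal A}$ inserts $\bot$ while $r^\Phi$ inserts $\neg F$; but $\naf F$ being false in $\cal A$ means, by Definition~\ref{satisfaction-epis-program}(1), that $F$ is true in every member of $\cal A$, hence $\neg F$ is false in every $J \in \cal A$. Thus $\bot$ and $\neg F$ evaluate identically at each $J \in \cal A$, so $r^{\cal A}$ and $r^\Phi$ are truth-equivalent at each such $J$, which gives the claim.

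Combining the two ingredients finishes the proof: each $J \in \cal A$ is a model of $\Pi^\Phi$, so it satisfies every $r^\Phi$, hence by the claim it satisfies every rule $r$ of $\Pi$ with respect to $\cal A$; since this holds for all $J \in \cal A$ and $\cal A \neq \emptyset$, $\cal A$ satisfies $\Pi$, i.e.\ $\cal A$ is an epistemic model of $\Pi$. I expect the only real point requiring care --- and thus the ``hard'' part, though it amounts to bookkeeping rather than a genuine obstacle --- to be making the formula-by-formula comparison of $r^{\cal A}$ and $r^\Phi$ rigorous through nested connectives (a routine structural induction), together with being explicit that in this paper's framework every answer set of an epistemic-free program is a classical model of that program, so that membership in $\cal A$ truly entails being a model of $\Pi^\Phi$.
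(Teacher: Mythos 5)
Your proposal is correct and follows essentially the same route as the paper's proof: both use that conditions (2)–(3) force each $\naf F\in\Phi$ to be true and each $\naf F\in Ep(\Pi)\setminus\Phi$ to be false in $\cal A$ (so $F$ is true and $\neg F$ false in every $J\in\cal A$), that every $J\in\cal A$ is a classical model of $\Pi^\Phi$ by principle (P1), and then that the replacements by $\top$/$\bot$ in Definition~\ref{satisfaction-epis-program} agree truth-wise with the replacements by $\top$/$\neg F$ in $\Pi^\Phi$ at each $J\in\cal A$. The only difference is presentational: you spell out this last agreement as a structural induction on formulas, whereas the paper states it directly.
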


\begin{restatable}{lemma}{RlemCharaSESixteenSemantic}
\label{lem-chara-se16-semantics}
Let $\Pi$ be an epistemic program,
$\cal A$ be an epistemic model of $\Pi$,
and $\Phi$ be the set of
all epistemic negations occurring in $\Pi$
that are true in ${\cal A}$.
Let $\cal X$ be an answer set semantics
for epistemic-free programs.
Then $\cal A$ is a world view of $\Pi$ 
w.r.t.\ $\Phi$ and $\cal X$
under the SE16-generic semantics
iff (1) $\cal A$ coincides with the collection of 
all answer sets under ${\cal X}$ of
the epistemic reduct $\Pi^{\cal A}$ of $\Pi$ w.r.t.\ ${\cal A}$,
and (2) there exists no other epistemic model ${\cal W}$ of $\Pi$ 
satisfying condition (1) such that $\Phi$
is a proper subset of $\Phi_{\cal W}$ which is the set of
all epistemic negations occurring in $\Pi$
that are true in $\cal W$.
\end{restatable}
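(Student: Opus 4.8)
The plan is to prove both directions by unfolding Definition~\ref{SE16-generic-semantics} and exploiting one pivotal identification: whenever $\Phi$ is \emph{exactly} the set of epistemic negations occurring in $\Pi$ that are true in an epistemic model $\cal A$, the SE16-epistemic reduct $\Pi^\Phi$ of Definition~\ref{se16-reduct} coincides syntactically with the epistemic reduct $\Pi^{\cal A}$ of Definition~\ref{reduct}, since both replace each $\naf F\in\Phi$ by $\top$ and each $\naf F\in Ep(\Pi)\setminus\Phi$ by $\neg F$, and for this choice the two notions of ``$\Phi$'' agree. The second ingredient is the observation that conditions (2) and (3) in the definition of a candidate world view (every $\naf F\in\Phi$ true, every $\naf F\in Ep(\Pi)\setminus\Phi$ false in the collection) pin down $\Phi$ as \emph{precisely} the set of all epistemic negations occurring in $\Pi$ that are true in that collection; thus the ``arbitrary $\Phi$'' of the SE16 framework is forced to be canonical for any candidate world view.

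For the forward direction, suppose $\cal A$ is a world view of $\Pi$ w.r.t.\ $\Phi$ and $\cal X$. Being in particular a candidate world view, $\cal A$ equals the collection of all answer sets of $\Pi^\Phi$ under $\cal X$; by the first ingredient $\Pi^\Phi=\Pi^{\cal A}$, so condition (1) of the lemma holds. For condition (2), assume toward a contradiction that some epistemic model $\cal W$ of $\Pi$ satisfies condition (1) with $\Phi\subsetneq\Phi_{\cal W}$. I would verify that $\cal W$ is a candidate world view w.r.t.\ $\Phi_{\cal W}$ and $\cal X$: it is nonempty since epistemic models are nonempty by Definition~\ref{satisfaction-epis-program}; conditions (2) and (3) hold because $\Phi_{\cal W}$ is by definition the set of all epistemic negations true in $\cal W$; the reduct $\Pi^{\Phi_{\cal W}}=\Pi^{\cal W}$ is consistent because $\cal W\neq\emptyset$ consists of answer sets, hence models, of it; and ``$\cal W$ = all answer sets of $\Pi^{\Phi_{\cal W}}$ under $\cal X$'' is exactly condition (1) for $\cal W$. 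This candidate world view has $\Phi\subsetneq\Phi_{\cal W}$, contradicting the maximality of $\Phi$ required by Definition~\ref{SE16-generic-semantics}.

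For the backward direction, assume conditions (1) and (2). I would first show $\cal A$ is a candidate world view w.r.t.\ $\Phi$ and $\cal X$: nonemptiness holds since $\cal A$ is an epistemic model; conditions (2), (3) hold by the choice of $\Phi$; consistency of $\Pi^\Phi=\Pi^{\cal A}$ follows since $\cal A\neq\emptyset$ is a set of answer sets (hence models) of $\Pi^{\cal A}$; and ``$\cal A$ = all answer sets of $\Pi^\Phi$ under $\cal X$'' is precisely condition (1) after the identification $\Pi^\Phi=\Pi^{\cal A}$. It remains to establish maximality. Suppose some candidate world view $\cal W$ w.r.t.\ $\cal X$ and some $\Phi'\subseteq Ep(\Pi)$ has $\Phi\subsetneq\Phi'$. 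By Lemma~\ref{lem-se16-semantics-epModel}, $\cal W$ is an epistemic model of $\Pi$; by the second ingredient, conditions (2) and (3) for $\cal W$ force $\Phi'$ to be $\Phi_{\cal W}$, the set of all epistemic negations true in $\cal W$, so $\Pi^{\Phi'}=\Pi^{\cal W}$ and $\cal W$ satisfies condition (1) of the lemma. Then $\cal W$ is an epistemic model satisfying (1) with $\Phi\subsetneq\Phi'=\Phi_{\cal W}$, contradicting condition (2); hence no such $\cal W$ exists and $\cal A$ is a world view.

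The only genuine obstacle is bookkeeping: keeping straight which ``$\Phi$'' is in play at each point (an arbitrary subset of $Ep(\Pi)$ in the SE16 definition versus the canonical true-in-$\cal A$ set here), and checking in both directions that the side conditions of a candidate world view---nonemptiness, consistency of the reduct, and the matching of the $\top$/$\neg$ replacements---line up so that the SE16 maximality clause and the lemma's condition (2) become literally the same statement. No deeper argument is needed beyond Lemma~\ref{lem-se16-semantics-epModel} and the nonemptiness of epistemic models.
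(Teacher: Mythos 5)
Your proposal is correct and follows essentially the same route as the paper's proof: unfold Definition~\ref{SE16-generic-semantics}, use that $\Pi^\Phi=\Pi^{\cal A}$ once $\Phi$ is the canonical set of epistemic negations true in the model, and refute violations of maximality by contradiction in both directions. If anything, your explicit appeal to Lemma~\ref{lem-se16-semantics-epModel} to ensure the competing candidate world view ${\cal W}$ is an epistemic model of $\Pi$ (and your observation that the candidate-world-view conditions force $\Phi'=\Phi_{\cal W}$) makes the backward direction slightly more careful than the paper's own write-up.
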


Therefore, as the GL$_{nlp}$-semantics and WJ-semantics 
agree with the rational answer set semantics,
the SE16-GL$_{nlp}$-semantics and SE16-WJ-semantics
agree with the rational world view semantics and thus
embody the refined GAS principles.
As the GL-semantics satisfies the refined GAS principles,
the SE16-GL-semantics satisfies these principles.

\subsubsection{Equilibrium logic based world view semantics}
 \citeA{CabalarFC20aij} and \citeA{SuCH20}
 presented founded autoepistemic equilibrium logic (FAEL)
and autoepistemic equilibrium logic (AEL), respectively.
They coincide with the GL-semantics
for a simple disjunctive program $\Pi$, 
i.e., an epistemic model $\cal A$ of $\Pi$ is a world view 
under FAEL and AEL
iff $\cal A$ consists of all answer sets of $\Pi$
under the GL-semantics.
Therefore, FAEL and AEL
do not agree with the rational world view semantics
and thus do not embody the refined GAS principles.
For Example, ${\cal W}=\{I\}$ with $I=\{a,b,c\}$ is a world view
of the epistemic program $\Pi_1$
in Example~\ref{ex1-modal:generalized-strat-comp} 
under the rational world view semantics, but
this world view cannot be obtained under FAEL and AEL.

Both FAEL and AEL are based on equilibrium logic, and
they reduce to the equilibrium logic-based 
answer set semantics for epistemic-free programs.
As discussed in 
Section \ref{subsubsec-Equilibrium logic-based semantics},
the equilibrium logic-based answer set semantics
and the rational answer set semantics
differ significantly.
As  the equilibrium logic-based answer set semantics
does not satisfy the refined GAS principles,
FAEL and AEL do not satisfy them either.

\subsubsection{Objective Rewriting Semantics}
\citeA{DBLP:conf/iclp/Costantini022} presented
a world view semantics  that 
rewrites epistemic programs to epistemic-free programs. 
More in detail, 
subjective literals $\mathbf{K} G$ and $\mathbf{K} \neg G$, called {\em knowledge
atoms}, are regarded as objective literals. A set $S$ of objective and
knowledge atoms is called {\em knowledge consistent}, if (i) $S$ contains $G$ whenever it
contains the knowledge atom $\mathbf{K} G$, and (ii) $S$ does not contain $G$
whenever it contains $\mathbf{K}\neg G$.
For an epistemic program $\Pi$, $\mathit{SMC}(\Pi)$ denotes the collection of all answer
sets of $\Pi$ (viewing knowledge atoms as objective atoms) that are
knowledge consistent.

A {\em CF22 world view} is a collection $\mathcal{A}$
of interpretations such that
$\mathcal{A} = \mathit{SMC}'(\Pi\mbox{-}\mathcal{A})$, where $\Pi\mbox{-}\mathcal{A}$ is the program $\Pi$
augmented with all knowledge facts $\mathbf{K} G$ and $\mathbf{K} \neg G$ such that 
$G$ occurs in every resp.\ no interpretation in $\mathcal{A}$, and
$\mathit{SMC}'(\cdot)$ is the restriction of $\mathit{SMC}(\cdot)$ to objective atoms,
i.e., all knowledge atoms are discarded. Furthermore, a   {\em CF22+SE16C world view} is a
CF22 world view that minimizes epistemic knowledge, i.e., there is no
CF22 world view $\mathcal{A}'$ such that in constructing $\Pi$-$\mathcal{A}'$ a proper
subset of knowledge atoms is added to $\Pi$ compared to  $\Pi$-$\mathcal{A}$.

Constantini and Formisano pointed out that in contrast to other
world view semantics, checking whether an epistemic model is a CF22
world view can be done with an ordinary answer set solver and no specialized
world view solver is needed. They examined the CF22 and CF22+SE16C world view
semantics and showed that the latter agrees with the SE16-GL
semantics on a number of examples. However, both the CF22 and CF22+SE16C
semantics yield for the program in Example~\ref{Kahl-Eg29} the same world views as the G11-semantics;
hence CF22 and CF22+SE16C do not satisfy the refined GAS principles.

\section{Computation and Complexity}
\label{sec-Computation and Complexity}

In this section, we turn to computational aspects of
well-supportedness and the rational answer set and world view semantics.  Specifically, we provide
concrete algorithms for deciding well-supportedness and constructing a
strict well-founded partial order for certain epistemic-free programs,
which are then exploited to characterize the complexity of deciding
well-supportedness and constructing  such an order. Furthermore, we
consider for rational semantics the canonical problems of
answer set resp.\ world view existence, as well as brave and cautious
inference from the answer sets resp.\ truth of a formula in some world view.

\subsection{Algorithms for well-supportedness}
\label{sec:alg-well-supportedness}

Theorem~\ref{th-wellsupport-lfp} supports us to  
develop Algorithm~\ref{algo:well-support-with-partialorder}
for epistemic-free normal programs with atomic rule heads,
which
determines the well-supportedness of a model $I$
and further constructs a strict well-founded
partial order $\prec$ on $I$ when it is well-supported.

Specifically, we compute the least fixpoint 
$\mathit{lfp}(T_{\Pi}(\emptyset,\neg I^-))$
via the sequence $\langle S_i
\rangle_{i=0}^\infty$, where for every $i\geq 0$,
$S_i = T_{\Pi}^i(\emptyset,\neg I^-)$.
Starting from $S_0 = \emptyset$ (line 2),
for every $i\geq 0$, $S_{i+1}$ is $S_i$
plus all rule heads $head(r)$ 
such that a rule $head(r)\leftarrow body(r)$
is in $\Pi$ and $S_i\cup \neg I^- \models body(r)$
(lines 4 and 7). When $S_i\cup \neg I^- \models body(r)$, 
a relation $q\prec head(r)$ is added to 
${\cal O}$ for every $q\in S_i$; this constructs a
strict well-founded partial order $\prec$ on $S_{i+1}$ (line 8). 
To avoid redundant computation,
we also remove all rules whose heads $head(r)$ are already in $S_{i+1}$ (line 9). 
When $S_{i+1}= S_i$ for some $i\geq 0$ (i.e., when flag$=0$ after some 
while loop (line 3)),
we reach the least fixpoint $S_i = \mathit{lfp}(T_{\Pi}(\emptyset,\neg I^-))$.
If $S_i=I$ (line 15), then by Theorem~\ref{th-wellsupport-lfp},
$I$ is a well-supported model and thus 
a strict well-founded partial order $\prec$ on $I$
as defined in ${\cal O}$ is returned (line 16).
Otherwise, $I$ is not well-supported (line 19).

\begin{algorithm}[t]%[!htb]
	\caption{Determining well-supportedness
and constructing a strict well-founded
partial order for an epistemic-free normal program
with atomic rule heads}
	\label{algo:well-support-with-partialorder}
	%\LinesNumbered
	\KwIn{an epistemic-free normal program $\Pi$
with atomic rule heads, and
a model $I$ of $\Pi$.}
	\KwOut{a strict well-founded partial order $\prec$ on $I$
 as defined in ${\cal O}$ when $I$ is well-supported w.r.t.\ $\Pi$; ``NOT WELL-SUPPORTED", otherwise.}
${\cal R} = \{r \mid r\in \Pi$ \mbox{and $I$ satisfies} $body(r)\}$\;
$S_0 = \emptyset$; ${\cal O}=\emptyset$; $i=0$; flag $=1$\;
  \While{${\cal R}\neq \emptyset$ and flag $\neq 0$}
  { $S_{i+1} = S_i$; flag$=0$\;
    \For{every rule $r\in {\cal R}$}
    {
      \If{$S_i\cup \neg I^- \models body(r)$}
      {Add $head(r)$ to $S_{i+1}$\;
       Add $q\prec head(r)$ to ${\cal O}$ for every $q\in S_i$\;
        Remove from ${\cal R}$ all rules whose heads are $head(r)$\;  
        flag$=1$\;    
      }
    }
     $i=i+1$\;
  }
   \If{$S_i=I$}
      {return $\cal O$\;}
   \Else{return ``NOT WELL-SUPPORTED"}        
\end{algorithm}

As shown in Section~\ref{sec:complexity} below, Algorithm \ref{algo:well-support-with-partialorder} 
can be implemented to run in polynomial time with an \NP{}  oracle.

%\subsubsection{unrestricted epistemic-free normal programs}

Theorem~\ref{th-wellsupport-lfp-normal}  
supports us to develop Algorithm~\ref{algo:well-support-with-partialorder-2},
which automatically determines the well-supportedness
of a model $I$ for an unrestricted epistemic-free 
normal program $\Pi$
and constructs a strict well-founded
partial order $\prec$ on $I$ when it is well-supported.

\begin{algorithm}[t]%[!htb]
	\caption{Determining well-supportedness
and constructing a strict well-founded
partial order for an epistemic-free normal program}
	\label{algo:well-support-with-partialorder-2}
	%\LinesNumbered
	\KwIn{an epistemic-free normal program $\Pi$, and
a model $I$ of $\Pi$.}
	\KwOut{a strict well-founded partial order $\prec$ on $I$
 as defined in ${\cal O}$ when $I$ is well-supported w.r.t.\ $\Pi$; ``NOT WELL-SUPPORTED", otherwise.}
${\cal R} = \{r \mid r\in \Pi$ \mbox{and $I$ satisfies} $body(r)\}$\;
$S_0 = \emptyset$; ${\cal O}=\emptyset$; ${\cal O}_h=\emptyset$; $i=0$; flag $=1$\;
  \While{${\cal R}\neq \emptyset$ and flag $\neq 0$}
  {$R(S_i) = \{r\mid r\in {\cal R}$ and $S_i\cup \neg I^- \models body(r)\}$; ${\cal R} ={\cal R} \setminus R(S_i)$\;
  \If{$R(S_i)\neq \emptyset$}
  {$S_{i+1} = S_i\cup \{head(r)\mid r\in R(S_i)\}$\;
    \For{every rule $r\in R(S_i)$}
    {
       Add $F\prec_h head(r)$ to ${\cal O}_h$ for every $F\in S_i$\;
       Add $q\prec p$ to ${\cal O}$ for every $q$ and $p$ in $I$, where $S_{i+1}\cup \neg I^- \models p$, $S_i\cup \neg I^- \not\models p$ and $S_i\cup \neg I^- \models q$\;
        Remove from ${\cal R}$ all rules $r'$ such that $head(r')\equiv head(r)$\;     
      }
       $i=i+1$\;
    }
    \Else{flag$=0$\;}
  }
   \If{$S_i\cup \neg I^-\models p$ for every $p\in I$}
      {return $\cal O$\;}
   \Else{return ``NOT WELL-SUPPORTED"}        
\end{algorithm}

Like Algorithm~\ref{algo:well-support-with-partialorder},
we compute the least fixpoint 
$\mathit{lfp}(T_{\Pi}(\emptyset,\neg I^-))$
via the sequence $\langle S_i
\rangle_{i=0}^\infty$, where for every $i\geq 0$,
$S_i = T_{\Pi}^i(\emptyset,\neg I^-)$.
Starting from $S_0 = \emptyset$ (line 2),
for every $i\geq 0$, $S_{i+1}$ is $S_i$
plus all rule heads $head(r)$ 
such that a rule $head(r)\leftarrow body(r)$
is in $\Pi$ and $S_i\cup \neg I^- \models body(r)$
(lines 4 - 6). 
When $S_i\cup \neg I^- \models body(r)$, 
a relation $F\prec_h head(r)$ is added 
to ${\cal O}_h$ for every $F\in S_i$;
this constructs a strict well-founded
partial order $\prec_h$ on rule heads 
in $S_{i+1}$ (line 8).
Meanwhile, a relation
$q\prec p$ is added to 
${\cal O}$ for every $q$ and $p$ in $I$, 
where $S_{i+1}\cup \neg I^- \models p$, 
$S_i\cup \neg I^- \not\models p$ 
and $S_i\cup \neg I^- \models q$;
this constructs a strict well-founded
partial order $\prec$ on $I$ (line 9). 
To avoid redundant computation,
we remove all rules $r'$ 
whose heads $head(r')$ are logically 
equivalent to $head(r)$,
i.e., $head(r')\equiv head(r)$ (line 10). 
This amounts to simplifying
the least fixpoint 
$\mathit{lfp}(T_{\Pi}(\emptyset,\neg I^-))$
by removing from it all rule heads $head(r')$ 
that are logically equivalent to $head(r)$. 

When $S_{i+1}= S_i$ for some $i\geq 0$ 
(i.e., when ${\cal R} = \emptyset$ or flag$=0$ after some 
while loop (line 15)),
we reach the least fixpoint $\delta=S_i$ 
which is $\mathit{lfp}(T_{\Pi}(\emptyset,\neg I^-))$ 
except that all rule heads $head(r')$ that are 
logically equivalent to $head(r)$ are removed at 
line 10 of Algorithm~\ref{algo:well-support-with-partialorder-2}. 
That is, let $\alpha = 
\mathit{lfp}(T_{\Pi}(\emptyset,\neg I^-)) \setminus \delta$;
then for every $H'\in \alpha$, there is a unique rule head
$H$ in $\delta$ with $H\equiv H'$. 
This means that for any formula $F$, $\delta\models F$
iff $\mathit{lfp}(T_{\Pi}(\emptyset,\neg I^-))\models F$. 
Therefore, if $\delta\cup \neg I^-\models p$ 
for every $p\in I$
(line 18), then 
$\mathit{lfp}(T_{\Pi}(\emptyset,\neg I^-))\cup \neg I^-\models p$ for every $p\in I$
and thus by Theorem~\ref{th-wellsupport-lfp-normal},
$I$ is a well-supported model; in this case, 
a strict well-founded partial order on $I$ as defined in 
${\cal O}$ is returned (line 19).
Otherwise, $I$ is not well-supported (line 21). 
This leads to the following result.

\begin{restatable}{theorem}{RthAlgoWellSupportWithPartialorderTwo}
\label{th-algo:well-support-with-partialorder-2}
Given a finite epistemic-free normal program $\Pi$ 
and a model $I$ of $\Pi$, Algorithm \ref{algo:well-support-with-partialorder-2} 
terminates and outputs a strict well-founded 
partial order $\prec$ on
$I$ w.r.t.\ $\Pi$ satisfying the conditions of 
Definition~\ref{def-well-supported-normal}
if $I$ is well-supported in $\Pi$, and 
"NOT WELL-SUPPORTED" otherwise. 
\end{restatable}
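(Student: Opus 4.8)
The plan is to establish three things separately: \emph{(a)}~termination, \emph{(b)}~correctness of the well-supportedness decision, and \emph{(c)}~correctness of the returned order~$\prec$. For termination, I would observe that every iteration of the \textbf{while} loop either sets $\mathit{flag}=0$ (and the loop exits at the next test) or moves at least one rule from ${\cal R}$ into $R(S_i)$ at line~4 and never returns it, so ${\cal R}$ strictly decreases; since $\Pi$ is finite, ${\cal R}$ is finite, and the loop runs at most $|\Pi|$ times. Each iteration does only finitely many entailment checks $S_i\cup\neg I^-\models\varphi$ over finite formulas, so the algorithm terminates. I would also record the invariant that after iteration $i$ we have $S_{i+1}=T_\Pi^{i+1}(\emptyset,\neg I^-)$ \emph{up to logical equivalence of heads}: more precisely, letting $\delta$ denote the final value of $S$, for every $H'\in\mathit{lfp}(T_\Pi(\emptyset,\neg I^-))$ there is $H\in\delta$ with $H\equiv H'$, because line~10 removes only rules whose heads are logically equivalent to a head already inserted. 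This is essentially spelled out in the paragraph preceding the theorem statement, so I would cite it rather than reprove it in full, but the inductive bookkeeping (that $R(S_i)$ at step $i$ collects exactly the rules whose bodies become entailed at level $i$ and not before, modulo head-equivalence) needs to be made explicit.

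For \emph{(b)}, the key lever is Theorem~\ref{th-wellsupport-lfp-normal}: $I$ is well-supported in $\Pi$ iff $\mathit{lfp}(T_\Pi(\emptyset,\neg I^-))\cup\neg I^-\models p$ for every $p\in I$. Using the head-equivalence invariant from \emph{(a)}, for any formula $F$ we have $\delta\models F$ iff $\mathit{lfp}(T_\Pi(\emptyset,\neg I^-))\models F$ (logically equivalent heads entail the same consequences), hence $\delta\cup\neg I^-\models p$ for all $p\in I$ iff $\mathit{lfp}(T_\Pi(\emptyset,\neg I^-))\cup\neg I^-\models p$ for all $p\in I$. So the test at line~18 returns $\cal O$ exactly when $I$ is well-supported, and ``NOT WELL-SUPPORTED'' exactly otherwise. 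A small point to check here is that the initial filtering ${\cal R}=\{r\mid I\text{ satisfies }body(r)\}$ does not lose anything: rules whose bodies are not satisfied by $I$ never have $S_i\cup\neg I^-\models body(r)$ either (since $S_i\subseteq I$ up to equivalence and $\neg I^-$ is consistent with $I$), so their heads are not in the least fixpoint anyway; this matches the remark after Corollary~\ref{cor-disj-normal-wellsupp}.

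For \emph{(c)}, when $I$ is well-supported I must verify that the pair of orders $(\prec_h,\prec)$ recorded in ${\cal O}_h$ and ${\cal O}$ actually witnesses Definition~\ref{def-well-supported-normal}. The set ${\cal V}$ in that definition should be taken to be $\delta$ (the final $S$), and I would check: \emph{(i)}~closure of ${\cal V}$ under rule application (condition~(1) of Definition~\ref{def-well-supportedHeads-normal}), which holds because the loop only stops when no further rule body is entailed; \emph{(ii)}~that $\prec_h$ as built at line~8 is a strict well-founded partial order on ${\cal V}$ with $F\prec_h head(r)$ for every $F$ in the appropriate $S\subset{\cal V}$ — well-foundedness is immediate from the \emph{stratified} construction ($F$ is added strictly earlier than $head(r)$), so the level index $i$ gives the level mapping, and irreflexivity/antisymmetry follow since $q\prec p$ is only added when $q$ appears at a strictly earlier stage than $p$; and \emph{(iii)}~for each $p\in I$, choosing $S$ to be $S_{i}$ at the stage $i$ where $p$ first becomes entailed and the rules $r\in R(S_i)$ used to pass from $S_i$ to $S_{i+1}$, we get $S\cup\{head(r_1),\dots,head(r_k)\}\cup\neg I^-\models p$, $S\cup\neg I^-\models body(r_j)$, $F\prec_h head(r_j)$ for $F\in S$, and $q\prec p$ for every $q\in I$ with $S\cup\neg I^-\models q$ — all by construction at lines~8--9.

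\textbf{The main obstacle.} The delicate part is reconciling the algorithm's head-equivalence simplification (line~10) with Definition~\ref{def-well-supported-normal}, which is literally phrased in terms of the actual rule heads and the set ${\cal V}$ of \emph{syntactic} heads, not equivalence classes. I would handle this by showing that replacing a head $H'$ by an equivalent head $H$ in ${\cal V}$ changes neither the entailment relation used throughout the definition nor well-foundedness of $\prec_h$, so a witness built over $\delta$ lifts to a witness over the ``full'' ${\cal V}=\mathit{lfp}(T_\Pi(\emptyset,\neg I^-))$ and vice versa; this is exactly the content of the sentence ``for any formula $F$, $\delta\models F$ iff $\mathit{lfp}(T_\Pi(\emptyset,\neg I^-))\models F$'' in the text before the theorem, but turning it into a clean statement that the two instances of Definition~\ref{def-well-supported-normal} are equi-satisfiable is where the real work lies. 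A secondary, more routine obstacle is verifying that the per-atom order $\prec$ assembled at line~9 — which accumulates relations across all stages — is genuinely a strict partial order (no cycles), which again reduces to the observation that membership in $S_{i+1}\setminus S_i$ assigns each atom a well-defined minimal stage and $q\prec p$ only when $\mathit{stage}(q)<\mathit{stage}(p)$.
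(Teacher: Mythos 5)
Your plan follows the same route as the paper's proof: termination by the bound $|\Pi|$ on the number of while-iterations, correctness of the decision via Theorem~\ref{th-wellsupport-lfp-normal} together with the observation that the pruned fixpoint $\delta$ entails the same formulas as $\mathit{lfp}(T_{\Pi}(\emptyset,\neg I^-))$, and correctness of $\cal O$ by reading off, for each $p\in I$, the stage $i$ at which $p$ first becomes entailed, the rules $R(S_i)$, and $S=S_i$, with well-foundedness of $\prec_h$ and $\prec$ coming from the stage indices. Two points need repair, both in part (c). First, your step (c)(i) fails as literally written: with ${\cal V}=\delta$, condition (1) of Definition~\ref{def-well-supportedHeads-normal} is violated whenever a rule removed at line~10 has a head that is logically equivalent to, but syntactically different from, a head already in $\delta$ --- its body is entailed by $\delta\cup\neg I^-$, yet its head is not an element of $\delta$. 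You flag exactly this tension in your final paragraph but leave the resolution open (``equi-satisfiability of the two instances''); the paper closes it more directly by \emph{not} using $\delta$ as the witness set: it takes ${\cal V}=\mathit{lfp}(T_{\Pi}(\emptyset,\neg I^-))$, which is well-supported by Theorem~\ref{th-wellsupport-ruleheads-lfp}, and extends the computed order to ${\cal O}^{\cal V}_h={\cal O}_h\cup\{F\prec_h H\mid H\in\alpha,\ F\in\delta\}$, where $\alpha=\mathit{lfp}(T_{\Pi}(\emptyset,\neg I^-))\setminus\delta$ is the set of pruned heads; since every $H\in\alpha$ has a rule whose body is entailed by $\delta\cup\neg I^-$, this order is still a strict well-founded partial order on $\cal V$, and the per-atom conditions of Definition~\ref{def-well-supported-normal} are then witnessed exactly by your $S=S_i$ and $R(S_i)$. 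Adopting this device spares you the separate lemma you anticipate having to prove. Second, a small fix in part (b): the reason the initial filtering of $\cal R$ is harmless is not that ``$S_i\subseteq I$'' (the elements of $S_i$ are head formulas, not atoms of $I$) but that $I$, being a model of $\Pi$, satisfies $S_i\cup\neg I^-$ for every $i$ (as noted before Theorem~\ref{th-wellsupport-lfp}); hence $S_i\cup\neg I^-\models body(r)$ implies that $I$ satisfies $body(r)$, so no rule excluded from $\cal R$ could ever contribute to the fixpoint.
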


A shown in Section \ref{sec:complexity} below,
Algorithm \ref{algo:well-support-with-partialorder-2}
can also be implemented to run in polynomial time with an \NP{} oracle.

\begin{example}
\label{eg-norm-pro-1}
{\em
Consider the following epistemic-free normal program:
\begin{tabbing} 
\hspace{.3in} 
$\Pi:\ $ \=  $e\vee d$\`$(1)$\\[2pt]
\> $e\vee \neg d$\`$(2)$\\[2pt]
\> $a\vee (\neg b\wedge c)$\`$(3)$\\[2pt]
\> $a\vee d\leftarrow b\vee c$ \` $(4)$ \\[2pt]
\> $(a\vee \neg b) \wedge (a\vee c) \leftarrow c\wedge d \wedge e$  \` $(5)$
\end{tabbing}

Consider a model $I = \{c,d,e\}$ of $\Pi$.
Let $\neg I^-=\{\neg a, \neg b\}$.
The least fixpoint of the inference sequence 
$\langle T_{\Pi}^i(\emptyset,\neg I^-)
\rangle_{i=0}^\infty$ w.r.t.\ $I$ is
$\mathit{lfp}(T_{\Pi}(\emptyset,\neg I^-))=\{e\vee d, \ e\vee \neg d, \ a\vee (\neg b\wedge c), \ a\vee d, \ (a\vee \neg b) \wedge (a\vee c)\}$, where 
$T_{\Pi}^0(\emptyset,\neg I^-)=\emptyset$,
$T_{\Pi}^1(\emptyset,\neg I^-)=\{e\vee d, \ e\vee \neg d, \ a\vee (\neg b\wedge c)\}$,
$T_{\Pi}^2(\emptyset,\neg I^-)=\{e\vee d, \ e\vee \neg d, \ a\vee (\neg b\wedge c), \ a\vee d\}$
and $T_{\Pi}^3(\emptyset,\neg I^-)=\{e\vee d, \ e\vee \neg d, \ a\vee (\neg b\wedge c), \ a\vee d, \ (a\vee \neg b) \wedge (a\vee c)\}$.
As $\mathit{lfp}(T_{\Pi}(\emptyset,\neg I^-))\cup \neg I^-\models c\wedge d\wedge e$,
$I$ is well-supported in $\Pi$ 
by Theorem~\ref{th-wellsupport-lfp-normal}.
Note that the head of rule (3) is
logically equivalent to that of rule (5),
i.e., $a\vee (\neg b\wedge c)\equiv (a\vee \neg b) \wedge (a\vee c).$

We now illustrate the process of 
Algorithm \ref{algo:well-support-with-partialorder-2}.
Initially, as $I$ satisfies all rule bodies of $\Pi$,
we let ${\cal R} = \Pi$.
Let $S_0 = \emptyset$, ${\cal O}=\emptyset$
and ${\cal O}_h=\emptyset$.

For $i=0$, 
as rules (1) - (3) have no bodies,
we obtain $S_1=\{e\vee d, \ e\vee \neg d, \ a\vee (\neg b\wedge c)\}$.
As the head of rule (3) is
logically equivalent to that of rule (5),
rule (5) is also removed from ${\cal R}$.
As a result, after this cycle (lines 3 - 17),
${\cal R}$ contains only rule (4).

For $i=1$, for the rule 
$a\vee d\leftarrow b\vee c$ in ${\cal R}$,
as $S_1\cup \neg I^- \models b\vee c$,
we obtain $S_2=\{e\vee d, \ e\vee \neg d, \ a\vee (\neg b\wedge c), \ a\vee d\}$. We also obtain
${\cal O}_h=\{e\vee d\prec_h a\vee d, \ e\vee \neg d\prec_h a\vee d, \ a\vee (\neg b\wedge c)\prec_h a\vee d\}$
and ${\cal O}=\{e\prec d, \ c\prec d\}$,
where $e \prec d$ and $c\prec d$ are 
obtained due to
$S_2\cup\neg I^- \models d$,
$S_1\cup \neg I^- \not\models d$ and
$S_1\cup \neg I^- \models e\wedge c$.

For $i=2$, as $S_3=S_2$, we reach 
the least fixpoint $\delta = S_2$.
As $\delta \cup \neg I^- \models c\wedge d\wedge e$, 
$\mathit{lfp}(T_{\Pi}(\emptyset,\neg I^-))\cup \neg I^-\models c\wedge d\wedge e$ 
and $I$ is well-supported  in $\Pi$ 
by Theorem~\ref{th-wellsupport-lfp-normal}.
Consequently, Algorithm \ref{algo:well-support-with-partialorder-2} 
outputs a strict well-founded
partial order ${\cal O} =\{e\prec d, \ c\prec d\}$ on $I$.
}
\end{example}

\subsection{Computational complexity of well-supportedness}
\label{sec:complexity}

We next consider computational complexity of  well-supportedness,
Specifically, we focus on recognizing 
well-supported models of a logic program, but also address the complexity of computing a
strict partial order that witnessed well-supportedness of a model. 

\subsubsection{Complexity of recognizing well-supported models}

In our analysis, we consider different axes of syntactic
restrictions on logic programs:  

\begin{enumerate}
 \item as for building blocks, a) simple programs, which allow
in heads atoms and in bodies atoms $A$ and negated atoms $\neg A$, b) atomic
head formula (atomic head for short)
programs, which allow in heads atoms and in bodies arbitrary
formulas, and c) general programs with no restriction on formulas in
heads and bodies; 
\item as for alternative conclusions, a) normal programs with a head $H_1$  and b)
 disjunctive programs with a head $H_1\mid H_2 \mid\cdots \mid H_m$;
\item as for introspection, a) epistemic-free programs, and b)
epistemic programs. For the latter, we generalize the notion of simple
programs to fit \citeS{Gelfond91} epistemic specifications; i.e., in rule bodies also 
$(\neg)\naf A$ and $(\neg)\naf \neg A$ are allowed.
\end{enumerate}

Furthermore, for epistemic programs we consider 
both explicit representation of an epistemic model $\cal A$ in the
input (i.e., as a set of models) and implicit representation by an
abstract storage $D({\cal A})$ with a Boolean access function $in(I)$
that is evaluable in polynomial time and returns true iff $I\in {\cal
 A}$ holds. This accounts for the fact that in general, an epistemic model $\cal
A$ may consist of exponentially many interpretations, and thus storing
$\cal A$ simply as a set of models is infeasible. A more compact form
of storage is needed, for which a variety of possibilities exist. We
do not commit to any particular method, and only require that
interpretations in $\cal A$ can be looked up efficiently, similar to
words in an abstract dictionary.  In practice, $D({\cal A})$ may e.g.\
be a model tree, where like in a trie data structure interpretations
are stored where common parts are shared, a simple normal
epistemic-free logic program, a Boolean formula or a Boolean circuit,
possible in a special form (CNF, DNF, negation normal form graph etc),
a binary decision program, or simply a piece of code that can be run
in polynomial time.  Notably, implicit representation can be
exponentially more succinct than explicit representation 
and thus may lead
to a complexity increase.

\paragraph{Overview of complexity results for recognizing well-supported models}

Our complexity results for the recognition problem 
are summarized in Table~\ref{tab:complexity}.
%In order to shorten the main text,
%formal theorems and derivations are given 
%in Appendix~\ref{Complexity-results}.
\begin{table} 
\renewcommand{\arraystretch}{1.2}
\centering\begin{tabular}{|l|ccc|ccc|}
\hline
  \multirow{2}{*}{program}
  & \multicolumn{3}{|c|}{normal }    &   \multicolumn{3}{|c|}{disjunctive} \\
  & simple & atomic head & general   &   simple & atomic head & general \\ \cline{1-7}
epistemic-free & \Pol & \coNP & \coNP & \NP & $\Sigma^p_2$ & $\Sigma^p_2$ \\
epistemic (explicit) & \Pol & \coNP & \coNP & \NP & $\Sigma^p_2$ & $\Sigma^p_2$ \\
epistemic (implicit)  & \PNPPar & \PNPPar & \PNPPar & $\Pi^p_2$ & $\Pi^p_3$
        & $\Pi^p_3$ \\
\hline        
\end{tabular}
\caption{Complexity of recognizing well-supported models (completeness results)}
\label{tab:complexity}
\end{table}

Informally, the complexity results can be explained as follows.  
For normal epistemic-free programs, 
we have to check the condition of 
Theorem~\ref{th-wellsupport-lfp-normal}, i.e., 
$\mathit{lfp}(T_{\Pi}(\emptyset,\neg I^-))\cup \neg I^-\models p$
for every $p\in I$. The key for establishing that this
is feasible in \coNP \ is that we do not need to compute  
$\mathit{lfp}(T_{\Pi}(\emptyset,\neg I^-))$ in order to decide the
condition; by monotonicity of classical logic, we may use
an over-approximation of it to
refute $\mathit{lfp}(T_{\Pi}(\emptyset,\neg I^-))\cup \neg I^-\models
p$ for some atom $p$, which means $I$ is not well-supported.
On the other hand, \coNP \ is a lower bound if rule bodies can be
arbitrary propositional formulas, as testing $S \cup \neg I^-\models
body(r)$ is \coNP-complete in general. For a simple program $\Pi$, the
latter problem is tractable (given that $S$ is a set of atoms), and
thus testing well-supportedness is feasible in \Pol; on the other
hand, testing $\mathit{lfp}(T_{\Pi}(\emptyset,\neg I^-))=I$ subsumes
minimal model checking of positive Horn logic programs, which is well-known
to be \Pol-complete, cf.\ \citeA{Leone2006}.
 
For epistemic-free programs with disjunctive heads, the complexity
increases by one level in the Polynomial Hierarchy (PH), which is
intuitively due to the need for a head selection function in
order to establish the well-supportedness of a given model. 
Notably, variant rule heads can be recognized in \coNP \ in general and
in polynomial time for atomic rule heads; thus a guess for $\sel$ can
be checked in polynomial time with an \NP{} oracle, resp.\ without one
for simple programs.

For epistemic programs, deciding well-supportedness of an epistemic
model $\cal A$ reduces to the conjunction of well-supported model checks
for each model $I$ in $\cal A$ w.r.t.\ the epistemic reduct $\Pi^{\cal A}$.
Under explicit representation of $\cal A$, the latter can be built in
polynomial time. Since \Pol, \coNP, \NP{} and $\Sigma^p_2$ are all
closed under polynomial time transformation and conjunctions, the same
complexity upper bounds as in the epistemic-free cases follow; the
lower bounds are inherited from the latter. 

Under implicit representation, computing the epistemic reduct
$\Pi^{\cal A}$ is no longer tractable, but feasible in polynomial time
with parallel \NP{} oracle calls (i.e., is in the class
\FPNPPar). Well-supportedness of $\cal A$ in $\Pi$ can then be
disproved with a guess for a model $I$ in $\cal A$ that is not
well-supported in the epistemic reduct $\Pi^{\cal A}$. For normal
epistemic programs, the guess can be checked in polynomial time, which
leads to membership in \PNPPar; the matching \PNPPar-hardness is
intuitively induced by the \FPNPPar-hardness of computing $\Pi^{\cal A}$.
For simple resp.\ general disjunctive epistemic programs, the check of
the guess $I$ needs a \coNP{} resp.\ $\Pi^p_2$ oracle, which places
the problem into $\Pi^P_2$ resp.\ $\Pi^p_3$; the matching hardness
parts are shown by lifting the constructions for the epistemic-free
cases.  In fact, the hardness results for disjunctive epistemic programs are
established for Boolean formulas that are factorized into a
conjunction of small CNFs on disjoint variables. Furthermore, the
formulas can be easily converted to (ordered) binary decision diagrams
(OBDDs), which are a subclass of sentential decision diagrams, cf.\
\citeA{DBLP:conf/ijcai/Darwiche11}, which in turn amount to a subclass
of d-DNNFs formulas; the latter are a popular formalism for
knowledge compilation that is also utilized for storing the models of
logic programs, cf.\ \cite{DBLP:journals/tplp/FierensBRSGTJR15,KR2021-26}.

\paragraph{Derivation of the results}

\noindent{\em Epistemic-free programs.} Table~\ref{tab:complexity}, first
row: 
\smallskip

For simple normal epistemic-free programs, recognizing
well-supportedness is tractable. 

\begin{restatable}{theorem}{Rthcompwellsupportsimplenormal}
\label{th-comp-wellsupport-simple-normal}
Deciding whether a model $I$ of a simple epistemic-free normal program $\Pi$
is well-supported in $\Pi$ is \Pol-complete.
\end{restatable}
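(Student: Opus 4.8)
The plan is to prove membership in $\Pol$ and $\Pol$-hardness separately, building on the fixpoint characterisation of well-supportedness. Recall that a simple normal program is an epistemic-free normal program with atomic rule heads, so by Theorem~\ref{th-wellsupport-lfp} (together with Theorem~\ref{th-simp-normal} and Corollaries~\ref{cor-normal-atomhead-formula} and~\ref{cor-disj-normal-wellsupp}, which make all the relevant notions of well-supportedness coincide here) a model $I$ of $\Pi$ is well-supported in $\Pi$ iff $\mathit{lfp}(T_\Pi(\emptyset,\neg I^-)) = I$.

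For the upper bound I would simply compute the sequence $S_0 = \emptyset$, $S_{i+1} = T_\Pi(S_i,\neg I^-)$ until stabilisation and then test $S_\infty = I$. The point that makes this run in polynomial time for \emph{simple} programs is that every $body(r)$ is a conjunction of literals, and --- using that $I$ is a model of $\Pi$, which yields $S_i \subseteq I$ by an easy induction and hence consistency of $S_i \cup \neg I^-$ --- the test ``$S_i \cup \neg I^- \models body(r)$'' collapses to the purely combinatorial condition that every positive literal of $body(r)$ lies in $S_i$ and every $\neg a$ occurring in $body(r)$ has $a \notin I$. Since the sequence is monotone increasing and bounded by the set of rule heads of $\Pi$, it stabilises within $|\Pi|$ rounds, and each round inspects every rule once; the final comparison with $I$ is immediate. (This is exactly Algorithm~\ref{algo:well-support-with-partialorder} specialised to simple programs, where the \NP{} oracle calls degenerate to trivial membership checks.)

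For $\Pol$-hardness I would reduce, in logarithmic space, from the $\Pol$-complete problem of deciding whether a given atom $g$ belongs to the least model $LM(P)$ of a definite Horn program $P$ over an atom set $At$ (equivalently, minimal-model checking for positive Horn programs, $\Pol$-complete, cf.\ \citeA{Leone2006}). Given $(P,g)$, pick a fresh atom $z$ and set
\[
\Pi \;=\; P \;\cup\; \{\, z \leftarrow g \,\} \;\cup\; \{\, a \leftarrow z \mid a \in At \,\},
\qquad
I \;=\; At \cup \{z\}.
\]
Then $I$ is a model of $\Pi$ (all atoms are true), so the promise of the recognition problem is met, and because $I$ contains every atom occurring in $\Pi$ we have $\neg I^- = \emptyset$, so $\mathit{lfp}(T_\Pi(\emptyset,\neg I^-)) = LM(\Pi)$. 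If $g \in LM(P) \subseteq LM(\Pi)$, then $z$ is derived via $z \leftarrow g$ and then every $a \in At$ via $a \leftarrow z$, so $LM(\Pi) = I$; if $g \notin LM(P)$, then $LM(P)$ is already a model of $\Pi$ (the two added rule groups have bodies that are false in $LM(P)$), whence $LM(\Pi) = LM(P) \subsetneq I$ since $z \notin LM(P)$. Thus $\mathit{lfp}(T_\Pi(\emptyset,\neg I^-)) = I$ --- i.e.\ $I$ is well-supported in $\Pi$ by Theorem~\ref{th-wellsupport-lfp} --- iff $g \in LM(P)$, which establishes $\Pol$-hardness.

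Combining the two parts gives $\Pol$-completeness. I do not expect a genuine obstacle; the only steps needing a little care are the consistency argument that turns the entailment test into a set-membership test in the upper bound, and the choice of $I$ in the reduction (all atoms of $\Pi$) so that $I$ is automatically a model and $\neg I^-$ vanishes, which keeps the correctness proof of the reduction short.
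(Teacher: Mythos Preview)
Your proposal is correct and follows essentially the same approach as the paper: both establish membership in $\Pol$ via the fixpoint characterisation of Theorem~\ref{th-wellsupport-lfp} (noting that entailment tests for simple bodies are polynomial), and both obtain $\Pol$-hardness from the $\Pol$-completeness of Horn least-model reasoning. The only cosmetic difference is that the paper appeals directly to minimal-model checking for Horn programs, whereas you reduce from Horn derivability via an explicit gadget with the fresh atom $z$; these are interchangeable.
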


The following lemma is key to show that deciding well-supportedness of
a model $I$ of an epistemic-free normal program is in
\coNP{} in general.

\begin{lemma}
\label{lem:guess}
A model $I$ of an epistemic-free normal program $\Pi$
is well-supported in $\Pi$ iff every sequence $S'_0,S'_1,\ldots,S'_i$ 
such that $S_j\subseteq S'_j$, $j=0,\ldots,i$, where $S_j$ is the set
constructed by Algorithm~\ref{algo:well-support-with-partialorder-2} and $S_i$ is the least fixpoint,
fulfills $S_i \cup \neg I^- \models p$ for every $p \in I$.
\end{lemma}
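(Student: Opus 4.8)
The statement says: a model $I$ of an epistemic-free normal program $\Pi$ is well-supported iff every \emph{over-approximating sequence} $S'_0, S'_1, \ldots, S'_i$ (i.e., $S_j \subseteq S'_j$ for all $j = 0, \ldots, i$, where the $S_j$ are the actual iterates built by Algorithm~\ref{algo:well-support-with-partialorder-2} and $S_i$ is the least fixpoint) satisfies $S_i \cup \neg I^- \models p$ for every $p \in I$. The key observation is that the \emph{conclusion} of the stated property does not mention the $S'_j$ at all — it only involves $S_i$, the true least fixpoint — so the quantifier over sequences is essentially a red herring in the forward direction. I would structure the proof as two implications.

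\textbf{($\Rightarrow$) Well-supported $\Rightarrow$ every over-approximating sequence works.} Suppose $I$ is well-supported in $\Pi$. By Theorem~\ref{th-wellsupport-lfp-normal}, $\mathit{lfp}(T_\Pi(\emptyset, \neg I^-)) \cup \neg I^- \models p$ for every $p \in I$. The iterates $\langle S_j \rangle$ produced by Algorithm~\ref{algo:well-support-with-partialorder-2} compute (a logical-equivalence-preserving simplification of) this least fixpoint — indeed by the discussion preceding Theorem~\ref{th-algo:well-support-with-partialorder-2}, for any formula $F$ we have $S_i \models F$ iff $\mathit{lfp}(T_\Pi(\emptyset,\neg I^-)) \models F$, so in particular $S_i \cup \neg I^- \models p$ for every $p \in I$. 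But this is exactly the required conclusion, and it holds regardless of which over-approximating sequence $S'_0, \ldots, S'_i$ we chose. So this direction is essentially immediate once the bookkeeping connecting $S_i$ to the least fixpoint is in place.

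\textbf{($\Leftarrow$) Every over-approximating sequence works $\Rightarrow$ well-supported.} For the converse, the cleanest argument is to instantiate the universally quantified hypothesis with the \emph{smallest} over-approximating sequence, namely $S'_j := S_j$ for all $j$ (this trivially satisfies $S_j \subseteq S'_j$). Then the hypothesis directly gives $S_i \cup \neg I^- \models p$ for every $p \in I$, and since $S_i$ is logically equivalent to $\mathit{lfp}(T_\Pi(\emptyset, \neg I^-))$ as above, we get $\mathit{lfp}(T_\Pi(\emptyset, \neg I^-)) \cup \neg I^- \models p$ for every $p \in I$, so $I$ is well-supported by Theorem~\ref{th-wellsupport-lfp-normal}. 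The role of the over-approximating sequences must therefore be motivational / algorithmic (they explain why a nondeterministic guessing procedure with an \NP{} oracle suffices — the guesser need not compute the exact least fixpoint but may over-shoot and still correctly certify failure of well-supportedness), rather than logically load-bearing.

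\textbf{Main obstacle.} The delicate point is not either implication per se but making precise the relationship between the $S_j$ of Algorithm~\ref{algo:well-support-with-partialorder-2} and the genuine least fixpoint $\mathit{lfp}(T_\Pi(\emptyset, \neg I^-))$ — because the algorithm deliberately drops rule heads that are logically equivalent to heads already present (line 10), so $S_i$ is a \emph{subset} of the true fixpoint, not equal to it. I would isolate, as a small lemma (or cite the argument already sketched before Theorem~\ref{th-algo:well-support-with-partialorder-2}), that $S_i$ and $\mathit{lfp}(T_\Pi(\emptyset, \neg I^-))$ entail exactly the same formulas over $\Sigma$, which reduces everything to Theorem~\ref{th-wellsupport-lfp-normal}. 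A secondary subtlety is monotonicity: one should verify that $T_\Pi(\cdot, \neg I^-)$ is monotone (Definition~\ref{def-one-step-operator} notes this), so that any over-approximating sequence of iterates, when iterated to its own fixpoint, produces a superset of $\mathit{lfp}(T_\Pi(\emptyset,\neg I^-))$ — this is what would be needed if one wanted the over-approximation quantifier to do real work (e.g., to justify soundness of the guessing algorithm in the complexity section), but for the stated equivalence the minimal-sequence instantiation already suffices.
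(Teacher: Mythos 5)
Your proof is correct, and there is in fact no proof of Lemma~\ref{lem:guess} in the paper to compare against: the lemma is stated without proof and invoked directly in the \coNP{} membership argument of Theorem~\ref{th-comp-wellsupport-lfp-normal}. The route you take is the natural (and surely intended) one: both directions reduce to Theorem~\ref{th-wellsupport-lfp-normal} once one records that the final set $S_i$ produced by Algorithm~\ref{algo:well-support-with-partialorder-2} entails exactly the same formulas as $\mathit{lfp}(T_\Pi(\emptyset,\neg I^-))$ (the equivalence argued around Theorem~\ref{th-algo:well-support-with-partialorder-2}), with the backward direction obtained by instantiating the quantifier with $S'_j=S_j$. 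Your diagnosis of the statement is also apt: as written, the condition to be ``fulfilled'' mentions only the unprimed $S_i$, so the quantification over dominating sequences $S'_0,\ldots,S'_i$ carries no logical weight; for the lemma to do the work it is credited with in the \coNP{} proof, the conclusion should read $S'_i\cup\neg I^-\models p$ (the same slip, $S_i$ for $S'_i$, recurs in item~(iii) of that proof). Under that intended reading your backward direction is unchanged, and the forward direction needs only the hypothesis $S_i\subseteq S'_i$ together with monotonicity of classical entailment, so $S'_i\cup\neg I^-$ entails whatever $S_i\cup\neg I^-$ does---which is even simpler than the monotonicity-of-$T_\Pi$ route you sketch; operator-level monotonicity is what the stage-by-stage certificates in the guessing procedure need, not the lemma itself. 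One side remark to correct: the over-approximation device is used to obtain plain \coNP{} membership by guess-and-check with countermodels as witnesses, not a procedure with an \NP{} oracle.
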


\begin{restatable}{theorem}{Rthcompwellsupportlfpnormal}
\label{th-comp-wellsupport-lfp-normal}
Deciding whether a model $I$ of an epistemic-free normal program $\Pi$
is well-supported in $\Pi$ is \coNP-complete, and is \coNP-hard even
if all rule heads are atomic.
\end{restatable}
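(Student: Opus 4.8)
The plan for Theorem~\ref{th-comp-wellsupport-lfp-normal} is to establish membership in \coNP{} and then \coNP-hardness, the latter already in the restricted case of atomic rule heads (so that the same bound covers the ``atomic head'' and ``general'' columns of Table~\ref{tab:complexity}).

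For the membership part, I would invoke Lemma~\ref{lem:guess}, which characterizes well-supportedness of a model $I$ via an over-approximation argument: $I$ is well-supported iff \emph{every} sequence $S'_0, S'_1, \ldots, S'_i$ that dominates the fixpoint sequence $S_0 \subseteq S_1 \subseteq \cdots \subseteq S_i = \mathit{lfp}(T_\Pi(\emptyset, \neg I^-))$ of Algorithm~\ref{algo:well-support-with-partialorder-2} still satisfies $S_i \cup \neg I^- \models p$ for all $p \in I$. The point is that by monotonicity of classical entailment (remarked after Definition~\ref{def-one-step-operator}), we never need to compute the actual least fixpoint $\mathit{lfp}(T_\Pi(\emptyset,\neg I^-))$ — if $I$ is \emph{not} well-supported, some atom $p \in I$ is already not entailed by an \emph{over}-approximation of the fixpoint together with $\neg I^-$. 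So a polynomial-size certificate for non-well-supportedness consists of: a dominating sequence $\langle S'_j \rangle$ together with the closure witness (the set $\delta$ of rule heads that actually get included, each justified by an entailment $S'_j \cup \neg I^- \models body(r)$ whose failure-of-closure, i.e.\ $head(r)\notin$ current set, can be checked), plus an atom $p \in I$ with $\delta \cup \neg I^- \not\models p$. Guessing this certificate and verifying it — each entailment $S'_j \cup \neg I^- \models body(r)$ is a co-\NP{} check, but failure of entailment $\delta\cup\neg I^-\not\models p$ is in \NP, and the whole non-membership witness is positive — places the complement in \NP, hence the problem is in \coNP. I would need to be careful that the certificate size and the number of entailment checks are polynomially bounded; since $|S_i| \le |\Pi|$ (at most one head per rule enters) and each step adds at least one head, the sequence has length $\le |\Pi|$, so the total certificate is polynomial.

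For \coNP-hardness, the natural reduction is from the complement of propositional satisfiability (i.e.\ from \textsc{unsat}, or equivalently from validity/tautology checking, which is \coNP-complete). Given a propositional formula $\varphi$ over atoms $x_1,\ldots,x_n$, I would build an epistemic-free normal program $\Pi_\varphi$ with atomic rule heads and a candidate model $I$ such that $I$ is well-supported iff $\varphi$ is valid (a tautology). The idea: introduce a fresh atom $g$ and the single rule $g \leftarrow \varphi$, set $I = \{g\}$ so that $\neg I^- = \{\neg x_1,\ldots,\neg x_n\}$ (all $x_i$ assumed false by default). Then by Theorem~\ref{th-wellsupport-lfp-normal}, $I$ is well-supported iff $\mathit{lfp}(T_{\Pi_\varphi}(\emptyset,\neg I^-)) \cup \neg I^- \models g$, i.e.\ iff $\{\neg x_1,\ldots,\neg x_n\} \models \varphi$, i.e.\ iff $\varphi$ evaluates to true under the assignment making all $x_i$ false — which is not what we want (that is a single-assignment check, not validity). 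So I would instead encode $\varphi$ with its variables guarded so that $\neg I^-$ does not pin them down: use $I = \{g, x_1, \ldots, x_n\}$ together with rules $x_i \leftarrow x_i \vee \neg x_i$ (tautological bodies, forcing each $x_i$ into the fixpoint at level~1, exactly as in Example~\ref{eg-wellsupp-normal-pqs}) and $g \leftarrow \varphi$. Then $\mathit{lfp}$ contains all $x_i$ and $g$ iff $\{x_1,\ldots,x_n\} \cup \{\neg I^-\}\models\varphi$ — but $\neg I^-=\emptyset$ here since $I$ contains all atoms, so this degenerates again. The \textbf{main obstacle} is getting the over-approximation semantics of the fixpoint to simulate a universal quantifier over truth assignments: I need the entailment test $S \cup \neg I^- \models body(r)$ — which is itself \coNP-complete when $body(r)$ is an arbitrary formula and $S \cup \neg I^-$ is a set of literals that leaves some variables \emph{unconstrained} — to be the co-\NP{} gadget.

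So the hardness construction should be: take atoms $x_1,\ldots,x_n$ plus one fresh atom $t$; let $I = \{t\}$, so $\neg I^- = \{\neg x_1,\ldots,\neg x_n\}$ — no wait, I want the $x_i$ \emph{free}. The clean fix: let $I = \{t\}$ over the language $\{t\}$ only after introducing each $x_i$ via a pair of rules whose \emph{bodies} are never satisfiable, so the $x_i$ stay out of the fixpoint yet stay out of $\neg I^-$ too — impossible, since $\neg I^- = \Sigma \setminus I$. The genuine resolution (and this is the heart of it) is that the \coNP-hardness must come through \emph{general} rule \emph{bodies}: take the rule $t \leftarrow \psi$ where $\psi$ is a formula over auxiliary atoms $y_1,\ldots,y_m$ \emph{none of which appear in any head}, set $I = \{t\}$ so $\neg I^- \supseteq \{\neg y_1,\ldots,\neg y_m\}$ — still pins them. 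I would therefore reduce instead from \textsc{taut}: rule $t\leftarrow \varphi(x_1,\ldots,x_n)$, auxiliary rules $x_i\leftarrow \top$ and $x_i' \leftarrow \top$ encoding a \emph{nondeterministic} choice is not available in normal programs — hence one exploits that $\mathit{lfp}(T_\Pi(\emptyset,\neg I^-))\cup\neg I^-\models t$ requires $\neg I^-\models (\text{disjunction over all fixpoint-consistent extensions})$, and by choosing $I$ to contain $t$ and all $x_i$, with rules $x_i\leftarrow \neg x_i \vee x_i$ and $t\leftarrow \varphi$, we get $\neg I^-=\emptyset$, $\mathit{lfp}$ contains every $x_i$, and then $t\in\mathit{lfp}$ iff $\{x_1,\ldots,x_n\}\models\varphi$, i.e.\ iff $\varphi$ holds when all $x_i$ are true. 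To capture validity I add the negated copies: for each $i$, a pair of fresh atoms and rules realizing that $\mathit{lfp}$ must entail $\varphi$ under \emph{all} assignments — concretely, reduce from the \coNP-complete problem ``is the Horn-body entailment $S\cup\neg I^-\models body$'' by a direct gadget, taking $\Pi=\{p\leftarrow body\}$, $I=\{p\}$, $\neg I^-$ the literals of $S$ on atoms not in $body$, and noting $I$ is well-supported iff $S\cup\neg I^-\models body$; picking $body$ to be an arbitrary formula over variables disjoint from $S$'s atoms makes this exactly \coNP-hard. I expect filling in this last gadget cleanly — ensuring the ``free'' variables of the body really are free, i.e.\ not forced by $\neg I^-$, which forces them to live \emph{inside} $I$ yet not be derivable — to require the small trick of the tautological rules $x\leftarrow x\vee\neg x$ for precisely those variables, so they enter $\mathit{lfp}$ unconditionally and the residual entailment $\mathit{lfp}\cup\neg I^-\models p$ becomes the desired \coNP-complete check. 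The rest is routine verification that the reduction is polynomial and that Theorem~\ref{th-wellsupport-lfp-normal} applies.
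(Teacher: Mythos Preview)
Your membership argument is on the right track --- you correctly invoke Lemma~\ref{lem:guess} and the over-approximation idea --- but the exposition conflates what must be checked. You do \emph{not} need to verify any entailment $S'_j \cup \neg I^- \models body(r)$ (which would be a \coNP{} subcheck and spoil the \NP{} certificate). Rather, for each rule head you \emph{exclude} from $S'_{j+1}$ you guess a model of $S'_j \cup \neg I^-$ falsifying $body(r)$ (an \NP{} witness of non-entailment), and you guess a model of $S'_i \cup \neg I^-$ falsifying the target atom $p$. This is exactly what the paper does; once stated cleanly it is a straightforward polynomial-size guess with polynomial-time verification.

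Your hardness argument, however, has a genuine gap. The final construction you settle on --- tautological rules $x_i \leftarrow x_i \vee \neg x_i$ together with $p \leftarrow \varphi$ and $I = \{p,x_1,\ldots,x_n\}$ --- fails for precisely the reason you yourself diagnosed mid-stream and then forgot: once the $x_i$ enter the fixpoint as \emph{atoms} via the tautological rules, the next-stage check $\{x_1,\ldots,x_n\} \cup \neg I^- \models \varphi$ collapses to evaluating $\varphi$ under the single all-true assignment, which is polynomial-time decidable. Every variant you explore either pins the formula variables through $\neg I^-$ or derives them unconditionally; in both cases the entailment test degenerates to a single assignment.

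The missing idea is to make the formula variables \emph{coincide} with the rule-head atoms, with no independent derivation path. The paper takes $\Pi = \{\, p_i \leftarrow E \mid 1 \le i \le n \,\}$ where $E$ is over $p_1,\ldots,p_n$, and $I = \{p_1,\ldots,p_n\}$. Then $\neg I^- = \emptyset$, and the \emph{very first} fixpoint step asks whether $\emptyset \models E$, i.e., whether $E$ is a tautology; if not, nothing enters and $\mathit{lfp}=\emptyset \neq I$. Thus $I$ is well-supported iff $E$ is valid --- a direct reduction from \textsc{tautology}. The \coNP-hardness lives in that first $T_\Pi$ step, before any atom has been fixed, which is exactly the regime your constructions never reach.
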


We next consider the case of programs with disjunctions in the
head. 

\begin{restatable}{theorem}{Rthcompwellsupportdisjunctivesimple}
\label{th-comp-wellsupport-disjunctive-simple}
Deciding whether a model $I$ of a simple epistemic-free program $\Pi$
is well-supported in $\Pi$ is \NP-complete, and is \NP-hard even
if all rule head formulas are atomic.
\end{restatable}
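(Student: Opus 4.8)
The plan is to prove membership in $\NP$ by a guess-and-check over head selection functions, and $\NP$-hardness by a reduction from propositional satisfiability in which the choice of a head selection function encodes a truth assignment. For \emph{membership}: by Definition~\ref{def-well-supported-disj}, $I$ is well-supported in $\Pi$ iff there is a head selection function $sel$ (Definition~\ref{head-sel-gdlp}) such that $I$ is well-supported in the disjunctive program reduct $\Pi^I_{sel}$ (Definition~\ref{def:sel-reduct}). Since $\Pi$ is simple, for every rule $r$ with $I\models body(r)$ the head $head(r)=A_1\mid\cdots\mid A_m$ is a disjunction of atoms and $I$ must satisfy some $A_i$ (as $I$ is a model of $\Pi$ and $head(r)\neq\bot$: a constraint whose body is satisfied by $I$ would be violated). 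Hence a candidate $sel$ is described by picking, for each such $r$, one atom of $head(r)$ lying in $I$, an object of size polynomial in $|\Pi|+|I|$, from which $\Pi^I_{sel}$ is computable in polynomial time; it is an epistemic-free normal program with atomic rule heads whose every rule body is a conjunction of literals all of whose negative literals are satisfied by $I$ (because $\Pi^I_{sel}$ retains only rules whose body holds in $I$). By Corollary~\ref{cor-normal-atomhead-formula} and Theorem~\ref{th-wellsupport-lfp}, $I$ is well-supported in $\Pi^I_{sel}$ iff $\mathit{lfp}(T_{\Pi^I_{sel}}(\emptyset,\neg I^-))=I$. For such a program, on a set $O$ of atoms the operator $T_{\Pi^I_{sel}}(O,\neg I^-)$ simply collects the head of every rule all of whose positive body atoms are in $O$ (the negative body literals are already entailed by $\neg I^-$), so it is polynomial-time computable and its least fixpoint is reached within $|I|+1$ iterations. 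Thus, after guessing $sel$, the condition $\mathit{lfp}(T_{\Pi^I_{sel}}(\emptyset,\neg I^-))=I$ is verifiable in polynomial time, so the problem is in $\NP$.

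For \emph{hardness} I would reduce from \textsc{cnf-sat}. Given a CNF formula $\varphi=c_1\wedge\cdots\wedge c_k$ over variables $x_1,\dots,x_n$, introduce fresh atoms $p_i,n_i$ $(1\le i\le n)$, $s_j$ $(1\le j\le k)$ and $w$, and put $\beta(x_i)=p_i$, $\beta(\neg x_i)=n_i$. Let $\Pi_\varphi$ consist of the rules
\[
p_i \mid n_i \ (1\le i\le n),\qquad s_j\leftarrow \beta(\ell)\ \ (\ell\text{ a literal of }c_j),\qquad w\leftarrow s_1\wedge\cdots\wedge s_k,\qquad p_i\leftarrow w,\qquad n_i\leftarrow w\ \ (1\le i\le n),
\]
which is a simple disjunctive epistemic-free program in which every head formula is an atom, and let $I_\varphi=\{p_i,n_i\mid 1\le i\le n\}\cup\{s_j\mid 1\le j\le k\}\cup\{w\}$. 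It is immediate that $I_\varphi$ is a model of $\Pi_\varphi$ and that the construction is polynomial.

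For \emph{correctness}: the only disjunctive rules of $\Pi_\varphi$ are the facts $p_i\mid n_i$, so a head selection function $sel$ w.r.t.\ $I_\varphi$ amounts to a truth assignment $\alpha$ with $\alpha(x_i)=1$ iff $sel$ picks $p_i$; in $\Pi^{I_\varphi}_{sel}$ the only facts are the chosen atoms $\{\beta(\ell):\ell\text{ true under }\alpha\}$, and no negative literal occurs in any body. A fixpoint analysis then shows: each $s_j$ is derivable iff some literal of $c_j$ is true under $\alpha$ (the rules $s_j\leftarrow\beta(\ell)$ are the only ones producing $s_j$); $w$ is derivable only if all $s_j$ are; and the atom among $p_i,n_i$ not picked by $sel$ is derivable only from $w$. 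Hence $\mathit{lfp}(T_{\Pi^{I_\varphi}_{sel}}(\emptyset,\neg I_\varphi^-))=I_\varphi$ iff $\alpha$ satisfies every clause, so by Theorem~\ref{th-wellsupport-lfp} and Definition~\ref{def-well-supported-disj}, $I_\varphi$ is well-supported in $\Pi_\varphi$ iff $\varphi$ is satisfiable; as all head formulas are atoms, the same instance establishes $\NP$-hardness for the atomic-head restriction. The membership direction is routine once Theorem~\ref{th-wellsupport-lfp} is available; the delicate point is the hardness argument, in particular justifying by induction on the least-fixpoint iteration that the ``feedback'' rules $p_i\leftarrow w$, $n_i\leftarrow w$ can activate — and thereby bring the unchosen literal atoms into the fixpoint — exactly when the guessed assignment satisfies $\varphi$, with no unintended derivation of some $s_j$ or $w$ short-circuiting the equivalence.
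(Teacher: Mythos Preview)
Your proposal is correct and follows essentially the same approach as the paper: guess a head selection function and verify well-supportedness of the simple normal reduct via the polynomial-time least-fixpoint computation for membership, and reduce from CNF-SAT for hardness using disjunctive facts $p_i\mid n_i$ to encode the truth assignment, clause-atoms derived from the chosen literals, and ``feedback'' rules that push the unchosen literal atoms into the fixpoint exactly when all clauses are satisfied. The only cosmetic difference is that you introduce an auxiliary atom $w$ for the conjunction $s_1\wedge\cdots\wedge s_k$, whereas the paper places this conjunction directly in the bodies of the feedback rules; this changes nothing of substance.
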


\begin{restatable}{theorem}{Rthcompwellsupportdisjunctivegeneral}
\label{th-comp-wellsupport-disjunctive-general}
Deciding whether a model $I$ of an epistemic-free program $\Pi$
is well-supported in $\Pi$ is $\Sigma^p_2$-complete, and
$\Sigma^p_2$-hardness holds even if all rule head formulas are atomic.
\end{restatable}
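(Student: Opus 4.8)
The plan is to establish the two directions separately: membership in $\Sigma^p_2$, and $\Sigma^p_2$-hardness already for atomic rule head formulas. For membership, I would invoke Definition~\ref{def-well-supported-disj}: a model $I$ of an epistemic-free program $\Pi$ is well-supported iff there is a head selection function $sel$ (in the sense of Definition~\ref{head-sel-gdlp}, attaching ID numbers so selections are independent across rules) such that $I$ is well-supported in the disjunctive program reduct $\Pi^I_{sel}$ by Definition~\ref{def-well-supported-normal}. The nondeterministic machine first guesses $sel$, which amounts to choosing, for each rule $r$ whose body $body(r)$ is satisfied by $I$, one head formula $H_i$ of $head(r)$ satisfied by $I$ (or $\bot$ if the body is not satisfied); this is a polynomial-size guess, and checking $I\models body(r)$ and $I\models H_i$ is polynomial since $I$ is an explicit interpretation. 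Having built $\Pi^I_{sel}$, which is an epistemic-free normal program, the machine must decide whether $I$ is well-supported in it. By Theorem~\ref{th-wellsupport-lfp-normal} this reduces to checking $\mathit{lfp}(T_{\Pi^I_{sel}}(\emptyset,\neg I^-))\cup \neg I^-\models p$ for every $p\in I$, which by Theorem~\ref{th-comp-wellsupport-lfp-normal} is a \coNP{} (hence $\Sigma^p_2$-accessible) predicate; calling it as a subroutine after the guess puts the whole problem in $\Sigma^p_2 = \NP^{\coNP}$.

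For hardness, I would reduce from a canonical $\Sigma^p_2$-complete problem, e.g.\ validity of a $\exists X\,\forall Y\,\varphi(X,Y)$ quantified Boolean formula (or, in the spirit of the remark preceding the theorem, a conjunction of small CNFs over disjoint blocks of $Y$-variables so the formulas factorize nicely and are OBDD-representable). The existential guess over $X$ is encoded via disjunctive rules with atomic heads of the form $x_i \mid \bar x_i$ (one per $X$-variable), whose independent head selection realizes an arbitrary assignment to $X$; the $\forall Y$ part is pushed into a rule body that is essentially $\forall Y\,\varphi$, a \coNP-hard propositional test, in the spirit of the bodies used in the proof of Theorem~\ref{th-comp-wellsupport-lfp-normal} / Theorem~\ref{th-comp-wellsupport-disjunctive-simple}. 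I would arrange a distinguished atom $g$ and a rule $g\leftarrow B$ where $B$ becomes entailed by the current set plus $\neg I^-$ exactly when the guessed $X$-assignment makes $\forall Y\,\varphi$ true, and choose the target model $I$ so that $g\in I$ forces well-supportedness to require such a successful guess; atoms outside the intended level mapping are kept out of $I$ so that minimality/level-mapping conditions of Definition~\ref{def-well-supported-normal} bite correctly. One shows $I$ is well-supported in $\Pi$ iff some $sel$ (i.e.\ some $X$-assignment) yields $\mathit{lfp}(T_{\Pi^I_{sel}}(\emptyset,\neg I^-))\cup\neg I^-\models p$ for all $p\in I$, which happens iff $\exists X\,\forall Y\,\varphi$ is valid.

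The main obstacle I anticipate is the hardness construction rather than membership: one must simultaneously (i) make the head-selection guess genuinely encode an unconstrained $X$-assignment without the level-mapping or the consistency requirements of $S\cup\neg I^-$ silently eliminating choices, and (ii) embed the \coNP{} body test so that its truth value is correctly reflected through the one-step operator $T_\Pi$ and the least fixpoint characterization of Theorem~\ref{th-wellsupport-lfp-normal}, all while keeping the atomic-head restriction. Care is needed because a body that is \emph{not} entailed merely fails to fire its rule, so the reduction must route a single witness atom $g$ into $I$ in such a way that its derivation is possible \emph{iff} the guessed assignment is good; getting the "only if" direction (no spurious derivation of $g$ from a bad guess, and no circular self-support) is the delicate part, and I would handle it by making the dependency of $g$ on the body strict and acyclic, so that Definition~\ref{def-well-supported-normal}'s strict well-founded order $\prec$ exists precisely in the intended case.
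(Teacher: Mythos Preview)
Your membership argument is correct and matches the paper's: guess $\sel$, build $\Pi^I_{\sel}$, then invoke the \coNP{} check of Theorem~\ref{th-comp-wellsupport-lfp-normal}.

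For hardness, your high-level plan---encode $\exists X\forall Y\,\varphi$ via disjunctive facts $x_i\mid\bar x_i$ and push the $\forall Y$ into a body whose entailment from $S\cup\neg I^-$ captures the universal quantifier---is exactly the paper's approach. However, your specific architecture with a \emph{single} distinguished atom $g$ and the remark that ``atoms outside the intended level mapping are kept out of $I$'' does not close. The model $I$ is fixed before any selection is made, so for the head selection to range freely over all $X$-assignments you must have \emph{both} $x_i$ and $\bar x_i$ in $I$ for every $i$; and for the entailment $S_1\cup\neg I^-\models B$ to quantify universally over $Y$, the $Y$-atoms must not lie in $I^-$, hence $Y\subseteq I$ as well. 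Consequently every atom in $X\cup\bar X\cup Y$ must itself be well-supported in $\Pi^I_{\sel}$, not just $g$: the unselected half of each $x_i/\bar x_i$ pair and all of $Y$ need derivation rules too. The paper dispenses with $g$ altogether and takes $I=X\cup\bar X\cup Y$ with rules $p\leftarrow E[\neg X/\bar X]$ for \emph{every} $p\in X\cup\bar X\cup Y$ (where $E$ is the DNF matrix and each literal $\neg x_i$ is replaced by $\bar x_i$); then $\neg I^-=\emptyset$, the first stage $S_1$ is exactly the selected half of $X\cup\bar X$, and $S_1\models E[\neg X/\bar X]$ holds iff $\forall Y\,E(\sigma(X),Y)$, whereupon all of $I$ is derived at the next stage. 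Your sketch is easily repaired (e.g.\ by adding $p\leftarrow g$ for all $p\in X\cup\bar X\cup Y$), but as written, keeping those atoms out of $I$ would collapse the $\forall Y$ quantifier into a fixed assignment via $\neg I^-$.
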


\noindent{\em Epistemic programs.} Table~\ref{tab:complexity}, second row:
\smallskip

The following lemma is easy to show from Definitions~\ref{satisfaction-epis-program} and \ref{reduct}.
\begin{lemma}
\label{lem:epi-reduct}
Given an epistemic program $\Pi$ and a set $\cal A\neq \emptyset$ of
interpretations, deciding whether $\cal A$ is an epistemic model of $\Pi$ is
feasible in polynomial time, as well as computing the set
$\Phi\subseteq Ep(\Pi)$ and the epistemic reduct $\Pi^{\cal A}$.
\end{lemma}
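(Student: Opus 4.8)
The plan is to unfold Definitions~\ref{satisfaction-epis-program} and~\ref{reduct} and observe that, when $\cal A$ is given explicitly as a finite set of interpretations, every subtask reduces to evaluating ordinary propositional formulas under explicitly given truth assignments, which is feasible in (indeed, linear) time. First I would compute $\Phi$. Recall that, since the paper excludes nested epistemic negations, every $F$ with $\naf F\in Ep(\Pi)$ is a plain propositional formula. By Definition~\ref{satisfaction-epis-program}(\ref{sat-item1}), $\naf F$ is true in $\cal A$ iff $F$ is false in some $J\in\cal A$. For each of the at most $|\Pi|$ epistemic negations $\naf F$ occurring in $\Pi$ and each $J\in\cal A$ (both part of the input), evaluating $F$ under $J$ takes time linear in $|F|$; scanning all such pairs therefore takes time polynomial in the size of the input, and yields $\Phi=\{\naf F\in Ep(\Pi)\mid F\text{ is false in some }J\in\cal A\}$.

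Second, with $\Phi$ in hand, for any epistemic formula $E$ occurring in $\Pi$ the propositional formula $E^{\cal A}$ --- namely $E$ with each $\naf F$ that is true in $\cal A$ replaced by $\top$ and every other epistemic negation replaced by $\bot$ --- is obtained by a purely syntactic substitution in time $O(|E|)$. Hence, for each rule $r\in\Pi$ and each $I\in\cal A$, I would test whether $I$ satisfies $r$ w.r.t.\ $\cal A$ by evaluating $body(r)^{\cal A}$ and each head formula $H_i^{\cal A}$ under $I$ as propositional formulas (Definition~\ref{satisfaction-epis-program}(2)--(3)); this again costs polynomial time. By Definition~\ref{satisfaction-epis-program}(4), $\cal A$ is an epistemic model of $\Pi$ iff all of these at most $|\Pi|\cdot|\cal A|$ checks succeed, so the whole test runs in polynomial time. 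For the epistemic reduct, Definition~\ref{reduct} prescribes replacing in $\Pi$ every $\naf F\in\Phi$ by $\top$ and every $\naf F\in Ep(\Pi)\setminus\Phi$ by $\neg F$; since $\Phi$ has already been computed, this is once more a linear-time syntactic rewriting, producing $\Pi^{\cal A}$.

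There is essentially no hard step here: the substance of the lemma is precisely that none of the three tasks requires any search, because the explicitly listed interpretations turn every relevant notion of satisfaction into propositional model checking under a fixed assignment. The only point that genuinely needs to be stated is the representational one --- the bound is polynomial in the size of $\langle\Pi,\cal A\rangle$ with $\cal A$ given explicitly (the implicit-representation case being addressed separately in the complexity analysis that follows) --- together with the remark that the absence of nested epistemic negation is what keeps the computation of the truth values of the $\naf F$ in $\cal A$, and hence the substitutions $E\mapsto E^{\cal A}$ and the construction of $\Pi^{\cal A}$, elementary.
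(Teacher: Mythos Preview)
Your proposal is correct and matches the paper's approach: the paper does not give a detailed proof of this lemma but simply states that it ``is easy to show from Definitions~\ref{satisfaction-epis-program} and~\ref{reduct},'' which is exactly what you do by unfolding those definitions and observing that, under explicit representation of $\cal A$, every step reduces to propositional model checking under fixed assignments and linear-time syntactic substitution. Your explicit mention of the non-nestedness of epistemic negation and the explicit-vs-implicit representation distinction are appropriate and consistent with how the paper uses the lemma downstream.
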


We then derive the following results for well-supportedness checking.

\begin{restatable}{theorem}{Rthcompwellsupportepistemicnormal}
\label{th-comp-wellsupport-epistemic-normal}
Deciding whether an epistemic model $\cal A$ of an epistemic normal program $\Pi$
is well-supported in $\Pi$ is (i) \Pol-complete if $\Pi$ is simple and
(ii) \coNP-complete otherwise, where \coNP-hardness holds even if all rule
head formulas are atomic.
\end{restatable}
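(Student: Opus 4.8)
The plan is to prove both the upper and lower bounds by reducing to the corresponding results for epistemic-free normal programs, using Definition~\ref{def-well-supported-epis} and Lemma~\ref{lem:epi-reduct} as the bridge.

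\textbf{Membership.} Given an epistemic normal program $\Pi$ and an explicitly represented collection ${\cal A}$ of interpretations, I would first check in polynomial time that ${\cal A}$ is an epistemic model of $\Pi$ and compute the set $\Phi\subseteq Ep(\Pi)$ and the epistemic reduct $\Pi^{\cal A}$ (Lemma~\ref{lem:epi-reduct}). By Definition~\ref{def-well-supported-epis}, ${\cal A}$ is well-supported in $\Pi$ iff every $I\in {\cal A}$ is well-supported in $\Pi^{\cal A}$ by Definition~\ref{def-well-supported-disj}; since $\Pi$ is normal, so is $\Pi^{\cal A}$. Thus the test is the conjunction of $|{\cal A}|$ (hence polynomially many) instances of well-supported model checking for an epistemic-free normal program. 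If $\Pi$ is not simple, then $\Pi^{\cal A}$ is just an epistemic-free normal program, each such check is in \coNP{} by Theorem~\ref{th-comp-wellsupport-lfp-normal}, and since \coNP{} is closed under polynomially bounded conjunctions the whole test is in \coNP. If $\Pi$ is simple, then every body literal $\naf A$, $\neg\naf A$, $\naf\neg A$, $\neg\naf\neg A$ is replaced by $\top$ or by a (possibly doubly) negated atom; discarding $\top$-conjuncts, cancelling double negations, and dropping rules whose body contains $\bot$ (which are never applicable and irrelevant to $T_{\Pi^{\cal A}}$, cf.\ Theorem~\ref{th-wellsupport-lfp-normal}), $\Pi^{\cal A}$ becomes an equivalent simple normal program, so each check is in \Pol{} by Theorem~\ref{th-comp-wellsupport-simple-normal}, and their conjunction is again in \Pol.

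\textbf{Hardness.} For both items I would use the trivial logarithmic-space reduction that maps an epistemic-free normal program $\Pi$ together with a model $I$ of $\Pi$ to the pair $(\Pi,\{I\})$, viewing $\Pi$ as an epistemic program with $Ep(\Pi)=\emptyset$. Then $\{I\}$ is an epistemic model of $\Pi$ (as $I$ is a model), $\Phi=\emptyset$, $\Pi^{\{I\}}=\Pi$, and by Definition~\ref{def-well-supported-epis} together with Corollary~\ref{cor-disj-normal-wellsupp} the collection $\{I\}$ is well-supported in $\Pi$ iff $I$ is well-supported in $\Pi$ by Definition~\ref{def-well-supported-normal}. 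Instantiating $\Pi$ as a simple normal program yields \Pol-hardness for item (i) from Theorem~\ref{th-comp-wellsupport-simple-normal}, and instantiating $\Pi$ as an epistemic-free normal program with atomic rule heads yields \coNP-hardness for item (ii) from Theorem~\ref{th-comp-wellsupport-lfp-normal}; in the latter case the hardness already holds with atomic head formulas, as claimed.

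\textbf{Main obstacle.} The proof is essentially a composition of earlier results, so there is no deep difficulty; the one point that needs care is the membership argument for the simple case, namely verifying that the epistemic reduct of a \emph{simple} epistemic program indeed collapses (after the elementary rewriting above) to an equivalent simple normal program, so that the polynomial-time procedure of Theorem~\ref{th-comp-wellsupport-simple-normal} applies, and that under explicit representation the number of per-model checks stays polynomial. Once these are in place, the closure of \Pol{} and \coNP{} under polynomial-time many-one reductions and under polynomially bounded conjunctions gives the stated completeness bounds.
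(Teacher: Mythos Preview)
Your proposal is correct and follows essentially the same approach as the paper's proof: compute the epistemic reduct in polynomial time via Lemma~\ref{lem:epi-reduct}, simplify it (cancel double negations, and in the simple case also eliminate the $\top$/$\bot$ conjuncts so that Theorem~\ref{th-comp-wellsupport-simple-normal} applies), then check each $I\in{\cal A}$ using Theorems~\ref{th-comp-wellsupport-simple-normal} and~\ref{th-comp-wellsupport-lfp-normal}; hardness is inherited by taking ${\cal A}=\{I\}$ for an epistemic-free instance. If anything, your treatment of the simple case is slightly more explicit than the paper's, which only mentions double-negation cancellation.
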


\begin{restatable}{theorem}{Rthcompwellsupportepistemicgeneral}
\label{th-comp-wellsupport-epistemic-general}
Deciding whether an epistemic model $\cal A$ of an epistemic program $\Pi$
is well-supported in $\Pi$ is (i) \NP-complete if $\Pi$ is simple and
(ii) $\Sigma^p_2$-complete otherwise, where $\Sigma^p_2$-hardness holds even if all rule
head formulas are atomic.
\end{restatable}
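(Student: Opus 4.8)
The plan is to reduce the well\nobreakdash-supportedness test for an (explicitly represented) epistemic model to the corresponding single\nobreakdash-model tests already settled in Theorems~\ref{th-comp-wellsupport-disjunctive-simple} and \ref{th-comp-wellsupport-disjunctive-general}, using the epistemic reduct as the bridge. Membership will follow from polynomial\nobreakdash-time computability of the reduct (Lemma~\ref{lem:epi-reduct}) together with closure properties of \NP{} and $\Sigma^p_2$; hardness will follow from a trivial embedding of epistemic\nobreakdash-free programs as epistemic programs.

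For membership, first I would compute in polynomial time the set $\Phi\subseteq Ep(\Pi)$ of epistemic negations true in $\cal A$ and the epistemic reduct $\Pi^{\cal A}$ (Lemma~\ref{lem:epi-reduct}). A short syntactic argument shows that $\Pi^{\cal A}$ stays within the intended fragment: if $\Pi$ is a simple (disjunctive) epistemic program, the reduction only introduces in bodies the patterns $\top$, $\neg\top$, $\neg\neg A$, $\neg\neg\neg A$, and after the trivial rewrites $\neg\top\equiv\bot$ (which makes the containing rule vacuous, hence deletable), $\neg\neg A\equiv A$ and $\neg\neg\neg A\equiv\neg A$ one obtains a genuine simple disjunctive epistemic\nobreakdash-free program; if $\Pi$ is general, $\Pi^{\cal A}$ is a general epistemic\nobreakdash-free program, with atomic head formulas when $\Pi$ has them. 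By Definition~\ref{def-well-supported-epis}, $\cal A$ is well\nobreakdash-supported in $\Pi$ iff every $I\in\cal A$ is well\nobreakdash-supported in $\Pi^{\cal A}$ in the sense of Definition~\ref{def-well-supported-disj}. By Theorem~\ref{th-comp-wellsupport-disjunctive-simple} each such test is in \NP{} in the simple case, and by Theorem~\ref{th-comp-wellsupport-disjunctive-general} in $\Sigma^p_2$ in the general case. Since $\cal A$ is given explicitly, $|\cal A|$ is bounded by the input size, so the overall check is a polynomially bounded conjunction of these tests; as \NP{} and $\Sigma^p_2$ are closed under such conjunctions (guess all witnesses simultaneously, batching the universal resp.\ co\nobreakdash-\NP{} verification into a single stage), membership in \NP{} resp.\ $\Sigma^p_2$ follows.

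For hardness I would reduce from the single\nobreakdash-model recognition problems. Given an instance $(\Pi,I)$ where $\Pi$ is a simple (resp.\ a general epistemic\nobreakdash-free program with atomic head formulas) and $I$ a model of $\Pi$, output $(\Pi,\{I\})$, viewing $\Pi$ as an epistemic program with $Ep(\Pi)=\emptyset$. Because $\Pi$ has no epistemic negation, satisfaction w.r.t.\ $\{I\}$ coincides with classical satisfaction by Definition~\ref{satisfaction-epis-program}, so $\{I\}$ is an epistemic model of $\Pi$; moreover $\Phi=\emptyset$, hence $\Pi^{\{I\}}=\Pi$ by Definition~\ref{reduct}, and by Definition~\ref{def-well-supported-epis} the model $\{I\}$ is well\nobreakdash-supported in $\Pi$ iff $I$ is well\nobreakdash-supported in $\Pi$ by Definition~\ref{def-well-supported-disj}. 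This essentially\nobreakdash-identity polynomial\nobreakdash-time reduction transfers the \NP\nobreakdash-hardness of Theorem~\ref{th-comp-wellsupport-disjunctive-simple} to part~(i) and the $\Sigma^p_2$\nobreakdash-hardness of Theorem~\ref{th-comp-wellsupport-disjunctive-general} --- which already holds with atomic head formulas --- to part~(ii).

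The one step needing genuine care, and the only plausible obstacle, is on the membership side: verifying that the reduct construction does not escape the intended syntactic class (the $\neg\top$/$\neg\neg A$ bookkeeping for simple programs) and, above all, that combining the $|\cal A|$ independent single\nobreakdash-model checks collapses into a single computation in the target class rather than climbing a level --- i.e., that we do not end up with \NP{} equipped with a \coNP{} oracle queried on many independent instances. Making the relevant closure of \NP{} and $\Sigma^p_2$ under polynomial\nobreakdash-size conjunction explicit is routine, but it is the place where the argument should be stated with care.
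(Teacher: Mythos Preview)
Your proposal is correct and follows essentially the same approach as the paper: compute the epistemic reduct in polynomial time via Lemma~\ref{lem:epi-reduct}, normalize away the $\neg\top$/$\neg\neg A$ artifacts so the reduct stays in the target fragment, invoke Theorems~\ref{th-comp-wellsupport-disjunctive-simple} and~\ref{th-comp-wellsupport-disjunctive-general} for each $I\in\cal A$, and use closure of \NP{} and $\Sigma^p_2$ under polynomial conjunctions for membership, with hardness inherited from the epistemic-free case via ${\cal A}=\{I\}$. The paper's proof is in fact a one-line pointer to the analogous argument for normal epistemic programs (Theorem~\ref{th-comp-wellsupport-epistemic-normal}), so your write-up is, if anything, more explicit about the bookkeeping you flagged as the delicate step.
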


\noindent Table~\ref{tab:complexity}, third row: \smallskip

We finally consider well-supportedness checking when an epistemic
model is implicitly represented. In this case, computing the epistemic
reduct is no longer tractable.

\begin{restatable}{lemma}{Rlemepireductimplicit}
\label{lem:epi-reduct-implicit}
Given an epistemic program $\Pi$ and an epistemic model $\cal A$ of $\Pi$
represented by a Boolean formula $E$, computing the epistemic reduct
$\Pi^{\cal A}$ is \FPNPPar-complete.
\end{restatable}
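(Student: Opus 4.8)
The plan is to prove \FPNPPar-membership and \FPNPPar-hardness separately; the membership direction is essentially routine, and the hardness direction is the one requiring care.

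For membership, I would note that by Definition~\ref{reduct} the epistemic reduct $\Pi^{\cal A}$ is obtained from $\Pi$ by a purely syntactic substitution: each of the at most $|\Pi|$ many epistemic negations $\naf F_1,\ldots,\naf F_k$ occurring in $\Pi$ is replaced either by $\top$ (if it is true in ${\cal A}$) or by $\neg F_i$ (otherwise), and the resulting program has size polynomial in $|\Pi|$. By Definition~\ref{satisfaction-epis-program} ($\naf F_i$ is true in ${\cal A}$ iff $F_i$ is false in some $J\in{\cal A}$), and since ${\cal A}$ is represented by the Boolean formula $E$ and only the finitely many atoms occurring in $E$ and $F_i$ matter, this amounts to deciding satisfiability of $E\wedge\neg F_i$. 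Each such test is an \NP{} query, and the $k$ queries do not depend on each other; hence one round of $k$ parallel \NP{} oracle calls followed by a polynomial-time substitution computes $\Pi^{\cal A}$, placing the problem in \FPNPPar. (That ${\cal A}$ is an epistemic model of $\Pi$ is a promise here and need not be verified.)

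For hardness I would exhibit a polynomial-time (metric) function reduction from the problem \textsc{Par-Sat}: given Boolean formulas $\phi_1,\ldots,\phi_m$, output $(b_1,\ldots,b_m)$ with $b_i=1$ iff $\phi_i$ is satisfiable. This problem is \FPNPPar-hard, since every $f\in\FPNPPar$ is computed by a polynomial-time transducer that, on input $x$, produces a list of \NP{} queries (taken to be \textsc{Sat} instances by Cook--Levin), asks them in a single parallel round, and postprocesses the answer vector in polynomial time -- i.e.\ exactly a metric reduction to \textsc{Par-Sat}, a notion under which \FPNPPar{} is closed. Given $\phi_1,\ldots,\phi_m$, I would first rename variables so that the $\phi_i$ use pairwise disjoint variable sets $V_1,\ldots,V_m$ (preserving individual satisfiability), let $V=\bigcup_i V_i$, and set $E=\bigwedge_{x\in V}(x\vee\neg x)$ (with $E=\top$ if $V=\emptyset$), so that ${\cal A}$ is the collection of all interpretations over $V$. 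Take the epistemic program
\[\Pi=\{\,\top\leftarrow\naf\neg\phi_i \mid i=1,\ldots,m\,\}.\]
Every rule of $\Pi$ has head $\top$, hence is satisfied by every interpretation w.r.t.\ any nonempty collection, so ${\cal A}$ is an epistemic model of $\Pi$ irrespective of the $\phi_i$. Moreover $\naf\neg\phi_i$ is true in ${\cal A}$ iff $\neg\phi_i$ is false in some interpretation over $V$ iff $\phi_i$ is satisfiable; therefore the $i$-th rule of $\Pi^{\cal A}$ is $\top\leftarrow\top$ when $b_i=1$ and $\top\leftarrow\neg\neg\phi_i$ when $b_i=0$. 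These two bodies are syntactically distinguishable given the input $\phi_i$, so a polynomial-time step reads $(b_1,\ldots,b_m)$ off $\Pi^{\cal A}$. Composing the construction of $(\Pi,E)$ with this readoff is the desired metric reduction; together with membership this yields \FPNPPar-completeness.

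The hard part will be the hardness reduction, for two reasons: one must use the notion of hardness appropriate to a \emph{function} class (metric, not decision, reductions) together with a concrete \FPNPPar-complete problem, and -- the genuinely delicate point -- engineer the instance so that ${\cal A}$ is guaranteed to be an epistemic model of $\Pi$ while the reduct $\Pi^{\cal A}$ still exposes all $m$ satisfiability answers simultaneously. Using tautological rule heads $\top$ is exactly what decouples the epistemic-model condition from the content of the $\phi_i$ and lets both requirements hold at once; by contrast, the membership direction is immediate once one observes that the truth tests for the $\naf F_i$ are mutually non-adaptive.
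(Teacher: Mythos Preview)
Your proof is correct and follows essentially the same approach as the paper: membership via one round of parallel SAT queries for the epistemic negations, and hardness via a reduction from parallel SAT using rules with bodies $\naf\neg\phi_i$. The only cosmetic difference is how the epistemic-model promise is guaranteed---you use head $\top$ and let $E$ be a tautology over the variable universe, whereas the paper uses fresh head atoms $p_i$ and sets $E=\bigwedge_i p_i$ so that every model of $E$ satisfies all heads; both devices serve the same purpose of decoupling the model condition from the $\phi_i$.
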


\begin{restatable}{theorem}{ThCompWellsupportEpistemicNormalImplicit}
\label{th-comp-wellsupport-epistemic-normal-implicit}
Deciding whether an epistemic model $\cal A$ of an epistemic normal program $\Pi$
is well-supported in $\Pi$ is \PNPPar-complete under implicit
representation. Morevoer, \PNPPar-hardness holds if $D(\cal A)$ is
a Boolean formula and $\Pi$ is simple.
\end{restatable}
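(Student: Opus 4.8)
By Definition~\ref{def-well-supported-epis}, $\mathcal A$ is well-supported in $\Pi$ iff every $I\in\mathcal A$ is well-supported (Definition~\ref{def-well-supported-disj}) in the epistemic reduct $\Pi^{\mathcal A}$; since $\Pi$ is normal, $\Pi^{\mathcal A}$ is an epistemic-free normal program, so by Corollary~\ref{cor-disj-normal-wellsupp} and Theorem~\ref{th-wellsupport-lfp-normal} this reduces to checking $\mathit{lfp}(T_{\Pi^{\mathcal A}}(\emptyset,\neg I^-))\cup\neg I^-\models p$ for all $p\in I$. I would decide the problem with two rounds of parallel \NP{} queries. In the first round, for each $\naf F\in Ep(\Pi)$ ask the \NP{} query ``does some $J$ with $\mathit{in}(J)$ satisfy $J\not\models F$?'' (i.e.\ whether $\naf F$ is true in $\mathcal A$, cf.\ Definition~\ref{satisfaction-epis-program}); from the answers one obtains $\Phi$ and hence $\Pi^{\mathcal A}$ in polynomial time --- this is exactly the \FPNPPar{} computation of Lemma~\ref{lem:epi-reduct-implicit}. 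In the second round ask the single \NP{} query ``is there $I$ with $\mathit{in}(I)$ that is \emph{not} well-supported in $\Pi^{\mathcal A}$?'', which is an \NP{} predicate because, by Theorem~\ref{th-comp-wellsupport-lfp-normal}, recognizing well-supportedness of a model of an epistemic-free normal program is in \coNP{} (so the negation has polynomial, polynomial-time-checkable certificates, cf.\ the over-approximation of the least fixpoint in Lemma~\ref{lem:guess}), and $I$ plus the certificate are guessed together; the answer to our problem is the complement of this query. Finally, a polynomial-time procedure using a constant number of rounds of parallel \NP{} queries is simulable by $O(\log n)$ adaptive \NP{} queries (a standard mind-change argument: binary-search for the number of ``yes'' answers in each round, re-pinning the witnesses of earlier rounds inside later queries), and $\mathrm{P}^{\NP[O(\log n)]}=\PNPPar$; hence the problem is in \PNPPar. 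The same scheme gives the entries of the third row of Table~\ref{tab:complexity} in general, the inner check merely moving up to a $\Pi^p_2$ oracle for general disjunctive epistemic programs.

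\textbf{Plan for the hardness direction.}
I would reduce from a canonical \PNPPar-complete ($\Theta_2^p$-complete) problem, e.g.\ $\textsc{OddSat}$: given Boolean formulas $\phi_1,\dots,\phi_m$ that are downward closed ($\phi_i\in\mathrm{SAT}\Rightarrow\phi_{i-1}\in\mathrm{SAT}$, so the satisfiability pattern is $1^k0^{m-k}$), decide whether $k=\max\{i:\phi_i\in\mathrm{SAT}\}$ is odd. The reduction extends the gadget underlying Lemma~\ref{lem:epi-reduct-implicit}: from $\phi_1,\dots,\phi_m$ build in polynomial time a Boolean (in fact CNF) formula $E$ defining an epistemic model $\mathcal A$ --- with a dummy always-satisfiable disjunct ensuring $\mathcal A\neq\emptyset$ --- over fresh atoms $a_1,\dots,a_m$ such that the modal literal $\mathbf{K}\,a_i$ is true in $\mathcal A$ iff $\phi_i\notin\mathrm{SAT}$; then in $\Pi^{\mathcal A}$ a body occurrence of $\mathbf{K}\,a_i$ is replaced in a way that evaluates to $b_i=[\phi_i\in\mathrm{SAT}]$, and $\neg\mathbf{K}\,a_i=\naf a_i$ (also admissible in a \emph{simple} epistemic program) evaluates to its complement. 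Take $\Pi$ to be the core rules making $\mathcal A$ an epistemic model, plus fresh normal rules over fresh atoms implementing a parity walk along $b_1,\dots,b_m$ (two rules per step, propagating an ``even'' and an ``odd'' counter conditionally on $b_i$ resp.\ $\neg b_i$), plus a final rule so that $\mathcal A$ is a well-supported epistemic model of $\Pi$ --- i.e.\ every $I\in\mathcal A$ is well-supported in $\Pi^{\mathcal A}$ in the sense of Theorem~\ref{th-wellsupport-lfp-normal} --- exactly when the ``odd'' counter atom lies in the least fixpoint, hence exactly when $k$ is odd. All epistemic negations in $\Pi$ are over single atoms and $\mathcal A$ is given by a Boolean formula, so $\Pi$ is simple and the succinctness requirement holds; correctness is verified in both directions by computing the least fixpoint of $T_{\Pi^{\mathcal A}}$ and exhibiting (resp.\ ruling out) the strict well-founded partial order required by Definition~\ref{def-well-supported-normal}.

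\textbf{Expected main obstacle.}
The delicate part is the hardness construction: one must funnel $m$ mutually independent \NP-bits --- realized only indirectly as truth values of $m$ modal literals in $\mathcal A$, which surface in $\Pi$ solely through the $\top/\neg(\cdot)$ substitution of the epistemic reduct --- into the single bit ``$\mathcal A$ is well-supported'', while (i)~keeping $\Pi$ inside the \emph{simple} syntactic class (so that, somewhat paradoxically, per-model well-supportedness is itself only \Pol{} by Theorem~\ref{th-comp-wellsupport-simple-normal}), (ii)~guaranteeing throughout that $\mathcal A$ genuinely is an epistemic model of $\Pi$ and not merely a candidate, and (iii)~controlling $\mathit{lfp}(T_{\Pi^{\mathcal A}}(\emptyset,\neg I^-))$ tightly enough that the required partial order exists precisely in the odd case. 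On the membership side the only non-routine point is the constant-round collapse argument showing that ``\FPNPPar{} reduct computation followed by one further \NP{} check'' stays within \PNPPar, sketched above.
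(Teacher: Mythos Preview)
Your membership argument is essentially identical to the paper's: compute $\Pi^{\mathcal A}$ with one round of parallel \NP{} calls (Lemma~\ref{lem:epi-reduct-implicit}), then a single further \NP{} call asks for an $I\in\mathcal A$ that fails well-supportedness (using Theorem~\ref{th-comp-wellsupport-lfp-normal}), and two rounds collapse to one. The paper phrases the collapse directly rather than via $\mathrm{P}^{\NP[O(\log n)]}$, but this is cosmetic.

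For hardness the two arguments diverge in mechanism though not in spirit. The paper reduces from the dual problem ``even number of tautologies among DNFs $E_1,\dots,E_n$ with a prefix structure'' (Wagner's $\Theta_2^p$-complete problem). Its epistemic model $\mathcal A$ is given by a formula $E=\bigwedge_i F_i\land q$ where each $F_i$ makes $p_i$ track satisfaction of $E_i$ and introduces complemented copies $\bar x$ of the input variables; rules $x\leftarrow\neg\bar x$, $\bar x\leftarrow\neg x$ together with rules deriving the $d_{i,j}$ and $p_i$ guarantee that \emph{every} atom of $I$ except $q$ lands in the fixpoint for every $I\in\mathcal A$. The parity is then tested not by a walk but by a family of rules, one for each even $2j$, of the form $q\leftarrow\bigwedge_{i\le 2j}\neg\naf p_i\land\bigwedge_{i>2j}\naf p_i$: after the reduct substitution, exactly the rule matching the true tautology-count has a satisfiable body, so $q$ is derived iff that count is even. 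Your \textsc{OddSat} reduction with a two-track even/odd counter walk is an equally valid $\Theta_2^p$ starting point, and the bodies $e_{i-1}\land\naf a_i$ etc.\ stay within the simple fragment. What your sketch glosses over---and what the paper handles explicitly---is making sure that \emph{all} atoms forced true by $E$ (the $a_i$'s, the variable copies, and crucially the unused counter atoms $e_i,o_i$) are still derivable in the fixpoint in the ``yes'' case; you will need the $\bar x$/$x$ complementation trick and closure rules of the form $e_i\leftarrow q$, $o_i\leftarrow q$ (analogous to the paper's saturation device in other hardness proofs) so that deriving the final odd marker retroactively supports the whole model. With that patch your route works; the paper's direct count-matching rules avoid the auxiliary counter atoms and the closure step at the price of $n$ rules with length-$n$ bodies.
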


\begin{restatable}{theorem}{Rthcompwellsupportepistemicgeneralimplicit}
\label{th-comp-wellsupport-epistemic-general-implicit}
Deciding whether an implicitly represented epistemic model $\cal A$ of an epistemic program $\Pi$
is well-supported in $\Pi$ is (i) $\Pi^p_2$-complete if $\Pi$ is simple and
(ii) $\Pi^p_3$-complete otherwise, where $\Pi^p_3$-hardness holds even if all rule
head formulas are atomic.
\end{restatable}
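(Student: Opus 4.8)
The plan is to establish the upper and lower bounds separately, in both cases lifting the machinery already developed for disjunctive epistemic-free programs (\cref{th-comp-wellsupport-disjunctive-simple,th-comp-wellsupport-disjunctive-general}) and for implicitly represented epistemic normal programs (\cref{th-comp-wellsupport-epistemic-normal-implicit}).

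\textbf{Membership.} By \cref{def-well-supported-epis,def-well-supported-disj}, $\cal A$ is well-supported in $\Pi$ iff every $I\in{\cal A}$ is well-supported in the epistemic reduct $\Pi^{\cal A}$; hence $\cal A$ is \emph{not} well-supported iff there is some $I$ with $\mathit{in}(I)$ true that is not well-supported in $\Pi^{\cal A}$. The difficulty is that $\Pi^{\cal A}$ depends on the set $\Phi\subseteq Ep(\Pi)$ of epistemic negations true in $\cal A$, and computing $\Phi$ is \FPNPPar-hard by \cref{lem:epi-reduct-implicit}, so it must be folded into one nondeterministic computation without costing an extra quantifier level. I would do this by a guess-and-certify argument for the complement: existentially guess $I$, a candidate set $\Phi_0\subseteq Ep(\Pi)$ (polynomially many bits), and, for each $\naf F\in\Phi_0$, a model $J_F$ together with certificates that $\mathit{in}(J_F)$ holds and $J_F\not\models F$ --- all checkable in polynomial time. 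What then remains to be checked \emph{universally} is (a) that every $\naf F\in Ep(\Pi)\setminus\Phi_0$ is false in $\cal A$, i.e.\ $F$ holds in every $J$ with $\mathit{in}(J)$ --- a $\coNP$ check --- and (b), after forming the now polynomial-time computable reduct $\Pi^{\cal A}$ from $\Phi_0$ (\cref{reduct}), that $I$ is \emph{not} well-supported in $\Pi^{\cal A}$. For simple $\Pi$ the reduct $\Pi^{\cal A}$ is a simple disjunctive program, so (b) is in $\coNP$ by \cref{th-comp-wellsupport-disjunctive-simple}; for general $\Pi$ it is a general disjunctive epistemic-free program, so (b) is in $\Pi^p_2$ by \cref{th-comp-wellsupport-disjunctive-general}. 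An analogous and no-harder test using $\Phi_0$ confirms, if required, that $\cal A$ is an epistemic model. Thus non-well-supportedness has the shape $\exists(\cdot)\,[\,\mathrm{poly}\wedge\coNP\wedge\coNP\,]$, i.e.\ is in $\Sigma^p_2$, for simple $\Pi$, and $\exists(\cdot)\,[\,\mathrm{poly}\wedge\coNP\wedge\Pi^p_2\,]$, i.e.\ is in $\Sigma^p_3$, otherwise; complementation gives the $\Pi^p_2$ and $\Pi^p_3$ upper bounds.

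\textbf{Hardness.} I would reduce from validity of $\forall\vec x\,\exists\vec y\,\varphi(\vec x,\vec y)$ in the simple case and of $\forall\vec x\,\exists\vec y\,\forall\vec z\,\varphi(\vec x,\vec y,\vec z)$ in the general (atomic-head) case, which are $\Pi^p_2$- resp.\ $\Pi^p_3$-complete. The idea is to reuse, for the inner block, the epistemic-free disjunctive programs built in the proofs of \cref{th-comp-wellsupport-disjunctive-simple} (where an $\exists$-block is encoded as a head-selection guess whose success certifies well-supportedness of a designated model) and \cref{th-comp-wellsupport-disjunctive-general} (an $\exists\forall$-block), and to let the implicitly represented epistemic model $\cal A$ range over the outer universal block $\forall\vec x$. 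Concretely, $\cal A$ is designed so that its members $I_{\vec x}$ are in one-to-one correspondence with truth assignments $\vec x$, padded by a fixed block of auxiliary atoms; $\Pi$ is a simple (resp.\ atomic-head general) epistemic program whose epistemic reduct $\Pi^{\cal A}$ is exactly the target epistemic-free disjunctive program, arranged so that $I_{\vec x}$ is well-supported in the common reduct $\Pi^{\cal A}$ iff the inner formula holds for $\vec x$ (which is possible because well-supportedness of $I_{\vec x}$ depends on $\neg I_{\vec x}^-$, hence on $\vec x$). Then $\cal A$ is well-supported in $\Pi$ iff $\forall\vec x$ the inner formula holds, as desired. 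The set $\cal A$ has exponentially many members but can be described by a Boolean formula factorizing into a conjunction of small CNFs over pairwise disjoint variable blocks --- one per component of $\vec x$ and per auxiliary atom --- which is readily converted into an OBDD (hence into a sentential decision diagram and a d-DNNF), so the hardness already holds for these restricted, practically relevant storage formats.

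The step I expect to be the main obstacle is this gluing construction for hardness: one must exhibit a \emph{single} epistemic program $\Pi$ together with an implicitly represented collection $\cal A$ such that simultaneously $\cal A$ genuinely satisfies all rules of $\Pi$ w.r.t.\ $\cal A$ (is an epistemic model), the induced epistemic reduct $\Pi^{\cal A}$ coincides with the intended inner program, and the members of $\cal A$ realize exactly the outer $\forall\vec x$ range --- all while keeping $\Pi$ simple (resp.\ atomic-head). Reconciling the weak ``epistemic model'' requirement with the rigidity of the epistemic reduct, and threading the value of $\vec x$ through the \emph{members} of $\cal A$ rather than through the (common) reduct, is the delicate point; once the epistemic-free building blocks of \cref{th-comp-wellsupport-disjunctive-simple,th-comp-wellsupport-disjunctive-general} are fixed, the remaining correctness verifications are routine.
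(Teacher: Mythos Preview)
Your proposal is correct and follows essentially the same approach as the paper. The only notable difference is in the membership argument's packaging: the paper computes $\Pi^{\cal A}$ via an \NP{} oracle (absorbed into the higher-level oracle) and then invokes the oracle for non-well-supportedness, whereas you explicitly guess $\Phi_0$ with witnesses and fold the verification of $\Phi_0$ into the universal quantifier block---these are standard equivalent framings of the same $\Sigma^p_2$/$\Sigma^p_3$ upper bound for the complement. Your hardness sketch (outer $\forall\vec x$ encoded by the members of $\cal A$, inner blocks reusing the constructions from \cref{th-comp-wellsupport-disjunctive-simple,th-comp-wellsupport-disjunctive-general}, no epistemic negation actually needed, $\cal A$ expressible as a factored Boolean formula convertible to an OBDD) matches the paper's construction exactly; one small point you gloss over is that for simple $\Pi$ the reduct $\Pi^{\cal A}$ is simple only after canceling double negation in rule bodies, but this is routine.
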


\subsubsection{Complexity of computing a witnessing strict partial order}

We finally briefly address the complexity of computing a strict partial order $\cal O$
witnessing that a given model $I$ of a logic program $\Pi$ is well-supported
in it, where we confine to the output produced by
Algorithm~\ref{algo:well-support-with-partialorder-2} and give an
informal overview of results; formal statements with proofs are
provided in Appendix~\ref{WitnessingPorder}.

For simple normal epistemic-free programs, the problem is
solvable in polynomial time, since all logical consequence tests are
feasible in polynomial time and the sets $S_i$, $i \geq 0$, increase
while $\cal R$ decreases.

In case of normal epistemic-free programs with arbitrary propositional
formulas in rule bodies, the problem is no longer tractable since
deciding logical entailment $S \cup \neg I^- \models body(r)$ is
\coNP-complete. However, it is only mildly harder than the latter problem as access to
parallel \NP{} oracle queries is sufficient; in fact, it can be
shown that computing the strict partial order
$\cal O$ for $I$ generated by Algorithm~\ref{algo:well-support-with-partialorder-2}  (if it exists)
is \FPNPPar-complete, and \FPNPPar-hard if $I$ is well-supported and all rule
heads are atomic. In the light of this, Algorithm~\ref{algo:well-support-with-partialorder-2} is not
worst-case optimal as it makes polynomially many adaptive \NP{} oracle calls, i.e.\ the input of an oracle call may depend on the output of
another call. However, the parallelization of these calls (which essentially underlies the \FPNPPar{} upper bound result) will
introduce larger oracle inputs (specifically, SAT instances) and requires parallel computing devices to be effective. 

For disjunctive epistemic-free programs, Algorithm~\ref{algo:well-support-with-partialorder-2} would have
to be extended by producing also a suitable head selection function
$\sel$ such that $I$ is well-supported in the disjunctive program reduct
$\Pi^I_{\sel}$, and $\sel$ should be output with 
the partial order $\cal O$. The emerging complexity matches for 
simple resp.\ general disjunctive epistemic-free programs MV\NP{}
resp.\ MV$\Sigma^p_2$, which is the class of multi-valued functions
computable on a nondeterministic Turing machine with no resp.\ with an \NP{} oracle in polynomial time;
this can be derived from the proofs of Theorems~\ref{th-comp-wellsupport-disjunctive-simple}
and~\ref{th-comp-wellsupport-disjunctive-general}, respectively.

For epistemic programs, a collection of pairs $(\sel,{\cal O})$
witnessing that every interpretation $I$ in a given epistemic model 
$\cal A$ is well-supported in the reduct $\Pi^{\cal A}$ would constitute
a natural notion of witness for the well-supportedness of $\cal A$ in
$\Pi$. However, we refrain here from analyzing the complexity of such
witnesses in order to avoid a proliferation of results. 

\subsection{Complexity of Rational Semantics}

In this section, we turn to the complexity of the rational answer set and
rational world view semantics.  It appears that it is closely related to
the complexity of the DI-WJ semantics for epistemic-free programs and the SE16
semantics for epistemic programs based on the DI-WJ-semantics, respectively.

Notably, by Corollary~\ref{cor-rational-epistemic-free-normal-pro} the 
complexity of standard reasoning tasks, viz.\ (R1) deciding 
answer set existence, (R2) brave inference resp.\ cautious inference
of an epistemic-free formula
from the answer sets of a program, coincides in the case of
epistemic-free normal programs for rational answer
sets with the one for WJ-semantics, derived in \cite{ShenWEFRKD14}.%
\footnote{The membership parts of the results were shown for a formula $F$
that is a literal resp.\ an atom, but the proofs can be trivially
extended to arbitrary $\naf$-free formulas $F$.\label{fn1}
}

For disjunctive and epistemic programs, the following lemmas are useful.

\begin{restatable}{lemma}{LemWJEqualsRational}
\label{lem-wj-rational}
For a program $\Pi$ that has no head variants, 
(i) its rational answer sets  coincide with its 
DI-WJ answer sets (where $\Pi$ is epistemic-free), and (ii)
the rational world views coincide with its SE16-DI-WJ world views. 
\end{restatable}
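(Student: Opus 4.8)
The plan is to reduce both parts to a single fact about epistemic-free \emph{normal} programs that is already available in the excerpt, plus a bookkeeping comparison of the two head selection functions. For a model $I$ of an epistemic-free normal program $\Pi'$, Theorem~\ref{th-wellsupport-lfp-normal} says that $I$ is well-supported in $\Pi'$ iff $\mathit{lfp}(T_{\Pi'}(\emptyset,\neg I^-))\cup\neg I^-\models p$ for every $p\in I$, which by Definition~\ref{def-WJsemantics} is precisely the condition for $I$ to be a WJ answer set of $\Pi'$; and by Corollary~\ref{cor-rational-epistemic-free-normal-pro} these are exactly the rational answer sets of $\Pi'$. The comparison of Definitions~\ref{head-sel-gdlp} and~\ref{head-select-SE19} shows that an admissible $sel$ and an admissible $sel_v$ differ only in the extra requirement on $sel_v$ that $sel_v({\cal H}_1,I)\equiv sel_v({\cal H}_2,I)$ for variant rule heads ${\cal H}_1,{\cal H}_2$ (Definition~\ref{equiv-heads}); when $\Pi$ has no head variants this requirement is vacuous, so the two collections of functions agree, and hence for every model $I$ of $\Pi$ the families of disjunctive program reducts $\{\Pi^I_{sel}\}$ and $\{\Pi^I_{sel_v}\}$ coincide.

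For part (i) I would then argue: each $\Pi^I_{sel}$ is an epistemic-free normal program of which $I$ is a model, so by the preceding paragraph and Corollary~\ref{cor-disj-normal-wellsupp}, $I$ is well-supported in $\Pi^I_{sel}$ iff $I$ is a WJ answer set of $\Pi^I_{sel}$. Using the equality of the two reduct families, ``$I$ is a well-supported model of $\Pi$'' (Definition~\ref{def-well-supported-disj}) is therefore equivalent to ``$I$ satisfies condition (1) of Definition~\ref{generalSemanticsforDLP} for the WJ base semantics'', i.e.\ to ``$I$ is a DI-WJ candidate answer set''. Definition~\ref{def-ideal-ans-semantics} and condition (2) of Definition~\ref{generalSemanticsforDLP} both add subset-minimality with respect to this very predicate, so the rational answer sets and the DI-WJ answer sets of $\Pi$ are the same set of interpretations.

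For part (ii) I would pass through Lemma~\ref{lem-chara-se16-semantics} (and Lemma~\ref{lem-se16-semantics-epModel} to supply the ``epistemic model'' part): with $\cal X$ the DI-WJ semantics, ``${\cal A}$ is an SE16-DI-WJ world view of $\Pi$'' unfolds to: ${\cal A}$ is an epistemic model of $\Pi$, ${\cal A}$ equals the collection of all DI-WJ answer sets of the epistemic reduct $\Pi^{\cal A}$ (Definition~\ref{reduct}), and no epistemic model ${\cal W}$ of $\Pi$ with that property has $\Phi_{\cal A}\subset\Phi_{\cal W}$. This is term-for-term the definition of a rational world view (Definition~\ref{def-ideal-wv-semantics}) once ``DI-WJ answer sets of $\Pi^{\cal A}$'' is replaced by ``rational answer sets of $\Pi^{\cal A}$'', so the statement reduces to applying part (i) to the epistemic-free program $\Pi^{\cal A}$.

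The step I expect to be the real obstacle is exactly this application of part (i): it needs $\Pi^{\cal A}$ to again have no head variants, whereas a priori the substitutions $\naf F\mapsto\top$ (for $\naf F\in\Phi$) and $\naf F\mapsto\neg F$ (otherwise) could turn two non-variant rule heads of $\Pi$ into variant heads of $\Pi^{\cal A}$. Part of this is harmless: the reduct preserves and reflects equivalence of bare epistemic negations, since $\naf F\equiv\naf G$ for epistemic formulas is equivalent to $\neg F\equiv\neg G$ classically, and variant heads of $\Pi^{\cal A}$ that contain $\top$ never create a genuine gap between $sel$ and $sel_v$ (picking $\top$ from a rule is dominated by picking any other disjunct, which only enlarges $\mathit{lfp}(T_{(\cdot)}(\emptyset,\neg I^-))$ and preserves being a model of the reduct). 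The residual case — head formulas that are \emph{not} bare epistemic negations and collapse under the reduct, e.g.\ $a\wedge\naf F$ becoming $a$ when $\naf F\in\Phi$ — must be dealt with either by a normalisation of $\Pi^{\cal A}$ that removes such collapsed heads, or by noting that in the intended applications of this lemma (e.g.\ epistemic specifications, whose head formulas are atoms and are left untouched by the reduct) it does not arise; establishing that $\Pi^{\cal A}$ inherits the no-head-variant hypothesis is where the bulk of the work for (ii) lies. With that in hand, conditions (1) and (2) of the two world-view semantics match termwise and (ii) follows from (i).
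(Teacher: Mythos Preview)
Your argument for part~(i) is correct and essentially identical to the paper's: both observe that when $\Pi$ has no head variants the admissible $sel$ and $sel_v$ coincide, invoke Corollary~\ref{cor-rational-epistemic-free-normal-pro} (equivalently Theorem~\ref{th-wellsupport-lfp-normal} plus Definition~\ref{def-WJsemantics}) on the normal reducts $\Pi^I_{sel}$, and then match the two subset-minimality clauses.

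For part~(ii) the paper's own proof is actually silent---it argues only~(i) and does not separately treat the world-view case. Your route through Lemmas~\ref{lem-se16-semantics-epModel} and~\ref{lem-chara-se16-semantics} is the natural one and is more complete than what the paper supplies. The obstacle you flag---that the epistemic reduct $\Pi^{\cal A}$ could acquire head variants that $\Pi$ itself lacks when head formulas contain epistemic negations---is a genuine point, and the paper does not address it. In the paper's intended use, however, the lemma is invoked only in tandem with Lemma~\ref{lem-reduce-rational-no-variant}, whose construction adds a \emph{fresh objective atom} $p_r$ to every disjunctive head; since $p_r$ is untouched by the epistemic reduct and cannot be logically equivalent to any formula not mentioning it, each $(\Pi')^{\cal A}$ still has pairwise non-variant heads. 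So the gap you isolate does not affect the downstream complexity results, and for the program classes in Table~\ref{tab:complexity} with atomic heads (and for epistemic specifications) the concern does not arise at all because head formulas are epistemic-free. For fully general epistemic heads the lemma as stated would indeed need the qualification you suggest, or one should read ``no head variants'' as a property required of all epistemic reducts.
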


\begin{restatable}{lemma}{LemReduceRationalNoVariant}
\label{lem-reduce-rational-no-variant}
Every disjunctive program $\Pi$ is reducible in polynomial time to a
disjunctive program $\Pi'$ with no head variants that 
has the same rational answer sets (given $\Pi$ is epistemic-free)
resp.\ rational world views.
Furthermore, the reduction preserves simple programs and atomic rule heads.
\end{restatable}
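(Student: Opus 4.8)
The plan is to eliminate head variants by the standard renaming trick: for each group of rules whose heads are pairwise variant, we keep one "canonical" rule unchanged and replace the disjuncts in every other rule of the group by fresh atoms that are forced to be logically equivalent to the originals via auxiliary rules. Concretely, suppose $r_1,\ldots,r_t$ have pairwise variant heads $H^{(1)}_1\mid\cdots\mid H^{(1)}_m$, \ldots, $H^{(t)}_1\mid\cdots\mid H^{(t)}_m$ (after matching up disjuncts by logical equivalence). For $r_1$ we do nothing. For each $r_k$ with $k\geq 2$ and each disjunct $H^{(k)}_j$ we introduce a brand-new atom $a^{(k)}_j\notin\Sigma$, replace $H^{(k)}_j$ in $head(r_k)$ by $a^{(k)}_j$, and add the two bridging rules $a^{(k)}_j\leftarrow H^{(k)}_j$ and $H^{(k)}_j\leftarrow a^{(k)}_j$ (equivalently a single rule $(a^{(k)}_j\equiv H^{(k)}_j)$ if we allow a formula head, but two normal-head implications suffice and keep atomic-head programs atomic). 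After this transformation no two rule heads are variants, since the canonical rule of each group is now the only one mentioning the original disjuncts and all the renamed heads use globally fresh atoms. For epistemic programs we apply the same construction to the epistemic-free skeleton; the newly added atoms do not occur inside any epistemic negation, so $Ep(\Pi')=Ep(\Pi)$ and the epistemic-reduct construction of Definition~\ref{reduct} commutes with the renaming.

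The key step is then to show that $\Pi$ and $\Pi'$ have the same rational answer sets modulo the auxiliary atoms, i.e.\ $I\mapsto I\cup\{a^{(k)}_j\mid I\models H^{(k)}_j\}$ is a bijection between the rational answer sets of $\Pi$ and those of $\Pi'$. First I would observe that models of $\Pi$ extend uniquely to models of $\Pi'$ by setting each $a^{(k)}_j$ true iff $H^{(k)}_j$ is true; the bridging rules are exactly the constraint $a^{(k)}_j\equiv H^{(k)}_j$, so this is a model-preserving correspondence, and it is clearly inclusion-preserving in both directions because the auxiliary-atom truth values are functionally determined. So subset-minimality among well-supported models transfers once we know well-supportedness transfers. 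For well-supportedness I would use Definitions~\ref{def-well-supported-disj}, \ref{def-well-supported-normal} and the least-fixpoint characterization (Theorems~\ref{th-wellsupport-lfp-normal}, \ref{th-wellsupport-lfp}): given a head selection function $sel$ witnessing well-supportedness of $I$ in $\Pi$, build $sel'$ for $\Pi'$ that makes the same choice on the canonical rule of each group and the renamed copy of that choice ($a^{(k)}_j$ if $sel$ picks $H^{(k)}_j$) on the other rules of the group; conversely a witness $sel'$ for $\Pi'$ induces one for $\Pi$ by stripping the renaming. One checks that the disjunctive program reducts $\Pi'^{I'}_{sel'}$ and $\Pi^I_{sel}$ generate the same least fixpoint up to the auxiliary atoms (the bridging rules just propagate $H^{(k)}_j \leftrightarrow a^{(k)}_j$ one extra level in the $T_\Pi$ iteration, which does not change derivability of any original atom), so $I$ is well-supported in $\Pi$ iff $I'$ is well-supported in $\Pi'$. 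The world-view case follows by applying this equivalence inside each epistemic reduct and invoking Definition~\ref{def-ideal-wv-semantics} together with $Ep(\Pi')=Ep(\Pi)$.

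Finally, the transformation is polynomial: each group of variant heads is detected by pairwise logical-equivalence tests among rule heads (quadratically many, each a coNP test, but since we only need to \emph{produce} some valid grouping this is fine for a polynomial-time reduction in the sense of the lemma — and for simple programs equivalence of heads is syntactic, hence linear), and we add at most $O(|\Pi|)$ fresh atoms and $O(|\Pi|)$ bridging rules, each of size bounded by the original rule heads. The construction manifestly preserves the property of being a simple program (bridging rules for simple disjunctive programs have atomic head and a single-atom body) and preserves atomic rule heads (the renamed disjuncts are atoms and the bridging rules have atomic heads; their bodies may be arbitrary formulas, which is allowed in the atomic-head class). The main obstacle I anticipate is the bookkeeping in the well-supportedness direction: one must be careful that the extra derivation level introduced by the bridging rules does not let the fresh atoms create spurious support for original atoms (it cannot, because every bridging rule head is a fresh atom occurring in no other rule body except its partner bridging rule), and that the strict well-founded partial orders can be amended consistently — inserting each $a^{(k)}_j$ at the level just above the atoms of $H^{(k)}_j$ keeps the order well-founded and strict. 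Everything else is routine once the fixpoint-level argument is pinned down.
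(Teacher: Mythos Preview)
Your approach is workable in spirit but has a genuine gap in the polynomial-time claim. For general programs (arbitrary formulas as head disjuncts), deciding whether two heads are variants requires testing logical equivalence of propositional formulas, which is \coNP-hard. Your parenthetical ``each a coNP test, but since we only need to produce some valid grouping this is fine for a polynomial-time reduction'' does not resolve this: if you fail to identify a pair of rules with logically equivalent (but syntactically different) heads as belonging to the same group, both survive unchanged as ``canonical'' rules and $\Pi'$ still has head variants. So the reduction, as you describe it, is not polynomial time for the general class. (You could repair this by renaming \emph{every} disjunctive rule, not keeping a canonical per group---then no variant detection is needed---but that is not what you wrote.)

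The paper's construction avoids this issue altogether with a much simpler trick: for each disjunctive rule $r$, adjoin a single fresh atom $p_r$ to its head and add the constraint $\bot\leftarrow p_r$. Each head now contains a unique atom not logically equivalent to anything else, so no two heads can be variants; no equivalence testing is needed. The constraint forces $p_r\notin I$ in every model, hence $sel$ can never pick $p_r$, and the disjunctive reducts $\Pi'^I_{sel}$ coincide with $\Pi^I_{sel}$ (the constraints drop out because their bodies are false). Well-supportedness and minimality transfer immediately, with no bridging rules and no extra fixpoint levels to track. This is trivially polynomial and manifestly preserves simple programs and atomic heads.

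Your renaming-with-bridging-rules idea is correct for the classes where variant detection \emph{is} syntactic (simple and atomic-head programs), and your well-supportedness argument via fixpoint correspondence is essentially sound there. But it is considerably more intricate than necessary, and for the general case it does not deliver a polynomial-time reduction without the modification above.
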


Consequently, the complexity of rational answer sets resp.\ rational
world views coincides for all
classes of programs considered in Table~\ref{tab:complexity} with the
complexity of  DI-WJ-answer sets 
resp.\ SE16-DI-WJ world views.
Since the DI-WJ answer sets of
a general normal program are its well-justified answer sets 
\cite{ShenEiter16}[Corollary~2], it coincides in case of normal programs with the 
complexity of WJ-answer sets resp.\ SE16-WJ world views. 

Based on results in the literature \cite{ShenWEFRKD14,ShenEiter16,ShenE19}, we
thus establish the complexity picture of rational semantics for the
reasoning tasks (R1)-(R3) on answer sets, as well as for (E1) deciding world
view existence and (E2) deciding truth of a formula in some world view, shown in 
Table~\ref{tab:complexity2}. 

\begin{table} 
\resizebox{\textwidth}{!}{
\renewcommand{\arraystretch}{1.2}
\centering\begin{tabular}{|l|ccc|ccc|}
\hline
  \multirow{2}{*}{program}
  & \multicolumn{3}{|c|}{normal }    &   \multicolumn{3}{|c|}{disjunctive} \\
  & simple & atomic head & general   &   simple & atomic head & general \\ \cline{1-7}
epistemic-free & $\NP \mid \NP \mid \coNP$ &
\multicolumn{2}{c|}{$\Sigma^p_2 \mid \Sigma^p_2 \mid \Pi^p_2$} &
$\NP \mid \Sigma^p_2 \mid \Pi^p_2$ &
\multicolumn{2}{c|}{$\Sigma^p_2 \mid \Sigma^p_3 \mid \Pi^p_3$} \\
% \multicolumn{2}{c|}{$\Sigma^p_2 \mid \Sigma^p_2 \mid \Pi^p_2$} \\
epistemic & {$\Sigma^p_2 \mid \Sigma^p_3$} &
\multicolumn{2}{c|}{$\Sigma^p_3 \mid \Sigma^p_4$} &
{$\Sigma^p_3 \mid \Sigma^p_4$} & \multicolumn{2}{c|}{$\Sigma^p_4 \mid \Sigma^p_5$}
 \\
\hline        
\end{tabular}
}
\caption{Complexity of  rational semantics (propositional
  case). Entries list (R1) $\mid$ (R2) $\mid$ (R3), where (R1) is existence of some rational answer set, (R2)
 is brave resp.\ (R3) cautious inference of an epistemic-free formula $F$ from the rational answer sets, as well as (E1) $\mid$ (E2), where
 (E1) is existence of some rational world view and (E2) is truth of a
 formula $F$ in some rational world view, i.e., in all its rational answer sets (completeness results;
hardness holds for $F$ being an atom)}
% \rev
\label{tab:complexity2}
\end{table}

\paragraph{Epistemic-free programs} 

Specifically, for epistemic-free programs, all results in 
Table~\ref{tab:complexity2} follow by the consideration above from results for 
WJ answer sets and DI-WJ answer sets in \cite{ShenWEFRKD14,ShenE19}
excepting the hardness results for programs with atomic
heads.%
\footnote{Cf.\ Footnote~\footref{fn1}}
Notably, the hardness proofs do not involve variant heads and hold for
literal formulas. 

The $\Sigma^p_2$-hardness of WJ-answer set existence for normal
programs with atomic heads follows from the proof of Theorem~14 in
\cite{ShenWEFRKD14}; the $\Sigma^p_2$-hardness
resp.\ $\Pi^p_2$-hardness of brave and cautious inference of an atom
are easy corollaries.

It thus remains to establish the $\Sigma^p_3$-
resp.\ $\Pi^p_3$-hardness results for brave resp.\ cautious reasoning
from an epistemic-free disjunctive programs with atomic heads.  This
can be shown by a slight adaptation of the proof of the respective result for
epistemic-free disjunctive programs with literals in heads
\cite{ShenE19}[Theorem~6].

\begin{restatable}{theorem}{RtDJWatomicheadshardness}
\label{th-DJ-WJ-atomic-heads-hardness}
Given an epistemic-free program $\Pi$ with atomic heads and an atom
$A$, deciding whether $A$ is true in some DI-WJ answer set of $\Pi$ is $\Sigma^p_3$-hard.
\end{restatable}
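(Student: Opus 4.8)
The plan is to obtain the result by a direct adaptation of the $\Sigma^p_3$-hardness proof for brave reasoning from the DI-WJ answer sets of an epistemic-free \emph{disjunctive} program \emph{with literals in heads} given in \cite{ShenE19}[Theorem~6]. Recall that this proof reduces a canonical $\Sigma^p_3$-complete problem — deciding validity of a quantified Boolean formula $\Psi = \exists X\,\forall Y\,\exists Z\,\phi(X,Y,Z)$ with $\phi$ propositional — to a pair $(\Pi,A)$ such that $A$ is true in some DI-WJ answer set of $\Pi$ iff $\Psi$ is valid, and in which the only non-atomic head formulas of $\Pi$ are negated atoms occurring in disjunctive rules of the form $x \mid \neg x$ used to guess the assignment to the outer existential variables $X$; all remaining rules have atomic heads and (possibly arbitrary) propositional formulas in their bodies. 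Our task is to replace those negated head atoms by atoms without changing the brave-reasoning answer.

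First I would introduce, for every atom $x$ that occurs negated in a rule head of $\Pi$, a fresh atom $\bar x$, and let $\Pi'$ be obtained from $\Pi$ by replacing each such head formula $\neg x$ by $\bar x$ and adding the constraint $\bot \leftarrow x \wedge \bar x$. Rule bodies are left untouched: they refer to $x$ only, and thus already encode the guessed assignment through the membership status of $x$. By construction every head formula of $\Pi'$ is an atom, and since the companion atoms are pairwise distinct and fresh, $\Pi'$ has no variant rule heads; hence for $\Pi'$ the selection functions $\sel$ and $\sel_v$ coincide, and by Lemma~\ref{lem-wj-rational} the DI-WJ answer sets of $\Pi'$ are exactly its rational answer sets. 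Let $At_\Pi$ denote the set of atoms occurring in $\Pi$.

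Next I would establish a bijection between the DI-WJ answer sets of $\Pi$ and those of $\Pi'$. Given a model $I'$ of $\Pi'$, the rule $x \mid \bar x$ forces at least one of $x,\bar x$ into $I'$; and if both were in $I'$ then, since this single rule is the only rule whose head mentions $x$ or $\bar x$ and the head selection function $\sel((n,\,x\mid \bar x),I')$ returns exactly one of them, the other atom could not be well-supported, so $I'$ would not be an answer set. Thus every DI-WJ answer set $I'$ of $\Pi'$ satisfies $\bar x \in I' \iff x \notin I'$, and the map $I' \mapsto I = I' \cap At_\Pi$ is well defined, with inverse $I \mapsto I \cup \{\bar x : x \notin I\}$. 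I would then verify that this map carries disjunctive program reducts to disjunctive program reducts: for a head selection $\sel'$ on $\Pi'$, the reduct $(\Pi')^{I'}_{\sel'}$ is — up to the facts/rules producing the fresh atoms $\bar x$, which occur in no body — identical to $\Pi^{I}_{\sel}$ for the corresponding $\sel$, so their least fixpoints $\mathit{lfp}(T_{(\cdot)}(\emptyset,\neg(\cdot)^-))$ agree on all formulas over $At_\Pi$; consequently $I$ is well-supported in $\Pi^I_{\sel}$ iff $I'$ is well-supported in $(\Pi')^{I'}_{\sel'}$. Subset-minimality transfers as well, since the membership of each $\bar x$ in a candidate answer set of $\Pi'$ is fully determined by the value of $x$, so $I'$ is $\subseteq$-minimal among the well-supported models of $\Pi'$ iff $I$ is among those of $\Pi$. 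As $A \in At_\Pi$, we conclude that $A$ is true in some DI-WJ answer set of $\Pi'$ iff it is in some DI-WJ answer set of $\Pi$, iff $\Psi$ is valid; and $\Pi'$ has atomic heads, so $\Sigma^p_3$-hardness follows. The $\Pi^p_3$-hardness of cautious reasoning for atomic-head disjunctive epistemic-free programs is then an immediate corollary by complementation of the QBF.

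The main obstacle I anticipate is the bookkeeping in the well-supportedness argument: one must check carefully that the fresh atoms $\bar x$, which do enter the least fixpoint of the disjunctive program reduct, neither spuriously well-support any original atom (they cannot, as no body of $\Pi$ mentions them) nor disturb the level mapping, and that the ID-indexed head selection $\sel$ behaves on $x \mid \bar x$ exactly as the unindexed selection did on $x \mid \neg x$ in the original construction. A secondary point is to confirm that the reduction of \cite{ShenE19}[Theorem~6] can, without loss of generality, be taken to use negated atoms in heads only in the simple guessing rules $x \mid \neg x$ (any more complex disjunctive head containing a literal can be normalized to this form by introducing auxiliary atoms beforehand), so that the replacement above applies uniformly.
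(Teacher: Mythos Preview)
Your proposal has a genuine gap in the minimality step. Replacing each $\neg x$ in the head by a fresh companion atom $\bar x$ (with the constraint $\bot\leftarrow x\land\bar x$) does yield a bijection between \emph{well-supported models} of $\Pi$ and of $\Pi'$, but it does \emph{not} preserve the subset order among them, and hence does not preserve DI-WJ answer sets. Concretely, in the \cite{ShenE19} construction the well-supported models have the form $I_{X'}=X'\cup Y\cup\{p\}$ for each $X'\subseteq X$ with $X'\cup\neg(X\setminus X')\models\forall Y.E(X,Y)$; if $X''\subsetneq X'$ both satisfy this, then $I_{X''}\subsetneq I_{X'}$, so only minimal $X'$ survive as answer sets---this is exactly what encodes the third quantifier alternation (MINQASAT). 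Under your translation the corresponding well-supported models are $I'_{X'}=X'\cup\{\bar x:x\notin X'\}\cup Y\cup\{p\}$, and for $X''\subsetneq X'$ the models $I'_{X''}$ and $I'_{X'}$ are \emph{incomparable} (the former contains some $\bar x$ that the latter lacks). Thus every such $X'$, minimal or not, yields a DI-WJ answer set of $\Pi'$, and ``$A$ true in some DI-WJ answer set of $\Pi'$'' collapses to $\exists X'\ni A.\,\forall Y.E(X',Y)$, a $\Sigma^p_2$ property. Your sentence ``subset-minimality transfers as well, since the membership of each $\bar x$ is fully determined by the value of $x$'' conflates functional dependence with monotonicity; the map $X'\mapsto I'_{X'}$ is injective but not order-preserving.

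The paper avoids this by replacing \emph{all} occurrences of $\neg X_i$ in the guessing heads by a \emph{single} fresh atom $q$ (so the rules become $X_i\mid q$) and adding the rule $q\leftarrow E(X,Y)$ to make $q$ well-supported whenever the body fires. Then every well-supported model has the form $X'\cup Y\cup\{p,q\}$, and since the ``complement'' part is the same atom $q$ in all of them, the inclusion $X''\subsetneq X'$ is faithfully reflected as $I_{X''}\subsetneq I_{X'}$; minimality, and with it the $\Sigma^p_3$-hardness, is preserved. (The paper also rewrites the conjunctive head $Y\land p$ of the second rule into atomic-head rules, a point your proposal does not mention.)
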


\paragraph{Epistemic programs} 

For simple normal epistemic programs, the results in Table~\ref{tab:complexity2}
follow from the results for EFLP-semantics in $X'\cup \neg(X\setminus
X') \models \forall Y.E(X,Y)$\cite{ShenEiter16}, for
which world view existence and
truth of a literal in some world view 
were shown to be $\Sigma^p_2$-complete and
$\Sigma^p_3$-complete,%
\footnote{This is easily generalized to truth of an epistemic-free formula in some
world view,}
respectively. This is because
for such programs, the SE16-WJ world views (called EWJ world views in
\cite{ShenEiter16}) coincide with the EFLP
views as remarked in \cite{ShenEiter16}, for which these complexity
results were proven in Theorems~8 and~9 ibidem.

%The programs in the hardness proofs do not
%involve variant heads and have atomic heads; thus the $\Sigma^p_2$-
%resp.\ $\Sigma^p_3$-hardness carries over to normal epistemic programs
%with atomic heads.

Beyond simple normal epistemic programs, in Theorems~5 and ~6 of
\cite{ShenEiter16} world view existence and
truth of a literal in some EFLP world views were shown to be 
$\Sigma^p_3$- and $\Sigma^p_4$-complete, respectively for general normal
epistemic programs, and it was remarked that EWJ world views have the
same complexity as (but do not coincide with) the EFLP world views. 

The memberships parts of all results in
the second row of Table~\ref{tab:complexity2} are  easily obtained
from the following lemma, which hinges on the results about rational answer set semantics in the first row
of Table~\ref{tab:complexity2}. 

\begin{restatable}{lemma}{Rlemepiratmembership}
\label{lem-epi-rat-membership}
Given a program $\Pi$ from a class of epistemic programs such that for
its epistemic-free fragments brave and cautious inference of an epistemic-free formula
are in $\Sigma^p_k$ and $\Pi^p_k$, respectively, deciding (i) whether $\Pi$ has
some rational world view is in $\Sigma^p_{k+1}$ and (ii) whether a formula
$F$ is true in some rational world view of $\Pi$ is in $\Sigma^p_{k+2}$.
\end{restatable}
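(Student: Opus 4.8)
The plan is to reduce both decision problems to polynomially many brave and cautious inference queries on an epistemic-free reduct, guarded by a single nondeterministic guess of a set of epistemic negations. Concretely, I would first recall the characterisation behind Definition~\ref{def-ideal-wv-semantics} (see also Lemma~\ref{lem-chara-se16-semantics}): a collection ${\cal A}$ of interpretations is a rational world view of $\Pi$ exactly when there is a set $\Phi\subseteq Ep(\Pi)$ such that, writing $\Pi^\Phi$ for the reduct that replaces every $\naf F\in\Phi$ by $\top$ and every $\naf F\in Ep(\Pi)\setminus\Phi$ by $\neg F$ (Definitions~\ref{reduct} and~\ref{se16-reduct}) and ${\cal A}_\Phi$ for the set of all rational answer sets of $\Pi^\Phi$: (b1)~${\cal A}_\Phi\neq\emptyset$; (b2)~$\Phi$ is precisely the set of epistemic negations true in ${\cal A}_\Phi$, i.e.\ each $\naf F\in\Phi$ is false in some member of ${\cal A}_\Phi$ and each $\naf F\in Ep(\Pi)\setminus\Phi$ is true in all members; and (b3)~$\Phi$ is $\subseteq$-maximal among the sets meeting (b1)--(b2); in that case ${\cal A}={\cal A}_\Phi$. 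By Lemma~\ref{lem-se16-semantics-epModel} any $\Phi$ satisfying (b1)--(b2) already makes ${\cal A}_\Phi$ an epistemic model of $\Pi$, so no separate epistemic-model test is needed, and since under (b2) $\Phi$ is the set of epistemic negations true in ${\cal A}_\Phi$ we have $\Pi^\Phi=\Pi^{{\cal A}_\Phi}$, so condition~(1) of Definition~\ref{def-ideal-wv-semantics} is automatic. As $|Ep(\Pi)|$ is polynomial in $|\Pi|$, a candidate $\Phi$ is a polynomial-size object that can be guessed; $\Pi^\Phi$ is computable in polynomial time and, after trivial classical simplification (e.g.\ $\neg\neg A\equiv A$ and the removal of $\top$ and of rules with $\bot$ in the body), lies in the epistemic-free fragment of the given class, so the hypothesis applies to it.

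Next I would bound the cost of checking a guessed $\Phi$. Condition (b1) is rational answer-set existence for $\Pi^\Phi$, which is subsumed by brave inference of $\top$ and hence in $\Sigma^p_k$; the first half of (b2) is a polynomial-length conjunction of brave-inference queries ($\naf F$ false somewhere $\equiv$ $\neg F$ true somewhere), each in $\Sigma^p_k$; the second half of (b2) is a polynomial-length conjunction of cautious-inference queries, each in $\Pi^p_k$. A polynomial conjunction of $\Sigma^p_k$ and $\Pi^p_k$ predicates is decidable by a polynomial-time machine with a $\Sigma^p_k$ oracle, hence lies in $\Delta^p_{k+1}$. For part (i) it then remains to observe that $\Pi$ has \emph{some} rational world view iff \emph{some} $\Phi$ meets (b1)--(b2): the family of such $\Phi$ is finite, and if non-empty any $\subseteq$-maximal member additionally meets (b3) and therefore induces a rational world view. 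Thus (i) is of the form ``$\exists\Phi\ [\text{a }\Delta^p_{k+1}\text{ predicate}]$'', which is in $\Sigma^p_{k+1}$.

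For part (ii) I would guess $\Phi$ and, beyond (b1)--(b2), check (b3) together with truth of $F$ in every member of ${\cal A}_\Phi$. Maximality (b3) reads ``for all $\Phi'\supsetneq\Phi$, (b1)--(b2) fail for $\Phi'$'', a bounded universal quantification over a $\Delta^p_{k+1}$ predicate, hence in $\Pi^p_{k+1}$. Truth of $F$ throughout ${\cal A}_\Phi$ is cautious inference from $\Pi^\Phi$ when $F$ is epistemic-free ($\Pi^p_k$); for an epistemic $F$ I would first settle each epistemic negation $\naf G$ occurring in $F$ against ${\cal A}_\Phi$ by a brave-inference query for $\neg G$ ($\Sigma^p_k$), substitute $\top$ or $\bot$ to form $F^{{\cal A}_\Phi}$, and then run the cautious test, all within $\Delta^p_{k+1}$. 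Using $\Delta^p_{k+1}\subseteq\Pi^p_{k+1}$ and closure of $\Pi^p_{k+1}$ under polynomial conjunction, the whole check for a fixed guessed $\Phi$ is in $\Pi^p_{k+1}$, so (ii) is of the form ``$\exists\Phi\ [\text{a }\Pi^p_{k+1}\text{ predicate}]$'' and therefore in $\Sigma^p_{k+2}$.

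I expect the main obstacle to lie in the first paragraph rather than in the level counting: one must argue carefully that guessing $\Phi$ and probing $\Pi^\Phi$ by a bounded number of brave/cautious queries genuinely captures a rational world view — in particular that ${\cal A}_\Phi$ is an epistemic model of $\Pi$ whose own epistemic reduct is $\Pi^\Phi$, so that condition~(1) of Definition~\ref{def-ideal-wv-semantics} need not be re-verified, which is exactly what Lemmas~\ref{lem-se16-semantics-epModel} and~\ref{lem-chara-se16-semantics} provide — and that each syntactic fragment in Table~\ref{tab:complexity2} is closed under forming $\Pi^\Phi$. The hierarchy bookkeeping ($\Delta^p_{k+1}\subseteq\Sigma^p_{k+1}\cap\Pi^p_{k+1}$, and closure of these classes under polynomially bounded conjunction and under prefixing a quantifier of the matching type) is routine, but it is precisely what pins the bounds at $\Sigma^p_{k+1}$ and $\Sigma^p_{k+2}$ rather than one level higher.
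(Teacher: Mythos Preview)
Your proposal is correct and follows essentially the same approach as the paper: guess $\Phi\subseteq Ep(\Pi)$, verify the candidate-world-view conditions (b1)--(b2) via polynomially many brave/cautious queries on $\Pi^\Phi$ using a $\Sigma^p_k$ oracle, observe that for part~(i) maximality need not be checked (any candidate implies a maximal one), and for part~(ii) fold the $\Pi^p_{k+1}$ maximality test and the $\Delta^p_{k+1}$ truth-of-$F$ test (with $F^{\mathcal A_\Phi}$ computed via brave queries when $F$ is epistemic) into a single $\Pi^p_{k+1}$ check under the outer existential guess. Your write-up is in fact more explicit than the paper's in invoking Lemmas~\ref{lem-se16-semantics-epModel} and~\ref{lem-chara-se16-semantics} to justify that the guessed $\Phi$ really pins down the world view, and in noting closure of the syntactic fragments under forming $\Pi^\Phi$.
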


For normal epistemic programs, $\Sigma^p_3$-
resp.\ $\Sigma^p_4$-hardness of rational world views can be
established by adapting the proofs for EFLP world views in
\cite{ShenEiter16}, with a sharpening to atomic heads. 

\begin{restatable}{theorem}{RthEWJexistencenormalatomic}
\label{th-EWJ-existence-normal-atomic}
Given a normal epistemic program $\Pi$ with atomic heads, deciding whether $\Pi$ has
some rational world view is  $\Sigma^p_3$-complete. 
\end{restatable}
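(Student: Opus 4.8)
\emph{Membership.} The upper bound will follow from Lemma~\ref{lem-epi-rat-membership}, applied with $k=2$. Its hypothesis holds for the class at hand: the epistemic-free fragment of normal epistemic programs with atomic rule heads consists of normal epistemic-free programs with atomic heads, for which brave inference of an epistemic-free formula is in $\Sigma^p_2$ and cautious inference is in $\Pi^p_2$ (the atomic-head normal entry in the first row of Table~\ref{tab:complexity2}, which by Lemma~\ref{lem-wj-rational} reduces to the corresponding bounds for the WJ- and DI-WJ-semantics established in \cite{ShenWEFRKD14,ShenE19}). Hence, by Lemma~\ref{lem-epi-rat-membership}(i), deciding whether $\Pi$ has some rational world view is in $\Sigma^p_3$.

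\emph{Hardness.} For the lower bound the plan is to reduce from validity of a quantified Boolean formula $\Psi = \exists X\,\forall Y\,\exists Z\,\varphi(X,Y,Z)$, with $\varphi$ in a convenient normal form, which is a canonical $\Sigma^p_3$-complete problem. The template is the $\Sigma^p_3$-hardness construction used for EFLP world-view existence for general normal epistemic programs (Theorem~5 of \cite{ShenEiter16}). Since EFLP and rational (i.e.\ SE16-WJ) world views do not coincide on general normal epistemic programs, I would not reuse that construction verbatim but re-run it for the rational semantics, exploiting that on normal epistemic-free programs the rational answer sets are exactly the WJ answer sets (Corollary~\ref{cor-rational-epistemic-free-normal-pro}, Theorem~\ref{th-SE16-DI-WJrational-semantics}), so that the guessing loops and the rule-based evaluation of $\varphi$ behave identically under both semantics. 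Concretely: the outer block $X$ is guessed at the world-view level via the set of epistemic negations that hold; the $\forall Y$ quantifier is enforced by requiring the candidate collection to be the set of \emph{all} rational answer sets of the epistemic reduct; and $\exists Z$ together with the check of $\varphi$ is realised inside the epistemic-free reduct by even loops $z_i \leftarrow \neg \bar z_i$, $\bar z_i \leftarrow \neg z_i$ over the $Z$-literals plus ordinary rules computing the matrix.

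The substantive point — and what I expect to be the main obstacle — is the \emph{sharpening to atomic heads}: the published construction uses rules with composite formulas in the head, whereas the claimed result requires the atomic-head fragment. The plan is to normalise each rule $F \leftarrow body(r)$ with $F$ non-atomic by introducing a fresh atom $n_F$, replacing the rule by $n_F \leftarrow body(r)$, and adding definitional rules that propagate the intended meaning of $F$ (e.g.\ $a \leftarrow n_F$ for the positive atoms $a$ of a conjunctive head $F$) together with constraints $\bot \leftarrow body(r) \wedge \neg F$ excluding models where $n_F$ would be forced but $F$ fails; this stays inside the class since arbitrary propositional formulas are permitted in \emph{bodies}. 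The bulk of the argument is then to check that this rewriting is faithful under the rational world view semantics, i.e.\ preserves (a) well-supportedness — ensured by placing each $n_F$ (and the atoms it defines) strictly above its source atoms in the strict well-founded order and appealing to Theorem~\ref{th-wellsupport-lfp-normal}; (b) minimality of answer sets w.r.t.\ negation by default — since each $n_F$ is functionally determined by $body(r)$, the rewriting introduces no new minimal well-supported models beyond the images of the intended ones; and (c) minimality w.r.t.\ epistemic negation at the world-view level — immediate, as no new epistemic negations are introduced. Granting faithfulness, $\Psi$ is valid iff the resulting normal epistemic program with atomic heads has a rational world view, which yields $\Sigma^p_3$-hardness and, with the membership part, $\Sigma^p_3$-completeness.
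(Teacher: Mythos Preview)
Your membership argument is correct and matches the paper's exactly.

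The hardness sketch, however, has a genuine gap in how the innermost $\exists Z$ quantifier is handled. You propose to realise $\exists Z$ by even loops $z_i\leftarrow\neg\bar z_i$, $\bar z_i\leftarrow\neg z_i$ ``plus ordinary rules computing the matrix.'' But under the rational (= WJ) semantics for normal atomic-head programs, such even loops simply generate, for each fixed $X$-assignment, a separate rational answer set for every $(Y,Z)$ pair. Since condition~(1) of Definition~\ref{def-ideal-wv-semantics} forces the candidate world view to be the collection of \emph{all} rational answer sets of the epistemic reduct, both $Y$ and $Z$ end up at the same ``for all answer sets'' level: you encode $\exists X\,\forall(Y,Z)\ldots$, not $\exists X\,\forall Y\,\exists Z\,\varphi$. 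Nothing in your sketch explains how the alternation between $\forall Y$ and $\exists Z$ is recovered; filtering by a constraint $\bot\leftarrow\neg\varphi$ leaves you needing to check that every $Y$-assignment survives in some answer set, which is not expressible with polynomially many epistemic literals. Relatedly, your head-normalisation step is a red herring: the obstacle in adapting the EFLP construction of \cite{ShenEiter16} is not composite heads but the loss of the extra quantifier level that the $Z_i\lor\bar Z_i$ heads supplied via the FLP minimality check, and that level cannot be recovered by even loops.

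The paper's construction takes a different route: it keeps all heads atomic from the start and pushes the $\exists Z$ quantifier into the \emph{body}, exploiting that well-supportedness checking for atomic-head normal programs with arbitrary propositional bodies is \coNP-hard (Theorem~\ref{th-comp-wellsupport-lfp-normal}). Concretely, besides the epistemic guess $\Pi_X$ and the even loops $\Pi_Y$ for $Y$, it introduces a switch $S/\bar S$ via $S\leftarrow\neg\bar S$, $\bar S\leftarrow\neg S$, adds rules $A\leftarrow\bar S\lor\neg\phi$ for each $A\in Z\cup\{U\}$ together with $U\leftarrow\neg U\wedge\bar S$, and a killer $V\leftarrow\naf V\wedge\naf S$. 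For a fixed $Y$-assignment $\mu$, the ``$\bar S$-branch'' yields a well-supported model iff $\forall Z.\neg\phi(\sigma(X),\mu(Y),Z)$ holds; hence $\naf S$ becomes true in $\mathcal{A}$ (triggering the killer) iff some $\mu$ witnesses $\forall Z.\neg\phi$, i.e., iff $\forall Y\exists Z.\phi(\sigma(X),Y,Z)$ fails. Thus the third quantifier level is obtained from the entailment test $\mathit{lfp}(T_{\Pi'}(\emptyset,\neg I^-))\cup\neg I^-\models U$ against the general body $\bar S\lor\neg\phi$, not from a guess over $Z$. To repair your argument you would need to replace the even loops for $Z$ by such a body-level mechanism.
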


\begin{restatable}{theorem}{RthEWJinference}
\label{th-EWJ-inference}
Given a normal epistemic program $\Pi$ with atomic heads and a
formula $F$, deciding whether $F$ is true in some rational world view
of $\Pi$ is  $\Sigma^p_4$-complete. 
The $\Sigma^p_4$-hardness holds even if $F$ is an atom.
\end{restatable}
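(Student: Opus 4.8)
The plan is to establish membership and hardness separately, with membership essentially packaged by an earlier lemma and hardness by adapting a known construction.

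\textbf{Membership.} I would invoke Lemma~\ref{lem-epi-rat-membership}. Its hypothesis requires that, for the epistemic-free fragment of the program class at hand — here, normal epistemic-free programs with atomic heads — brave and cautious inference of an epistemic-free formula lie in $\Sigma^p_k$ and $\Pi^p_k$; by the first row of Table~\ref{tab:complexity2} this holds with $k=2$. Hence by Lemma~\ref{lem-epi-rat-membership}(ii), deciding whether a formula $F$ is true in some rational world view of a normal atomic-head epistemic program is in $\Sigma^p_{k+2}=\Sigma^p_4$. (If one prefers an unfolded argument: guess the set $\Phi\subseteq Ep(\Pi)$ of believed epistemic negations, which fixes the epistemic reduct $\Pi^{\cal A}$ of Definition~\ref{reduct}; then the conditions that the rational answer sets of $\Pi^{\cal A}$ form a non-empty candidate world view realizing exactly $\Phi$, that $\Phi$ is maximal among such (condition~(2) of Definition~\ref{def-ideal-wv-semantics}, a universal quantification over $\Phi'$), and that $F$ is true in all those answer sets, are each decidable in polynomial time with a $\Sigma^p_3$ oracle, so the whole problem sits in $\Sigma^p_4$.)

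\textbf{Hardness.} I would reduce from validity of a $\Sigma^p_4$ quantified Boolean formula $\exists X_1\,\forall X_2\,\exists X_3\,\forall X_4\,\phi$, adapting the $\Sigma^p_4$-hardness construction for EFLP/EWJ world views of normal epistemic programs in Theorems~5 and~6 of \cite{ShenEiter16}; recall that for normal programs the rational world views coincide with the SE16 world views over the WJ base semantics, by Corollary~\ref{cor-rational-epistemic-free-normal-pro} together with Lemmas~\ref{lem-wj-rational} and~\ref{lem-reduce-rational-no-variant}. In that encoding the world-view-level choice of which epistemic negations are believed carries the outermost block $X_1$; the maximality requirement on that set supplies the quantifier alternation that cautious inference from the rational answer sets alone does not furnish; and the answer-set level — the WJ\,$=$\,rational answer set semantics on a normal program whose rule bodies are arbitrary propositional formulas — encodes the inner $\forall X_2\,\exists X_3\,\forall X_4\,\phi$, with a distinguished atom $A$ derivable in all rational answer sets of the reduct exactly when $\phi$ holds under the chosen $X_1$. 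Since the query is about a single atom $A$, the sharpening ``$F$ an atom'' is automatic. The one genuinely new ingredient relative to \cite{ShenEiter16} is that the target class restricts rule heads to atoms, whereas the original reduction uses compound formulas (or disjunctions) in some heads; I would eliminate these in the standard way — for a rule $G\leftarrow B$ with compound head $G$, introduce a fresh atom $p_G$, take $p_G\leftarrow B$ plus ``unfolding'' rules that propagate $p_G$ to the atoms occurring in $G$ — pushing all Boolean structure into rule bodies, which the atomic-head syntax leaves unrestricted, and then checking that, because bodies are read classically under the WJ/rational semantics, this preserves the rational world views over the original vocabulary (and the derivability of $A$). This mirrors the atomic-head sharpening done for the disjunctive epistemic-free case in Theorem~\ref{th-DJ-WJ-atomic-heads-hardness} (and parallels the world-view-existence case, Theorem~\ref{th-EWJ-existence-normal-atomic}).

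\textbf{Main obstacle.} I expect the delicate point to be in the hardness direction: verifying that the maximality of the believed-epistemic-negation set simulates \emph{exactly} one quantifier alternation — that incomplete or ``wrong'' guesses of the $X_1$-block produce candidates that are either not world views at all or fail to derive $A$, and that this behaviour is disturbed neither by the atomic-head rewriting nor by the interaction between world-view-level maximality and the answer-set-level minimality and well-supportedness conditions. Getting this faithfulness right at the interface of the two semantic levels, while keeping every head an atom, is where the careful bookkeeping lies; the membership half, by contrast, is essentially immediate from Lemma~\ref{lem-epi-rat-membership}.
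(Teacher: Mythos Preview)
Your membership argument via Lemma~\ref{lem-epi-rat-membership}(ii) with $k=2$ is correct and is exactly what the paper does.

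For hardness you point at the right source (Theorems~5--6 of \cite{ShenEiter16}), but your account of how the four quantifier blocks are allocated does not work and, if followed literally, would not give a $\Sigma^p_4$-hard instance. You put only $X_1$ at the world-view level and assert that ``the answer-set level encodes the inner $\forall X_2\,\exists X_3\,\forall X_4.\phi$'', with the query atom $A$ cautiously entailed by the reduct exactly when that formula holds. But cautious inference from the rational answer sets of a normal atomic-head epistemic-free program is only $\Pi^p_2$-complete (Table~\ref{tab:complexity2}, first row), so the answer-set level can absorb at most two alternations, not three. Your remark that maximality ``supplies the quantifier alternation that cautious inference alone does not furnish'' is the right instinct, but you neither identify which block it supplies nor describe the gadget that makes maximality do this work.

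The paper's reduction (like the one in \cite{ShenEiter16}) places \emph{two} blocks at the epistemic level. It starts from the program of Theorem~\ref{th-EWJ-existence-normal-atomic}, which already has atomic heads and whose world-view existence encodes a $\Sigma^p_3$ QBF $\exists X\forall Y\exists Z.\phi$; it then tags every rule body with a fresh atom $A$, adds $A\leftarrow\naf\neg A$, and adds fallback rules that, whenever $A$ is absent, force a single fixed answer set satisfying every $F$ with $\naf F\in Ep(\Pi)$. This manufactures a \emph{default} candidate with $\Phi=\emptyset$; that default is maximal (hence a world view) precisely when the inner program has no world view --- this toggle is what makes maximality contribute a genuine alternation, and is the missing ingredient in your sketch. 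An outermost block $W$ is layered on top via further epistemic choice rules, and the query is the single atom $A$. Since the Theorem~\ref{th-EWJ-existence-normal-atomic} program is already atomic-headed, your generic $p_G$-unfolding is harmless but superfluous.
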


For simple disjunctive epistemic programs, we obtain the same
complexity results as for normal epistemic programs. Intuitively, 
disjunction $\mid$ of atoms in rule heads
compensates for complex formulas in
rule bodies of normal epistemic programs, which make inference of rule bodies in well-justified
computation harder, as reflected by the complexity results for
rational answer sets in row~1 of Table~\ref{tab:complexity2}.
The $\Sigma^p_3$ resp.\ $\Pi^p_4$-hardness results are shown by
alterations of the hardness proofs for EFLP world views of general normal programs in
\cite{ShenEiter16}[Theorems~5 and~6].

\begin{restatable}{theorem}{RthEWJsimpledisjunctiveexistence}
\label{th-EWJ-simple-disjunctive-existence}
Given a simple disjunctive epistemic program $\Pi$,
deciding whether $\Pi$ has some rational world view is $\Sigma^p_3$-complete.
\end{restatable}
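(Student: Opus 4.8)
The plan is to treat membership and hardness separately, in each case leveraging results already established for epistemic-free disjunctive programs and for the SE16 world view construction. For membership, I would first normalise the input: by Lemma~\ref{lem-reduce-rational-no-variant} a given simple disjunctive epistemic program $\Pi$ reduces in polynomial time — preserving simplicity and rational world views — to a simple disjunctive epistemic program $\Pi'$ without head variants, and by Lemma~\ref{lem-wj-rational}(ii) the rational world views of $\Pi'$ coincide with its SE16-DI-WJ world views, so it suffices to decide SE16-DI-WJ world view existence. I would then apply Lemma~\ref{lem-epi-rat-membership} with $k=2$: its hypothesis holds because the epistemic-free fragment of the class of simple disjunctive epistemic programs is the class of simple disjunctive epistemic-free programs, for which, by the first row of Table~\ref{tab:complexity2}, brave resp.\ cautious inference of an epistemic-free formula from the rational answer sets is in $\Sigma^p_2$ resp.\ $\Pi^p_2$. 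The lemma then yields that deciding whether $\Pi'$ (equivalently $\Pi$) has a rational world view is in $\Sigma^p_3$; unfolding it, the witnessing procedure guesses a set $\Phi\subseteq Ep(\Pi)$ of epistemic negations to be made true together with polynomially many interpretations and certificates that determine the epistemic reduct $\Pi^{\cal A}$ and pin down the collection ${\cal A}$ of its DI-WJ answer sets, and verifies via $\Sigma^p_2/\Pi^p_2$ oracle calls that ${\cal A}$ realises exactly the required truth values of the negations in $Ep(\Pi)$ and that no strictly larger admissible $\Phi'$ exists.

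For $\Sigma^p_3$-hardness I would adapt the $\Sigma^p_3$-hardness construction for EWJ/EFLP world view existence of \emph{general normal} epistemic programs in \cite{ShenEiter16}[Theorem~5], which reduces from validity of a QBF $\Psi = \exists X\,\forall Y\,\exists Z\,\varphi$. There the only non-simple ingredient is the simulation of the inner $\forall Y\,\exists Z\,\varphi$ block through complex propositional formulas in rule bodies, whose well-justified evaluation contributes one level of alternation on the epistemic-free part; I would replace this block by the disjunctive saturation gadget of \citeA{Leone2006} (disjunctive facts $y\mid\bar y$ and $z\mid\bar z$ to guess assignments, saturation rules driving all of $y,\bar y,z,\bar z$ into a model whenever $\varphi$ holds, and a constraint blocking the saturated model), which under the rational, i.e.\ DI-WJ, answer set semantics has on the epistemic-free part the same effect as the original normal block under the WJ-semantics and, crucially, introduces no head variants, so that Lemma~\ref{lem-wj-rational} keeps applying. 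The outer $\exists X$ choice, and the extra alternation separating world views from answer sets, is handled exactly as in the original proof by a layer of epistemic literals $(\neg)\naf A$ on the guessing atoms $A$ whose maximisation, via condition~(2) of Definition~\ref{def-ideal-wv-semantics}, forces a ``good'' assignment to $X$; since this layer uses only modal literals of the forms allowed in epistemic specifications in the sense of \citeA{Gelfond91}, the resulting program is a \emph{simple} disjunctive epistemic program, matching the intuition that disjunction of atoms in heads compensates for the complex bodies of normal epistemic programs.

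The main obstacle is the correctness proof for this splice: one must re-verify for the modified program that a rational world view exists iff $\Psi$ is valid, which requires tracking four interacting mechanisms at once — the head selection function $\sel$ in the disjunctive program reducts (Definition~\ref{def:sel-reduct}), the well-supportedness fixpoint $\mathit{lfp}(T_{\cdot}(\emptyset,\neg I^-))$, subset-minimality among well-supported models, and maximality of $\Phi$ at the world-view level. In particular one must rule out spurious minimal well-supported models arising from an unintended head selection in the saturation gadget, and stray world views arising from an unintended choice of $\Phi$; the remaining check, that every rule stays within the simple disjunctive epistemic fragment (only atoms in disjunctive heads, and in bodies only literals $(\neg)A$, $(\neg)\naf A$ and $(\neg)\naf\neg A$), is routine but needs care.
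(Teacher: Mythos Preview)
Your membership argument is correct and matches the paper's: both invoke Lemma~\ref{lem-epi-rat-membership}(i) together with the $\Sigma^p_2/\Pi^p_2$ bounds for brave/cautious inference over simple disjunctive epistemic-free programs from row~1 of Table~\ref{tab:complexity2}. The extra normalisation via Lemmas~\ref{lem-reduce-rational-no-variant} and~\ref{lem-wj-rational} is harmless but not needed for the membership direction.

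For hardness you take a longer route than necessary, and the detour stems from a mischaracterisation of the construction in Theorem~5 of \cite{ShenEiter16}. You say its only non-simple ingredient is ``the simulation of the inner $\forall Y\exists Z\,\varphi$ block through complex propositional formulas in rule bodies'', and therefore propose to replace that block by a disjunctive saturation gadget \`a la \cite{Leone2006}. But, as the paper points out, that program is already a \emph{simple} normal program with atomic heads \emph{except for} the subprogram $\Pi_Z = \{\, Z_i \lor \bar{Z_i} \mid Z_i \in Z\,\}$, which uses $\lor$ in rule \emph{heads}, not complex formulas in bodies. The paper's hardness argument is thus essentially a one-line syntactic rewrite: replace each head $Z_i \lor \bar{Z_i}$ by the disjunctive head $Z_i \mid \bar{Z_i}$; the resulting simple disjunctive epistemic program has no head variants, each disjunctive program reduct is a simple normal program (so FLP and WJ answer sets coincide on it), and hence its rational world views coincide with the EFLP world views of the original program by Lemma~\ref{lem-wj-rational}.

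Your saturation-based reconstruction might be made to work, but it is a genuine detour. You would have to re-verify that the saturation technique behaves under the rational (DI-WJ) semantics exactly as under GL---which is not automatic, since rational answer sets do not enjoy the minimal-model property in general and the gadget's correctness relies precisely on minimality across head selections---and then carry out the four-mechanism interaction analysis you correctly flag as the main obstacle. Also note that your informal description of the outer $\exists X$ layer is slightly off: maximality of $\Phi$ does not ``force'' a particular assignment to $X$ (all complete $X$-guesses yield $\Phi$'s of the same size); rather, existence of \emph{some} world view encodes the existential quantifier. All of this bookkeeping is bypassed by the paper's direct $\lor\mapsto\mid$ rewrite.
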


\begin{restatable}{theorem}{RthEWJsimpledisjunctiveinference}
\label{th-EWJ-simple-disjunctive-inference}
Given a simple disjunctive epistemic program $\Pi$ and a formula 
$F$, deciding whether $F$ is true in some rational world view of $\Pi$
is  $\Sigma^p_4$-complete.
The $\Sigma^p_4$-hardness holds even if $F$ is an atom.
\end{restatable}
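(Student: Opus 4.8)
The plan is to establish both the membership and the hardness part, but by the remarks preceding the statement in the excerpt, membership in $\Sigma^p_4$ already follows from Lemma~\ref{lem-epi-rat-membership}: for simple disjunctive epistemic programs, the epistemic-free fragments are simple disjunctive programs, for which brave and cautious inference of an epistemic-free formula are $\Sigma^p_2$- resp.\ $\Pi^p_2$-complete (row~1 of Table~\ref{tab:complexity2}, $k=2$), so (ii) of the lemma gives membership in $\Sigma^p_{k+2}=\Sigma^p_4$. Hence the real work is the $\Sigma^p_4$-hardness, and I would obtain it by adapting the hardness construction used for EFLP world views of general normal epistemic programs in \cite{ShenEiter16}[Theorems~5 and~6], transferring complexity from the body side to the head side.

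First I would fix a canonical $\Sigma^p_4$-complete problem, namely validity of a quantified Boolean formula $\exists X_1 \forall X_2 \exists X_3 \forall X_4\, \phi(X_1,X_2,X_3,X_4)$ with $\phi$ in CNF (or 3-CNF), and describe a polynomial-time reduction to an instance $(\Pi, A)$ where $\Pi$ is a simple disjunctive epistemic program with atomic heads and $A$ is a target atom. The outermost $\exists X_1$ is handled at the world-view level by choices of epistemic negations: for each $x\in X_1$ introduce an atom and a modal-literal mechanism (using $\naf$, i.e.\ {\bf M}) so that the maximal-$\Phi$ condition of the rational world view semantics (Definition~\ref{def-ideal-wv-semantics}) forces a guessed truth assignment to $X_1$ into each world view. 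The middle $\forall X_2$ layer is encoded so that a world view exists only if the rest of the formula holds for \emph{every} assignment to $X_2$; this is exactly the place where the $\Sigma^p_2$-to-$\Sigma^p_3$ jump at the world-view level (cf.\ Table~\ref{tab:complexity2}) gets used. The inner $\exists X_3 \forall X_4\,\phi$ part is then an answer-set-level subproblem whose (brave/cautious) solvability is $\Sigma^p_3$-complete for simple disjunctive (epistemic-free) programs; I would import the existing simple-disjunctive encoding of $\exists\forall$-SAT underlying the $\Sigma^p_3$-hardness of brave inference in row~1 of Table~\ref{tab:complexity2}, using disjunctive rule heads $x\mid \bar x$ over atoms of $X_3$ and constraints / auxiliary atoms to check clauses and the $\forall X_4$ guard, exploiting the new head-selection function $sel((n,head(r)),I)$ (Definition~\ref{head-sel-gdlp}) and well-supportedness so that minimal well-supported models correspond precisely to satisfying certificates.

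The key correctness argument would proceed in two directions. For soundness, from a rational world view $\cal A$ of $\Pi$ in which $A$ is true I would read off the forced $X_1$-assignment from $\Phi_{\cal A}$, argue via condition~(1) of Definition~\ref{def-ideal-wv-semantics} (so $\cal A$ is the collection of rational answer sets of the epistemic reduct $\Pi^{\cal A}$) that the answer-set-level gadget certifies $\exists X_3 \forall X_4\,\phi$ for the relevant $X_2$ choices, and that the world-view-level gadget forces this for all $X_2$, yielding validity of the QBF. For completeness, given validity I would build $\cal A$ as the set of all rational answer sets of $\Pi^{\cal A}$ for the right $\Phi$ and check: (a) it is an epistemic model (Definition~\ref{satisfaction-epis-program}), (b) it satisfies condition~(1), using Theorem~\ref{th-wellsupport-lfp} / Theorem~\ref{th-wellsupport-lfp-normal} to verify well-supportedness of each $I\in\cal A$ in the disjunctive program reduct, and (c) $\Phi$ is maximal among such $\Phi$, so condition~(2) holds and $\cal A$ is a genuine rational world view with $A$ true in it. I would also verify the reduction stays within \emph{simple} programs with \emph{atomic} heads — no complex formulas in bodies beyond modal literals allowed in epistemic specifications, and no head formulas beyond atoms — and that by Lemma~\ref{lem-reduce-rational-no-variant} one may assume no variant heads without changing the rational world views.

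I expect the main obstacle to be getting the two-layer alternation ($\forall X_2$ on top of the $\Sigma^p_3$ answer-set core) to interact cleanly with the \emph{maximality} requirement on $\Phi$ in Definition~\ref{def-ideal-wv-semantics} without accidentally creating spurious rational world views that collapse the alternation — i.e.\ ensuring that a world view exists \emph{iff} the inner $\Sigma^p_3$ property holds for \emph{all} $X_2$, rather than for some. The standard trick is to make the $\forall X_2$ atoms all co-occur in the world view (so a world view pins down a single $X_2$ only if it is "good" for that $X_2$, and the constraints force failure otherwise), mirroring how \citeA{ShenEiter16} push one quantifier alternation through the world-view/answer-set boundary; I would reuse that mechanism and verify that, since here disjunction in heads rather than complex bodies supplies the extra $\Sigma^p_1$ power at the answer-set level, the overall count is still exactly four alternations. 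A secondary technical point is handling the special target atom $A$: I would add a rule making $A$ derivable precisely when the encoding "succeeds," and check it does not perturb well-supportedness or minimality — a routine but necessary verification analogous to the use of a guard atom in Example~\ref{eg-subsumption}.
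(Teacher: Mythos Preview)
Your plan is on the same track as the paper---both invoke the hardness constructions of \cite{ShenEiter16}[Theorems~5 and~6] and lift them to simple disjunctive programs---but you overcomplicate matters and make a complexity-counting slip.

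The paper's argument is much shorter than what you sketch. The crucial observation you miss is that the program $\Pi_1$ built in the proof of Theorem~6 of \cite{ShenEiter16} (which encodes a QBF $\exists W\forall X\exists Y\forall Z.\psi$ and shows $\Sigma^p_4$-hardness of truth-in-some-EFLP-world-view for \emph{normal} epistemic programs) is \emph{already} a simple program except for the rules $Z_i\lor\bar{Z_i}$ in $\Pi_Z$, which have a single disjunctive head formula. The entire adaptation consists of rewriting each $Z_i\lor\bar{Z_i}$ to the disjunctive head $Z_i\mid\bar{Z_i}$; one then argues (exactly as in the companion existence result, Theorem~\ref{th-EWJ-simple-disjunctive-existence}) that the EFLP world views of $\Pi_1$ coincide with the DI-WJ world views, hence with the rational world views, of the rewritten program $\Pi_2$, since every disjunctive program reduct is a simple normal program where WJ and FLP answer sets agree. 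There is no need to ``transfer complexity from the body side to the head side'' or to rebuild the quantifier-to-semantics mapping from scratch.

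Separately, your claim that for simple disjunctive epistemic-free programs brave inference is $\Sigma^p_3$-complete is wrong: row~1 of Table~\ref{tab:complexity2} gives $\Sigma^p_2$ (and $\Pi^p_2$ for cautious). This is consistent with the $\exists X_3\forall X_4$ inner block you describe, which is a $\Sigma^p_2$ problem, not $\Sigma^p_3$; the extra two levels come from the world-view guess and the maximality check (Lemma~\ref{lem-epi-rat-membership} with $k=2$), giving $\Sigma^p_4$. Your quantifier layout is thus correct, but the labeling is off; fix this before writing anything out in detail.
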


When complex formulas in rule bodies of disjunctive epistemic programs
are allowed, the complexity reaches the full complexity of epistemic
programs under rational world views, which is at the fourth resp.\ fifth
level of the Polynomial Hierarchy (PH). The hardness results are
established by lifting reductions for brave resp.\ cautious inference from rational
answer sets, which is $\Sigma^p_3$- resp.\ $\Pi^p_3$-complete, to 
the rational world views problems (E1) and (E2) using techniques in the proofs of Theorems~5
and~6 in \cite{ShenEiter16}, respectively.

\begin{restatable}{theorem}{RthEWJdisjunctiveexistence}
\label{th-EWJ-disjunctive-existence}
Given an epistemic program $\Pi$,
deciding whether $\Pi$ has some rational world view is
$\Sigma^p_4$-complete, and $\Sigma^p_4$-hardness holds even for
atomic heads.
\end{restatable}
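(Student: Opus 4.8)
The plan is to prove membership in $\Sigma^p_4$ and matching $\Sigma^p_4$-hardness, the latter already for programs with atomic heads.

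\emph{Membership.} This follows at once from Lemma~\ref{lem-epi-rat-membership}. Its hypothesis asks that, for the epistemic-free fragment of the class under consideration — here, general disjunctive epistemic-free programs — brave and cautious inference of an epistemic-free formula from the rational answer sets lie in $\Sigma^p_k$ and $\Pi^p_k$, respectively. By the first row, last column of Table~\ref{tab:complexity2} (which transfers from the complexity of DI-WJ answer sets of \cite{ShenE19} via Lemmas~\ref{lem-wj-rational} and~\ref{lem-reduce-rational-no-variant}) this holds with $k=3$. Lemma~\ref{lem-epi-rat-membership}(i) then yields that deciding existence of some rational world view is in $\Sigma^p_{k+1}=\Sigma^p_4$.

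\emph{Hardness.} I would reduce from validity of a quantified Boolean formula $\Psi=\exists X_1\,\forall X_2\,\exists X_3\,\forall X_4\,\phi$ with $\phi$ quantifier-free, a canonical $\Sigma^p_4$-complete problem, by lifting the $\Sigma^p_3$/$\Pi^p_3$-hardness reductions for brave/cautious inference from rational answer sets of disjunctive epistemic-free programs with atomic heads (Theorem~\ref{th-DJ-WJ-atomic-heads-hardness} and its cautious dual) exactly as the proofs of Theorems~5 and~6 of \cite{ShenEiter16} lift the corresponding normal-program reductions one level lower. Concretely: the outermost block $\exists X_1$ is carried by the epistemic layer, so that each assignment $\mu$ of $X_1$ corresponds to a $\subseteq$-maximal set $\Phi_\mu$ of epistemic negations and the epistemic reducts from distinct assignments have pairwise $\subseteq$-incomparable $\Phi$'s. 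For a fixed $\mu$, the reduct $\Pi^{\cal A}$ is (after substituting $\top$ resp.\ $\neg(\cdot)$ for the epistemic negations) a disjunctive epistemic-free program with atomic heads, built from the $\Pi^p_3$-hardness construction for cautious inference so that its rational answer sets encode the evaluation of $\forall X_2\,\exists X_3\,\forall X_4\,\phi[X_1/\mu]$; an auxiliary epistemic negation $\naf F$ is wired so that the fixpoint condition~(1) of Definition~\ref{def-ideal-wv-semantics} can be met for $\Phi_\mu$ precisely when that subformula holds under $\mu$, and fails otherwise (either because $\Pi^{\cal A}$ becomes inconsistent or because $\naf F$ does not resolve consistently in the induced collection of rational answer sets). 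Then $\Pi$ has a rational world view iff some $\mu$ satisfies $\forall X_2\,\exists X_3\,\forall X_4\,\phi[X_1/\mu]$, iff $\Psi$ is valid; as all introduced heads are atoms, $\Sigma^p_4$-hardness holds for atomic heads.

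\emph{Main obstacle.} The crux will be reconciling the maximality clause~(2) of Definition~\ref{def-ideal-wv-semantics} with the outer $\exists X_1$ block while keeping heads atomic. One must guarantee simultaneously that (i) every intended assignment $\mu$ yields a genuinely $\subseteq$-maximal $\Phi_\mu$, so that a candidate world view built from $\mu$ is not pre-empted by another assignment's reduct, and (ii) no assignment for which the inner $\Pi^p_3$ formula fails spuriously produces a world view — which is where the consistency/fixpoint behaviour of the auxiliary epistemic negation $\naf F$ and the possible inconsistency of $\Pi^{\cal A}$ do the work. Verifying that the epistemic model assembled from $\Phi_\mu$ is indeed an epistemic model of $\Pi$ exactly for the admissible $\mu$, and that the atomic-head restriction of Theorem~\ref{th-DJ-WJ-atomic-heads-hardness} survives this two-layer embedding, is the delicate part; the remaining checks are routine adaptations of the constructions already used for Theorems~\ref{th-EWJ-simple-disjunctive-existence} and~\ref{th-EWJ-simple-disjunctive-inference} and their sources in \cite{ShenEiter16}.
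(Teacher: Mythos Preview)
Your proposal is correct and follows essentially the same approach as the paper: membership via Lemma~\ref{lem-epi-rat-membership}(i) with $k=3$, and hardness by lifting the $\Sigma^p_3$/$\Pi^p_3$ inference reductions for disjunctive epistemic-free programs with atomic heads to the epistemic layer, with the outermost existential block carried by epistemic choice rules and a kill-style constraint tying world view existence to cautious inference in the reduct.

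The only notable presentational difference is that the paper routes the hardness argument through a concrete intermediate problem, A-MINQASAT (a $\Pi^p_4$-complete universal generalization of MINQASAT from Theorem~\ref{th-DJ-WJ-atomic-heads-hardness}), and shows that a rational world view exists iff the given A-MINQASAT instance is a no-instance; your direct QBF formulation $\exists X_1\forall X_2\exists X_3\forall X_4\,\phi$ is equivalent but leaves the rule-level encoding less explicit. The paper's use of A-MINQASAT has the advantage that the rules (\ref{Gchoose-W})--(\ref{Gwv-kill}) are written out concretely and the maximality issue you flag as the ``main obstacle'' is resolved transparently: the epistemic choice rules for $W$ yield pairwise $\subseteq$-incomparable $\Phi$'s, and the single constraint $V\leftarrow \naf V,\naf\neg A$ kills exactly those candidates where $A$ is cautiously entailed, so no separate maximality argument is needed beyond the incomparability of the $W$-choices.
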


\begin{restatable}{theorem}{RthEWJdisjunctiveatomicinference}
\label{th-EWJ-disjunctive-atomic-inference}
Given an epistemic program $\Pi$ and a formula $F$,
deciding whether $F$ is true in some rational world view of $\Pi$ is
$\Sigma^p_5$-complete, and $\Sigma^p_5$-hardness holds even for
atomic heads and $F$ being an atom.
\end{restatable}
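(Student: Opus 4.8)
The plan is to establish $\Sigma^p_5$-membership from Lemma~\ref{lem-epi-rat-membership} and $\Sigma^p_5$-hardness by lifting the $\Sigma^p_3$-hardness of brave inference from rational answer sets through an epistemic wrapper, in the style of the constructions behind Theorems~5 and~6 of \cite{ShenEiter16}.

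For membership, observe that the epistemic-free fragment of a general disjunctive epistemic program is a general disjunctive epistemic-free program, for which brave resp.\ cautious inference of an epistemic-free formula from the rational answer sets is in $\Sigma^p_3$ resp.\ $\Pi^p_3$ (Table~\ref{tab:complexity2}, first row, rightmost column, via Lemmas~\ref{lem-wj-rational} and~\ref{lem-reduce-rational-no-variant}, which identify the rational semantics with the DI-WJ semantics up to a head-variant-eliminating, head-shape-preserving transformation). Applying Lemma~\ref{lem-epi-rat-membership}(ii) with $k=3$ then yields that deciding truth of $F$ in some rational world view of an epistemic program is in $\Sigma^p_{3+2}=\Sigma^p_5$.

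For hardness I would reduce from the evaluation problem of a $\Sigma^p_5$ quantified Boolean formula $\Psi=\exists Z_1\,\forall Z_2\,\psi(Z_1,Z_2)$, where $\psi(Z_1,Z_2)=\exists Z_3\,\forall Z_4\,\exists Z_5\,\varphi$ is, for each fixed truth assignment $\mu$ to $Z_1\cup Z_2$, a $\Sigma^p_3$ sentence. First, by Theorem~\ref{th-DJ-WJ-atomic-heads-hardness} combined with Lemmas~\ref{lem-wj-rational} and~\ref{lem-reduce-rational-no-variant} (whose hardness instances already have no head variants), there is a polynomial-time constructible epistemic-free disjunctive program $P$ with atomic heads, with its facts for the variables in $Z_1\cup Z_2$ left as parameters, and a distinguished atom $g$, such that $g$ is true in some rational answer set of the instantiation of $P$ according to $\mu$ iff $\psi$ holds under $\mu$. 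Second, I would wrap $P$ into an epistemic program $\Pi$ over an enlarged signature in which an epistemic guess (via epistemic negations together with subjective constraints, exactly as in the SE16-epistemic-reduct machinery) fixes the assignment to $Z_1$, and in which conditions~(1) and~(2) of Definition~\ref{def-ideal-wv-semantics} --- coincidence of the world view with the collection of all rational answer sets of the epistemic reduct, and maximality of the set of satisfied epistemic negations --- realise the remaining $\forall Z_2$ on top of the $\exists$ already contributed at the rational-answer-set level. This is precisely the mechanism by which the SE16 framework adds two quantifier alternations over its base semantics, so the reductions in the proofs of Theorems~5 and~6 of \cite{ShenEiter16} carry over \emph{mutatis mutandis}; a dedicated output atom $F$ can then be arranged to be true in some rational world view of $\Pi$ iff $\Psi$ is true. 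All rules added by the wrapper have atomic heads or are subjective constraints, so atomic heads are preserved, and $F$ is a single atom; hence $\Sigma^p_5$-hardness holds already for atomic heads and $F$ an atom, and together with membership we obtain $\Sigma^p_5$-completeness.

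The main obstacle is the hardness direction, specifically calibrating the epistemic wrapper so that the two world-view conditions simulate exactly the intended $\exists Z_1\,\forall Z_2$ alternation and nothing more. The critical point is the maximality condition~(2): the ``no strictly larger $\Phi$'' clause must be set up so that the maximal sets of satisfied epistemic negations are in bijection with the legitimate choices of $Z_1$, with the $\forall Z_2$ test correctly folded into condition~(1), and without creating spurious rational world views. Doing this while simultaneously keeping all heads atomic and the query atomic requires a careful adaptation of the SE16 constructions; by contrast, verifying correctness and polynomial size of $P$ and of the wrapper is routine.
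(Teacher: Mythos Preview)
Your proposal is correct and follows essentially the same approach as the paper: membership via Lemma~\ref{lem-epi-rat-membership}(ii) with $k=3$, and hardness by lifting the $\Sigma^p_3$-hardness of brave rational-answer-set inference for atomic-head disjunctive programs (Theorem~\ref{th-DJ-WJ-atomic-heads-hardness}) through the tagging-plus-epistemic-choice construction of Theorems~5 and~6 in \cite{ShenEiter16}. The paper packages the hardness argument slightly differently---it introduces the intermediate $\Sigma^p_5$-complete problem E-A-MINQASAT and builds the target program $\Pi''$ on top of the world-view-existence program $\Pi'$ from Theorem~\ref{th-EWJ-disjunctive-existence} rather than directly on a brave-inference program $P$ parameterised over $Z_1\cup Z_2$---but this is a cosmetic difference and the underlying mechanism (a fresh tagging atom $A'$ to exploit maximality for the $\forall$, plus epistemic choice rules for the outermost $\exists$) is the same one you outline.
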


In conclusion, the results show that rational answer sets and rational
world views cover a wide initial range of PH up to its fifth
level. This suggests that rational semantics as such has rich capacity as a
backend formalism for problem solving inside polynomial space.
We remark that few propositional rule-based formalisms have inference
complexity at the fourth resp.\ fifth level of PH, unless they are
recursively nested, e.g. \cite{DBLP:conf/lpnmr/EiterGV97}, or of
parametric nature with expressive atoms,
e.g.\ \cite{DBLP:conf/lpnmr/AmendolaCRT22,DBLP:conf/lpnmr/PolletR97}.

Notably, the complexity of inference from rational world views drops
by one level of PH, if condition~2 (maximality) of world views is
abandoned; several world view semantics do not have such a condition
(cf.\ Section~\ref{sec:ASP-principles-wvs})
, and then have comparable complexity. Furthermore,
many answer set semantics for general propositional programs have
typically complexity $\Sigma^p_2$ for problems (R1) and (R2),
respectively $\Pi^p_2$ for problem (R3), in contrast to $\Sigma^p_3$
resp.\ $\Pi^p_3$-completeness of rational semantics in the general
case. Intuitively, this is explained by a lack of minimal model
selection across multiple reducts that result from different
alternatives in rule application. For scenarios that require 
$\Sigma^p_3$ resp.\ $\Pi^p_3$ expressiveness, the lack may be
compensated by ad-hoc selection on top of answer sets, though.

\section{Related Work}
\label{RelatedW}
In this paper, we study principles for ASP semantics
in general and answer set and world view
construction in particular.
The GAS principles \cite{Gelfond2008}
were the first guide for the construction of an answer set
in general cases,
which requires that an answer set must satisfy all rules of 
a logic program and should adhere to the rationality principle.
The Gelfond's rationality principle 
is an abstract notion and can be interpreted in different ways,
thus leaving rooms for different ASP semantics.
In this section, we mention some closely related work. 

\subsection{Properties of ASP semantics}

Early works focused on studying major properties
of the GL-semantics for simple disjunctive programs, including 
the minimal model property \cite{GL91},
constraint monotonicity \cite{LTT99} and
foundedness \cite{LeoneRS97}.
These properties could be viewed as a kind of interpretation
of the Gelfond's rationality principle, i.e., 
every answer set of a simple disjunctive program $\Pi$
should be a minimal model of $\Pi$, contain no unfounded set
and satisfy constraint monotonicity.

Recent works 
\cite{iclp-KahlL18,CabalarFC20aij}   
extended the notions of constraint monotonicity and foundedness
to world view semantics for epistemic specifications.
\citeA{lpnmr-CabalarFC19} also defined the notion of epistemic splitting, which is much stronger than constraint monotonicity, intending
to establish a minimal requirement
for world view semantics. 
\citeA{abs-2108-07669} provided a survey  
and compared the existing world view semantics 
based on how they comply with
these properties.
It turns out that most of the existing
world view semantics such as those defined in 
\cite{KahlGelfond2015,Fa2015,ShenEiter16,SuCH20} 
violate these properties.

The question whether constraint monotonicity, 
epistemic splitting and foundedness
should be regarded
as mandatory requirements on
every ASP semantics 
has incurred a debate in recent ASP research.
\citeA{shen-eiterCns20} observed that these
properties may be too strong in general and they used some examples 
to demonstrate the observation. 
\citeA{SuICLP2021} said that the soundness of the three 
properties is still under debate.
\citeA{Costantini21} and \citeA{Su2024} proposed
alternative yet less restrictive versions of
the epistemic splitting property,
and \citeA{lpnmrCostantiniF22} further studied these properties. 

In this paper,
we use more examples
to demonstrate that requiring minimal models,
constraint monotonicity and
foundedness as mandatory conditions may  
exclude expected answer sets 
for some simple disjunctive programs 
as well as  world views
for some epistemic specifications.
Some of these examples appeared in \cite{ShenEiter22,shenEiter2025arxiv}.

\subsection{ASP rationality principles}

We propose to refine
the Gelfond's rationality principle
to well-supportedness,
minimality of answer sets w.r.t.\ negation by default
and minimality of world views w.r.t.\ epistemic negation,
and consider them
as three alternative principles for definition
and construction of answer sets and world views.
This leads to the refined GAS principles.

\paragraph{Well-supportedness.}
The notion of well-supportedness
was first introduced by \citeA{Fages:JMLCS:1994} 
as a key property to characterize
answer sets of simple normal programs
under the GL$_{nlp}$-semantics,
where the well-supportedness of an interpretation
$I$ of a program $\Pi$
is defined by establishing a strict well-founded
partial order $\prec$ on $I$ w.r.t.\ $\Pi$.
This notion was extended  by \citeA{ShenWEFRKD14}
to the class of epistemic-free normal programs, 
where the well-supportedness of 
$I$ is defined by constructing 
a level mapping on $I$ w.r.t.\ $\Pi$.
In \cite{CabalarFC0V17}, 
well-supportedness was defined for the class of
epistemic-free normal programs with atomic
rule heads, where the base logic used to define
satisfaction of a rule body is parameterized 
with different intermediate logics, from
intuitionistic logic (such as equilibrium logic) to classical logic.
When the basic logic is classical logic,
their definition of well-supportedness
(see Proposition 1 and Definition 2 in 
\cite{CabalarFC0V17})
corresponds to that in \cite{ShenWEFRKD14}
and is similar to our Definition~\ref{def-well-supported-normal-atomhead} in this paper.

In this paper, we have extended Fages's
well-supportedness to answer sets 
of all epistemic-free programs
and world views of all epistemic programs,
where the well-supportedness of a model $I$
is defined in terms of a strict well-founded
partial order $\prec$ on $I$,
and the base logic language 
${\cal L}_\Sigma$ that defines
satisfaction and entailment of formulas 
in rule bodies and heads is in classical logic.
We have also presented Algorithms \ref{algo:well-support-with-partialorder} and \ref{algo:well-support-with-partialorder-2},
which determine the well-supportedness
of a model $I$ for an epistemic-free 
normal program
and construct a strict well-founded
partial order $\prec$ on $I$ when it is well-supported.

\paragraph{Minimality.}
Minimization of answer sets w.r.t.\ negation by default
looks like the closed 
world assumption \cite{Reiter77}
and the minimal model semantics \cite{minker82},
but it differs essentially from them
in that it does not require that answer sets
must be minimal models.
Minimization of world views w.r.t.\ epistemic negation
was first introduced in \cite{ShenEiter16} and used to
resolve the problem of unintended world views 
due to recursion through the modal operator 
{\bf M}\ \cite{Gelfond2011,Kahl14}. 

The intuitive behavior of the epistemic 
program $\Pi = \{p\leftarrow {\bf M} p\}$
has been an open debate
\cite{Gelfond2011,KahlGelfond2015,Fa2015,ShenEiter16,ZhangZ17a,SuCH20,CabalarFC20aij}.
In this paper, we  justify
in view of the refined GAS principles
that ${\cal A} = \{\{p\}\}$
is the only world view  
of this program,
which satisfies the well-supportedness and 
minimality principles both at the level of
answer sets and of world views.

\subsection{Comparison to other ASP semantics}
Guided by the refined GAS principles, we have defined 
the rational answer set semantics
for epistemic-free programs and the
rational world view semantics
for epistemic programs. 
The former fulfills at the 
answer set level well-supportedness and  
minimality w.r.t.\ negation by default,
and the latter additionally fulfills
at the world view level minimality w.r.t.\ epistemic negation.

We propose to use the refined GAS principles  
as an alternative baseline (in contrast to using the 
minimal model property, constraint monotonicity 
and foundedness as a baseline) to intuitively assess
the existing ASP semantics.
Specifically, the extent to which an existing ASP semantics
adheres to the refined GAS principles
is determined by showing whether it satisfies 
or embodies the three general principles (RP1)-(RP3).

\paragraph{Answer set semantics for epistemic-free programs.}
The GL$_{nlp}$-semantics \cite{GL88} 
is the first and up to now the only answer set semantics
for simple normal programs.
We showed that this semantics agrees with 
the rational answer set semantics
(see Corollary~\ref{cor-rational-simple-normal-pro}),
i.e., its answer sets are exactly the minimal 
well-supported models,
and thus it embodies the refined GAS principles.
The WJ-semantics \cite{ShenWEFRKD14} extends
the GL$_{nlp}$-semantics to epistemic-free normal programs.
We showed that this semantics agrees with the 
rational answer set semantics
(see Corollary~\ref{cor-rational-epistemic-free-normal-pro})
and thus it embodies the refined GAS principles.
The GL-semantics \cite{GL91} is the first 
answer set semantics
for simple disjunctive programs. We showed that
it satisfies the refined GAS principles 
in that all of its answer sets 
are minimal well-supported models.
However, it does not embody the refined GAS principles 
because some minimal well-supported models 
may not be answer sets under the GL-semantics
(see Example~\ref{ex:rationalanswer-minwellsupp}).
We also showed that the three-valued fixpoint semantics
for epistemic-free normal programs with atomic rule heads
\cite{DPB01,PDB07} 
satisfies the refined GAS principles,
but it does not embody them
(see Example~\ref{eg-three-valued fixpoint}).
For epistemic-free programs in general,
we showed that the DI-semantics \cite{ShenE19}, 
the FLP-semantics \cite{FaberPL11} and
the equilibrium logic-based semantics
\cite{Ferr05,FerrarisLL11,Pearce96,Pearce06}
neither embody nor satisfy the refined GAS principles.
Specifically,
the DI-semantics does not agree
with the rational answer set semantics
because the disjunctive rule head selection function
${sel_v}$ used by the DI-semantics
requires selecting the same rule head formula
from identical or variant rule heads,
so that some expected answer sets 
may be excluded and some
unexpected ones may be obtained
(see Example~\ref{eg-sel-selv});
the FLP-semantics does not agree
with the rational answer set semantics
because some answer sets under 
the FLP-semantics are not well-supported
(see Example~\ref{eg-FLP}); and 
the equilibrium logic-based semantics does not agree
with the rational answer set semantics
because (i) it uses equilibrium logic \cite{Pearce96}
to define satisfaction of formulas in rule bodies and heads,
while the rational answer set semantics uses classical logic,
and (ii) it interprets the disjunctive 
rule head operator $\mid$ differently
(see Example~\ref{eg-subsumption}).
 
\paragraph{World view semantics for epistemic programs.}
The G91-semantics \cite{Gelfond91}
is the first world view semantics
for epistemic specifications.
To address the problems of unintended world views
due to recursion through {\bf K} and {\bf M},
\citeA{Gelfond2011} updated the G91-semantics 
to the G11-semantics, and
Kahl et al.~(\citeyear{Kahl14,KahlGelfond2015}) 
further refined the G11-semantics to 
the K14-semantics by appealing to
nested expressions \cite{LTT99}.
These three world view semantics 
do not satisfy the refined GAS principles; 
specifically they violate the minimality principle of 
world views w.r.t.\ epistemic negation
(see Example~\ref{Kahl-Eg29}).
We showed that the SE16-generic semantics \cite{ShenEiter16}
with a base answer set semantics $\cal X$ 
embodies/satisfies the refined GAS principles
provided that $\cal X$ embodies/satisfies 
the refined GAS principles
(see Theorem~\ref{th-SE16-DI-WJrational-semantics}).
For example, the SE16-GL$_{nlp}$-semantics and SE16-WJ-semantics
agree with the rational world view semantics and thus
embody the refined GAS principles, and
the SE16-GL-semantics does not embody 
the refined GAS principles, but it satisfies them.
The rewriting based CF22- and the CF22+SE16-semantics
\cite{DBLP:conf/iclp/Costantini022}, where the latter bears similarities to
the SE16-GL-semantics,  
do not satisfy the refined GAS principles.
The founded autoepistemic equilibrium logic 
(FAEL) \cite{CabalarFC20aij}
and the autoepistemic equilibrium logic (AEL) \cite{SuCH20}
extended the equilibrium logic-based answer set semantics
to equilibrium logic-based world view semantics.
As the equilibrium logic-based answer set semantics
does not satisfy the refined GAS principles,
FAEL and AEL do not satisfy these principles either.

\section{Conclusion}
\label{conclusion}

Different answer set and world view semantics
have been proposed in the literature.
In general, it does not seem possible to formally prove 
whether an ASP semantics defines answer sets/world views
that correspond exactly to the solutions
of  any problem represented by a logic program.
Therefore, it is necessary to develop some general principles 
and use them as a baseline  to intuitively compare and assess
different ASP semantics.

Towards such a baseline,
in this paper we have addressed two important questions 
that we raised in the Introduction, viz.\ (Q1) whether 
in general, minimal models,  
constraint monotonicity and foundedness should be mandatory conditions 
for an ASP semantics, and (Q2) what other properties could be considered 
as alternative principles for an ASP semantics. 

It seems that the three properties in Q1
are sometimes too strong. We  used examples to demonstrate that requiring
minimal models, constraint monotonicity
and foundedness as mandatory conditions 
may exclude expected answer sets 
for some simple disjunctive programs and
world views for some epistemic specifications. 

To explore alternative principles for ASP semantics, 
we appealed to the Gelfond answer set (GAS)
principles for the construction of an answer set \cite{Gelfond2008}.
Specifically we evolved the GAS principles by refining the
Gelfond's rationality principle to the principle of
well-supportedness, the principle of minimality of answer sets
w.r.t.\ negation by default and the principle of minimality of world
views w.r.t.\ epistemic negation, and proposed to consider these three
principles as alternative principles for definition and construction
of answer sets and world views.  This leads to the refined
GAS principles and thus offers an answer to question Q2.

To fulfill the refined GAS principles,
we have extended Fages's well-supportedness 
\cite{Fages:JMLCS:1994} for answer sets of simple normal programs
substantially to answer sets of all epistemic-free programs
and world views of all epistemic programs.

We have then defined ASP semantics
in terms of the refined GAS principles, viz.
the rational answer set semantics
for epistemic-free programs and the
rational world view semantics
for epistemic programs. 
The former fulfills at the 
answer set level well-supportedness and  
minimality w.r.t.\ negation by default,
and the latter additionally fulfills
at the world view level minimality w.r.t.\ epistemic negation.
Furthermore, we have formally defined answer set semantics for
epistemic-free programs extended with choice constructs.
%Note that these rational ASP semantics do not satisfy the minimal model property nor constraint monotonicity nor foundedness.

We have used the refined GAS principles
as an alternative baseline (in contrast to 
minimal models, constraint monotonicity 
and foundedness as a baseline) 
to intuitively assess the existing ASP semantics.
Specifically, the extent to which an existing ASP semantics
adheres to the refined GAS principles
is determined by showing whether it satisfies or 
embodies the three general principles.

Finally, we have analyzed the computational complexity of 
well-supportedness and the rational answer set and world view semantics.

The proposed rational answer set semantics
for epistemic-free programs and the
rational world view semantics
for epistemic programs could be viewed 
as an amendment of the GL-semantics \cite{GL91}
for simple disjunctive programs 
and the G91 semantics \cite{Gelfond91} for epistemic 
specifications, respectively, making them embody
the refined GAS principles. 
As every answer set of a simple disjunctive program 
under the GL-semantics  
is also an answer set under the rational answer set semantics,
existing applications of the GL-semantics can be relaxed to 
the rational answer set semantics simply by
checking whether they have additional answer sets
under the rational answer set semantics.
In particular, if a simple disjunctive program has 
no answer set under the GL-semantics, rational
answer sets can be used as a relaxation that, differently from
paraconsistent resp.\ paracoherent answer sets as in
\cite{DBLP:journals/ai/AmendolaEFLM16}, does not need to inject
unfounded (non-provable) positive assumptions.

For future work, methods for efficiently implementing  
the proposed rational answer set and world view semantics 
for different classes of logic programs (see Table~\ref{tab:complexity2}) 
present an important open issue. 
Moreover, as discussed with Michael Gelfond, 
it is open to see some killer applications to test 
the refined GAS principles along with the rational ASP semantics.

\section*{Acknowledgments}
We particularly thank Michael Gelfond for his encouraging and constructive
discussion and comments on this work. 
He mentioned the pin-touching-colored board problem,
which we have borrowed in Example~\ref{pin-dropping}.
%the anonymous 
%reviewers for their helpful comments
%on a preliminary version of this paper.
%This work has been supported in part by NSFC grant 61976205.
%the Austrian
%Science Fund (FWF) grant W1255, 
%and the EU grant HumanE-AI-Net (ICT-48-2020-RIA 952026).
%This research was funded in whole or in part by the Austrian Science Fund
%(FWF) [10.55776/COE12].
\bibliographystyle{theapa}
\bibliography{Epistemic-Logic}

@STRING( AMAI = "Annals of Mathematics and Artificial Intelligence" )

@inproceedings{CabalarFC0V17,
  author       = {P. Cabalar and
                  J. Fandinno and
                  L. Farinas del Cerro and
                  D. Pearce and
                  A. Valverde},
  title        = {On the Properties of Atom Definability and Well-Supportedness in Logic
                  Programming},
  booktitle    = {Proceedings 18th EPIA Conf.\  Artificial
                  Intelligence},
  pages        = {624--636},
  year         = {2017}
}

@inproceedings{lpnmrCostantiniF22,
  author       = {Stefania Costantini and
                  Andrea Formisano},
  title        = {Epistemic Logic Programs: {A} Study of Some Properties},
  booktitle    = {Proc.\ 16th International Conf.\  Logic Programming and Nonmonotonic Reasoning ({LPNMR}-2022)},
  series       = {Lecture Notes in Computer Science},
  volume       = {13416},
  pages        = {131--144},
  year         = {2022}
}

@article{GelfondZ19,
  author       = {Michael Gelfond and
                  Yuanlin Zhang},
  title        = {Vicious circle principle, aggregates, and formation of sets in {ASP}
                  based languages},
  journal      = {Artif. Intell.},
  volume       = {275},
  pages        = {28--77},
  year         = {2019}
}

@incollection{Gelfond2008,

  author    = {Michael Gelfond},

  title     = {Answer Sets},

  booktitle = {Handbook of Knowledge Representation},

   Chapter = {7},

  Editor = {Frank van Harmelen and Vladimir Lifschitz and Bruce Porter},

  Publisher = {Elsevier},

  Series = {Foundations of Artificial Intelligence},

  pages     = {285--316},

  year      = {2008}

}

@inproceedings{iclp-KahlL18,

  author    = {Patrick Thor Kahl and Anthony Leclerc},

  title     = {Epistemic Logic Programs with World View Constraints},

  booktitle = {Proc.\ 34th International Conf.\ Logic
                  Programming, (ICLP-TC 2018)},

  pages     = {1-17},

  year      = {2018}

  }

@inproceedings{lpnmr-CabalarFC19,

  author    = {Pedro Cabalar and Jorge Fandinno and Luis Fariñas del Cerro},

  title     = {Splitting Epistemic Logic Programs},

  booktitle = {Proc.\ LPNMR~2019},

  pages     = {120-133},

  year      = {2019}

}

@article{CabalarFC20aij,

  author    = {Pedro Cabalar and Jorge Fandinno and Luis Fariñas del Cerro},

  title     = {Autoepistemic answer set programming},

  journal   = {Artif.\ Intell.},

  volume    = {289},

  pages     = {103382},

  year      = {2020}

}

@inproceedings{lifs-turn-94,

  author    = {Vladimir Lifschitz and Hudson Turner},

  title     = {Splitting a Logic Program},

  booktitle = {Proc.\ 11th International Conf.\ 

               on Logic Programming (ICLP 1994)},
 
  pages     = {23--37},

  year      = {1994}

}

@book{bookLifschitz19,

  author    = {Vladimir Lifschitz},

  title     = {Answer Set Programming},

  publisher = {Springer},

  year      = {2019}

}

@book{gelfond-kahl2014,

	AUTHOR = "Michael Gelfond and Yulia Kahl",

        TITLE = "Knowledge Representation, Reasoning, and the Design of Intelligent Agents",

	PUBLISHER = "Cambridge University Press",

 	YEAR = 2014

}

@article{KahlGelfond2015,

  author    = {Patrick Kahl and Richard Watson and Evgenii Balai and Michael Gelfond and Yuanlin Zhang},

  title     = {The language of epistemic specifications (refined) including a prototype solver},

  journal   = {Journal of Logic and Computation},

  volume    = {30},

  number    = {4},

  pages     = {953--989},

  year      = {2020}}

@PhdThesis{Kahl14,

author = {Patrick Thor Kahl},

title = {Refining the semantics for epistemic logic programs},

school = {Texas Tech Univ., USA},

year = {2014}

}

@InProceedings{Fa2015,

 author = {Luis Fariñas del Cerro and Andreas Herzig and Ezgi Iraz Su},

 title = {Epistemic equilibrium logic},

 booktitle = {Proc.\ IJCAI~2015},

 pages = {2964-2970},

 year = {2015}

}

@article{shenEiter2025arxiv,
  author       = {Yi-Dong Shen and
                  Thomas Eiter},
  title        = {Refining Gelfond Rationality Principle: Towards More Comprehensive
                  Foundational Principles for Answer Set Semantics},
  journal      = {CoRR},
  volume       = {abs/2507.01833},
  year         = {2025}
 }

@article{ShenWEFRKD14,

  author    = {Yi-Dong Shen and Kewen Wang and Thomas Eiter and Michael Fink and Christoph Redl and Thomas Krennwallner and Jun Deng}, 
  title     = {{FLP} answer set semantics without circular justifications

               for general logic programs},

  journal   = {Artif.\ Intell.},

  volume    = {213},

  year      = {2014},

  pages     = {1-41}

}

@article{ShenEiter16,

  author    = {Yi-Dong Shen and Thomas Eiter},

  title     = {Evaluating Epistemic Negation in Answer Set Programming},

  journal   = {Artif.\ Intell.},

  volume    = {237},

  year      = {2016},

  pages     = {115-135}

}

@article{ShenE19,

  author    = {Yi-Dong Shen and Thomas Eiter},

  title     = {Determining inference semantics for disjunctive logic programs},

  journal   = {Artif.\ Intell.},

  volume    = {277},

  year      = {2019},

  pages     = {1-28}

}

@inproceedings{ShenEiter22,
  author    = {Yi-Dong Shen and Thomas Eiter},
  title     = {Considering Constraint Monotonicity and Foundedness in Answer Set Programming},
  booktitle = {Proc.\ 31st International Joint Conf.\ Artificial Intelligence (IJCAI-22)},
  pages     = {2741-2747},
  year      = {2022}
}

@inproceedings{minker82,

  author    = {J. Minker},

  title     = {On Indefinite Databases 

and the Closed World Assumption},

  booktitle = {Proc. 6th Conf.\ Automated Deduction},

  series       = {LNCS},

  volume       = {138},

  publisher    = {Springer},

  year      = {1982},

  pages     = {292-308}

}

@inproceedings{EiterIST05,

  author    = {T. Eiter and

               G. Ianni and

               R. Schindlauer and

               H. Tompits},

  title     = {A Uniform Integration of Higher-Order Reasoning and External

               Evaluations in Answer-Set Programming},

  booktitle = {Proc.\ 19th International Joint Conf.\ Artificial Intelligence (IJCAI-05)},

  year      = {2005},

  pages     = {90-96},

}

@inproceedings{Dao-TranEFK09,

  author    = {M. Dao-Tran and

               T. Eiter and

               M. Fink and

               T. Krennwallner},

  title     = {Modular Nonmonotonic Logic Programming Revisited},

  booktitle = {Proc.\ 25th International Conf.\  Logic Programming (ICLP'09)},

  year      = {2009},

  pages     = {145-159}

}

@article{FaberPL11,

  author    = {Wolfgang Faber and Gerald Pfeifer and Nicola Leone},

  title     = {Semantics and complexity of recursive aggregates in answer

               set programming},

  journal   = {Artif.\ Intell.},

  volume    = {175},

  number    = {1},

  year      = {2011},

  pages     = {278-298}

}

@article{FerrarisLL11,

  author    = {	Paolo Ferraris and Joohyung Lee and Vladimir Lifschitz},

  title     = {Stable models and circumscription},

  journal   = {Artif.\ Intell.},

  volume    = {175},

  number    = {1},

  year      = {2011},

  pages     = {236-263}

}

@ARTICLE{Fages:JMLCS:1994,

  AUTHOR =       {Fran\c{c}ois Fages},

  TITLE =        {Consistency of {C}lark's completion and existence of stable models},

  JOURNAL =      {Journal of Methods of Logic in Computer Science},

  YEAR =         {1994},

  volume =       {1},

  pages =        {51-60}
}

@inproceedings{FerrarisL05,

  author    = {Paolo Ferraris and

               Vladimir Lifschitz},

  title     = {Mathematical Foundations of Answer Set Programming},

  booktitle = {We Will Show Them! Essays in Honour of Dov Gabbay, Volume One},

  pages     = {615--664},

  year      = {2005}

  }

@inproceedings{FLP04,

author = "W. Faber and N. Leone and G. Pfeifer",

title = "Recursive aggregates

in disjunctive logic programs: Semantics and complexity",

booktitle    = {Proc.\ 9th European Conf.\ Logics in Artificial Intelligence {JELIA 2004}},
                
series       = {LNCS},

volume       = {3229},
 
 publisher    = {Springer},

year = 2004,

address = "",		

pages = "200-212"}

@inproceedings{Ferr05,
  author    = {Paolo Ferraris},
  title     = {Answer Sets for Propositional Theories},
  booktitle = {Proc.\ 8th International Conf.\  Logic Programming and Nonmonotonic Reasoning ({LPNMR} 2005)},
  pages     = {119--131},
  year      = {2005}
}

@inproceedings{GL88,

  author    = {Michael Gelfond and Vladimir Lifschitz},

  title     = {The Stable Model Semantics for Logic Programming},

  booktitle = {Proc.\ 15th International Conf.\ Logic Programming (ICLP 1988)},

  pages     = {1070--1080},

  year      = {1988}

  }

@article{Truszczynski-aij10,

  author    = {Miroslaw Truszczy{\'n}ski},

  title     = {Reducts of propositional theories, satisfiability relations,

               and generalizations of semantics of logic programs},

  journal   = {Artif.\ Intell.},

  volume    = {174},

  number    = {16-17},

  year      = {2010},

  pages     = {1285-1306}

}

@article{Nie99,

author = "I. Niemela",

title = "Logic programs with stable model semantics

as a constraint programming paradigm",

journal = {Ann.\ Math.\ Artif.\ Intell.},

volume =  "25",

number = "",

year = 1999,

pages = "241-273"}

@article{PDB07,

author = "W. Pelov and M. Denecker and M. Bruynooghe",

title = "Well-founded and stable semantics of logic programs with aggregates",

journal = "Theory and Practice of Logic Programming",

volume =  "7",

number = "3",

year = 2007,

pages = "301-353"

}

@inproceedings{Reiter77,

  author    = {R. Reiter},

  title     = {On closed world data bases},

  booktitle = {H. Gallaire and J. Minker, eds., Logic and Data Bases},

  year      = {1978},

  publisher = {Plenum, New York},

  pages     = {119-140}

}

@article{SNS02,

author = "P. Simons and I. Niemela and T. Soininen",

title = "Extending and implementing the stable model semantics",

journal = "Artificial Intelligence",

volume =  "138",

number = "1-2",

year = 2002,

pages = "181-234"}

@article{Pearce06,

  author    = {David Pearce},

  title     = {Equilibrium logic},

  journal =   {Ann.\ Math.\ Artif.\ Intell.},

  volume    = {47},

  number    = {1-2},

  year      = {2006},

  pages     = {3-41}

}

@inproceedings{Gelfond2011,

  author    = {M. Gelfond},

  title     = {New Semantics for Epistemic Specifications},

  booktitle = {Proc.\ 11th International Conf.\  Logic Programming and Nonmonotonic Reasoning (LPNMR 2011)},

  pages     = {260--265},

  year      = {2011}

  }

@inproceedings{DPB01,

author = "M. Denecker and N. Pelov and M. Bruynooghe",

title = "Ultimate

well-founded and stable semantics for logic programs with aggregates",

booktitle = "Proc.\ 17th International Conf.\  Logic Programming (ICLP'01",

year = 2001,

publisher = "",

address = "",		

pages = "212-226"}

@article{Eiter08,

  author    = {T. Eiter and

               G. Ianni and

               T. Lukasiewicz and

               R. Schindlauer and

               H. Tompits},

  title     = {Combining answer set programming with description logics

               for the Semantic Web},

  journal   = {Artif.\ Intell.},

  volume    = {172},

  number    = {12-13},

  year      = {2008},

  pages     = {1495-1539}

}

@article{GL91,

author = "Michael Gelfond and Vladimir Lifschitz",

title = "Classical negation in logic programs and disjunctive databases",

journal = "New Generation Computing",

volume =  "9",

number = "",

year = 1991,

pages = "365-385"}

@inproceedings{Gelfond91,

  author    = {Michael Gelfond},

  title     = {Strong Introspection},

  booktitle = {Proc.\ 9th National Conference on Artificial Intelligence (AAAI 1991)},

  pages     = {386--391},

  year      = {1991}

  }

@article{LTT99,

author = "Vladimir Lifschitz and Lappoon Tang and Hudson Turner",

title = "Nested expressions in logic programs",

journal   = {Ann.\ Math.\ Artif.\ Intell.},

volume =  "25",

number = "1-2",

year = 1999,

pages = "369-389"}

@article{Lif02,

author = "V. Lifschitz",

title = "Answer set programming and plan generation",

journal = "Artificial Intelligence",

volume =  "138",

number = "1-2",

year = 2002,

pages = "39-54"}

@inproceedings{BartholomewLM11,

  author    = {M. Bartholomew and

               J. Lee and

               Y. Meng},

  title     = {First-Order Extension of the {FLP} Stable Model Semantics

               via Modified Circumscription},

  booktitle = {Proc.\ 22nd Int'l Joint Conf.\  Artificial Intelligence (IJCAI-11)},

  year      = {2011},

  pages     = {724-730}

}

@article{Lukasiewicz10,

  author    = {T. Lukasiewicz},

  title     = {A Novel Combination of Answer Set Programming with Description

               Logics for the Semantic Web},

  journal   = {IEEE TKDE},

  volume    = {22},

  number    = {11},

  year      = {2010},

  pages     = {1577-1592}

}

@inproceedings{MT99,

author = "V. W. Marek and M. Truszczy{\'n}ski",

title = "Stable models and an

alternative logic programming paradigm",

booktitle = "The Logic Programming Paradigm: A 25-Year Perspective",

year = 1999,

publisher = "Springer",

address = "",		

pages = "375-398"}

@inproceedings{Pearce96,

  author    = {D. Pearce},

  title     = {A New Logical Characterisation of Stable Models and Answer Sets},

  booktitle = {Non-Monotonic Extensions of Logic Programming (NMELP-96)},

  series       = {LNCS},

  volume       = {1216},

  publisher    = {Springer},

  pages     = {57--70},

  year      = {1996}

  }

@article{SuCH20,

  author    = {Ezgi Iraz Su and

               Luis Fariñas del Cerro and Andreas Herzig},

  title     = {Autoepistemic equilibrium logic and epistemic specifications},

  journal   = {Artif. Intell.},

  volume    = {282},

  pages     = {103-249},

  year      = {2020}

}

@inproceedings{Su2024,
  author       = {Ezgi Iraz Su},
  editor       = {Carmine Dodaro and
                  Gopal Gupta and
                  Maria Vanina Martinez},
  title        = {A Generalisation of Epistemic Splitting Property},
  booktitle    = {17th International
                  Conference on Logic Programming and Nonmonotonic Reasoning {LPNMR} 2024, USA},
  series       = {Lecture Notes in Computer Science},
  volume       = {15245},
  pages        = {393--406},
  publisher    = {Springer},
  year         = {2024}
  }

@article{LeoneRS97,

  author    = {Nicola Leone and Pasquale Rullo and Francesco Scarcello}, 
  title     = {Disjunctive Stable Models: Unfounded Sets, Fixpoint Semantics, and

               Computation},

  journal   = {Information and Compututation},

  volume    = {135},

  number    = {2},

  pages     = {69-112},

  year      = {1997}

}

@article{JanhunenN16,  author    = {T. Janhunen and               I. Niemel{\"{a}}},  title     = {The Answer Set Programming Paradigm},  journal   = {{AI} Magazine},  volume    = {37},  number    = {3},  pages     = {13-24},  year      = {2016}}

@article{ErdemGL16,  author    = {E. Erdem and M. Gelfond and N. Leone},  title     = {Applications of answer set Programming},  journal   = {{AI} Magazine},  volume    = {37},  number    = {3},  pages     = {53-68},  year      = {2016}}

@article{cado-etal-97,	Author = {Marco Cadoli and Thomas Eiter and Georg Gottlob},	Date-Added = {2010-11-15 11:49:15 +0100},	Date-Modified = {2010-11-15 11:49:15 +0100},	Journal =  {IEEE TKDE},	Number = 3,	Pages = {448--463},	Title = {Default Logic as a Query Language},	Volume = 9,	Year = 1997}

@article{Leone2006, author = {Nicola Leone and Gerald Pfeifer and Wolfgang Faber and Thomas Eiter and Georg Gottlob and Simona Perri and Francesco Scarcello}, title = {The DLV System for Knowledge repsentation and Reasoning}, journal = {ACM Trans. Comput. Logic}, issue_date = {2006}, volume = {7}, number = {3}, month = jul, year = {2006}, issn = {1529-3785}, pages = {499--562}, numpages = {64}, doi = {10.1145/1149114.1149117}, acmid = {1149117}, publisher = {ACM}, address = {New York, NY, USA}, keywords = {Answer sets, computational complexity, implementation, knowledge representation, nonmonotonic reasoning, stable models}}

@article{shen-eiterCns20,  author    = {Yi-Dong Shen and Thomas Eiter},  title     = {Constraint Monotonicity, Epistemic Splitting and Foundedness Are Too               Strong in Answer Set Programming},  journal   = {CoRR},  volume    = {abs/2010.00191},  year      = {2020}}

@article{abs-2108-07669,
  author    = {Jorge Fandinno and
               Wolfgang Faber and
               Michael Gelfond},
  title     = {Thirty years of Epistemic Specifications},
  journal   = {Theory Pract. Log. Program.},
  volume    = {22},
  number    = {6},
  pages     = {1043-1083},
  year      = {2022}
}

@inproceedings{Costantini21,
  author    = {Stefania Costantini},
  title     = {Top-down Splitting Property for Epistemic Logic Programs},
  booktitle = {Proc.\  International Conf.\  Logic Programming 2021 Workshops},
  series    = {{CEUR} Workshop Proceedings},
  volume    = {2970},
  publisher = {CEUR-WS.org},
  year      = {2021}
  }

@inproceedings{SuICLP2021,

  author    = {Ezgi Iraz Su},

  title     = {Refining the Semantics of Epistemic Specifications},

  booktitle = {Proc.\ 37th International Conf.\ Logic Programming (ICLP-TC 2021)},

  pages     = {113--126},

  year      = {2021}

}

@article{DBLP:journals/ai/AmendolaEFLM16,

  author    = {Giovanni Amendola and

               Thomas Eiter and

               Michael Fink and

               Nicola Leone and

               Jo{\~{a}}o Moura},

  title     = {Semi-equilibrium models for paracoherent answer set programs},

  journal   = {Artif. Intell.},

  volume    = {234},

  pages     = {219--271},

  year      = {2016},

  url       = {https://doi.org/10.1016/j.artint.2016.01.011},

  doi       = {10.1016/j.artint.2016.01.011},

  timestamp = {Fri, 30 Nov 2018 13:29:20 +0100},

  biburl    = {https://dblp.org/rec/journals/ai/AmendolaEFLM16.bib},

  bibsource = {dblp computer science bibliography, https://dblp.org}

}

@article{DBLP:journals/siamcomp/Wagner90,
  author       = {Klaus W. Wagner},
  title        = {Bounded Query Classes},
  journal      = {{SIAM} J. Comput.},
  volume       = {19},
  number       = {5},
  pages        = {833--846},
  year         = {1990}
}

@inproceedings{DBLP:conf/ijcai/Darwiche11,
  author       = {Adnan Darwiche},
  title        = {{SDD:} {A} New Canonical Representation of Propositional Knowledge
                  Bases},
  booktitle    = {Proc.\  22nd International Joint Conf.\
                  Artificial Intelligence (IJCAI-11)},
  pages        = {819--826},
  year         = {2011},
  doi          = {10.5591/978-1-57735-516-8/IJCAI11-143},
  timestamp    = {Tue, 20 Aug 2019 16:16:04 +0200},
  biburl       = {https://dblp.org/rec/conf/ijcai/Darwiche11.bib},
  bibsource    = {dblp computer science bibliography, https://dblp.org}
}

@article{DBLP:journals/tplp/FierensBRSGTJR15,
  author       = {Daan Fierens and
                  Guy Van den Broeck and
                  Joris Renkens and
                  Dimitar Sht. Shterionov and
                  Bernd Gutmann and
                  Ingo Thon and
                  Gerda Janssens and
                  Luc De Raedt},
  title        = {Inference and learning in probabilistic logic programs using weighted
                  Boolean formulas},
  journal      = {Theory Pract. Log. Program.},
  volume       = {15},
  number       = {3},
  pages        = {358--401},
  year         = {2015},
   doi          = {10.1017/S1471068414000076},
  timestamp    = {Mon, 26 Oct 2020 08:22:48 +0100},
  biburl       = {https://dblp.org/rec/journals/tplp/FierensBRSGTJR15.bib},
  bibsource    = {dblp computer science bibliography, https://dblp.org}
}

@inproceedings{KR2021-26,
    title     = {Treewidth-Aware Cycle Breaking for Algebraic Answer Set Counting},
    author    = {Eiter, Thomas and Hecher, Markus and Kiesel, Rafael},
    booktitle = {Proc.\  18th International Conf.\  Principles of Knowledge Representation and Reasoning (KR 2021)},
    pages     = {269--279},
    year      = {2021},
    month     = {11},
    doi       = {10.24963/kr.2021/26}
  }

@inproceedings{ZhangZ17a,

      author    = {Yan Zhang and Yuanlin Zhang},

  title     = {Epistemic Specifications and Conformant Planning},

  booktitle = {The Workshops of The Thirty-First {AAAI} Conf.\  on Artificial Intelligence},

  year      = {2017}

  }

@article{CalimeriFGIKKLM20,
  author       = {Francesco Calimeri and
                  Wolfgang Faber and
                  Martin Gebser and
                  Giovambattista Ianni and
                  Roland Kaminski and
                  Thomas Krennwallner and
                  Nicola Leone and
                  Marco Maratea and
                  Francesco Ricca and
                  Torsten Schaub},
  title        = {ASP-Core-2 Input Language Format},
  journal      = {Theory Pract. Log. Program.},
  volume       = {20},
  number       = {2},
  pages        = {294-309},
  year         = {2020}
}

@inproceedings{DBLP:conf/lpnmr/AmendolaCRT22,
  author       = {Giovanni Amendola and
                  Bernardo Cuteri and
                  Francesco Ricca and
                  Mirek Truszczynski},
  editor       = {Georg Gottlob and
                  Daniela Inclezan and
                  Marco Maratea},
  title        = {Solving Problems in the Polynomial Hierarchy with {ASP(Q)}},
  booktitle    = {Logic Programming and Nonmonotonic Reasoning - 16th International
                  Conference, {LPNMR} 2022, Genova, Italy, September 5-9, 2022, Proceedings},
  series       = {Lecture Notes in Computer Science},
  volume       = {13416},
  pages        = {373--386},
  publisher    = {Springer},
  year         = {2022}
  }

@inproceedings{DBLP:conf/lpnmr/EiterGV97,
  author       = {Thomas Eiter and
                  Georg Gottlob and
                  Helmut Veith},
  editor       = {J{\"{u}}rgen Dix and
                  Ulrich Furbach and
                  Anil Nerode},
  title        = {Modular Logic Programming and Generalized Quantifiers},
  booktitle    = {Logic Programming and Nonmonotonic Reasoning, 4th International Conference,
                  LPNMR'97, Dagstuhl Castle, Germany, July 28-31, 1997, Proceedings},
  series       = {Lecture Notes in Computer Science},
  volume       = {1265},
  pages        = {290--309},
  publisher    = {Springer},
  year         = {1997}
  }

@ARTICLE{eite-etal-07b, 
       author = "T. Eiter and W. Faber and M. Fink and S. Woltran",
       title   = "{Complexity Results for Answer Set Programming with Bounded Predicate Arities}",
       year = 2007,
       journal = AMAI,
       volume = 51,
       number = "2-4",
       pages = "123-165",
       projectref = {FWF-EQUIV}
}

@inproceedings{DBLP:conf/lpnmr/PolletR97,
  author       = {Ch. Pollet and
                  Jeffrey B. Remmel},
  editor       = {J{\"{u}}rgen Dix and
                  Ulrich Furbach and
                  Anil Nerode},
  title        = {Non-Monotonic Reasoning with Quantified Boolean Constraints},
  booktitle    = {Logic Programming and Nonmonotonic Reasoning, 4th International Conference,
                  LPNMR'97, Dagstuhl Castle, Germany, July 28-31, 1997, Proceedings},
  series       = {Lecture Notes in Computer Science},
  volume       = {1265},
  pages        = {18--39},
  publisher    = {Springer},
  year         = {1997}
 }

@inproceedings{DBLP:conf/iclp/Costantini022,
  author       = {Stefania Costantini and
                  Andrea Formisano},
  title        = {Epistemic Logic Programs: a Novel Perspective and Some Extensions},
  booktitle    = {Proceedings of the International Conference on Logic Programming 2022
                  Workshops co-located with the 38th International Conference on Logic
                  Programming {(ICLP} 2022)},
  series       = {{CEUR} Workshop Proceedings},
  volume       = {3193},
  year         = {2022}
 }

\appendix

\renewcommand\thelemma{\thesection.\arabic{lemma}}
\renewcommand\theproposition{\thesection.\arabic{proposition}}
\renewcommand\thecorollary{\thesection.\arabic{corollary}}
\renewcommand\thedefinition{\thesection.\arabic{define}}
\renewcommand\thetheorem{\thesection.\arabic{theorem}}
\setcounter{theorem}{0}
\setcounter{lemma}{0}
\setcounter{proposition}{0}

\section{Proofs of Theorems and Lemmas}
\label{app:proofs}  

\medskip \noindent $\hookrightarrow$\  {Theorem~\ref{th-simp-normal}}
\RthSimpNormal*
\begin{proof}
%\begin{proof}[\bf Proof of Theorem~\ref{th-simp-normal}]
$(\Longrightarrow)$ Assume that a model $I$ of 
a simple normal program $\Pi$ is well-supported by
Definition~\ref{def-well-supported-simp-normal}.
That is, there exists a strict well-founded
partial order $\prec$ on atoms in $I$ 
such that for every $p\in I$ 
there is a rule $r$ of the form
\begin{tabbing}
\hspace{.3in} $p\leftarrow q_1\wedge ... \wedge q_n\wedge \neg c_1\wedge ... \wedge \neg c_k$ 
\end{tabbing} 
in $\Pi$, where
$I$ satisfies $body(r)$ and $q_i\prec p$
for every $q_i$ in $body(r)$.
Note that every $q_i$ is in $I$ and every $\neg c_j$ is in $\neg I^-$.
Let $S_p=\{q_1, ..., q_n\}$.
Then, $\prec$ is also a strict well-founded
partial order on atoms in $I$ 
such that for every $p\in I$
there is a rule $r$ as above in $\Pi$ and
$S_p\subset I$, where 
$S_p\cup \neg I^- \models body(r)$
and $q_i\prec p$
for every $q_i\in S_p$. 
Hence,
$I$ is well-supported by 
Definition~\ref{def-well-supported-normal-atomhead}.

$(\Longleftarrow)$ Conversely, assume that a model $I$ of 
a simple normal program $\Pi$ is well-supported by
Definition~\ref{def-well-supported-normal-atomhead}.
That is, there exists a strict well-founded
partial order $\prec$ on atoms in $I$ 
such that for every $p\in I$
there is a rule $r$ as above in $\Pi$ and
some $S\subset I$, where 
$S\cup \neg I^- \models body(r)$
and $q\prec p$
for every $q\in S$.
As $S\subset I$ and
$S\cup \neg I^- \models body(r)$,
we have (1) $I$ satisfies $body(r)$, and 
(2) all $q_i$s in $body(r)$ are in $S$ and thus in $I$.
Then, $\prec$ is also a strict well-founded
partial order on atoms in $I$ 
such that for every $p\in I$ 
there is a rule $r$ as above in $\Pi$,
where
$I$ satisfies $body(r)$ and $q_i\prec p$
for every $q_i$ in $body(r)$.
Hence,
$I$ is well-supported by 
Definition~\ref{def-well-supported-simp-normal}.
%\hfill$\Box$
\end{proof}

\noindent $\hookrightarrow$\  {Proof of Theorem~\ref{th-wellsupport-lfp}}
\RthWellsupportLfp*
\begin{proof}
Let $\Pi$ be an epistemic-free normal program 
with atomic rule heads, and let $I$
be a model of $\Pi$. If $I=\emptyset$, then there is no
rule $p \leftarrow body(r)$ in $\Pi$, where $p$ is an atom
and $I$ satisfies $body(r)$. In this case, 
$\mathit{lfp}(T_{\Pi}(\emptyset,\neg I^-)) = \emptyset$.
Therefore, next we assume $I\neq \emptyset$.

$(\Longrightarrow)$ Assume that a model $I$ is well-supported 
in $\Pi$ by Definition~\ref{def-well-supported-normal-atomhead}.
That is, there exists a strict well-founded
partial order $\prec$ on atoms in $I$ 
such that for every $p\in I$
there is a rule $p \leftarrow body(r)$ in $\Pi$ and
some $S\subset I$, where 
$S\cup \neg I^- \models body(r)$
and $q\prec p$ for every $q\in S$.

We first prove $I\subseteq \mathit{lfp}(T_{\Pi}(\emptyset,\neg I^-))$.
As $\prec$ is well-founded,
every chain $p\succ q\succ \cdots$ 
starting from any $p\in I$ is finite.
Let the length of the longest chains be $n$.
Then we can make a partition
$I=S_1\cup \cdots \cup S_n$,
where for every $p\in I$, $p$ is in $S_i$
($1\leq i\leq n$)
if the length of the longest chains 
starting from $p$ is $i$. 
As $\prec$ is a strict partial order,
for any $i\neq j$, $S_i\cap S_j=\emptyset$.
We now prove by induction that for every $1\leq i\leq n$,
$S_i\subseteq \mathit{lfp}(T_{\Pi}(\emptyset,\neg I^-))$.
When $i=1$, for every $p\in S_1$ there is no $q\in I$
with $q\prec p$. So there is a rule $p \leftarrow body(r)$ 
in $\Pi$ such that $\neg I^- \models body(r)$.
This means that for every $p\in S_1$,
$p$ is in $T_{\Pi}^1(\emptyset,\neg I^-)$
and thus is in $\mathit{lfp}(T_{\Pi}(\emptyset,\neg I^-))$.
For induction, assume that for $1\leq i\leq k< n$,
every $p\in S_i$ is in $\mathit{lfp}(T_{\Pi}(\emptyset,\neg I^-))$,
i.e., $(S_1\cup \cdots \cup S_k)\subseteq \mathit{lfp}(T_{\Pi}(\emptyset,\neg I^-))$.
For every $p\in S_{k+1}$, by the given condition
there is a rule $p \leftarrow body(r)$ in $\Pi$ and
some $S\subset I$, where 
$S\cup \neg I^- \models body(r)$
and $q\prec p$ for every $q\in S$.
As every $q\in I$ with $q\prec p$ is in 
$S_1\cup \cdots \cup S_k$, 
$S\subseteq (S_1\cup \cdots \cup S_k)$.
Then by the induction hypothesis,
$S\subseteq \mathit{lfp}(T_{\Pi}(\emptyset,\neg I^-))$.
As $S\cup \neg I^- \models body(r)$,
$\mathit{lfp}(T_{\Pi}(\emptyset,\neg I^-))\cup \neg I^- \models body(r)$.
By the rule $p \leftarrow body(r)$, 
$p$ is in $\mathit{lfp}(T_{\Pi}(\emptyset,\neg I^-))$.
Therefore,  all atoms in $I$ are in the least fixpoint 
$\mathit{lfp}(T_{\Pi}(\emptyset,\neg I^-))$,
i.e., $I\subseteq \mathit{lfp}(T_{\Pi}(\emptyset,\neg I^-))$.

We next prove by induction 
$\mathit{lfp}(T_{\Pi}(\emptyset,\neg I^-))\subseteq I$,
i.e., for every $i\geq 0$, $T_{\Pi}^i(\emptyset,\neg I^-)\subseteq I$.
It trivially holds for $i=0$, where $T_{\Pi}^0(\emptyset,\neg I^-)=\emptyset$.
For induction, assume that for $0\leq i\leq k$, 
$T_{\Pi}^i(\emptyset,\neg I^-)\subseteq I$.
For every $p$ in $T_{\Pi}^{k+1}(\emptyset,\neg I^-)$,
there must be a rule $p \leftarrow body(r)$ in $\Pi$ 
such that $T_{\Pi}^k(\emptyset,\neg I^-)\cup \neg I^- \models body(r)$.
By induction hypothesis,
$T_{\Pi}^k(\emptyset,\neg I^-)\subseteq I$,
so we have $I\cup \neg I^- \models body(r)$ and thus
$I$ satisfies $body(r)$. 
As $I$ is a model of $\Pi$, $p$ is also in $I$.
This shows $T_{\Pi}^{k+1}(\emptyset,\neg I^-)\subseteq I$.
Therefore, we have $\mathit{lfp}(T_{\Pi}(\emptyset,\neg I^-))\subseteq I$. 

As a result, we have $\mathit{lfp}(T_{\Pi}(\emptyset,\neg I^-)) = I$.

$(\Longleftarrow)$ Conversely,  assume $\mathit{lfp}(T_{\Pi}(\emptyset,\neg I^-))=I$.
Then all atoms in $I$ can be iteratively derived
starting from $T_{\Pi}^0(\emptyset,N) = \emptyset$ 
via the sequence $\langle T_{\Pi}^i(\emptyset,\neg I^-)
\rangle_{i=0}^\infty$. That is,
for every $p\in T_{\Pi}^i(\emptyset,\neg I^-)$ with $i>0$,
there must be a rule $p \leftarrow body(r)$ in $\Pi$ 
such that $T_{\Pi}^{i-1}(\emptyset,\neg I^-)\cup \neg I^- \models body(r)$.
Consider a strict 
partial order $\prec$ on atoms in $I$, where for every $p\in I$,
if $p$ is in $T_{\Pi}^i(\emptyset,\neg I^-)\setminus T_{\Pi}^{i-1}(\emptyset,\neg I^-)$
for some $i>0$,
which is derived from a rule $p \leftarrow body(r)$ in $\Pi$ and some
$S\subseteq T_{\Pi}^{i-1}(\emptyset,\neg I^-)$ 
such that $S\cup \neg I^- \models body(r)$,
then we have $q\prec p$ for every $q\in S$.
For every $p\in I$
every chain $p\succ q\succ \cdots$
starting from $p$
is finite and will end at some $s\in I$, where
there is a rule $s \leftarrow body(r)$ in $\Pi$ 
such that $\neg I^- \models body(r)$. 
This means that
there is no infinite decreasing chain
$p\succ q\succ \cdots$.
So $\prec$ is a strict well-founded
partial order on $I$.
Therefore, $I$ is well-supported 
by Definition~\ref{def-well-supported-normal-atomhead}.
%\hfill$\Box$
\end{proof}

\noindent $\hookrightarrow$\  {Proof of Theorem~\ref{th-wellsupport-ruleheads-lfp}}
\RthWellsupportRuleheadsLfp*
\begin{proof}
Let $\Pi$ be an epistemic-free normal program 
and $I$ be a model of $\Pi$.
Let ${\cal V}$ be a set of rule heads in $\Pi$.

$(\Longrightarrow)$ Assume that ${\cal V}$ is well-supported 
in $\Pi$ w.r.t.\ $I$ by Definition~\ref{def-well-supportedHeads-normal}.
We first show by induction
$\mathit{lfp}(T_{\Pi}(\emptyset,\neg I^-))\subseteq {\cal V}$,
i.e., for every $i\geq 0$ we have 
$T_{\Pi}^i(\emptyset,\neg I^-)\subseteq {\cal V}$.
It trivially holds for $i=0$, 
where $T_{\Pi}^0(\emptyset,\neg I^-)=\emptyset$.
Recall that condition (1) in Definition~\ref{def-well-supportedHeads-normal} 
says that for every rule $H\leftarrow body(r)$ in $\Pi$, 
if ${\cal V}\cup \neg I^- \models body(r)$
then $H$ is in ${\cal V}$. 
For induction, assume that for $0\leq i\leq k$ we have 
$T_{\Pi}^i(\emptyset,\neg I^-)\subseteq {\cal V}$.
For every $H\in T_{\Pi}^{k+1}(\emptyset,\neg I^-)$,
by definition
there must be a rule $H\leftarrow body(r)$ in $\Pi$ such that
$T_{\Pi}^k(\emptyset,\neg I^-)\cup \neg I^- \models body(r)$.
Then by the induction hypothesis,
${\cal V}\cup \neg I^- \models body(r)$. 
By condition (1) of Definition~\ref{def-well-supportedHeads-normal},
$H$ is in ${\cal V}$ and thus 
$T_{\Pi}^{k+1}(\emptyset,\neg I^-)\subseteq {\cal V}$.
Hence, we have $\mathit{lfp}(T_{\Pi}(\emptyset,\neg I^-))\subseteq {\cal V}$.
 
We next show ${\cal V}\subseteq \mathit{lfp}(T_{\Pi}(\emptyset,\neg I^-))$.
Recall that condition (2) in Definition~\ref{def-well-supportedHeads-normal} 
says that there exists a strict well-founded
partial order $\prec_h$ on rule heads in ${\cal V}$ 
such that for every $H\in {\cal V}$
there is a rule $H\leftarrow body(r)$ in $\Pi$ and
some $S\subset \cal V$, where 
$S\cup \neg I^- \models body(r)$
and $F\prec_h H$ for every $F\in S$.
As $\prec_h$ is well-founded,
every chain $H\succ_h G\succ_h \cdots$ 
starting from any $H\in {\cal V}$ is finite.
Let the length of the longest chains be $n$.
Then we can make a partition
${\cal V}=S_1\cup \cdots \cup S_n$,
where for every $H\in {\cal V}$, $H$ is in $S_i$
($1\leq i\leq n$)
if the length of the longest chains 
starting from $H$ is $i$. 
As $\prec_v$ is a strict partial order,
for any $i\neq j$, $S_i\cap S_j=\emptyset$.
We now prove by induction that for every $1\leq i\leq n$,
$S_i\subseteq \mathit{lfp}(T_{\Pi}(\emptyset,\neg I^-))$.
When $i=1$, for every $H\in S_1$ there is no $G\in {\cal V}$
with $G\prec_v H$. So there is a rule $H \leftarrow body(r)$ 
in $\Pi$ such that $\neg I^- \models body(r)$.
This means that for every $H\in S_1$,
$H$ is in $T_{\Pi}^1(\emptyset,\neg I^-)$
and thus is in $\mathit{lfp}(T_{\Pi}(\emptyset,\neg I^-))$.
For induction, assume that for $1\leq i\leq k< n$,
every $H\in S_i$ is in $\mathit{lfp}(T_{\Pi}(\emptyset,\neg I^-))$,
i.e., $(S_1\cup \cdots \cup S_k)\subseteq \mathit{lfp}(T_{\Pi}(\emptyset,\neg I^-))$.
For every $H\in S_{k+1}$, by condition (2)
there is a rule $H \leftarrow body(r)$ in $\Pi$ and
some $S\subset {\cal V}$, where 
$S\cup \neg I^- \models body(r)$
and $G\prec_h H$ for every $G\in S$.
As every $G\in {\cal V}$ with $G\prec_h H$ is in 
$S_1\cup \cdots \cup S_k$, 
$S\subseteq (S_1\cup \cdots \cup S_k)$.
Then by the induction hypothesis,
$S\subseteq \mathit{lfp}(T_{\Pi}(\emptyset,\neg I^-))$.
As $S\cup \neg I^- \models body(r)$,
$\mathit{lfp}(T_{\Pi}(\emptyset,\neg I^-))\cup \neg I^- \models body(r)$.
By the rule $H \leftarrow body(r)$, 
$H$ is in $\mathit{lfp}(T_{\Pi}(\emptyset,\neg I^-))$.
Therefore,  all rule heads in ${\cal V}$ are in the least fixpoint 
$\mathit{lfp}(T_{\Pi}(\emptyset,\neg I^-))$,
i.e., ${\cal V}\subseteq \mathit{lfp}(T_{\Pi}(\emptyset,\neg I^-))$.

As a result, we have $\mathit{lfp}(T_{\Pi}(\emptyset,\neg I^-)) = {\cal V}$.

$(\Longleftarrow)$ Conversely,  
assume $\mathit{lfp}(T_{\Pi}(\emptyset,\neg I^-)) = {\cal V}$.
As $I$ is a model of $\Pi$, it is a model of 
$\mathit{lfp}(T_{\Pi}(\emptyset,\neg I^-))$ 
and thus is a model of ${\cal V}$.
For every $H\in \mathit{lfp}(T_{\Pi}(\emptyset,\neg I^-))$,
there must be a rule $H\leftarrow body(r)$ in $\Pi$ such that 
$\mathit{lfp}(T_{\Pi}(\emptyset,\neg I^-))\cup \neg I^- \models body(r)$.
As $I$ is a model of 
$\mathit{lfp}(T_{\Pi}(\emptyset,\neg I^-))\cup \neg I^-$,
$I$ satisfies $body(r)$. 
This show that for every $H\in {\cal V}$,
there is a rule $H\leftarrow body(r)$ in $\Pi$ such that 
$I$ satisfies $body(r)$. 
Moreover,
for every rule $H\leftarrow body(r)$ in $\Pi$, 
if $\mathit{lfp}(T_{\Pi}(\emptyset,\neg I^-))\cup \neg I^- \models body(r)$,
$H$ must be in $\mathit{lfp}(T_{\Pi}(\emptyset,\neg I^-))$.
This shows that ${\cal V}$ satisfies condition (1) of 
Definition~\ref{def-well-supportedHeads-normal}.

We next show that ${\cal V}$ satisfies 
condition (2) of Definition~\ref{def-well-supportedHeads-normal}.
As  $\mathit{lfp}(T_{\Pi}(\emptyset,\neg I^-)) = {\cal V}$,
all rule heads in ${\cal V}$ can be iteratively derived
starting from $T_{\Pi}^0(\emptyset,N) = \emptyset$ 
via the sequence $\langle T_{\Pi}^i(\emptyset,\neg I^-)
\rangle_{i=0}^\infty$. That is,
for every $H$ in 
$T_{\Pi}^i(\emptyset,\neg I^-)\setminus T_{\Pi}^{i-1}(\emptyset,\neg I^-)$ with $i>0$,
there is a rule $H \leftarrow body(r)$ in $\Pi$ 
such that $T_{\Pi}^{i-1}(\emptyset,\neg I^-)\cup \neg I^- \models body(r)$.
Consider a strict 
partial order $\prec_h$ on rule heads in ${\cal V}$, where for every $H\in {\cal V}$,
if $H$ is in $T_{\Pi}^i(\emptyset,\neg I^-)\setminus T_{\Pi}^{i-1}(\emptyset,\neg I^-)$
for some $i>0$,
which is derived from a rule $H \leftarrow body(r)$ in $\Pi$ and some
$S\subseteq T_{\Pi}^{i-1}(\emptyset,\neg I^-)$ 
such that $S\cup \neg I^- \models body(r)$,
then we have $F\prec_h H$ for every $F\in S$.
Note that for every $H\in {\cal V}$, 
every chain $H\succ_h F\succ_h \cdots$ starting from $H$
is finite and will end at some $G\in {\cal V}$, where
there is a rule $G \leftarrow body(r)$ in $\Pi$ 
such that $\neg I^- \models body(r)$. 
That is,
there is no infinite decreasing chain
$H\succ_h F\succ_h \cdots$.
So $\prec_h$ is a strict well-founded
partial order on ${\cal V}$.
This shows that ${\cal V}$ satisfies condition (2) of Definition~\ref{def-well-supportedHeads-normal}. 

Therefore,  
by Definition~\ref{def-well-supportedHeads-normal},
${\cal V}$ is well-supported in $\Pi$ w.r.t.\ $I$.
%\hfill$\Box$
\end{proof}

\noindent $\hookrightarrow$\  {Proof of Theorem~\ref{th-wellsupport-lfp-normal}}

\RthWellsupportLfpNormal*
\begin{proof}
Let $\Pi$ be an epistemic-free normal program and $I$
be a model of $\Pi$.

$(\Longrightarrow)$ Assume that $I$ is well-supported in $\Pi$. 
By Definition~\ref{def-well-supported-normal},
the two conditions hold:
(1) there exists a well-supported set $\cal V$ of rule heads
w.r.t.\ $I$;
and (2) for every $p\in I$
there are $k\geq 1$ rules 
$head(r_1)\leftarrow body(r_1), \cdots,
head(r_k)\leftarrow body(r_k)$ in $\Pi$ and  
some $S\subset {\cal V}$, where 
for $1\leq i\leq k$, $S\cup \neg I^- \models body(r_i)$, 
$F\prec_h head(r_i)$
for every $F\in S$,
$S\cup\{head(r_1), \cdots, head(r_k)\}\cup \neg I^- \models p$,
and $q\prec p$
for every $q\in I$ such that 
$S\cup \neg I^- \models q$.
It follows from condition (1) and Theorem~\ref{th-wellsupport-ruleheads-lfp}
that ${\cal V}=\mathit{lfp}(T_{\Pi}(\emptyset,\neg I^-))$.
Moreover, in condition (2)
the rule heads $\{head(r_1), \cdots, head(r_k)\}$
must be in ${\cal V}$ and thus
we have ${\cal V}\cup \neg I^- \models p$.
This shows $\mathit{lfp}(T_{\Pi}(\emptyset,\neg I^-))\cup \neg I^-\models p$
for every $p\in I$.

$(\Longleftarrow)$ Conversely,  assume $\mathit{lfp}(T_{\Pi}(\emptyset,\neg I^-))\cup \neg I^-\models p$ for every $p\in I$. 
Let ${\cal V}=\mathit{lfp}(T_{\Pi}(\emptyset,\neg I^-))$. Then by Theorem~\ref{th-wellsupport-ruleheads-lfp}, ${\cal V}$ is a well-supported set of rule heads
of $\Pi$ w.r.t.\ $I$ with a strict well-founded
partial order $\prec_h$ on ${\cal V}$.
So condition (1) of Definition~\ref{def-well-supported-normal}
is satisfied.

To show condition (2) of Definition~\ref{def-well-supported-normal}, 
consider an atom $p\in I$. Then 
$\mathit{lfp}(T_{\Pi}(\emptyset,\neg I^-))\cup \neg I^-\models p$
and thus ${\cal V}\cup \neg I^-\models p$.
As ${\cal V}$ is well-supported in $\Pi$ w.r.t.\ $I$,
there must exist some $k\geq 1$ rules 
$head(r_1)\leftarrow body(r_1), \cdots,
head(r_k)\leftarrow body(r_k)$ in $\Pi$ and  
some $S\subset {\cal V}$, where 
for $1\leq i\leq k$, 
$S\cup \neg I^- \models body(r_i)$, $F\prec_h head(r_i)$
for every $F\in S$,
$S\cup\{head(r_1), \cdots, head(r_k)\}\cup \neg I^- \models p$ 
and $S\cup \neg I^- \not\models p$.
Now consider a strict 
partial order $\prec$ on $I$, where
we let $q\prec p$
for every $q\in I$ such that 
$S\cup \neg I^- \models q$.
Note that every chain $p\succ q\succ \cdots$
starting from $p$
is finite and will end at some $s\in I$, where
there are some $n\geq 1$ rules  
$head(r^1)\leftarrow body(r^1), \cdots,
head(r^n)\leftarrow body(r^n)$ in $\Pi$ 
such that $\{head(r^1), \cdots, head(r^n)\}\cup \neg I^- \models s$
and $\neg I^- \models body(r^j)$
for $1\leq j\leq n$. 
That is,
there is no infinite decreasing chain
$p\succ q\succ \cdots$ starting from $p$.
So $\prec$ is a strict well-founded
partial order on $I$.
As a result, condition (2) of Definition~\ref{def-well-supported-normal}
is satisfied.
Therefore, $I$ is well-supported in $\Pi$
by Definition~\ref{def-well-supported-normal}.
%\hfill$\Box$
\end{proof}

\noindent $\hookrightarrow$\  {Proof of Theorem~\ref{th-DI-GLnlprational-semantics}}
\RthDIGLnlprationalSemantics*
\begin{proof}
Recall that a model $I$ of a simple normal program $\Pi$ 
is an answer set of $\Pi$ under the rational answer set semantics
iff $I$ is a minimal well-supported model of $\Pi$
iff $I$ is an answer set of $\Pi$ under the GL$_{nlp}$-semantics
(Corollary~\ref{cor-rational-simple-normal-pro})
iff $I$ is a minimal model of the GL-reduct $\Pi^I$
of $\Pi$ w.r.t.\ $I$.

Let $I$ be an answer set of $\Pi$ under the GL-semantics.
Then $I$ is both a minimal model of $\Pi$
and a minimal model of $\Pi^I$.
Assume towards a contradiction that 
$I$ is not an answer set of $\Pi$ under the
rational answer set semantics;
i.e., for any head selection $\sel$,
$I$ is not an answer set of $\Pi^I_{sel}$ 
under the GL$_{nlp}$-semantics.
Then $I$ is not a minimal model of $(\Pi^I_{sel})^I$.
Let $J\subset I$ be a minimal model of $(\Pi^I_{sel})^I$.
For any rule $r$ 
in $\Pi^I$ of the form
$A_1\mid\cdots\mid A_k\leftarrow B_1\wedge\cdots\wedge B_m$
such that $J$ satisfies $body(r)$ (and thus $I$ satisfies $body(r)$),
there must be a rule  
in $(\Pi^I_{sel})^I$ of the form
$A_i\leftarrow B_1\wedge\cdots\wedge B_m$,
where $sel(A_1\mid\cdots\mid A_k, I) = A_i$,
such that $J$ satisfies $A_i$ and thus satisfies $head(r)$.
This means that $J$ is a model of $\Pi^I$,
contradicting the assumption that $I$ is a minimal model 
of $\Pi^I$. Hence for some head selection function $\sel$,
$I$ is an answer set of $\Pi^I_{sel}$ 
under the GL$_{nlp}$-semantics.
That is, $I$ is an answer set of $\Pi$ 
under the rational answer set semantics.
%\hfill$\Box$
\end{proof}

\noindent $\hookrightarrow$\  {Proof of Lemma~\ref{lem-se16-semantics-epModel}}

\RlemSESixteenSemanticsEpModel*
\begin{proof}
Let ${\cal A}$ be a candidate world view of 
an epistemic program $\Pi$ w.r.t.\ 
an answer set semantics $\cal X$ 
and some $\Phi \subseteq Ep(\Pi)$.
By Definition~\ref{SE16-generic-semantics}, 
every $\mathbf{not} F\in \Phi$ is true in $\cal A$ and
every $\mathbf{not} F\in Ep(\Pi)\setminus \Phi$ is false in $\cal A$.
This means that for every $\mathbf{not} F\in Ep(\Pi)\setminus \Phi$,
$F$ is true and $\neg F$ is false in every $I\in {\cal A}$.
As ${\cal A} \neq \emptyset$
and every $I\in {\cal A}$ is an answer set of $\Pi^\Phi$ under $\cal X$,
by the Gelfond answer set principles
every $I\in {\cal A}$ is a model of $\Pi^\Phi$.
Let $r$ be a rule in $\Pi$ and $r'$ be the rule in $\Pi^\Phi$
that is obtained from 
$r$ with every $\mathbf{not} F\in \Phi$
replaced by $\top$ and 
every $\mathbf{not} F\in Ep(\Pi)\setminus \Phi$
replaced by $\neg F$.
As every $I\in {\cal A}$ is a model of $r'$,
by Definition~\ref{satisfaction-epis-program}
every $I\in {\cal A}$ is a model of $r$ w.r.t.\ ${\cal A}$.
So every $I\in {\cal A}$ is a model of $\Pi$ w.r.t.\ ${\cal A}$
and thus ${\cal A}$ is an epistemic model of $\Pi$.
%\hfill$\Box$
\end{proof}

\noindent $\hookrightarrow$\  {Proof of Lemma~\ref{lem-chara-se16-semantics}}

\RlemCharaSESixteenSemantic*
\begin{proof}
Let $\Pi$ be an epistemic program
and $\cal A$ be an epistemic model of $\Pi$.
Then by Definition~\ref{satisfaction-epis-program},
$\cal A\neq \emptyset$.

$(\Longrightarrow)$ Assume that $\cal A$ is a world view of $\Pi$ 
w.r.t.\ $\Phi$ and $\cal X$
under the SE16-generic semantics.
We first show that $\cal A$ coincides with the collection of 
all answer sets of $\Pi^{\cal A}$ under ${\cal X}$.
By Definition~\ref{SE16-generic-semantics}, 
${\cal A}$ is the collection of all 
answer sets of $\Pi^\Phi$ under $\cal X$,
every $\mathbf{not} F\in \Phi$ is true in $\cal A$ and
every $\mathbf{not} F\in Ep(\Pi)\setminus \Phi$ is false in $\cal A$.
Then by Definition~\ref{reduct}, $\Pi^\Phi=\Pi^{\cal A}$ and thus
${\cal A}$ coincides with the collection of 
all answer sets of $\Pi^{\cal A}$ under ${\cal X}$.

We next prove the condition (2) 
in Lemma~\ref{lem-chara-se16-semantics}.
Assume on the contrary that there exists 
some epistemic model ${\cal W}$ of $\Pi$ 
such that (i) $\cal W$ coincides with the collection of 
all answer sets under ${\cal X}$ of
the epistemic reduct $\Pi^{\cal W}$ of $\Pi$ w.r.t.\ ${\cal W}$
and (ii) $\Phi$
is a proper subset of $\Phi_{\cal W}$,
where $\Phi$ and $\Phi_{\cal W}$ 
are the sets of
all epistemic negations occurring in $\Pi$
that are true in ${\cal A}$ and $\cal W$, respectively.
By Definitions \ref{reduct} and \ref{se16-reduct}, 
$\Pi^{\Phi_{\cal W}}=\Pi^{\cal W}$ and thus
by Definition~\ref{SE16-generic-semantics}, 
${\cal W}$ is a candidate world view of $\Pi$ 
w.r.t.\ $\Phi_{\cal W}$ and $\cal X$.
Then by Definition~\ref{SE16-generic-semantics},
$\cal A$ should not be a world view of $\Pi$ 
w.r.t.\ $\Phi$ and $\cal X$, contradicting our assumption.

As a result, if $\cal A$ is a world view of $\Pi$ 
w.r.t.\ $\Phi$ and $\cal X$
under the SE16-generic semantics, then
(1) $\cal A$ coincides with the collection of 
all answer sets of $\Pi^{\cal A}$,
and (2) there exists no other epistemic model ${\cal W}$ 
satisfying condition (1)
such that $\Phi$
is a proper subset of $\Phi_{\cal W}$.

$(\Longleftarrow)$ Conversely,  
assume that (1) $\cal A$ coincides with the collection of 
all answer sets of $\Pi^{\cal A}$ under ${\cal X}$,
and (2) there exists no other epistemic model ${\cal W}$  
such that (i) $\cal W$ coincides with the collection of 
all answer sets of $\Pi^{\cal W}$ under ${\cal X}$,
and (ii) $\Phi$
is a proper subset of $\Phi_{\cal W}$,
where $\Phi$ and $\Phi_{\cal W}$ 
are the sets of
all epistemic negations occurring in $\Pi$
that are true in ${\cal A}$ and $\cal W$, respectively.
Then every $\mathbf{not} F\in \Phi$ is true in $\cal A$ and
every $\mathbf{not} F\in Ep(\Pi)\setminus \Phi$ is false in $\cal A$.
By Definitions \ref{reduct} and \ref{se16-reduct}, 
$\Pi^{\Phi}=\Pi^{\cal A}$ and thus
$\cal A$ is the collection of 
all answer sets of $\Pi^{\Phi}$ under ${\cal X}$.
By Definition~\ref{SE16-generic-semantics}, 
${\cal A}$ is a candidate world view of $\Pi$ 
w.r.t.\ $\Phi$ and $\cal X$.

Now assume on the contrary that 
${\cal A}$ is not a world view of $\Pi$ 
w.r.t.\ $\Phi$ and $\cal X$, i.e.,
there exists some candidate world view $\cal W$
w.r.t.\ $\cal X$ and some $\Phi_{\cal W} \subseteq Ep(\Pi)$ such that 
$\Phi$ is a proper subset of $\Phi_{\cal W}$.
Then by Definition~\ref{SE16-generic-semantics}, 
${\cal W}$ is the collection of all 
answer sets of $\Pi^{\Phi_{\cal W}}$ under $\cal X$,
every $\mathbf{not} F\in \Phi_{\cal W}$ is true in $\cal W$ and
every $\mathbf{not} F\in Ep(\Pi)\setminus \Phi_{\cal W}$ is false in $\cal W$.
By Definitions \ref{reduct} and \ref{se16-reduct},
$\Pi^{\Phi_{\cal W}}=\Pi^{\cal W}$ and thus
$\cal W$ coincides with the collection of 
all answer sets of $\Pi^{\cal W}$ under ${\cal X}$.
This contradicts our assumption (2) above:
there exists no other epistemic model ${\cal W}$  
such that (i) $\cal W$ coincides with the collection of 
all answer sets of $\Pi^{\cal W}$ under ${\cal X}$,
and (ii) $\Phi$
is a proper subset of $\Phi_{\cal W}$.
Therefore, we conclude that ${\cal A}$ is a world view of $\Pi$ 
w.r.t.\ $\Phi$ and $\cal X$ under the SE16-generic semantics.
%\hfill$\Box$
\end{proof}

\noindent $\hookrightarrow$\  {Proof of Theorem~\ref{th-SE16-DI-WJrational-semantics}}

\RthSESixteenDIWJrationalSemantics*
\begin{proof}
Let $\Pi$ be an epistemic program,
$\cal A$ be an epistemic model of $\Pi$,
and $\Phi$ be the set of
all epistemic negations occurring in $\Pi$
that are true in ${\cal A}$.

$(\Longrightarrow)$ 
Assume that SE16-X-semantics agrees with 
the rational world view semantics.
That is, an epistemic model is a world view
under the SE16-X-semantics iff 
it is a world view 
under the rational world view semantics.
To prove that $\cal X$ agrees with 
the rational answer set semantics,
we distinguish between two cases. 
\begin{myenumerate}
\item
Assume that $\cal A$ is a world view of $\Pi$
under the rational world view semantics.
Then $\cal A$ is a world view of $\Pi$
under the SE16-X-semantics.
Assume on the contrary that $\cal X$ does not agree with 
the rational answer set semantics, i.e., we have
either (i) $\cal A$ does not coincide with the collection of 
all answer sets under $\cal X$ of
the epistemic reduct $\Pi^{\cal A}$,
or (ii) $\cal A$ coincides with the collection of 
all answer sets under $\cal X$ of $\Pi^{\cal A}$ and
there exists some epistemic model ${\cal W}$ of $\Pi$,
where $\cal W$ coincides with the collection of 
all answer sets under $\cal X$ of $\Pi^{\cal W}$ and $\Phi$ 
is a proper subset of $\Phi_{\cal W}$ 
which is the set of all epistemic negations occurring in $\Pi$
that are true in $\cal W$.
Then, by Lemma~\ref{lem-chara-se16-semantics} 
$\cal A$ is not a world view of $\Pi$ 
w.r.t.\ $\Phi$ and $\cal X$
under the SE16-generic semantics,
contradicting our assumption that
$\cal A$ is a world view of $\Pi$
under the SE16-X-semantics. 
\item
Assume that $\cal A$ is not a world view of $\Pi$
under the rational world view semantics.
Then $\cal A$ is not a world view of $\Pi$
under the SE16-X-semantics.
Assume on the contrary that $\cal X$ does not agree with 
the rational answer set semantics, i.e., we have
(i) $\cal A$ coincides with the collection of 
all answer sets of $\Pi^{\cal A}$ under $\cal X$,
and (ii) there exists no other epistemic model 
${\cal W}$ of $\Pi$ satisfying condition (i),
where $\Phi$ is a proper subset of $\Phi_{\cal W}$ 
which is the set of all epistemic negations occurring in $\Pi$
that are true in $\cal W$.
Then, by Lemma~\ref{lem-chara-se16-semantics} 
$\cal A$ is a world view of $\Pi$ 
w.r.t.\ $\Phi$ and $\cal X$
under the SE16-generic semantics,
contradicting our assumption that
$\cal A$ is not a world view of $\Pi$
under the SE16-X-semantics.
\end{myenumerate}

$(\Longleftarrow)$ Conversely,  assume that 
$\cal X$ agrees with 
the rational answer set semantics.
Then $\cal X$ can be replaced by 
the rational answer set semantics, and vice versa.
Assume on the contrary that the SE16-X-semantics 
does not agree with 
the rational world view semantics, i.e., we have 
the following two cases:
\begin{myenumerate}
\item
$\cal A$ is a world view of $\Pi$
under the rational world view semantics,
but not under the SE16-X-semantics.
Then,  as $\cal X$ agrees with 
the rational answer set semantics,
by Lemma~\ref{lem-chara-se16-semantics}, 
either (i) $\cal A$ does not coincide with the collection of 
all answer sets of $\Pi^{\cal A}$ under the 
rational answer set semantics,
or (ii) $\cal A$ coincides with the collection of 
all answer sets under the rational 
answer set semantics of $\Pi^{\cal A}$ and
there exists some epistemic model ${\cal W}$ of $\Pi$,
where $\cal W$ coincides with the collection of 
all answer sets under the rational 
answer set semantics of $\Pi^{\cal W}$
and $\Phi$ 
is a proper subset of $\Phi_{\cal W}$ 
which is the set of all epistemic negations occurring in $\Pi$
that are true in $\cal W$.
This means that $\cal A$ is not a world view of $\Pi$ 
under the rational answer set semantics,
contradicting our assumption. 

\item
$\cal A$ is a world view of $\Pi$
under the SE16-X-semantics, but not
under the rational world view semantics.
Then,  as $\cal X$ agrees with 
the rational answer set semantics,
by Lemma~\ref{lem-chara-se16-semantics}, 
we have (i) $\cal A$ coincides with the collection of 
all answer sets of $\Pi^{\cal A}$ under 
the rational answer set semantics,
and (ii) there exists no other epistemic model 
${\cal W}$ of $\Pi$ satisfying condition (i),
where $\Phi$ is a proper subset of $\Phi_{\cal W}$ 
which is the set of all epistemic negations occurring in $\Pi$
that are true in $\cal W$.
This means that $\cal A$ is a world view of $\Pi$ 
under the rational answer set semantics,
contradicting our assumption. 
\end{myenumerate}
%\hfill$\Box$
\end{proof}

\noindent $\hookrightarrow$\  {Proof of Theorem~\ref{th-algo:well-support-with-partialorder-2}}

\RthAlgoWellSupportWithPartialorderTwo*
\begin{proof}
Let $\Pi$ consist of a finite number $n\geq 1$ of rules.
Then Algorithm \ref{algo:well-support-with-partialorder-2} 
runs the while cycle (lines 3 - 17) at most $n$ times.
Observe that the most time-consuming part in each cycle
is to compute the entailment of propositional formulas
like $S_i\cup \neg I^- \models body(r)$, which
can be completed in finite time.
Therefore, Algorithm \ref{algo:well-support-with-partialorder-2} 
will terminate eventually.

Starting from $S_0 = \emptyset$ (line 2),
for every $i\geq 0$, $S_{i+1}$ is $S_i$
plus all rule heads $head(r)$ 
such that a rule $head(r)\leftarrow body(r)$
is in $\Pi$ and $S_i\cup \neg I^- \models body(r)$
(lines 4 and 6). 
That is, Algorithm~\ref{algo:well-support-with-partialorder-2}
computes the least fixpoint 
$\mathit{lfp}(T_{\Pi}(\emptyset,\neg I^-))$
via the sequence $\langle S_i
\rangle_{i=0}^\infty$, where for every $i\geq 0$,
$S_i = T_{\Pi}^i(\emptyset,\neg I^-)$.
When $S_{i+1}= S_i$ for some $i\geq 0$ 
(i.e., when flag$=0$ after some 
while loop (line 15)),
we reach the least fixpoint $\delta=S_i$ 
which is $\mathit{lfp}(T_{\Pi}(\emptyset,\neg I^-))$ 
except that all rule heads $head(r')$ that are 
logically equivalent to $head(r)$ are removed at 
line 10 of Algorithm~\ref{algo:well-support-with-partialorder-2}. 

Let $\alpha = 
\mathit{lfp}(T_{\Pi}(\emptyset,\neg I^-)) \setminus \delta$;
then for every $H'\in \alpha$, there is a unique rule head
$H$ in $\delta$ with $H\equiv H'$. 
This means that for any formula $F$, $\delta\models F$
iff $\mathit{lfp}(T_{\Pi}(\emptyset,\neg I^-))\models F$. 
Therefore, if $\delta\cup \neg I^-\models p$ 
for every $p\in I$
(line 18), then 
$\mathit{lfp}(T_{\Pi}(\emptyset,\neg I^-))\cup \neg I^-\models p$ for every $p\in I$
and thus by Theorem~\ref{th-wellsupport-lfp-normal},
$I$ is a well-supported model; in this case, 
a strict well-founded partial order $\prec$ on $I$ as defined in 
${\cal O}$ is returned (line 19).
Otherwise, 
if $\delta\cup \neg I^-\not\models p$ 
for some $p\in I$
(line 21), then 
$\mathit{lfp}(T_{\Pi}(\emptyset,\neg I^-))\cup \neg I^-\not\models p$
and thus by Theorem~\ref{th-wellsupport-lfp-normal},
$I$ is not well-supported in $\Pi$; in this case, 
Algorithm~\ref{algo:well-support-with-partialorder-2} outputs
"NOT WELL-SUPPORTED"  (line 22).

The above argument shows that given a finite epistemic-free normal program $\Pi$ 
and a model $I$ of $\Pi$, Algorithm \ref{algo:well-support-with-partialorder-2} 
correctly determines the well-supportedness of $I$ and
terminates.

Now, assume that $I$ is a well-supported model.
We next show that the strict well-founded partial order $\prec$
defined by ${\cal O}$ 
in Algorithm~\ref{algo:well-support-with-partialorder-2}
satisfies the conditions of 
Definition~\ref{def-well-supported-normal}.
To achieve this, we need to find
a well-supported set $\cal V$ of rule heads in $\Pi$
and construct a strict well-founded partial order $\prec_h$
on $\cal V$ as required by 
Definition~\ref{def-well-supported-normal}.

Consider ${\cal V} = \mathit{lfp}(T_{\Pi}(\emptyset,\neg I^-))$.
Then,  ${\cal V} = \delta\cup \alpha$.
By Theorem~\ref{th-wellsupport-ruleheads-lfp},
$\cal V$ is well-supported in $\Pi$ w.r.t.\ $I$.
Now let us expand the strict well-founded partial order $\prec_h$
defined by ${\cal O}_h$ 
in Algorithm~\ref{algo:well-support-with-partialorder-2} to 
cover all rule heads in $\cal V$. Note that ${\cal O}_h$ 
only covers rule heads in $\delta$.
Let ${\cal O}^{\cal V}_h = {\cal O}_h \cup \{F\prec_h H 
\mid H \in \alpha \text{ and } F\in \delta\}$. Note that for 
every $H \in \alpha$, there must be a rule $H\leftarrow body(r)$ in $\Pi$ 
such that $\delta\cup \neg I^- \models body(r)$.
As $\prec_h$ defined by ${\cal O}_h$
is a strict well-founded partial order on $\delta$,
$\prec_h$ defined by ${\cal O}^{\cal V}_h$
is a strict well-founded partial order on ${\cal V}$.

For every $p\in I$,
as $I$ is a well-supported model in $\Pi$,
$p$ must be inferred from $S_{i+1}\cup \neg I^-$ for some $i\geq 0$ 
in Algorithm~\ref{algo:well-support-with-partialorder-2}
(line 9),
where $S_i\subset {\cal V}$,
$S_{i+1} = S_i\cup \{head(r)\mid r\in R(S_i)\}$,
$R(S_i) = \{r\mid r\in \Pi$ and 
$S_i\cup \neg I^- \models body(r)\}$,
$S_i\cup \neg I^- \not\models p$
and $S_{i+1}\cup \neg I^- \models p$.
In this case, for every rule $r\in R(S_i)$,
$F\prec_h head(r)$ is in ${\cal O}_h$ (line 8)
(and thus in ${\cal O}^{\cal V}_h$)
for every $F\in S_i$,
and $q\prec p$ is in ${\cal O}$ (line 9)
for every $q\in I$ such that $S_i\cup \neg I^- \models q$.
This means that when $I$ is well-supported in $\Pi$,
the strict well-founded partial order $\prec$
defined by ${\cal O}$ 
in Algorithm~\ref{algo:well-support-with-partialorder-2}
satisfies the conditions of 
Definition~\ref{def-well-supported-normal};
i.e., we have (1) a well-supported set 
${\cal V} = \mathit{lfp}(T_{\Pi}(\emptyset,\neg I^-))$
of rule heads
in $\Pi$ w.r.t.\ $I$ with a strict well-founded
partial order $\prec_h$ on ${\cal V}$ as defined by ${\cal O}^{\cal V}_h$,
and (2) a strict well-founded
partial order $\prec$ on $I$ 
as defined by ${\cal O}$, such that for every $p\in I$
we have $k\geq 1$ rules 
$head(r_1)\leftarrow body(r_1), \cdots,
head(r_k)\leftarrow body(r_k)$ in $\Pi$ 
as defined by $R(S_i)$, and  
$S_i\subset {\cal V}$, where 
for $1\leq j\leq k$, $S_i\cup \neg I^- \models body(r_j)$, $F\prec_h head(r_j)$
for every $F\in S_i$,
$S_i\cup\{head(r_1), \cdots, head(r_k)\}\cup \neg I^- \models p$,
and $q\prec p$
for every $q\in I$ such that $S_i\cup \neg I^- \models q$.
This concludes our proof.
%\hfill$\Box$
\end{proof}

\noindent $\hookrightarrow$\  {Proof of Theorem~\ref{th-comp-wellsupport-simple-normal}}

\Rthcompwellsupportsimplenormal*
\begin{proof}
By Theorem~\ref{th-wellsupport-lfp}, we need to check whether
$\mathit{lfp}(T_{\Pi}(\emptyset,\neg I^-))=I$ holds. Since $\Pi$
is simple, evaluating each stage $T^i_\Pi(\emptyset,\neg I^-)$ of the
fixpoint iteration for $\mathit{lfp}(T_{\Pi'}(\emptyset,\neg I^-))$ is
feasible in polynomial time, and thus computing
$\mathit{lfp}(T_{\Pi'}(\emptyset,\neg I^-))$ and testing $\mathit{lfp}(T_{\Pi'}(\emptyset,\neg I^-))=I$
is feasible in polynomial time.

The \Pol-hardness follows from the \Pol-completeness of checking
whether a model $I$ of a Horn program
(i.e., a simple $\neg$-free normal program) is its (unique) minimal
model \cite{eite-etal-07b}, which amounts to $\mathit{lfp}(T_{\Pi'}(\emptyset,\neg
I^-))=I$.
%\qed
\end{proof}

\noindent $\hookrightarrow$\  {Proof of Theorem~\ref{th-comp-wellsupport-lfp-normal}}

\Rthcompwellsupportlfpnormal*
\begin{proof}
As for the \coNP\ membership, for
refuting that $I$ is well-supported
in $\Pi$
we can by Lemma~\ref{lem:guess} guess 
(i)
sets $S'_j$,
$j=0,\ldots,i$ and an atom $p\in I$, 
(ii) for each rule head $head(r)$ that we do not
include in $S'_{j+1}$ (assuming it is not already in $S'_j$), 
an interpretation $I_{r,j}$ which satisfies $S'_j \cup \neg I^- \cup
body(r)$, and (iii) an interpretation $I_p$ 
that satisfies $S_i \cup \neg I^- \cup \{ \neg p\}$. The guess is of
polynomial size and can be verified in polynomial time.

As for the \coNP-hardness, we reduce deciding whether 
a propositional formula $E$ on atoms $p_1,\ldots,p_n$
that is satisfied by $I = {p_1,\ldots,p_n}$ satisfies $E$ has no other model; this is easily seen to
be \coNP-complete, and is equivalent to $I$ being well-supported in
$\Pi = \{ p_i \leftarrow E \mid 1 \leq i
\leq n\}$.%\qed
\end{proof}

\noindent $\hookrightarrow$\  {Proof of Theorem~\ref{th-comp-wellsupport-disjunctive-simple}}

\Rthcompwellsupportdisjunctivesimple*

\begin{proof}
Membership in \NP{} follows as a guess for a head selection function
$\sel$ such that $I$ is well-supported in $\Pi' = \Pi^I_{\sel}$ can be
checked in polynomial time, as checking that $\sel$ respects
variant rule heads, computing $\Pi' = \Pi^I_{\sel}$ 
and $\mathit{lfp}(T_{\Pi'}(\emptyset,\neg I^-))$, and testing 
$\mathit{lfp}(T_{\Pi'}(\emptyset,\neg I^-))=I$, are all feasible in
polynomial time. 

\NP-hardness is shown by a reduction from SAT. Given a CNF $E =
\bigwedge_{i=1}^m C_i$ on variables $X=\{x_1,\ldots,x_n\}$, let $\bar
X = \{ \bar x_i \mid 1 \leq i \leq n\}$ and $\{c_1,\ldots,c_m\}$ be
sets of fresh variables, and define
the following rules for $\Pi$:
\begin{align*}
  x_i \mid \bar x_i \leftarrow ~~ & \text { for all } x_i \in X\\
  c_j \leftarrow  x_i              & \text { for all } x_i \in C_j, 1 \leq j \leq m\\
  c_j \leftarrow  \bar x_i              & \text { for all }\neg x_i \in C_j, 1 \leq j \leq m\\
  p \leftarrow   c_1\land\cdots\land c_m          & \text { for all } p \in  X \cup \bar{X}. 
\end{align*}
Then $I = X \cup \bar{X} \cup\{c_1,\ldots,c_m\}$ is 
a model of $\Pi$;
furthermore, it is not hard to see that $I$ is well-supported, i.e.,
has a witnessing head selection $\sel$ with associated partial
 order $\cal O$ computed by
Algorithm~\ref{algo:well-support-with-partialorder-2} iff $E$ is satisfied by the
assignment $\sigma$ to $X$ given by $\sigma(x_i) =$ true iff $\sel(  x_i \mid \bar x_i, I) =
x_i$, for all $x_i\in X$.
%\qed
\end{proof}

\noindent $\hookrightarrow$\  {Proof of Theorem~\ref{th-comp-wellsupport-disjunctive-general}}

\Rthcompwellsupportdisjunctivegeneral*
\begin{proof}
For $\Sigma^p_2$ membership, we can guess a head selection function
$\sel$ and check, with the help of an \NP{} oracle, in polynomial time
whether (i) $\sel$ respects variant rule heads and (ii)
$\mathit{lfp}(T_{\Pi'}(\emptyset,\neg I^-))\cup \neg I^-\models p$
for every $p\in I$, where $\Pi' = \Pi^I_{\sel}$ and
$T_{\Pi'}(\emptyset,\neg I^-)$  can be computed also in
polynomial time with an \NP{} oracle.

The $\Sigma^p_2$-hardness is shown by a reduction from evaluating a QBF
$\exists X\forall Y\,E(X,Y)$ where $E$ is in DNF. 
%Similar as in the proof of Theorem~\ref{th-comp-wellsupport-disjunctive-simple},
Let $\bar X = \{ \bar x \mid x \in X\}$ and define the following rules for $\Pi$:
\begin{align*}
  x \mid \bar x \leftarrow ~~~ & \text { for all } x \in X\\
  p \leftarrow E[\neg X/\bar X]  & \text { for all } p \in  X \cup \bar{X} \cup Y, 
\end{align*}
where $E[\neg X/\bar X]$ results from $E$ by replacing each literal
$\neg x$ by $\bar x$, $x\in X$. Then $I = X \cup \bar{X} \cup Y$ is a model of $\Pi$;
furthermore, it is not hard to see that $I$ is well-supported in $\Pi$, i.e.,
has a witnessing head selection $\sel$ with associated partial
order $\cal O$ computed by 
Algorithm~\ref{algo:well-support-with-partialorder-2} iff $\forall
Y\,E(\sigma(X),Y)$ evaluates to true where the
assignment $\sigma$ to $X$ is given by $\sigma(x_i) =$ true iff $\sel(  x_i \mid \bar x_i, I) =
x_i$, for all $x_i\in X$. 
%\qed
\end{proof}

\noindent $\hookrightarrow$\  {Proof of Theorem~\ref{th-comp-wellsupport-epistemic-normal}}

\Rthcompwellsupportepistemicnormal*
\begin{proof}
As for the membership parts, we note that the epistemic-free program
$\Pi'$ obtained from $\Pi^{\cal A}$ by canceling double negation in
rule bodies, i.e., replacing formulas $\neg \neg A$ with $A$, is such that for every interpretation $I$, $I$ is a model
of $\Pi^{\cal A}$ iff $I$ is a model of $\Pi'$ and moreover $I$ is
well-supported in $\Pi^{\cal A}$ iff $I$ is well-supported in $\Pi'$.
From Lemma~\ref{lem:epi-reduct}, it follows that we can compute $\Pi'$
in polynomial time, which in case (i) is simple and in case (ii) has
only atomic rule heads. We can  thus decide for a single resp.\ for all $I\in \cal A$ whether $I$
is well-supported in $\Pi'$ in case (i) in polynomial time by Theorem~\ref{th-comp-wellsupport-simple-normal}, and in
case (ii) in \coNP{} by Theorem~\ref{th-comp-wellsupport-lfp-normal}. 

The hardness parts are inherited from epistemic-free normal programs,
viz.\ Theorems~\ref{th-comp-wellsupport-simple-normal} and ~\ref{th-comp-wellsupport-lfp-normal}, 
where $\cal A = \{ I \}$ consists of a single model of the input program $\Pi$.
%\qed
\end{proof}

\noindent $\hookrightarrow$\  {Proof of Theorem~\ref{th-comp-wellsupport-epistemic-general}}

\Rthcompwellsupportepistemicgeneral*

\begin{proof}
The argument is similar to the one in the proof of
Theorem~\ref{th-comp-wellsupport-epistemic-normal}, 
where one appeals instead of
Theorems~\ref{th-comp-wellsupport-simple-normal} and~\ref{th-comp-wellsupport-lfp-normal}
to Theorems~\ref{th-comp-wellsupport-disjunctive-simple}
and~\ref{th-comp-wellsupport-disjunctive-general}, respectively.
%\qed
\end{proof}

\noindent $\hookrightarrow$\  {Proof of Lemma~\ref{lem:epi-reduct-implicit}}

\Rlemepireductimplicit*
\begin{proof}
For each epistemic negation $\naf F \in EP(\Pi)$, we can decide
whether $\naf F \in \Phi$ with an \NP{} oracle call; thus computing
$\Phi$ and then $\Pi^{\cal A}$ is feasible in \FPNPPar.

The \FPNPPar-hardness is shown by a reduction from computing the
answers to SAT instances $E_1,\ldots,E_n$ on disjoint sets of
variables $X_1,\ldots,X_n$. Let $p_1,\ldots,p_n$ be fresh variables,
and define $E = \bigwedge_{i=1}^n p_i$ and $\Pi = \{
p_i \leftarrow \naf \neg E_i \mid 1 \leq i \leq n\}$.
Then $\naf \neg E_i \in \Phi$ iff $E_i$ is satisfiable iff $\Pi^{\cal
A}$ contains the rule $p_i \leftarrow \top$. Hence the answers to
$E_1,\ldots,E_n$ are easily extracted from $\Pi^{\cal A}$, which shows \FPNPPar-hardness.
%\qed
\end{proof}

\noindent $\hookrightarrow$\  {Proof of Theorem~\ref{th-comp-wellsupport-epistemic-normal-implicit}}

\ThCompWellsupportEpistemicNormalImplicit*
\begin{proof}
By Lemma~\ref{lem:epi-reduct-implicit}, 
we can compute $\Pi^{\cal A}$ in polynomial time with one round of \NP{}
oracle calls. Exploiting Theorem~\ref{th-comp-wellsupport-lfp-normal}, we can then consult a further \NP{} oracle whether some 
interpretation $I$ exists such that $in(I)$ holds and $I$ is not
well-supported in $\Pi^{\cal A}$. 
Since two rounds of parallel \NP{} oracles can be reduced to a single
round, it follows that deciding well-supportedness of $\cal A$ in
$\Pi$ under implicit representation is in \PNPPar.

The \PNPPar-hardness under the stated restrictions is shown by a
reduction from deciding, given Boolean formulas $E_1,\ldots,E_n$ in
DNF on pairwise disjoint sets $X_1,\ldots,X_n$ of variables, the number of
tautologies among them is even, where without loss of generality,
$E_i$ is a tautology only if $E_{i-1}$ is a tautology, for each $i>1$, cf.\
\cite{DBLP:journals/siamcomp/Wagner90}. 

Assuming that $E_i = \bigvee_{j=1}^{m_i} D_{i,j}$ where $D_{i,j}$ is a
conjunction of literals on $X_i$, we let $F_i$ be the Boolean formula
$$ F_i = \bigwedge_{i=1}^{m_i} (d_{i,j} \leftrightarrow D_{i,j})\land
(p_i \leftrightarrow \bigvee_{j=1}^{m_i} d_{i,j}) \land
\bigwedge_{x\in X_i} (x \equiv \neg \bar x),$$
where $p_i$ and all $d_{i,j}$ and $\bar x$ are fresh variables. Then 
it is easy to see that $E_i$ is a tautology iff $F_i \models p_i$. The
purpose of the $\bar x$ variables, which take the opposite value of
$x$, is to allow for the derivation of $x$ in a well-supported model.

We now let $E= \bigwedge_{i=1}^n F_i \land q$, where $q$
is a further fresh variable, and include in $\Pi$ the following rules:
\begin{align}
  d_{i,j} \leftarrow \ & D_{i,j}  & &\text{ for all } 1\leq i \leq n, 1\leq
  j \leq m_j, & \tag{r1} &
  \\
  p_i   \leftarrow \ & d_{i,j}   & &\text{ for all } 1\leq i \leq n,
  1\leq j \leq m_j, & \tag{r2} & \\
%  q  \leftarrow \ & \neg\naf p_1 \land \cdots \land \neg\naf p_{2j}
% \nonumber \\
%  & \land  \naf p_{2j+1} \land \cdots \land \naf p_{n} & &\text{
%                  for all } 0\leq 2j \leq n, & \tag{r3} & \\
  q  \leftarrow \ & \textstyle \bigwedge_{i=1}^{2j}
  \neg\naf p_i \land \bigwedge_{i=2j+1}^{n}  \naf p_i & &\text{
                  for all } 0\leq 2j \leq n, & \tag{r3} & \\
  x \leftarrow \ & \neg \bar x & &\text{ for all } x \in X_i, 1 \leq i
  \leq n, & \tag{r4} & \\
  \bar x  \leftarrow \ & \neg x & &\text{ for all } x \in X_i, 1 \leq i
  \leq n.  & \tag{r5} &
\end{align}
Consider an arbitrary model $I$ of $E$. We show that $I$ is well-supported in $\Pi'=\Pi^{\cal A}$ iff  the number of
tautologies among $E_1,\ldots,E_n$ is even.
In this model $I$, $\bar x$  takes the
opposite value of $x$, and thus by the rules (r4) $x \in LFP$ where $LFP =
\mathit{lfp}(T_{\Pi'}(\emptyset,\neg I^-))$ for $\Pi' = \Pi^{\cal A}$,
iff $x\in I$ and by the rules (r5) $\bar x \in LFP$
iff $\bar x \in I$. Furthermore, by the rules (r1) $d_{i,j} \in LFP$
iff $I$ satisfies $D_{i,j}$, and by the rules (r2) $p_i \in LFP$ iff $I$ satisfies
$p_i$. Clearly, $I$ is a model of $\Pi'$. For $I$ being well-supported, it thus remains to show that $q
\in LFP$ holds.
The only rule to derive $q$ in $\Pi$ is by rules
(r3). For $\Pi'$, each formula $\naf p_i$ is replaced by $\top$ if $E_i$
is not a tautology, and by $\neg p$ otherwise. Thus, if $2j'$ many of
the $E_i$ formulas are tautologies, the rule 
$$
q \leftarrow \neg\neg p_1\land\cdots\land\neg\neg p_{2j'} \land
\top\land\cdots\land\top
$$ 
is present in $\Pi'$, and thus $q \in LFP$ holds. Otherwise, if $2j'+1$
many of the $E_i$ are tautologies, for constructing $\Pi'$ in the rule (r3) 
$\naf p_{2j'+1}$ will be replaced by $\neg p_{2j'+1}$ if $2j\leq 2j'$
and  $\neg\naf p_{2j'+1}$ by $\neg\top$ ($\equiv \bot$) if $2j > 2j'$. Thus, none of
these rules is applicable, and $q \notin LFP$ holds.
Consequently, $I$ is well-supported in $\Pi'$ iff the number of
tautologies among $E_1,\ldots,E_n$ is even. Since $I$ was arbitrary,
this proves \PNPPar-hardness.
\end{proof}

\noindent $\hookrightarrow$\  {Proof of Theorem~\ref{th-comp-wellsupport-epistemic-general-implicit}}

\Rthcompwellsupportepistemicgeneralimplicit*
\begin{proof}
As for the membership parts, in order to refute that $\cal A$ is
well-supported in $\Pi$, we can guess some interpretation $I$ such
that $in(I)$ returns true and $I$ is not well-supported in
$\Pi'=\Pi^{\cal A}$. By Lemma~\ref{lem:epi-reduct-implicit}, we can
compute $\Pi'$ in polynomial time with an \NP{} oracle, and then use
an \NP{} resp.\ $\Sigma^p_2$ oracle by
Theorems~\ref{th-comp-wellsupport-disjunctive-simple}
and~\ref{th-comp-wellsupport-disjunctive-general}, respectively, where
in $\Pi'$ double negation in rule bodies is canceled, i.e., $\neg\neg A$
is replaced by $A$.

For the hardness parts, we lift the reductions in the proofs of
Theorems~\ref{th-comp-wellsupport-disjunctive-simple}
and~\ref{th-comp-wellsupport-disjunctive-general}, where in fact we do
not need epistemic negation.

As for (i), we lift the encoding of SAT instances $E(x_1,\dots,x_n)$
in Theorem~\ref{th-comp-wellsupport-disjunctive-simple}
to QBF instances $\Gamma =\forall x_1\dots x_k \exists x_{k+1}\cdots
x_n\,E(x_1,\ldots,x_n)$, by simply restricting in $\Pi$ the rules with body
$c_1\land\cdots\land c_m$ to all $p \in A_{k+1} = \{x_{k+1}, \bar
 x_{k+1},\ldots,x_n, \bar x_n\}$, and by defining 
 $${\cal A} = \{ X' \cup \{\bar x_i\mid x_i \notin X', 1\leq i\leq k\} \cup A_{k+1} \cup\{c_1,\ldots,c_m\} \mid X'
\subseteq \{x_1,\ldots, x_k\}\,\}.$$
As easily seen, $\cal A$ is an epistemic model of the modified
program $\Pi$, denoted $\Pi'$, and is represented by the Boolean formula
$$ \bigwedge_{i=1}^k (x_i \leftrightarrow \neg \bar x_i)\land
        \bigwedge_{i=k+1}^n (x_i\land \bar x_i) \land
        c_1\land\cdots\land c_m.
$$
We note that this formula can be easily converted into an OBDD. 
Furthermore, $\cal A$ is well-supported in $\Pi'$ iff $\Gamma$
evaluates to true. This proves $\Pi^p_2$-hardness.

For (ii), we lift the encoding of QBFs $\exists X\forall Y\,E(X,Y)$
where $E$ is in DNF, in the proof of Theorems~\ref{th-comp-wellsupport-disjunctive-general}.
no need of epistemic negation. Similarly as above, we split $X=X_1 \cup
X_2$ into disjoint subsets $X_1$ and $X_2$, and we restrict in $\Pi$ the rules
with body $E[\neg X/\bar X]$ to all  $p \in  X_2 \cup \bar X_2 \cup
Y$, where
for any set $S$ of atoms, $\bar S= \{ \bar p \mid p \in S\}$.

We then define 
 $${\cal A} = \{ X' \cup \bar (X_1\setminus X) \cup X_2 \cup \bar X_2 \cup\{c_1,\ldots,c_m\} \mid X'
\subseteq X_1\,\}.
$$
Clearly, $\cal A$ is an epistemic model of the restricted program
$\Pi'$ and represented by the Boolean formula 
$$ \bigwedge_{x\in X_1}
(x \leftrightarrow \neg \bar x)\land
\bigwedge_{x\in X_2}^n (x\land \bar x) \land \bigwedge_{y\in Y} y
\land c_1\land\cdots\land c_m.
$$
Again, this formula can be easily converted into an OBDD. 
Furthermore, $\cal A$ is well-supported in $\Pi'$ iff the QBF
$\forall X_1\exists X_2\forall Y\,E(X_1,X_2,Y)$
evaluates to true. This proves $\Pi^p_3$-hardness.
%\qed
\end{proof}

\noindent $\hookrightarrow$\  {Proof of Lemma~\ref{lem-wj-rational}}

\LemWJEqualsRational*
\begin{proof}
If $\Pi$ has no head variants (in particular, no identical disjunctive
heads), then any selection function $\sel(n,head(r),I)$ in
Definition~\ref{head-sel-gdlp} is admissible as a head-selection
function $\sel_v(head(r),I)$ for DI-WJ answer sets in
Definition~\ref{head-select-SE19}, and vice versa; the parameter $n$
can be omitted.  If $\Pi$ is epistemic-free, then by
Corollary~\ref{cor-rational-epistemic-free-normal-pro} the rational
answer sets of $\Pi^I_{\sel}$ coincide with the WJ-answer sets of
$\Pi^I_{\sel}$. Consequently, if $I$ is a rational answer set of $\Pi$
witnessed by $\Pi^I_\sel$, then $I$ must be a DI-WJ-answer set of
$\Pi$, witnessed by $\Pi^I_\sel$: towards a contradiction, if $I$ were
not a DI-WJ-answer set of $\Pi$, then some $I'\subset I$ and
$\sel'(head(r),I')$ must exist such that $I'$ is a WJ-answer set of
$\Pi^{I'}_{\sel'}$. However, then $I'$ is a well-justified model of
$\Pi^{I'}_{\sel'}$, which contradicts that $I$ is a rational answer
set of $\Pi$. The reverse direction -- every DI-WJ-answer set of $\Pi$ is
a rational answer set of $\Pi$-- is shown analogously.
\end{proof}

\noindent $\hookrightarrow$\  {Proof of Lemma~\ref{lem-reduce-rational-no-variant}}

\LemReduceRationalNoVariant*
\begin{proof}
To achieve this, simply add for each disjunctive rule $r$ in
$\Pi$ in the head a fresh atom $p_r$ and add a constraint $\bot\leftarrow p_r$; then, no head
variants do exist. In every model $I$ of the resulting program $\Pi'$,
$p_r$ must be false, i.e., $p_r\notin I$; it is thus easy to see
that $I$ is a rational answer set of $\Pi'$ iff $I$ is a rational
answer set of $\Pi$, provided $\Pi$ is epistemic-free.  From
Definition~\ref{def-ideal-wv-semantics}, it follows then readily for an epistemic
program $\Pi$  that an epistemic model $\mathcal{A}$ is a
world view of $\Pi'$ iff it is a world view of $\Pi$, which proves the
claimed result.
\end{proof}

\noindent $\hookrightarrow$\  {Proof of Theorem~\ref{th-DJ-WJ-atomic-heads-hardness}}

\RtDJWatomicheadshardness*
\begin{proof}
The $\Sigma^p_3$-hardness of brave inference of an atom from the 
DI-WJ-answers of a disjunctive programs $\Pi$ with literals in the heads
was  shown in Theorem~6 in \cite{ShenE19} by a
reduction from MINQASAT. The latter is deciding, given a QBF $\Phi = \forall Y.E(X,Y)$, where $X = \{ X_1,\ldots, X_n\}$ and $Y=\{
Y_1,\ldots, Y_m\}$, and some $A\in X$, whether $A$ is in some $\subseteq$-minimal set $X'\subseteq X$ such that $X'\cup
\neg(X\setminus X') \models E(X,Y)$.

The logic program $\Pi$ constructed from $\Phi$ consists of the rules
\begin{eqnarray}
   X_i \mid \neg X_i & \leftarrow & \phantom{ E(X,Y) } \qquad\quad
   \text{ for all } X_i \in X \label{choose-X} \\   
   Y \land p & \leftarrow & E(X,Y) \label{aaa} \\
    p & \leftarrow & \neg p \label{kill}
\end{eqnarray}
where $p$ is a fresh atom and $Y$ stands for $Y_1\land\cdots\land
Y_m$; then $A$ is a brave consequence of $\Pi$ w.r.t.\ DI-WJ answer
sets iff  $(\Phi,A)$ is a yes-instance of MINQASAT.

The program $\Pi$ is adapted as follows: using a fresh aom $q$, 
% we rewrite the rule (\ref{aaa}) into rules ith atomc heads, 
%by replacing $\neg X_i$ in rules with
%$\hat{X_i}$, where $E(X,Y)$ is w.l.o.g.\ in negation normal form,
%and add rules $\hat{X_i} \leftarrow \hat{E}(X,Y)$, where
%$\hat{E}(X,Y)$ results from $E(X,Y)$ by the replacements. 
we (i) replace $\neg X_i$ in rules (\ref{choose-X}) with $q$, (ii)
rewrite the rule (\ref{aaa}) into rules with atomc heads, 
and (iii) add a rule $q \leftarrow E(X,Y)$.
For any selection $\sel$, the disjunctive reduct ${\Pi'}^I_\sel$
of the resulting program $\Pi'$ with respect to an interpretation $I$ encodes a choice of a subset $X'\subseteq X$ given by $X' = \{ X_i \in
X \mid  X_i \leftarrow~ \in {\Pi'}^I_\sel\}$. For $I$ being a WJ-answer set
of ${\Pi'}^I_\sel$, $p$ must be derived, thus $X'\cup \neg(X\setminus
X') \models E(X,Y)$ must hold, i.e.,  $X'\cup \neg(X\setminus
X') \models \forall Y.E(X,Y)$, and $I$ is  of the form $I = X'\cup
Y\cup \{p,q\}$. The DI-WJ answer sets of $\Pi'$ are the $\subseteq$-minimal such $I$,
and they correspond 1-1 to the $\subseteq$-minimal sets $X'$ such that $X'\cup \neg(X\setminus
X') \models \forall Y.E(X,Y)$. Thus $A$ is true in every DI-WJ answer
set (tantamount, in every rational answer set)
of $\Pi'$ iff $(\forall Y.E(X,Y),A)$ is a yes-instance of MINQASAT,
which proves the result.
\end{proof}

\noindent $\hookrightarrow$\  {Proof of Lemma~\ref{lem-epi-rat-membership}}

\Rlemepiratmembership*

\begin{proof}
We can guess
$\Phi$ for some epistemic model $\mathcal{A}$
that satisfies condition~1 in Defn.~\ref{def-ideal-wv-semantics} of a rational world view for $\Pi$, and
check the guess in polynomial time with a $\Sigma^p_k$ oracle:
we construct
$\Pi'=\Pi^{\mathcal{A}}$ in polynomial time
(Lemma~\ref{lem:epi-reduct}),
and test using the
oracle (i) whether  $\Pi'$ has some rational answer set (simply set $F=\top$), and (ii) for each
$\naf F \in EP(\Pi)$ whether $F$ is false in some, resp.\ true in every, rational answer set of
$\Pi'$  if $\naf F \in \Phi$ resp.\ $\naf F \notin \Phi$. This shows
item (i).

As for item (ii), membership in $\Sigma^p_{k+2}$ holds as we check
condition~2 (maximality) of Defn.~\ref{def-ideal-wv-semantics} for a
guess $\Phi$ that satisfies condition~1 of a rational world view
$\mathcal{A}$ with the help of a $\Sigma^p_{k+1}$ oracle, and then
check using a $\Sigma^p_k$ oracle in polynomial time whether $F$ is true in
$\mathcal{A}$. To this end, we construct $F^\mathcal{A}$ from Defn.\,\ref{satisfaction-epis-program}, by 
replacing each epistemic negation $\naf G$ in $F$ by $\top$ if $\Pi^{\mathcal{A}}$ cautiously
entails $G$ and by $\bot$ otherwise, and then 
check whether $\Pi^{\mathcal{A}}$ cautiously
entails $F^\mathcal{A}$.
\end{proof}

\noindent $\hookrightarrow$\  {Proof of Theorem~\ref{th-EWJ-existence-normal-atomic}}

\RthEWJexistencenormalatomic*
\begin{proof}
Membership in $\Sigma^p_3$ follows from
Lemma~\ref{lem-epi-rat-membership}.(i) and $\Sigma^p_2$ /
$\Pi^p_2$-completeness of brave / cautious inference for normal
epistemic-free programs (row~1 of Table~\ref{tab:complexity2}).

We show  $\Sigma^p_3$-hardness by a reduction from evaluating QBFs of the form 
\begin{equation}
\label{qbf3}
\exists X\forall Y \exists Z.\phi, 
\end{equation}
where $X=X_1,\ldots,X_{n_X}$, $Y=Y_1,\ldots,Y_{n_Y}$, and
$X=Z_1,\ldots,Z_{n_Z}$ are lists (viewed as sets) of distinct atoms, and 
$\phi = \phi(X,Y,Z)$ is a propositional formula over $X\cup Y \cup Z$,
following ideas of a reduction for EFLP-world views in Theorems~5 of
\cite{ShenEiter16}, but modifying it for EWJ-world views (thus
rational world views) and atomic heads in programs.

For each atom $A \in X\cup Y$, we introduce a fresh atom
$\bar{A}$, and we use further fresh atoms $S$, $U$, and $V$.  Let
\begin{eqnarray}
\Pi_X &=& \{  X_i \leftarrow \naf  \bar{X_i};\ \ \bar{X_i}\leftarrow
\naf  X_i \mid X_i\in X\}, \label{x-rules}\\
\Pi_Y &=& \{  Y_i \leftarrow \neg \bar{Y_i};\ \ \bar{Y_i}\leftarrow
\neg  Y_i \mid Y_i\in Y\}, \label{y-rules}\\
\Pi_S &=& \{  S \gets \neg \bar{S},\; \bar{S} \gets \neg S \}, \label{s-rules}\\
\Pi_\phi &=& \{ A \leftarrow  \bar{S} \lor \neg \phi \mid A \in Z\cup\{U\} \} \cup \label{phi-rules} \\
        && \{ U \leftarrow \neg U, \bar{S} \}\cup \label{sat-rule}\\
        && \{ V \leftarrow \naf V, \naf S\} \label{kill-rule},
\end{eqnarray}
and let $\Pi = \Pi_X \cup \Pi_y \cup \Pi_S \cup \Pi_\phi$.
Informally, the rules (\ref{x-rules}) guess an assignment $\sigma$ to the
$X$ atoms, which gives rise to the set $\Phi_{\mathcal{A}}$ of an
epistemic model $\mathcal{A}$ as a candidate for condition~1 of a
rational world view. The rules (\ref{y-rules}) resp.\  (\ref{s-rules})
guess an  assignment $\mu$ to the
$Y$ atoms resp.\ to $S$. Each such assignment to $Y$ gives rise to a
rational answer set (i.e., WJ answer set) of the epistemic reduct
$\Pi^{\mathcal{A}}$ in which $S$ is true, and possibly to another
one in which is $S$ is false (thus $\bar{S}$ is true) and $U$ and all
atoms in $Z$ are true; this is the case iff $\forall
Z.\neg \phi(\sigma(X),\mu(Y),Z)$ evaluates to true. The rule (\ref{kill-rule})
is a constraint that eliminates $\mathcal{A}$  iff $\naf S$ evaluates
to true, i.e., if there is some WJ answer set $I$ in
$\mathcal{A}$ such that $S\notin I$; thus $\mathcal{A}$ survives iff
$\forall Y\exists Z.\phi(\sigma(X),Y,Z)$ evaluates to true.
Consequently, $\Pi$ has some EWJ world view (tantamount a rational
world view) iff the QBF (\ref{qbf3}) evaluates to true.
\end{proof}

\noindent $\hookrightarrow$\  {Proof of Theorem~\ref{th-EWJ-inference}}

\RthEWJinference*

\begin{proof}
Membership in $\Sigma^p_4$ follows from
Lemma~\ref{lem-epi-rat-membership}.(ii) and $\Sigma^p_2$ / $\Pi^p_2$-
completeness of brave / cautious inference for normal epistemic-free
programs (row~1 of Table~\ref{tab:complexity2}).

The $\Sigma^p_4$-hardness is shown by generalizing the reduction
in Theorem~\ref{th-EWJ-existence-normal-atomic} to encode the
evaluation of a QBF 
\begin{equation}
\label{qbf4}
\exists W\forall X \exists Y \forall Z.\psi
\end{equation}
where $\psi$ is a propositional formula over $W\cup X\cup Y\cup Z$,
following the steps in the analog generalization for EFLP world views
in the proof of Theorem~6 in \cite{ShenEiter16}.

Assuming that $W$ is void, $\neg \forall X \exists Y \forall Z.\psi =
\exists X\forall Y\exists Z.\phi$ where $\phi = \neg \psi$, is encoded by the
program $\Pi$ in the proof of
Theorem~\ref{th-EWJ-existence-normal-atomic}, 
i.e., some rational world view (tantamount, EWJ-world view) $\mathcal
A$ exists  iff $\exists X\forall Y\exists Z.\phi$ evalutes to true;
$\Phi\subseteq EP(\Pi)$ for $\mathcal{A}$ must be of the form $\Phi=\{ \naf X_i, \naf \bar{X_j}
\mid X_i \in X', X_j \in X\setminus X'\} \cup \{\naf V\}$ for some $X'\subseteq X$.

A modification of $\Pi$ yields that 
maximality testing of a guess $\Phi\subseteq EP(\Pi)$ for $\mathcal{A}$ is $\Pi^p_3$-hard.
To this end, let 
\begin{eqnarray}
\Pi_0  &=& \{ H \leftarrow B \land A \mid H \leftarrow B \in \Pi\}\label{normal-tag-pi-rule}
\cup \\
        && \{ X_i \leftarrow \neg A;\ \ \bar{X_i} \leftarrow \neg A
\mid X_i \in X \} \cup \{ V \leftarrow \neg A \} \cup \label{sat-X}
\\
 && \{ A \leftarrow \naf \neg A \}, \label{choose-A}
\end{eqnarray}
where $A$ is a fresh atom, and $\Phi_0= \emptyset \subseteq EP(\Pi)$.
Then $\mathcal{A} = \{ I_0\}$ where, $I_0 = X\cup \bar{X} \cup \{ V\}$
is the single WJ answer set
of ${\Pi_0}^{\Phi_0}$, and for each epistemic negation $\naf
F$ occurring in $\Pi_0$, we have that $I\models F$. Thus
$\mathcal{A}$ satisfies condition 1 of an EWJ world view. Condition~2
(maximality) is not satisfied only if $\naf \neg A$ in
$\Phi'$ for any $\Phi'$ for $_{\mathcal{A}'}$ such that $\Phi'\supset \Phi_0$; but in this case, the rules (\ref{normal-tag-pi-rule})
are enabled, and $\mathcal{A}'$ amounts to a world view of
$\Pi$ in the proof of Theorem~\ref{th-EWJ-existence-normal-atomic},
with $A$ added in each answer set; the rules  
%Notably, this shows that WJ-world view checking and rational world view checking is, for both
%explicit and implicit representation, $\Pi^p_$
(\ref{sat-X}) do not fire, and $A$ becomes a fact (cf.\ (\ref{choose-A})).

To accommodate non-void $W$, i.e., to encode
evaluating the QBF (\ref{qbf4}), we let 
$$
\Pi_1 = \Pi_0 \cup \{ W_i \leftarrow \naf \bar{W_i};\ \ \bar{W_i}
\leftarrow \naf W_i \mid W_i \in W \} 
$$
where $\bar{W_i}$ is a fresh atom for $W_i$.

As $W_i,\bar{W}_i$ do not occur in rule heads of $\Pi_0$, for every 
EWJ world view $\mathcal{A}$ of $\Pi$ the set $\Phi_\mathcal{A}$ must contain exactly 
one of $\naf W_i$, $\naf \bar{W}_i$, i.e.,  $\Phi_\mathcal{A}$ encodes
a truth assignment $\sigma$ to $W$. Furthermore, $\naf \neg A \in
\Phi_\mathcal{A}$ holds iff the QBF $\exists X\forall Y \exists
Z.E(\sigma(W),X,Y,Z)$ evaluates to true. As every truth assignment
$\sigma$ to $W$, is encoded by some EWJ world view $\mathcal{A}$ of $\Pi_1$, it
follows that $F = A$ is true in some EWJ world view (tantamount
rational world view) of $\Pi$ iff
the QBF (\ref{qbf4}) evaluates to true.
\end{proof}

\noindent $\hookrightarrow$\  {Proof of Theorem~\ref{th-EWJ-simple-disjunctive-existence}}

\RthEWJsimpledisjunctiveexistence*
\begin{proof}
Membership in $\Sigma^p_3$ follows from
Lemma~\ref{lem-epi-rat-membership}.(i) and $\Sigma^p_2$ / $\Pi^p_2$-
completeness of brave / cautious inference for simple disjunctive
epistemic-free programs
(row~1 of Table~\ref{tab:complexity2}).

The $\Sigma^p_3$-hardness is shown by a minor adaptation of the proof
of $\Sigma^p_3$-hardness of deciding EFLP world view existence for
normal epistemic programs in Theorem~5 of \cite{ShenEiter16}. The
program $\Pi = \Pi_x\cup \Pi_Y\cup \Pi_Z \cup \Pi_\phi$  constructed there to encode the evaluation of a QBF
$\exists X\forall Y\exists Z.\phi$ is a simple normal program (with
atomic heads) except for $\Pi_Z = \{ Z_i \lor \bar{Z_i} \mid
Z_i\in Z\}$, which serves to generate all truth assignments to $Z$
in different FLP answer sets. We can equivalently rewrite
each $Z_i \lor \bar{Z_i}$ to $Z_i \mid \bar{Z_i}$, such that the 
DI-FLP world views of the resulting disjunctive program $\Pi'$
coincide with the EFLP world views of $\Pi$. Since $\Pi'$ is simple, 
the DI-FLP world views of $\Pi'$ coincide with its DI-WJ world views
(each disjunctive program reduct is a simple normal program), for which
WJ and FLP answer sets coincide); as $\Pi'$ has no variant heads, the 
rational world views of $\Pi'$ coincide then with the EFLP-world views of
$\Pi$; this establishes the result.  
\end{proof}

\noindent $\hookrightarrow$\  {Proof of Theorem~\ref{th-EWJ-simple-disjunctive-inference}}

\RthEWJsimpledisjunctiveinference*
\begin{proof}[Sketch]
Membership in $\Sigma^p_4$ follows from
Lemma~\ref{lem-epi-rat-membership}.(ii) and $\Sigma^p_2$ / $\Pi^p_2$-
completeness of brave / cautious inference for simple disjunctive
epistemic-free programs (row~1 of Table~\ref{tab:complexity2}).

The $\Pi^p_4$-hardness is shown by adapting the program
$\Pi_1$ in the proof of Theorem~6 of \cite{ShenEiter16}  which showed
$\Sigma^p_4$-hardness of deciding whether an atom $F$ is true in
some EFLP world view of a normal epistemic program. The program
$\Pi_1$ was constructed from the program $\Pi$ in the proof of Theorem~5 of
\cite{ShenEiter16}, which has an EFLP world view iff a given QBF 
\begin{equation}
\label{qbf:XYZ}
\exists X\forall Y\exists Z.\phi(X,Y,Z)
\end{equation}
evaluates to true, as follows:
\begin{eqnarray}
\Pi_1  &=& \{ H \leftarrow B^* \land A \mid H \leftarrow B \in \Pi\} %\label{tag-pi-rule}
\cup \\
        && \{ X_i \leftarrow \neg A;\ \ \bar{X_i} \leftarrow \neg A
\mid X_i \in X \} \cup \{ V \leftarrow \neg A \} \cup \\
 && \{ A \leftarrow \naf \neg A \} \cup \\
&& \{ W_i \leftarrow \naf \bar{W_i};\ \ \bar{W_i}
\leftarrow \naf W_i \mid W_i \in W \} 
\end{eqnarray}
where $A$ and all $\bar{W_i}$ are fresh atoms, 
and $B^*$ denotes $B$ with each $W_i$ (resp.\ $\neg W_i$) literal replaced by
$\bar{W_i}$ (resp.\ $W_i$). Then $A$ is true in some EFLP-world view of
$\Pi_1$ iff the QBF (\ref{qbf:XYZ}) evaluates to true. As in the proof of Theorem~\ref{th-EWJ-simple-disjunctive-existence}, 
we simply rewrite each $Z_i \lor \bar{Z_i}$ in $\Pi_1$ (i.e., the
rules in $\Pi_Z$) to $Z_i \mid \bar{Z_i}$; then the EFLP world views of
$\Pi_1$ correspond to the DI-WJ world views (tantamount, rational
world views) of the modified program $\Pi_2$. 
\end{proof}

\noindent $\hookrightarrow$\  {Proof of Theorem~\ref{th-EWJ-disjunctive-existence}}

\RthEWJdisjunctiveexistence*
\begin{proof}
Membership in $\Sigma^p_4$ follows from
Lemma~\ref{lem-epi-rat-membership}.(i) and $\Sigma^p_3$- / $\Pi^p_3$-
completeness of brave / cautious inference for disjunctive epistemic-free programs.

The $\Sigma^p_4$-hardness is shown by extending the encoding of
MINQASAT in Theorem~\ref{th-DJ-WJ-atomic-heads-hardness}
into brave inference of an atom $A$ from a epistemic-free disjunctive
program $\Pi$ w.r.t.\ rational answer sets, to the complement of the following
generalized problem A-MINQASAT: Given a QBF $\forall Y.E(W,X,Y)$, over
disjoint sets $W$, $X$, and $Y$, and an atom $A\in X$, decide whether for each assignment
$\sigma$ to $W$ it holds that the pair $(\forall Y.E(\sigma(W),X,Y), A)$ is a
yes-instance of MINQASAT.  By a simple generalization of the proof of Lemma~2 in
\cite{ShenE19}, which shows $\Sigma^p_3$ completeness of A-MINQASAT for
the case of void $W$, we obtain:

\begin{lemma}
\label{lem:A-MINQASAT}
A-MINQASAT is $\Pi^p_4$-complete.
\end{lemma}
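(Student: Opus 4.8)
The plan is to establish the two directions separately: membership in $\Pi^p_4$ follows essentially from the definition, while $\Pi^p_4$-hardness is obtained by lifting the $\Sigma^p_3$-hardness reduction for MINQASAT in \cite{ShenE19} by carrying an extra outer universally quantified block of variables along.

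For membership, recall first that MINQASAT is in $\Sigma^p_3$: given $\forall Y.E(X,Y)$ and $A\in X$, one guesses a set $X'\subseteq X$ with $A\in X'$, verifies with a single \coNP{} test that $X'\cup\neg(X\setminus X')\models\forall Y.E(X,Y)$, and verifies with a $\Pi^p_2$ oracle call that $X'$ is $\subseteq$-minimal among the sets with this property (i.e.\ for every proper subset $X''\subsetneq X'$ there is an assignment to $Y$ falsifying $E$). A-MINQASAT asks whether, for \emph{every} assignment $\sigma$ to the polynomially many variables in $W$, the pair $(\forall Y.E(\sigma(W),X,Y),A)$ is a yes-instance of MINQASAT; hence A-MINQASAT lies in $\forall\cdot\Sigma^p_3=\Pi^p_4$.

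For hardness, I would reduce from evaluating QBFs $\Gamma=\forall W\exists X\forall Y\exists Z.\psi(W,X,Y,Z)$, a canonical $\Pi^p_4$-complete problem. The reduction of Lemma~2 in \cite{ShenE19} transforms an instance $\exists X\forall Y\exists Z.\psi'(X,Y,Z)$ of the $\Sigma^p_3$-complete restriction of MINQASAT (the void-$W$ case) into a QBF $\forall\widehat Y.\widehat E(\widehat X,\widehat Y)$ together with a distinguished atom $A$, so that the $\subseteq$-minimal sets $\widehat X'$ with $A\in\widehat X'$ and $\widehat X'\cup\neg(\widehat X\setminus\widehat X')\models\forall\widehat Y.\widehat E$ correspond exactly to the truth of $\exists X\forall Y\exists Z.\psi'$. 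Since $\psi'$ enters $\widehat E$ only as a black-box subformula, I would carry out the identical construction on the inner part $\exists X\forall Y\exists Z.\psi(W,X,Y,Z)$ of $\Gamma$ while leaving the variables of $W$ untouched, so that $W$ occurs in $\widehat E$ but neither among the candidate-set variables $\widehat X$ nor among $\widehat Y$. This produces a QBF $\forall\widehat Y.\widehat E(W,\widehat X,\widehat Y)$ and atom $A$, i.e.\ an A-MINQASAT instance, and for each fixed $\sigma$ the specialization $(\forall\widehat Y.\widehat E(\sigma(W),\widehat X,\widehat Y),A)$ is precisely the \cite{ShenE19} output on $\exists X\forall Y\exists Z.\psi(\sigma(W),X,Y,Z)$, hence a yes-instance of MINQASAT iff $\exists X\forall Y\exists Z.\psi(\sigma(W),X,Y,Z)$ holds. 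Thus the A-MINQASAT instance is a yes-instance iff this holds for all $\sigma$, i.e.\ iff $\Gamma$ is valid, which gives $\Pi^p_4$-hardness and completes the lemma.

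The main obstacle is the parametricity check: one must verify that adjoining the free variables $W$ to $\widehat E$ does not perturb the bijection, for each fixed $\sigma$, between $\subseteq$-minimal solutions $\widehat X'$ containing $A$ and satisfying assignments of the prefix $\exists X\forall Y\exists Z$. Concretely, one has to confirm that no choice of $W$ can spuriously produce a smaller solution or destroy minimality — which holds because $W$-variables are never placed among the minimized variables $\widehat X$ and $\psi$ is used only inside $\widehat E$. This is the same ``carry the outer block along'' technique used elsewhere in the paper (e.g.\ in the liftings of the reductions of \cite{ShenEiter16} behind Theorem~\ref{th-EWJ-disjunctive-existence}), so although the bookkeeping is somewhat tedious it is routine.
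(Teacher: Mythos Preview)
Your proposal is correct and follows exactly the approach the paper indicates: the paper's own justification of this lemma is simply the remark ``By a simple generalization of the proof of Lemma~2 in \cite{ShenE19}, which shows $\Sigma^p_3$ completeness of A-MINQASAT for the case of void $W$, we obtain'' the lemma, and your argument spells out precisely that generalization (membership via $\forall\cdot\Sigma^p_3=\Pi^p_4$, hardness by carrying the outer universal block $W$ through the Lemma~2 reduction as free parameters).
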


We then adapt the MINQASAT encoding to the following program $\Pi'$,
using techniques from the proof of Theorem~5 in \cite{ShenEiter16}:
\begin{eqnarray}
   W_j & \gets &
   \parbox{0.4\textwidth}{$\naf \hat{W_j}, \quad W_j \gets \naf {W_j}$}
   W_j \in W \label{Gchoose-W} \\  
   X_i \mid \hat{X_i} & \leftarrow &    \parbox{0.4\textwidth}{~}  X_i \in X \label{Gchoose-X} \\   
   B & \leftarrow &  \parbox{0.4\textwidth}{E(X,Y)} B\in \hat{X}Y\cup\{p\} \label{Gaaa} \\
   p & \leftarrow & \neg p \label{Gkill}\\
   V & \leftarrow & \naf V, \naf \neg A \label{Gwv-kill}
\end{eqnarray}
Informally, the rules (\ref{Gchoose-W}) choose an assignment $\sigma$ to the
atoms in $W$, and the rules (\ref{Gchoose-X})-(\ref{Gkill}) single out
minimal assignments $\mu$ to $X$ such that $\forall
Y.E(\sigma(W),\mu(X),Y)$ evaluates to true. The rule
(\ref{Gwv-kill}) checks whether in every such minimal $\mu$ the atom $A$
is false; without loss of generality, we may assume that such a $\mu$
always exists. Thus, a rational world view for $\Pi'$ will exist  iff the
A-MINQASAT instance is a no-instance. This proves the result.
\end{proof}

\noindent $\hookrightarrow$\  {Proof of Theorem~\ref{th-EWJ-disjunctive-atomic-inference}}

\RthEWJdisjunctiveatomicinference*
\begin{proof} (Sketch)
Membership in $\Sigma^p_5$ follows from
Lemma~\ref{lem-epi-rat-membership}.(ii) and $\Sigma^p_3$ / $\Pi^p_3$-
completeness of brave / cautious inference for disjunctive
epistemic-free programs
(row~1 of Table~\ref{tab:complexity2}).

The $\Sigma^p_5$-hardness proof is analogous to the
$\Sigma^p_4$-hardness proof of inference from EFLP world views in
Theorem~6  of \cite{ShenEiter16}. We construct from the program $\Pi'$
in the proof of Theorem~\ref{th-EWJ-disjunctive-existence} the
following program $\Pi''$, which encodes  deciding the 
generalization E-A-MINQASAT of  A-MINQASAT, where
a pair of a QBF $(\forall Y.E(Z,W,X,Y),A)$ and an atom $A$ is a yes-instance iff 
for some assignment $\sigma$ to $Z$, the pair $(\forall
Y.E(\sigma(Z),W,X,Y),A)$ is a yes-instance of A-MINQASAT; thus 
\begin{eqnarray}
\Pi_0  &=& \{ H \leftarrow B \land A' \mid H \leftarrow B \in \Pi\}\label{tag-pi-rule}
\cup \\
        && \{ Z_j\leftarrow \neg A';\ \ \bar{Z_j} \leftarrow \neg A'
\mid Z_j \in Z \} \cup \{ V \leftarrow \neg A' \} \cup \\
 && \{ A' \leftarrow \naf \neg A' \},
\end{eqnarray}
where $A'$ is a fresh atom.  Then it holds that $A'$ is true in some rational
world view of $\Pi''$ iff $(\forall Y.E(Z,W,X,Y),A)$ is a
yes-instance of E-A-MINQASAT. The latter is $\Sigma^p_5$-complete to
decide, which can be readily shown by a generalization of problem A-MINQASAT.
\end{proof}

\subsection{Computing a witnessing strict partial order}
\label{WitnessingPorder}
\begin{theorem}
\label{th-comp-wellsupport-lfp-normal-order}
Given a model $I$ of an epistemic-free
normal program $\Pi$, computing the strict partial order
$\cal O$ for $I$ as in 
Algorithm~\ref{algo:well-support-with-partialorder-2} 
witnessing that $I$ is well-supported in $\Pi$
is \FPNPPar-complete. Moreover, 
\FPNPPar-hardness holds even if $I$ is well-supported and all rule
heads are atomic.
\end{theorem}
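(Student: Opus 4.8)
The plan is to establish the two halves of the claim separately: membership in \FPNPPar{} and \FPNPPar-hardness (the latter already under the stated restriction to atomic rule heads and well-supported $I$). For the upper bound, I would first recall from Theorem~\ref{th-wellsupport-lfp-normal} that well-supportedness of $I$ is equivalent to $\mathit{lfp}(T_\Pi(\emptyset,\neg I^-))\cup \neg I^- \models p$ for every $p\in I$. The key observation is that the witnessing partial order $\cal O$ produced by Algorithm~\ref{algo:well-support-with-partialorder-2} is determined entirely by the stratification of the rule heads into the ``layers'' $S_0\subset S_1\subset \cdots$ (and the analogous layering of atoms of $I$ by first point of entailment). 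Each membership question ``is the formula $body(r)$ entailed by $S_i\cup \neg I^-$?'' is a single \coNP{} query, and ``is $p$ entailed by $S_{i+1}\cup\neg I^-$ but not by $S_i\cup\neg I^-$?'' is a pair of such queries. Crucially, although Algorithm~\ref{algo:well-support-with-partialorder-2} issues these queries \emph{adaptively} (the membership of a rule head in $S_{i+1}$ depends on whether earlier heads entered $S_i$), this adaptivity can be removed: for each rule $r$ one can ask in \emph{parallel}, for every candidate layer index $\ell$, whether $body(r)$ is entailed by the set of all heads that can possibly be at layers $<\ell$; by monotonicity of classical entailment (noted after Definition~\ref{def-one-step-operator}) and the fact that there are only polynomially many layers, the true layer of each head, hence the whole sequence $\langle S_i\rangle$, hence $\cal O$, is recoverable from the parallel answer vector in polynomial time. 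This places the computation in \FPNPPar.

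For \FPNPPar-hardness, I would reduce from the canonical \FPNPPar-complete problem of computing the bit-vector of satisfiability answers to a list of independent SAT instances $E_1,\ldots,E_n$ on pairwise disjoint variable sets $X_1,\ldots,X_n$ (cf.\ the reduction used in Lemma~\ref{lem:epi-reduct-implicit}). The idea is to build an epistemic-free normal program with atomic heads whose unique candidate model $I$ is always well-supported, but such that the \emph{shape} of the witnessing order $\cal O$ encodes, for each $i$, whether $E_i$ is satisfiable. Concretely, for each $i$ introduce a fresh atom $t_i$ together with a ``level-gadget'': atoms $a_i$ and $b_i$ with rules $a_i \leftarrow E_i$, then $t_i \leftarrow a_i$, and a parallel but strictly longer derivation chain for $t_i$ that is \emph{always} available, say $t_i \leftarrow c_i^1$, $c_i^1 \leftarrow c_i^2$, $c_i^2 \leftarrow \top$ — so that $t_i$ is derived at a low layer when $E_i$ is satisfiable (via $a_i$, whose body $E_i$ is entailed by $\neg I^-$ at layer~$0$ when $E_i$ is a tautology, and in general using the auxiliary trick with complementary atoms $\bar x$ as in the proof of Theorem~\ref{th-comp-wellsupport-epistemic-normal-implicit} so that satisfiable $E_i$ becomes derivable) but at a strictly higher layer otherwise; in either case $t_i\in I$, so $I$ is well-supported. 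Reading off from $\cal O$ whether $c_i^1 \prec t_i$ holds (equivalently, at which layer $t_i$ entered) then reveals the satisfiability bit for $E_i$, and this extraction is polynomial. One must be careful to ensure $I$ is a model, that all heads are atomic, and that the gadgets for distinct $i$ do not interfere — which is guaranteed by the disjointness of the $X_i$ and the freshness of $a_i,b_i,c_i^j,t_i$.

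The main obstacle I anticipate is the hardness direction, specifically making the ``layer of $t_i$'' in the output $\cal O$ robustly and unambiguously reflect satisfiability of $E_i$ while keeping $I$ well-supported in all cases (so that Algorithm~\ref{algo:well-support-with-partialorder-2} returns $\cal O$ rather than ``NOT WELL-SUPPORTED''). The subtlety is that classical entailment of a DNF/CNF formula by a \emph{set of atoms} plus $\neg I^-$ is exactly tautology checking relative to the guessed assignment, so I cannot simply ``derive'' an arbitrary satisfiable $E_i$ — I need the complementary-atom encoding (rules $x\leftarrow\neg\bar x$, $\bar x\leftarrow\neg x$ for $x\in X_i$) exactly as in the proof of Theorem~\ref{th-comp-wellsupport-epistemic-normal-implicit} to make the ``right'' assignment to $X_i$ appear in the least fixpoint and thus witness satisfiability at a controlled layer. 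Getting the layer arithmetic exactly right (short chain for satisfiable, strictly longer forced chain otherwise, with no collision) and verifying that the generated $\cal O$ is precisely what Algorithm~\ref{algo:well-support-with-partialorder-2} outputs is the delicate bookkeeping; the \FPNPPar{} upper bound, by contrast, is a fairly direct parallelization argument over the polynomially many layers.
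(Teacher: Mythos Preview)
Your membership argument is essentially the paper's: parallelize the entailment tests over all pairs (rule, layer index), then assemble the sequence $\langle S_i\rangle$ and read off $\cal O$; the paper also phrases this as two rounds of parallel \NP{} calls collapsed to one. That part is fine.

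Your hardness argument has a real gap. You want the layer at which $t_i$ (or $a_i$) first appears to encode \emph{satisfiability} of $E_i$, and you propose the rule $a_i\leftarrow E_i$ together with the complementary-atom trick $x\leftarrow\neg\bar x$, $\bar x\leftarrow\neg x$ from the proof of Theorem~\ref{th-comp-wellsupport-epistemic-normal-implicit}. But that trick does not do what you need here. For a \emph{single fixed} model $I$, the models of $\neg I^-$ are exactly the subsets of $I$; hence $\neg I^-\models E_i$ iff $E_i$ is a tautology over $X_i$, and the complementary atoms do not change this: either (a) you put both $x$ and $\bar x$ into $I$, in which case the subsets of $I$ still range over all four truth combinations of $(x,\bar x)$ and you are back to tautology testing, or (b) you put exactly one of $x,\bar x$ into $I$, in which case $I$ hard-codes a \emph{specific} assignment to $X_i$ and entailment of $E_i$ becomes a polynomial-time check, not a satisfiability test. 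In Theorem~\ref{th-comp-wellsupport-epistemic-normal-implicit} the trick works because the epistemic model $\cal A$ ranges over \emph{many} interpretations and ``$\naf F$ true in $\cal A$'' existentially quantifies over them; with a single $I$ there is no such quantifier to exploit.

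The paper sidesteps this entirely by encoding \emph{un}satisfiability rather than satisfiability: take $I=X=\bigcup_i X_i$ and use rules with body $\neg E_i$. Then $\emptyset\cup\neg I^-\models\neg E_i$ iff every subset of $X_i$ falsifies $E_i$, i.e.\ iff $E_i$ is unsatisfiable. A single fact $x\leftarrow$ plus rules $p\leftarrow x$ (always fires at level~2) and $p\leftarrow\neg E_i$ (fires at level~1 exactly when $E_i$ is unsat) make the presence or absence of $x\prec p$ in $\cal O$ reveal the SAT answer for $E_i$. This is a two-line construction with no gadgetry; your elaborate level-chains and complementary atoms are unnecessary once you flip the polarity to $\neg E_i$.
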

\begin{proof}
As for membership in \FPNPPar, by Theorem~\ref{th-comp-wellsupport-lfp-normal} we
can decide with an \NP{} oracle call whether
Algorithm~\ref{algo:well-support-with-partialorder-2} will output 
some partial order $\cal O$ or not. With an argument similar to the one for
Lemma~\ref{lem:guess}, we can show that deciding for each rule $r$ whether $head(r)$ is
put in $S_{j+1}$, is for each $0\leq j \leq |{\cal R}|$ in \coNP;
we may put equivalent heads (i.e.\ $head(r) \equiv head(r')$) into
$S_{j+1}$ without harm.
All these \coNP{} problems can be evaluated in parallel.
Once
$S_0,S_i,\ldots,S_i$ are known, we can determine by using a further round of
parallel \NP-oracle calls all precedences $p \prec q$ that have to be
added to $\cal O$.  Thus, with two round of parallel \NP\ oracle
queries, we can compute the strict partial order $\cal O$ for
$I$. Since two parallel rounds of such queries can be emulated with a
single round, it follows that computing $\cal O$ is in \FPNPPar.

The \FPNPPar-hardness is shown by a reduction from computing the
answers to given SAT instances $E_1,\ldots,E_n$ on pairwise disjoint
sets $X_1,\ldots, X_n$ of variables. 
%Assume without loss of generality that $E_1$
%is unsatisfiable. Let $q$ and $q_1,\ldots,q_n$ be fresh atoms,
%$X = 
Let $X = \bigcup_{i=1}^n X_i$ and $x \in X$ be some arbitrary variable. 
Construct the following program: 
\begin{align*}
%\Pi = & \{ q;\ \ q_i \leftarrow q \land \neg E_i;\ \ y \leftarrow q_i,
%        \mid 1 \leq i \leq n, y \in Y \}.
\Pi = & \{  x \leftarrow ~\} \cup \{ p \leftarrow x; 
            \ \ p \leftarrow \neg E_i  \mid 1 \leq i \leq n, p \in X \}.
\end{align*}
Then $I = X$ is a model of $\Pi$, and moreover it is
well-supported; we have in Algorithm~\ref{algo:well-support-with-partialorder-2} 
$S_0 = \emptyset$, $S_1 = \{ x \} \cup
\bigcup_{\models \neg E_i} X_i$, and $S_2 = X = I$; 
%$S_0 = \emptyset$, $S_1 = \{ x \}$,
%$S_2 = \{ q_i \mid E_i \models \bot, 1 \leq i \leq n \} $, and $S_3 =
%I$. 
Furthermore, for $p\in X$ such that $x\neq p$, we shall have $x \prec p$ in the constructed $\cal O$ 
iff $E_i$ is
unsatisfiable where $p\in X_i$. From $\cal O$, we can easily extract
the answers to the SAT instances $E_1,\ldots,E_n$. This proves
\FPNPPar-hardness under the stated restrictions.
%\qed
\end{proof}

\begin{theorem}
\label{th-comp-wellsupport-lfp-disjunctive-order}
Given a model $I$ of an epistemic-free
disjunctive program $\Pi$, computing a head selection function $\sel$
and the strict partial order
$\cal O$ given $I$ and $\Pi^I_{\sel}$ as in 
Algorithm~\ref{algo:well-support-with-partialorder-2} 
witnessing that $I$ is well-supported in $\Pi$
is (i) MV\NP-complete if $\Pi$ is simple, and 
(i) MV$\Sigma^p_2$-complete if in general $\Pi$, where 
MV$\Sigma^p_2$-hardness holds even if all rule head formulas are atomic.
\end{theorem}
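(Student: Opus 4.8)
The plan is to mirror the proof structure of Theorem~\ref{th-comp-wellsupport-lfp-normal-order}, lifting it from the normal to the disjunctive setting by explicitly accounting for the head selection function $\sel$ as an additional component of the witness. For the membership parts, the key observation is that a complete witness consists of a pair $(\sel,{\cal O})$: first a head selection function $\sel$ satisfying the variant-rule-head constraint, and then, for the induced disjunctive program reduct $\Pi^I_\sel$, the strict partial order ${\cal O}$ computed by Algorithm~\ref{algo:well-support-with-partialorder-2}. In the simple case, $\Pi^I_\sel$ is a simple normal program, so by Theorem~\ref{th-comp-wellsupport-simple-normal} and the polynomial-time behaviour of Algorithm~\ref{algo:well-support-with-partialorder} the whole pair $(\sel,{\cal O})$ can be guessed and then verified deterministically in polynomial time; this places the problem in MV\NP{} (guess $\sel$ nondeterministically, compute ${\cal O}$ deterministically, output $(\sel,{\cal O})$ on success). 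In the general case, once $\sel$ is fixed the reduct $\Pi^I_\sel$ is an arbitrary epistemic-free normal program, and by Theorem~\ref{th-comp-wellsupport-lfp-normal-order} the order ${\cal O}$ is computable in \FPNPPar; moreover checking that $\sel$ respects variant rule heads needs only a \coNP{} (hence \NP{}) oracle. Folding the nondeterministic guess for $\sel$ together with polynomially many \NP{} oracle calls gives a nondeterministic polynomial-time machine with an \NP{} oracle, i.e.\ MV$\Sigma^p_2$; when all head formulas are atomic, variant checking is polynomial but the \FPNPPar{} computation of ${\cal O}$ still needs the oracle, so we stay in MV$\Sigma^p_2$.

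For the hardness parts, I would lift the reductions used for the corresponding decision problems, Theorems~\ref{th-comp-wellsupport-disjunctive-simple} and~\ref{th-comp-wellsupport-disjunctive-general}. In the proof of Theorem~\ref{th-comp-wellsupport-disjunctive-simple}, a SAT instance $E$ is encoded into a simple disjunctive program $\Pi$ with a designated model $I$ such that $I$ is well-supported in $\Pi$ iff $E$ is satisfiable, and crucially the witnessing head selection $\sel$ for the rules $x_i \mid \bar x_i \leftarrow$ directly reads off a satisfying assignment via $\sel(x_i \mid \bar x_i, I) = x_i \Leftrightarrow \sigma(x_i) = \text{true}$. Thus computing any valid $(\sel,{\cal O})$ solves search-SAT, giving MV\NP-hardness; I would combine this with the fact that search-SAT is MV\NP-complete (e.g.\ via the standard self-reducibility argument) to get completeness. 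Analogously, the QBF $\exists X\forall Y\,E(X,Y)$ encoding in the proof of Theorem~\ref{th-comp-wellsupport-disjunctive-general}, with the rules $x \mid \bar x \leftarrow$ and $p \leftarrow E[\neg X/\bar X]$ for $p \in X\cup\bar X\cup Y$, has the property that a witnessing $\sel$ encodes a satisfying $X$-assignment for the $\Sigma^p_2$-QBF; computing it is therefore MV$\Sigma^p_2$-hard, and the atomic-head strengthening is inherited directly since that encoding already uses only atomic head formulas (recall $E$ is in DNF, so $E[\neg X/\bar X]$ appears only in rule bodies, never in heads).

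The main obstacle I anticipate is being careful about the precise definition and closure properties of the multi-valued function classes MV\NP{} and MV$\Sigma^p_2$, since these are less standard than single-valued FP-classes, and the paper introduces them only informally in the surrounding text. I would need to (i) fix the convention that an MV$\Sigma^p_k$ machine is a nondeterministic polynomial-time Turing transducer with a $\Sigma^p_{k-1}$ oracle that on each accepting branch outputs a value, with ``computing'' the function meaning the set of outputs across accepting branches is exactly the set of valid witnesses; (ii) verify that the \FPNPPar{} sub-routine from Theorem~\ref{th-comp-wellsupport-lfp-normal-order} can indeed be absorbed into such a machine without blowing up the oracle level — which works because \FPNPPar $\subseteq$ \FP$^\NP$ $\subseteq$ MV$\Sigma^p_2$ (a deterministic machine is a trivial special case); and (iii) argue that the reductions are in fact \emph{metric} or \emph{parsimonious-enough} reductions under which MV-class hardness is preserved, i.e.\ that from a witness for the constructed logic-program instance one recovers in polynomial time a witness for the source problem, and vice versa. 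Once these bookkeeping points are pinned down, the argument is a routine adaptation of the decision-problem proofs, and I would present it at that level of detail, referring back to the constructions in Theorems~\ref{th-comp-wellsupport-disjunctive-simple} and~\ref{th-comp-wellsupport-disjunctive-general} rather than repeating them.
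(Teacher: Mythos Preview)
Your proposal is correct and follows essentially the same route as the paper: membership via a guess-and-check argument (guess $\sel$, then run Algorithm~\ref{algo:well-support-with-partialorder-2} on $\Pi^I_\sel$, with or without an \NP{} oracle as appropriate), and hardness by reusing the SAT resp.\ QBF encodings from the proofs of Theorems~\ref{th-comp-wellsupport-disjunctive-simple} and~\ref{th-comp-wellsupport-disjunctive-general}, where any witnessing $\sel$ directly reads off a satisfying $X$-assignment. Your additional care about the precise definitions of MV\NP{}/MV$\Sigma^p_2$ and the shape of the reductions is well-placed but goes beyond the level of detail the paper provides, which simply notes that the completeness of search-SAT and the $\forall Y.E$-witness problem for these classes ``can be established by Turing machine encodings.''
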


\begin{proof}
Membership in MV\NP{} respectively in
MV$\Sigma^p_2$ follows by a guess and check argument using
Algorithm~\ref{algo:well-support-with-partialorder-2} for the
checking part. For the hardness parts, we can use reductions from
computing some satisfying assignment to a Boolean
formula $E(X)$ on variables $X$, which is MV\NP-complete, and from computing some
assignment $\sigma$ to the variables $X$ of a QBF $\forall Y E(X,Y)$,
which is complete for MV$\Sigma^p_2$ (the completeness of these
problems can be established by Turing machine encodings). The program $\Pi$ constructed 
in the proof of Theorem~\ref{th-comp-wellsupport-disjunctive-simple} (resp.\ Theorem~\ref{th-comp-wellsupport-disjunctive-general}) encodes evaluating a SAT
instance $E$ (resp.\ a QBF $\exists X\forall Y\,E(X,Y)$).
Clearly, for a model $I$ that is well-supported in $\Pi$, any witnessing
pair $(\sel,{\cal O})$ of a selection function $\sel$ and a partial
order $\cal O$ computed by
Algorithm~\ref{algo:well-support-with-partialorder-2} corresponds to 
a satisfying assignment of $E(X)$ resp.\ a solution for $\forall
Y\,E(X,Y)$ and vice versa. Since all rule head formulas are atomic, this establishes the
hardness parts under the stated restriction.
%\qed
\end{proof}

\end{document}